\title{Early Alignment in Two-Layer Networks Training is a Two-Edged Sword}
\author{%
 \name{Etienne Boursier} \email{etienne.boursier@inria.fr}\\
 \addr Université Paris-Saclay, CNRS, Inria, Laboratoire de mathématiques d'Orsay, 91405, Orsay, France
 \AND
  \name{Nicolas Flammarion} \email{nicolas.flammarion@epfl.ch}\\
 \addr TML Lab, EPFL, Switzerland
}
\begin{document}
\setlength{\abovedisplayskip}{5pt}
\setlength{\topsep}{0.5em}

\setcounter{tocdepth}{3}
%\tableofcontents

\doparttoc % Tell to minitoc to generate a toc for the parts
\faketableofcontents % Run a fake tableofcontents command for the partocs

\maketitle

\begin{abstract}
Training neural networks with first order optimisation methods is at the core of the empirical success of deep learning. 
The scale of initialisation is a crucial factor, as small initialisations are generally associated to a feature learning regime, for which gradient descent is implicitly biased towards \textit{simple} solutions. This work provides a general and quantitative description of the early alignment phase, originally introduced by Maennel et al. (2018). For small initialisation and one hidden ReLU layer networks, the early stage of the training dynamics leads to an alignment of the neurons towards key directions. This alignment induces a sparse representation of the network, which is directly related to the implicit bias of gradient flow  at convergence. 
This sparsity inducing alignment however comes at the expense of difficulties in minimising the training objective: we also provide a simple data example for which overparameterised networks fail to converge towards global minima and only converge to a spurious stationary point instead.
\end{abstract}

\begin{keywords}
Implicit Bias, Gradient Flow, ReLU Networks, Training Dynamics
\end{keywords}

\section{Introduction}\label{sec:intro}

Artificial neural networks are nowadays used in numerous applications \citep{he2016deep,jumper2021highly}. A part of their success originates from the ability of optimisation methods to find global minima, despite the non-convexity of the training losses; as well as the good generalisation performances obtained despite a large overparameterisation and an interpolation of the data \citep{zhang2021understanding,geiping2021stochastic,liu2020bad}. The understanding of these two generally admitted reasons yet remains very limited in the machine learning community. Recently, different lines of work advanced our comprehension of the empirical success of neural networks.
First, \citet{mei2018mean,chizat2018global,wojtowytsch2020convergence,rotskoff2022}
proved convergence of first order optimisation methods towards global minima of the training loss for idealised infinite width architectures.
On the other hand, benign overfitting occurs in many different statistical models, i.e., the learnt estimator yields a small generalisation error despite interpolating the training data \citep{belkin2018overfitting,bartlett2020benign,frei2022benign,tsigler2023benign}.

This surprisingly good generalisation performance of neural networks is often attributed to the \textit{implicit bias} of the used optimisation algorithms, that do select a specific global minimum of the objective function~\citep{neyshabur2014search,zhang2021understanding}. For linear neural networks, implicit bias has been thoroughly characterised \citep{ji2018gradient,arora2019fine,yun2021unifying,min2021on,varre2023spectral}. In the presence of non-linear activations, e.g., ReLU, the implicit bias is much harder to characterise \citep{vardi2021implicit}. In the classification setting, the learnt estimator is proportional to the min-norm margin classifier \citep{lyu2019gradient,chizat2020implicit}. It is yet much more unclear in the regression case, for which it has only been characterised for specific data examples. In particular, \citet{boursier2022gradient} suggest that gradient flow is biased towards minimal norm interpolators; while other works suggest it induces sparsity in the representation of the network, in the sense it could be represented by a small number of neurons \citep{shevchenko2022mean,safran2022effective,chistikov2023learning}. Although different \citep{chistikov2023learning}, both notions of minimal norm and sparsity seem closely related \citep{parhi2021banach,stewart2022regression,boursier2023penalising}.
A similar implicit bias towards low rank solutions has been conjectured for matrix factorisation \citep{gunasekar2017implicit,arora2019implicit,razin2020implicit}.

\medskip

Due to the non-convexity of the considered loss, the convergence point of training crucially depends on the choice of initialisation. Large initialisation is known to lead to the Neural Tangent Kernel (NTK) regime, for which gradient descent provably converges exponentially towards a global minimum of the training loss~\citep{jacot2018neural,du2018gradient,arora2019fine}. 
Unfortunately, this regime is also associated with lazy training, where the weights parameters only slightly change~\citep{chizat2019lazy}. As a consequence, features are not learnt during training and can lead to a poor generalisation performance \citep{arora2019fine}.

On the other hand, smaller initialisations, such as in the mean field regime, yield a favorable implicit bias~\citep{chizat2020implicit,jacot2021saddle,boursier2022gradient}, while still having some convergence guarantees towards global minima~\citep{chizat2018global,wojtowytsch2020convergence}. However, this regime is more intricate to analyse and still lacks strong results on both convergence guarantees and implicit bias. In this objective, a recent part of the literature focuses on a complete description of the training dynamics, for both classification and regression, with specific data assumptions such as orthogonally separable~\citep{phuong2020inductive,wang2021convex}, symmetric linearly separable~\citep{lyu2021gradient}, orthogonal~\citep{boursier2022gradient}, positively correlated~\citep{wang2023understanding,chistikov2023learning,min2023early} or XOR-type data~\citep{glasgow2024sgd}.

In particular, these works rely on a first early alignment phase, during which the neurons' weights all align towards a few key directions, while remaining small in norm and having no or little impact on the estimator prediction. In their seminal paper, \citet{maennel2018gradient} first described this key phenomenon, providing general heuristics for infinitesimal initialisation scale.
This phase, specific to small initialisation and homogeneous activations, thus already induces some sparsity in the directions represented by the network and seems key to a final implicit bias with similar sparsity induced properties. 
This early alignment is not specific to the one hidden layer \citep{jacot2021saddle} and has been empirically observed with more complex architectures and real data~\citep{gur2018gradient,atanasov2021neural,ranadive2023special}.

\paragraph{Contributions.} Our contribution is twofold. First, we characterise the early alignment phenomenon for small initialisation, one hidden (leaky) ReLU layer networks trained with gradient flow in a general setup covering both classification and regression. 
A general study, i.e., holding for general datasets, of the so-called alignment has only been proposed by \citet{maennel2018gradient}. This previous study however fails at precisely quantifying this alignment, since it follows heuristic arguments, for infinitely small initialisations. In opposition, we provide a finite time, macroscopic initialisation and rigorous analysis of the early alignment phase. As a consequence, our result (\cref{thm:alignment}) can be directly applied to numerous previous works characterising the training dynamics of one hidden layer neural networks, to describe their first phase of dynamics. 

Second, we apply this result to analyse the complete dynamics of training on a specific data example, for which gradient flow converges towards a spurious stationary point. In particular, our result implies that for small initialisation scales, interpolation might not happen at the end of training, even with an infinite number of neurons and infinite training time. This failure of convergence for largely overparameterised networks is suprising, as it goes in the opposite direction of previous works~\citep{chizat2018global,wojtowytsch2020convergence}. This negative result highlights the importance of weights' omnidirectionality to reach global minima, a property that can be lost by early alignment for non-differentiable activations.

Overall, our work provides a general description of the early alignment phenomenon for small initialisations. This early phase presents clear benefits as it induces some sparsity of the network representation, that can be preserved along the whole trajectory. However, this benefit also comes at the expense of minimising the training loss, even on relatively simple datasets.

\medskip

The concurrent works of \citet{kumar2024directional,tsoy2024simplicity} also provide a mathematical description of the early alignment, with the main differences that these results i) hold for a different set of assumptions (e.g. smooth activations or assuming a unique solution of the gradient flow); ii) do not provide a quantitative bound on the initialisation scale at which early alignment happens. As a consequence of the second point, their results do not hold in the limit of infinite width network. In contrast, our main \cref{thm:alignment} holds for this \textit{mean field} limit, which is a key feature of our no-convergence result in \cref{thm:noconvergence}.

\section{Setting}\label{sec:setting}

We consider $n$ data points $(x_k,y_k)_{k\in[n]}$ with features $x_k\in\R^d$ and labels $y_k\in\R$. We also denote by $X = [x_1, \ldots, x_n]^{\top}\in \R^{n\times d}$ the matrix whose rows are given by the input vectors.
A two-layer neural network is parameterised by $\theta = \left(w_j, a_j\right)_{j\in[m]}\in\R^{m\times(d+1)}$, corresponding to the estimated function
\begin{equation}\label{eq:param}
h_{\theta}:x \mapsto \sum_{j=1}^m a_j \sigma(\langle w_j,x\rangle),
\end{equation}
where $\sigma$ is the (leaky) ReLU activation defined as $\sigma(x)\coloneqq\max(x,\gamma x)$ with $\gamma\in[0,1]$.
The parameters $w_j$ and $a_j$ respectively account for the hidden and output layer of the network. 
Note that \cref{eq:param} does not account for any bias term, as a simple reparameterisation of the features $\tilde{x}=(x,1)$ allows to do so. 

Training aims at minimising the empirical loss over the training dataset defined, for some loss function $\ell:\R^2\to\R_+$, by
\begin{equation*}\label{eq:loss}
L(\theta) \coloneqq \frac{1}{n}\sum_{k=1}^n \ell(h_{\theta}(x_k),y_k).
\end{equation*}
As the limiting dynamics of gradient descent with infinitesimal learning rate, we study a solution of the following differential inclusion, for almost any $t\in\R_+$,
\begin{gather}\label{eq:ODE1}
\frac{\df \theta^t}{\df t} \in -{\partial}_{\theta}L(\theta^t),
\end{gather}
where $\partial_{\theta}L(\theta)$ is the Clarke subdifferential of $L$ at $\theta$ \citep{clarke1990optimization}. Although the loss is not differentiable, the chain rule can still be applied for ReLU networks \citep{bolte2020mathematical, bolte2021conservative}, which is crucial to our analysis.

In the following, we consider a general loss function with minimal properties, covering both classical choices of square loss, $\ell(\hat{y},y)=(\hat{y}-y)^2$, and logistic loss with binary labels, $\ell(\hat{y},y)=\ln(1+e^{-\hat{y}y})$.
\begin{assumption}\label{ass:loss}
For any $y\in\R$, the function $\hat{y}\mapsto\ell(\hat{y},y)$ is differentiable. Moreover its derivative, denoted by $\partial_1 \ell(\cdot,y)$, is $1$-Lipschitz and verifies $\partial_1 \ell(0,y)\neq 0$ for any $y\neq 0$.
\end{assumption}
%Note that $\partial_1 \ell(0,y) = - y$ for the former, and  $\partial_1 \ell(0,y) = \frac{-y}{2}$. 
%
%\medskip
%
The existence of a global solution to \cref{eq:ODE1} is guaranteed \citep[see, e.g.,][Chapter 2]{aubin2012differential}. However, such solutions can be non-unique since the loss function is not continuously differentiable in $\theta$. In the following, \textit{any} global solution of \cref{eq:ODE1} is considered.

\paragraph{Initialisation.}
The choice of initialisation is crucial when training two-layer neural networks, since the considered optimisation problem is non-convex. 
The $m$ neurons of the neural network are here initialised as
\begin{equation}\label{eq:initscaling}
(a_j^0, w_j^0) = \frac{\lambda}{\sqrt{m}} (\tilde{a}_j,\tilde{w}_j),
\end{equation}
where $\lambda>0$ is the scale of initialisation (independent of $m$) and $(\tilde{a}_j,\tilde{w}_j)$ are drawn i.i.d. such that almost surely\footnote{\cref{eq:initialisation} is stated for any $k$ (not just $k=m$), since we aim at stating results with constants that do not depend on the width $m$.}
\begin{gather}
|\tilde{a}_j| \geq \|\tilde{w}_j\| \text{ for any }j\in \N^*,\label{eq:balancedinit}\\
\text{and }\frac{1}{k}\sum_{j=1}^k \tilde{a}_j^2 \leq 1 \text{ for any }k\in\N^*.\label{eq:initialisation}
\end{gather}
The $\frac{1}{\sqrt{m}}$ factor in \cref{eq:initscaling} characterises the feature learning (or mean field) regime. In absence of this $\frac{1}{\sqrt{m}}$ term, no relevant feature would be learnt as it corresponds to the lazy regime \citep{chizat2019lazy}. 
Since the scale of $a_j^0$ can be controlled through $\lambda$, \cref{eq:initialisation} is compatible with any classical initialisation. In particular, it holds almost surely when $\tilde{a}_j$ is bounded and it holds with high probability if $\tilde{a}_j$ is sub-Gaussian. 
As an example, any initialisation where 
\begin{align*}
a_j^0\in\left\{-\frac{\lambda}{\sqrt{m}},\frac{\lambda}{\sqrt{m}}\right\}\quad\text{and}\quad
w_j^0 \sim \cU\left(B(0,\frac{\lambda}{\sqrt{m}})\right),
\end{align*}
satisfies the above description. 

\subsection{Notations}

We note $f(\lambda,t) = \bigO{g(\lambda,t)}$ if there exists a constant $c>0$ that \textbf{only depends on the dataset}\footnote{It is yet independent of the number of neurons $m$.} $(x_k,y_k)_{k\in[n]}$ such that $|f(\lambda,t)| \leq c g(\lambda,t)$ on the considered set for $\lambda$ and~$t$. Occasionally, we note $f(\lambda,t) = \mathcal{O}_{\alpha}(g(\lambda,t))$ if the constant $c>0$ depends on the dataset and an extra parameter $\alpha$.
Conversely, we note $f = \Omega(g)$ if $g = \bigO{f}$. %(in particular, $f$ is positive). 
We also note $f = \Theta(g)$ if both $f = \bigO{g}$ and $f = \Omega(g)$.

\medskip

Along the paper, the detailed proofs are postponed to the appendix, for sake of readability.

\section{Weight Alignment Phenomenon}
This section aims at precisely quantifying the early alignment phenomenon in the setting of \cref{sec:setting}. 
For each individual neuron, \cref{eq:ODE1} rewrites
\begin{equation}\label{eq:ODEs}
\frac{\df w_j^t}{\df t} \in a_j^t \D_j^t \quad \text{and} \quad
\frac{\df a_j^t}{\df t} \in \langle w_j^t, \D_j^{t}\rangle,
\end{equation}
\begin{gather*}
\text{where }\D_j^t = \D(w_j^t,\theta^t)\coloneqq \Big\{ -\frac{1}{n}\sum_{k=1}^n \eta_k \partial_1 \ell(h_{\theta^t}(x_k),y_k)x_k \ \Big|\ \forall k\in[n], \eta_k \begin{cases} =1 \text{  if }\langle w_j^t,x_k\rangle>0 \\
=\gamma \text{  if }\langle w_j^t,x_k\rangle<0\\
 \in [\gamma,1] \text{ otherwise}
\end{cases}\Big\}.%;\\
%\text{and } \tD_j^{t} = -\frac{1}{n}\sum_{k=1}^n \left(\iind{\langle w_j^t,x_k\rangle>0}+\gamma\iind{\langle w_j^t,x_k\rangle<0}\right) \partial_1 \ell(h_{\theta^t}(x_k),y_k)x_k.
\end{gather*}
In the following, we also note by $D(w,\theta)$ the minimal norm subgradient, which is uniquely defined and happens to be of particular interest:
\begin{equation*}
D(w,\theta) = \argmin_{D\in\D(w,\theta)}\|D\|_2 \ .
\end{equation*}
Note that the set $\D_j^{t}$ only depends on the parameters through the estimated function $h_{\theta^t}$ and the activations of the neuron $A(w_j^t)$ where $A$ is defined by
\begin{equation*}
A : \begin{array}{l} \R^{d} \to \{-1,0,1\}^n \\w \mapsto\big(\sign(\langle w, x_k\rangle)\big)_{k\in[n]}\end{array},
\end{equation*}
where $\sign(0)\coloneqq 0$ by convention.
We thus also note in the following for any $u\in\{-1,0,1\}^n$: $\D_u \coloneqq \D(w,\mathbf{0}) \text{ for any }w\in A^{-1}(u)$, since its definition does not depend on the choice of $w$. If $u\not\in A(\R^d)$, we note $\D_u=\emptyset$ by convention.

Our whole analysis relies on a first well known result, corresponding to the balancedness property \citep{arora2019fine,boursier2022gradient}.
\begin{lem}[Balancedness]\label{lemma:balanced}
For any $j\in[m]$ and $t\geq0$, $(a_j^t)^2 - \|w_j^t\|^2 = (a_j^0)^2 - \|w_j^0\|^2$.
\end{lem}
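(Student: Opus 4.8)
The plan is to show that the quantity $(a_j^t)^2 - \|w_j^t\|^2$ is conserved along the flow by differentiating it in time and using the specific structure of the subgradient flow \eqref{eq:ODEs}. Since the activation is positively homogeneous of degree one, the two layers are coupled in a way that makes this quantity a conserved ``balance'' between the incoming and outgoing weights of each neuron; this is the classical balancedness observation (going back to \citet{du2018algorithmic}), and the proof here just has to be careful about the fact that the flow is only a \emph{subgradient} flow, so time derivatives exist only for almost every $t$.

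First I would fix a neuron $j$ and a time $t$ at which $\theta^t$ is differentiable (which is almost every $t$, since $t\mapsto\theta^t$ is locally Lipschitz by \eqref{eq:ODE1} and hence differentiable a.e.\ by Rademacher). At such a $t$, there exist $\dot w_j^t \in a_j^t\D_j^t$ and $\dot a_j^t \in \langle w_j^t,\D_j^t\rangle$ realising the derivatives in \eqref{eq:ODEs}; concretely there is some common $D\in\D_j^t$ with $\dot w_j^t = a_j^t D$ and $\dot a_j^t = \langle w_j^t, D\rangle$ (the \emph{same} $D$, because $\D_j^t$ is exactly the set of admissible subgradients entering both equations — this is where the homogeneity of $\sigma$ is used). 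Then
\begin{equation*}
\frac{\df}{\df t}\Big((a_j^t)^2 - \|w_j^t\|^2\Big)
= 2 a_j^t \dot a_j^t - 2\langle w_j^t, \dot w_j^t\rangle
= 2 a_j^t \langle w_j^t, D\rangle - 2\langle w_j^t, a_j^t D\rangle = 0 .
\end{equation*}

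Since the derivative of $t\mapsto (a_j^t)^2 - \|w_j^t\|^2$ vanishes for almost every $t$ and this map is locally Lipschitz (hence absolutely continuous), it is constant, which gives $(a_j^t)^2 - \|w_j^t\|^2 = (a_j^0)^2 - \|w_j^0\|^2$ for all $t\ge 0$. The only subtlety — and the one place requiring care rather than routine computation — is justifying that a single $D\in\D_j^t$ simultaneously realises both components of the subgradient flow at a given $t$; this follows because the Clarke subdifferential of $L$ with respect to $\theta$ decomposes blockwise per neuron, and for neuron $j$ each element of $\partial_\theta L$ has $(w_j,a_j)$-block of the form $(-a_j D, -\langle w_j, D\rangle)$ for a common $D\in\D_j^t$, directly from the chain rule applied to $h_\theta$ and the definition of $\D_j^t$. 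Everything else is an immediate cancellation.
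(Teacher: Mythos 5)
Your proposal is correct and follows essentially the same route as the paper: differentiate $(a_j^t)^2 - \|w_j^t\|^2$ along \cref{eq:ODEs} a.e.\ and observe that the two resulting terms cancel. The one step you flag as requiring care --- producing a \emph{common} $D\in\D_j^t$ realising both components of \cref{eq:ODEs} simultaneously --- is sidestepped by the paper with a cleaner observation that makes the matching unnecessary: the scalar product $\langle w_j^t, D\rangle$ takes the \emph{same} value for every $D\in\D_j^t$ (any two elements of $\D_j^t$ differ only by multiples of the $x_k$ with $\langle w_j^t,x_k\rangle=0$, which are orthogonal to $w_j^t$), so $2a_j^t\frac{\df a_j^t}{\df t}$ and $2\langle w_j^t,\frac{\df w_j^t}{\df t}\rangle$ are each unambiguously equal to the singleton $2a_j^t\langle w_j^t,\D_j^t\rangle$ and cancel regardless of which subgradients realise the two derivatives.
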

Continuity of the neuron weights and \cref{eq:balancedinit} then ensure that the sign of $a_j^t$ remains constant during the whole training. We also make the following assumption on the data.
\begin{assumption}\label{ass:Dj}
The data points $(x_k,y_k)$ for any $k\in[n]$ are generated independently, following a distribution that is absolutely continuous with respect to the Lebesgue distribution on $\R^{d+1}$. 
\end{assumption}
\cref{ass:Dj} allows to avoid degenerate situations due to data. In particular, it is required to ensure the following \cref{lemma:Dj}, as well as \cref{lemma:alphamin,lemma:delta0,lemma:interior} in the appendix.
\begin{lem}\label{lemma:Dj}
If \cref{ass:loss,ass:Dj} hold, then almost surely, for any $u\in \{-1,0,1\}^n$ 
\begin{equation*}
\D_u \cap  \left(\partial\bar{A^{-1}(u)} \cup-\partial \bar{A^{-1}(u)}\right)=\emptyset \text{ or } \mathbf{0}\in \D_u,
\end{equation*}
where $\bar{A^{-1}(u)}$ denotes the closure of $A^{-1}(u)$ and $\partial \bar{A^{-1}(u)}$ is the boundary \textbf{of the manifold} $\bar{A^{-1}(u)}$. 
Also, any family $(\partial_1\ell(0,y_k) x_k)_{k}$ with at most $d$ vectors is linearly independent.
\end{lem}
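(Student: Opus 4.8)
The plan is to establish both claims as consequences of a genericity argument: each ``bad'' event is contained in a finite union of measure-zero algebraic subsets of the data space $\R^{n(d+1)}$, so that it fails almost surely under Assumption \ref{ass:Dj}.

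\emph{Second claim first, as a warm-up.} Fix a subset $S\subseteq[n]$ with $|S|\le d$. The family $(\partial_1\ell(0,y_k)\,x_k)_{k\in S}$ is linearly dependent iff all $|S|\times|S|$ minors of the matrix with these rows vanish, which (since $\partial_1\ell(0,y_k)\ne 0$ for $y_k\ne 0$ by Assumption \ref{ass:loss}, and the degenerate case $y_k=0$ has probability zero) is equivalent to $(x_k)_{k\in S}$ being linearly dependent. For fixed $S$, the set of $(x_k)_{k\in S}$ that are linearly dependent is a proper algebraic subvariety of $(\R^d)^{|S|}$, hence Lebesgue-null; since the $x_k$ are jointly absolutely continuous, this event has probability zero, and a union bound over the finitely many $S$ closes the claim.

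\emph{First claim.} Fix $u\in A(\R^d)$ (if $u\notin A(\R^d)$ then $\D_u=\emptyset$ and there is nothing to prove). The cell $A^{-1}(u)=\{w : \sign\langle w,x_k\rangle = u_k\ \forall k\}$ is a relatively open polyhedral cone; its closure $\overline{A^{-1}(u)}$ is a polyhedral cone whose boundary $\partial\overline{A^{-1}(u)}$ (as a manifold) is a finite union of lower-dimensional faces, each supported on a hyperplane $\{w:\langle w,x_k\rangle=0\}$ for some $k$ with $u_k\ne 0$. Meanwhile $\D_u$ is a \emph{fixed} polytope: it is the affine image under $x\mapsto -\frac1n\sum_k \eta_k \partial_1\ell(0,y_k)x_k$ of the box $\prod_{k}I_k$ where $I_k=\{1\}$, $\{\gamma\}$ or $[\gamma,1]$ according to the sign $u_k$ — here I use that $\D_u=\D(w,\mathbf{0})$ evaluates $\partial_1\ell$ at $h_{\mathbf 0}(x_k)=0$. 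So the bad event ``$\D_u\cap(\partial\overline{A^{-1}(u)}\cup -\partial\overline{A^{-1}(u)})\ne\emptyset$ and $\mathbf 0\notin\D_u$'' asks that a fixed polytope $\D_u$, not containing the origin, meets some codimension-$\ge 1$ face cone $F$ of $\overline{A^{-1}(u)}$ or its reflection. The idea is to show this forces a nontrivial algebraic relation among the $x_k$. Concretely, a point in $\D_u\cap F$ is simultaneously of the form $-\frac1n\sum_k\eta_k\partial_1\ell(0,y_k)x_k$ (with the $\eta_k$ ranging over the box) and a nonnegative combination $\sum_{k\in S}\mu_k x_k$ lying in the span of a set of $\le d-1$ of the $x_k$'s cutting out the face; eliminating the free parameters $\eta_k,\mu_k$ by taking an appropriate determinant/minor yields a polynomial equation in $(x_k,y_k)$ that is not identically zero — nonvanishing is witnessed by a single explicit generic configuration, which exists precisely because $\partial_1\ell(0,y_k)\ne 0$ and by the linear-independence statement just proved. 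Hence each such bad event is Lebesgue-null, and a union bound over the finitely many cells $u$ and finitely many faces $F$ finishes the proof.

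\emph{Main obstacle.} The delicate point is the bookkeeping in the first claim: one must handle \emph{uniformly} the continuum of choices of $\eta_k\in[\gamma,1]$ (for indices with $u_k=0$) and of the nonnegative coefficients parameterising the face cone, and argue that \emph{after eliminating these parameters} what remains is a finite union of proper algebraic conditions on the data — rather than, say, a positive-measure set. The clean way is to note that ``polytope $P$ meets polytope/cone $Q$'' is, by LP duality / Farkas, equivalent to the \emph{non}-existence of a separating hyperplane, i.e. to a system of polynomial equalities and inequalities in the data whose solution set, when it has empty interior for one generic data point, has measure zero everywhere; the genericity witness is supplied by \cref{lemma:Dj}'s own linear-independence conclusion together with $\partial_1\ell(0,\cdot)\ne 0$. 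Getting this separation-based argument stated so that it simultaneously covers all faces and both signs $\pm$ is the part that needs care; everything else is a routine ``finite union of null sets'' argument.
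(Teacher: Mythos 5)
Your warm-up on the second claim matches the paper exactly and is fine. For the first claim, you have identified the right objects ($\D_u$ is a polytope, the manifold boundary $\partial\overline{A^{-1}(u)}$ is a union of lower-dimensional faces), but two things go wrong in the way you propose to extract a null event.

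First, a point of a face of $\overline{A^{-1}(u)}$ is not ``a nonnegative combination $\sum_{k\in S}\mu_k x_k$''. The cone $\overline{A^{-1}(u)}$ is cut out by the inequalities $u_k\langle w,x_k\rangle\ge 0$ together with $\langle w,x_k\rangle=0$ for $k\in K(u)$, and its manifold boundary consists of points that in addition satisfy $\langle w,x_k\rangle=0$ for some $k$ with $u_k\ne 0$. You are conflating the cone with its dual; there is no $\mu_k\ge 0$ parametrisation to eliminate.

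Second, and more importantly, the ``eliminate the free parameters by a determinant/minor and invoke a generic witness'' step is exactly where the argument needs a real idea, and the one you would need is not stated. If you leave the box constraints $\eta_k\in[\gamma,1]$ and the face-membership inequalities in play and push them through Farkas/Tarski--Seidenberg, you get a semi-algebraic condition on the data, not an algebraic one; a proper semi-algebraic set can have positive measure, and a single generic witness outside such a set proves nothing about its measure. The fact that rescues the argument --- and is the crux of the paper's proof --- is that the existential quantifier over $\eta$ collapses entirely: any $D\in\D_u\cap\partial\overline{A^{-1}(u)}$ satisfies $\langle D,x_k\rangle=0$ for all $k\in K(u)$, while the only degrees of freedom inside $\D_u$ lie in the direction of $\Span\{x_k : k\in K(u)\}$; hence $D$ is forced to equal the orthogonal projection $P_{S(u)}(\tilde D_u)$, a single point determined by the data. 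With $D$ pinned down, the additional boundary requirement that $\langle D,x_k\rangle=0$ for some $k$ with $u_k\ne 0$ is one polynomial equality in $(x_{k'},y_{k'})$, and nullity then follows by conditioning on the remaining coordinates (the paper does this directly rather than exhibiting a witness). If you incorporate this unique-projection observation, your genericity framing becomes correct; without it, the measure-zero conclusion does not follow.
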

The boundary of the manifold $\partial \bar{A^{-1}(u)}$, which is different from the topological boundary \citep[see, e.g.,][Section~22]{tu2011manifolds}, is here given by
\begin{equation*}\label{eq:boundary}
\partial \bar{A^{-1}(u)} = \Big\{ w\in \R^d \mid \forall k\in[n], \sign(\langle w,x_k \rangle) \begin{cases} \geq 0 \text{ if } u_k = 1\\
= 0 \text{ if } u_k =0\\
\leq 0 \text{ if } u_k=-1
\end{cases} \text{and } A(w)\neq u \Big\}.
\end{equation*}
\cref{lemma:Dj} proves useful in our analysis, as it ensures that all the neuron dynamics at the end of the early alignment phase occur in the interior of the manifolds $A^{-1}(u)$, enabling to control these neurons.
Following \citet{maennel2018gradient}, \cref{def:extremal} introduces extremal vectors, which are key to the early alignment.
\begin{defin}\label{def:extremal}
For any $u\in\{-1,0,1\}^n$, the vector $D\in\D_u$ is said \textbf{extremal} if $D\neq \mathbf{0}$ and $D \in -A^{-1}(u)\cup A^{-1}(u)$.
\end{defin}
Extremal vectors actually correspond to the critical points (up to rescaling) of the following function on the unit sphere
\begin{equation}
\label{eq:G}
G: \begin{array}{l} \bS_{d} \to \R \\w \mapsto\langle w,D(w,\mathbf{0})\rangle\end{array}.
\end{equation}
The function $G$ is piecewise linear. It is indeed linear on each activation cone $A^{-1}(u)\subset \R^d$. As a consequence, it has at most one critical point per cone. In general, the number of extremal vectors is even much smaller than the number of activation cones, since some of the cones do not include any critical point. 
Understanding the function $G$ is crucial, since it is at the core of the early alignment phase. 
\begin{lem}\label{lemma:extremal}
If \cref{ass:loss,ass:Dj} hold, there exists almost surely at least one extremal vector.
\end{lem}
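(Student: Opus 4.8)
The plan is to exhibit an extremal vector as a maximiser of the piecewise-linear function $G$ on the sphere, and then argue via \cref{ass:Dj} that this maximiser must lie in the interior of an activation cone, where the criticality condition forces it to be extremal. First I would observe that $G$ is continuous on the compact set $\bS_d$: on each activation cone $A^{-1}(u)$ the minimal-norm subgradient $D(w,\mathbf 0)$ is constant (it equals the projection of $\mathbf 0$ onto the closed convex set $\D_u$, which depends on $w$ only through $A(w)=u$), so $G$ is linear there, and one checks that these linear pieces agree on the shared boundaries — indeed on a boundary point $w$ the set $\D_{A(w)}$ contains $\D_u$ for every cone $u$ adjacent to $w$, so $\langle w, D(w,\mathbf 0)\rangle$ is consistent. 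Hence $G$ attains a maximum at some $w^\star\in\bS_d$, and I would argue $G(w^\star)>0$: picking any $w$ in the interior of a full-dimensional cone $A^{-1}(u)$, the vector $D(w,\mathbf 0)$ is a strictly positive combination (weights in $\{1,\gamma\}$, with the label-$0$ case excluded generically) of $-\tfrac1n\partial_1\ell(0,y_k)x_k$ terms — here \cref{ass:Dj}, which guarantees $\partial_1\ell(0,y_k)\neq 0$ and that the $x_k$ are in general position, ensures $D(w,\mathbf 0)\neq\mathbf 0$, so by rescaling $w$ within its cone we can make $\langle w, D(w,\mathbf 0)\rangle>0$; normalising gives a point on the sphere with positive value.

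Next I would leverage \cref{lemma:Dj} to rule out the maximiser sitting on a cone boundary. If $w^\star$ lay on the boundary $\partial\bar{A^{-1}(u)}$ for its cone $u=A(w^\star)$, then generically (by \cref{lemma:Dj}) either $\mathbf 0\in\D_u$, which would force $G(w^\star)=\langle w^\star,\mathbf 0\rangle=0$ contradicting positivity of the maximum, or $D(w^\star,\mathbf 0)\notin \partial\bar{A^{-1}(u)}\cup -\partial\bar{A^{-1}(u)}$; in the latter case I would show the maximum cannot be attained there by moving slightly into an adjacent cone and computing the directional derivative of the corresponding linear piece — since $D$ is not aligned with the boundary, the linear function $w\mapsto\langle w,D\rangle$ has nonzero tangential gradient along the sphere and can be strictly increased, contradicting maximality. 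So $w^\star$ lies in the relative interior of some cone $A^{-1}(u)$, a smooth piece of the sphere on which $G(w)=\langle w, D_u\rangle$ with $D_u$ fixed.

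Finally, on this smooth piece the first-order optimality condition for the constrained maximum of the linear map $w\mapsto\langle w,D_u\rangle$ over $\bS_d$ reads $D_u = \langle w^\star, D_u\rangle\, w^\star = G(w^\star)\, w^\star$, i.e. $D_u$ is a strictly positive multiple of $w^\star$ (positivity since $G(w^\star)>0$). In particular $D_u\in A^{-1}(u)$ (the cone is positively homogeneous and contains $w^\star$) and $D_u\neq\mathbf 0$, which is exactly the definition of an extremal vector in \cref{def:extremal}. I would then conclude by noting all the "almost surely" qualifiers are inherited from \cref{lemma:Dj} and the genericity of the data under \cref{ass:Dj}. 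The main obstacle I anticipate is the boundary case analysis: carefully formalising "move into an adjacent cone and increase $G$" requires understanding the local geometry of the arrangement of cones near $w^\star$ — one must verify that some adjacent full-dimensional cone has $w^\star$ on its closure and that the relevant linear piece indeed increases in an admissible tangent direction, which is where \cref{lemma:Dj} (non-alignment of $D_u$ with the cone boundary) does the essential work.
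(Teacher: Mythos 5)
Your outline follows the same high-level strategy as the paper (take a constrained optimiser of $G$ on the sphere and show it corresponds to an extremal vector), but there is a genuine gap in the step where you claim $G(w^\star)>0$. The argument you give — $D(w,\mathbf 0)\neq\mathbf 0$ for generic $w$ in a full-dimensional cone, ``so by rescaling $w$ within its cone we can make $\langle w, D(w,\mathbf 0)\rangle>0$'' — does not hold up. Rescaling $w$ by a positive factor leaves $G(w/\|w\|)$ unchanged (the sign of $\langle w,D_u\rangle$ is invariant under positive scaling of $w$ within the cone, since $D_u$ depends only on $A(w)$), and $D_u\neq\mathbf 0$ by itself places no constraint on the sign of $\langle w,D_u\rangle$. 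So you have not ruled out $\max_{\bS_d}G\le 0$, and with it the possibility that your maximiser $w^\star$ satisfies $\mu=G(w^\star)=0$ at optimality, in which case the multiplier argument produces $D(w^\star,\mathbf 0)=\mathbf 0$, which is not extremal.

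The paper closes exactly this hole by treating the degenerate case explicitly: it runs the KKT/Clarke-subdifferential argument at both $\arg\max G$ and $\arg\min G$, observes that in either case one either obtains an extremal vector or the optimal value is $0$, and then shows that if both the max and the min vanish then $G\equiv 0$, which it refutes by evaluating $G$ at the data directions $x_k/\|x_k\|$ and invoking \cref{ass:Dj} (the expansion of $G(x_k/\|x_k\|)$ cannot vanish for absolutely continuous data). Separately, the paper works directly with the Clarke subdifferential at $w^*$ — using the fact that $\langle w^*,v\rangle$ is the same for every $v\in\D(w^*,\mathbf 0)$, hence $D(w^*,\mathbf 0)=\mu w^*$ by minimal-norm selection — which sidesteps your ``rule out the boundary by moving into an adjacent cone'' discussion entirely; that argument, while plausible, requires you to exhibit an adjacent full-dimensional cone and an admissible tangent direction of increase, and you would need \cref{lemma:Dj} to do more work than you sketch. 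If you want to salvage your route, add the min case and the $G\not\equiv 0$ contradiction, and either replace the boundary analysis with the Clarke KKT decomposition or supply the adjacent-cone geometry in detail.
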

In the early alignment phase, the parameters norm remains small so that $h_{\theta^t}\approx 0$. Meanwhile, the vectors $\frac{w_j^t}{a_j^t}$ approximately follow an ascending sub-gradient flow of $G$ on the unit ball of $\R^d$ (see \cref{eq:ODEalignsketch} in the proof sketch of \cref{thm:alignment}). 
Since this directional movement happens much faster than the norm growth of the parameters, the vectors $w_j^t$ end up being aligned in direction to the critical points of $G$, \ie the extremal vectors.
\cref{thm:alignment} precisely quantifies this phenomenon for every neuron satisfying \cref{cond:neurons2} below.
\begin{condition}\label{cond:neurons2}
The neuron $j\in[m]$ satisfies \cref{cond:neurons2} for $\alpha_0>0$ if both
\begin{enumerate}
\item $\langle D(w_j^0,\mathbf{0}),\frac{w_j^0}{a_j^0}\rangle>-\sqrt{1-\alpha_0^2}\| D(w_j^0,\mathbf{0})\|$;
\item for any $t\in \R_+$: \hspace{0.1cm} $w_j^t = \mathbf{0} \implies w_j^{t'} = \mathbf{0}\text{ for all } t'\geq t.$
\end{enumerate}
\end{condition}
The meaning and necessity of this individual neuron condition is discussed further in \cref{rem:neurons2a,rem:neurons2b} below.
\begin{thm}\label{thm:alignment}
If \cref{ass:loss,ass:Dj} hold and the function $G$ defined in \cref{eq:G} does not admit a saddle point, then the following holds for any constant $\varepsilon\in(0,\frac{1}{3})$, $\alpha_0>0$ and initialisation scale $\lambda < \lambda^*_{\alpha_0}$ where $\lambda_{\alpha_0}^*>0$ only depends\footnote{The exact value of $\lambda_{\alpha_0}^*$ is given by \cref{eq:lambdastaralpha} in \cref{app:alignment}.} on the data $(x_k,y_k)_k$, $\alpha_0$ and the activation parameter $\gamma$; with $D_{\max} \coloneqq \max\limits_{w\in\R^d}\|D(w,\mathbf{0})\|$ and $\tau\coloneqq -\frac{\varepsilon\ln(\lambda)}{D_{\max}}$,
\begin{enumerate}[itemsep=-0.5em, topsep=0.5em, label=(\roman*),leftmargin=1cm]
\item output weights do not grow large until $\tau$:
\begin{gather*}
\forall t \leq \tau,\forall j\in[m], |a_j^0|\lambda^{2\varepsilon} \leq |a_j^t| \leq |a_j^0|\lambda^{-2\varepsilon} \quad \text{ and }\quad \|w_j^t\|\leq |a_j^t|.
\end{gather*}
\item Moreover,  for any neuron $j$ satisfying \cref{cond:neurons2} for $\alpha_0$, $D(w_j^\tau,\mathbf{0})$ is either an extremal vector or $\mathbf{0}$, along which $w_j^\tau$ is aligned:
\begin{gather*}
\|D(w_j^\tau,\mathbf{0})\|\geq\langle D(w_j^\tau,\mathbf{0}),\frac{w_j^\tau}{a_j^\tau}\rangle\geq \|D(w_j^\tau,\mathbf{0})\| -\mathcal{O}_{\alpha_0}\left(\lambda^{\frac{\|D(w_j^\tau,\mathbf{0})\|}{D_{\max}}\varepsilon}\right),\\
\text{or}\quad \frac{w_j^{\tau}}{\|w_j^{\tau}\|} = -\frac{D(w_j^\tau,\mathbf{0})}{\|D(w_j^\tau,\mathbf{0})\|}.
\end{gather*}
Also, the direction towards which $w_j^{\tau}$ is aligned corresponds to a local maximum (resp. minimum) of $G$ if $a_j^0>0$ (resp. $a_j^0<0$).
\end{enumerate} 
\end{thm}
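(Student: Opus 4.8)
The plan is to treat \cref{thm:alignment}(i) with a Grönwall/bootstrap estimate relying only on balancedness and on the output weights having total second moment $\mathcal{O}(\lambda^2)$, and \cref{thm:alignment}(ii) by recasting the normalised dynamics of $w_j^t/a_j^t$ as a perturbed subgradient flow of $G$ on the sphere: since the effective time available for this flow is $\Omega(|\ln\lambda|)$ while the perturbation is only $\mathcal{O}(\lambda^{2-4\varepsilon})$, the flow is driven to a critical point of $G$.

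For (i), the inequality $\|w_j^t\|\le|a_j^t|$ is immediate from \cref{lemma:balanced} and \eqref{eq:balancedinit}. For the two-sided control of $|a_j^t|$ I would run a bootstrap on the statement ``$\lambda^{2\varepsilon}\le|a_j^{t'}|/|a_j^0|\le\lambda^{-2\varepsilon}$ for all $j$ and $t'\le t$''. On this event, using $|\sigma(z)|\le|z|$, $\|w_j\|\le|a_j|$ and $\frac1m\sum_j\tilde a_j^2\le1$ from \eqref{eq:initialisation}, one gets $|h_{\theta^{t'}}(x_k)|\le\|x_k\|\sum_j|a_j^{t'}|^2\le\|x_k\|\lambda^{-4\varepsilon}\sum_j|a_j^0|^2\le(\max_k\|x_k\|)\lambda^{2-4\varepsilon}$, so that by the $1$-Lipschitzness of $\partial_1\ell(\cdot,y)$ (\cref{ass:loss}) the minimal-norm subgradient $D(w_j^{t'},\theta^{t'})$ driving \eqref{eq:ODEs} satisfies $\|D(w_j^{t'},\theta^{t'})\|\le D_{\max}+\mathcal{O}(\lambda^{2-4\varepsilon})$. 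Since $\frac{\df}{\df t}\ln|a_j^{t'}|=\langle w_j^{t'},D(w_j^{t'},\theta^{t'})\rangle/a_j^{t'}$ and $\|w_j^{t'}\|\le|a_j^{t'}|$, its modulus is at most $D_{\max}+\mathcal{O}(\lambda^{2-4\varepsilon})$; integrating up to $\tau=-\varepsilon\ln(\lambda)/D_{\max}$ gives $|\ln(|a_j^{t'}|/|a_j^0|)|\le\varepsilon|\ln\lambda|(1+\mathcal{O}(\lambda^{2-4\varepsilon}))\le2\varepsilon|\ln\lambda|$ once $\lambda<\lambda^*_{\alpha_0}$ is small enough, which closes the bootstrap by continuity.

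For (ii), I would track $z_j^t\coloneqq w_j^t/a_j^t$, which lies in the closed unit ball ($\|z_j^t\|\le1$) and satisfies $\bar w_j^t=\sign(a_j^0)\,z_j^t/\|z_j^t\|$ since $\sign(a_j^t)$ is constant. From \eqref{eq:ODEs} one computes $\dot z_j^t=(I-z_j^t(z_j^t)^{\top})D(w_j^t,\theta^t)$; by (i) and the bound above, $D(w_j^t,\theta^t)=D(w_j^t,\mathbf 0)+\mathcal{O}(\lambda^{2-4\varepsilon})$, where $D(w_j^t,\mathbf 0)=\argmin_{D\in\D_{A(w_j^t)}}\|D\|$ is constant on the interior of each activation cone, and $\frac{\df}{\df t}\|z_j^t\|^2=2\langle z_j^t,D(w_j^t,\theta^t)\rangle(1-\|z_j^t\|^2)$ drives $\|z_j^t\|$ to $1$. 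On a cone interior, $(I-z_j^t(z_j^t)^{\top})D(w_j^t,\mathbf 0)$ equals, up to the positive factor $|a_j^t|/\|w_j^t\|\ge1$, the Riemannian gradient on $\bS_d$ of $G$ (if $a_j^0>0$) or of $-G$ (if $a_j^0<0$); so, after re-parameterising time by $\df s=(|a_j^t|/\|w_j^t\|)\,\df t\ge\df t$, the direction $\bar w_j^t$ follows a perturbed ascending subgradient flow of $\sign(a_j^0)G$ on $\bS_d$ with perturbation $\mathcal{O}(\lambda^{2-4\varepsilon})$, run for effective time $S\ge\tau$ that diverges as $\lambda\to0$. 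Here \cref{cond:neurons2}(1) prevents $z_j^0=w_j^0/a_j^0$ from being too anti-aligned with $D(w_j^0,\mathbf 0)$, i.e., too close to the repelling equilibrium of this flow, so it escapes neighbourhoods of repellers in time $\mathcal{O}_{\alpha_0}(1)$, and \cref{cond:neurons2}(2) handles the degenerate case $w_j^t=\mathbf 0$ (forcing $D(w_j^\tau,\mathbf 0)=\mathbf 0$ by the genericity of \cref{ass:Dj}).

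It then remains to analyse this flow from the piecewise-linear structure of $G$ on $\bS_d$ (finitely many linear pieces, controlled through \cref{ass:Dj} and \cref{lemma:Dj,lemma:alphamin,lemma:delta0,lemma:interior}). Since $G$ has no saddle point, every critical point is an isolated local maximum or minimum and the minimal-norm subgradient of $G$ is bounded below outside a fixed neighbourhood of the finitely many critical points; hence $\sign(a_j^0)G$ is a strict Lyapunov function, the flow reaches a small fixed neighbourhood of its limit in effective time $\mathcal{O}_{\alpha_0}(1)$, and converges to a single critical point — a local maximum if $a_j^0>0$ and a minimum if $a_j^0<0$ — which by \cref{lemma:Dj,lemma:interior} is either an extremal vector or a direction whose minimal-norm subgradient $D(\cdot,\mathbf 0)$ vanishes. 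Near a cone-interior limit $z^\star$ with $D(z^\star,\mathbf 0)\parallel z^\star$ the flow is linear with contraction rate $\|D(z^\star,\mathbf 0)\|$, so over the remaining effective time $\ge\tau-\mathcal{O}_{\alpha_0}(1)$ the residual angle is $\mathcal{O}_{\alpha_0}(\lambda^{\|D(z^\star,\mathbf 0)\|\varepsilon/D_{\max}})$ (at cone vertices convergence is finite-time and only the $\mathcal{O}(\lambda^{2-4\varepsilon})$ perturbation survives, which is smaller since $\varepsilon<\frac13$); together with $\|z_j^\tau\|=1-\mathcal{O}_{\alpha_0}(\lambda^{2\|D(z^\star,\mathbf 0)\|\varepsilon/D_{\max}})$ this yields the claimed two-sided estimate on $\langle D(w_j^\tau,\mathbf 0),w_j^\tau/a_j^\tau\rangle$, while the exact alternative $w_j^\tau/\|w_j^\tau\|=-D(w_j^\tau,\mathbf 0)/\|D(w_j^\tau,\mathbf 0)\|$ covers the remaining degenerate cases (a neuron frozen at an exactly anti-aligned direction). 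I expect the main obstacle to be exactly this last step: making the ``perturbed subgradient flow of the non-smooth piecewise-linear $G$'' argument fully rigorous — controlling cone crossings and the set-valued dynamics on cone boundaries, verifying that the no-saddle assumption yields global convergence with a rate uniform over all neurons, and pushing the two-timescale bookkeeping (alignment versus norm growth) through with the exact exponents of \cref{thm:alignment}.
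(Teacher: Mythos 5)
Your proposal follows essentially the same strategy as the paper: a Grönwall/bootstrap argument for part (i), and for part (ii) the interpretation of the normalised neuron dynamics as a perturbed ascending subgradient flow of $G$ that is driven, over the time scale $\tau=\Theta(|\ln\lambda|)$, toward a critical direction while the perturbation due to $h_{\theta^t}\neq 0$ stays $\mathcal{O}(\lambda^{2-4\varepsilon})$. Your part (i) argument matches the paper's almost line by line (the paper phrases it with a stopping time $t_1$ rather than a bootstrap, but the content is identical: $\|w_j^t\|\le|a_j^t|$ by balancedness, $|h_{\theta^t}(x_k)|=\mathcal{O}(\lambda^{2-4\varepsilon})$, hence $\|D(w_j^t,\theta^t)\|\le D_{\max}+\mathcal{O}(\lambda^{2-4\varepsilon})$, Grönwall on $\ln|a_j^t|$).

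For part (ii) there is one genuine difference of formulation and one genuine gap. The formulation difference: you project to the sphere $\bar w_j^t=w_j^t/\|w_j^t\|$ and reparameterise time by $\df s=(|a_j^t|/\|w_j^t\|)\,\df t\ge\df t$, which is elegant but introduces a singularity at $w_j^t=\mathbf 0$; the paper instead tracks the un-normalised $\w_j^t=w_j^t/a_j^t\in B(0,1)$ and the scalar $\langle\w_j^t,D(w_j^t,\mathbf 0)\rangle$, which remains smooth through $w_j^t=\mathbf 0$ and yields the key differential inequality $\frac{\df}{\df t}\langle\w_j^t,D\rangle\ge\|D\|^2-\langle\w_j^t,D\rangle^2-\mathcal{O}(\lambda^{2-4\varepsilon})$, a self-contained $\tanh$-type comparison delivering the exponent $\lambda^{\|D\|\varepsilon/D_{\max}}$ directly. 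The genuine gap, which you honestly flag, is precisely the hard content of the paper's Appendix: proving that once a neuron's direction lands in an attracting activation cone (one for which $D_u\in A^{-1}(u)\cup\{\mathbf 0\}$), it remains inside that cone up to time $\tau$ \emph{despite} the $\mathcal{O}(\lambda^{2-4\varepsilon})$ perturbation of the driving subgradient. This is not a bookkeeping step: it is where \cref{ass:Dj} and the quantities $\delta_0,\alpha_{\min}$ earn their keep, and it occupies \cref{lemma:saddles1} (local stability of critical manifolds under perturbation, exploiting that $D_u$ lies strictly in the interior of $-A^{-1}(u)\cup A^{-1}(u)$) and \cref{lemma:saddles} (global stability on $[0,\tau]$). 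Similarly, your claim that the exact anti-aligned alternative $w_j^\tau/\|w_j^\tau\|=-D/\|D\|$ "covers the remaining degenerate cases" requires the paper's \cref{lemma:quantization}: when the local maximum direction $-D_u/\|D_u\|$ is anti-aligned, the critical cone degenerates to the half-line $\R_-^*D_u$, forcing the neuron to remain exactly on that ray (or reach $\mathbf 0$, handled by \cref{cond:neurons2}(2)). Without these stability and structure lemmas, the statement that the perturbed flow ``converges to a single critical point'' with a uniform rate across neurons and cone crossings is an assertion, not a proof. Your outline is the right one; what is missing is exactly the part the paper spends \cref{app:localstability,app:globalstability} proving.
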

\cref{thm:alignment} describes for a small enough scale of initialisation the early alignment phase, which happens during a time of order $\varepsilon\ln(\frac{1}{\lambda})$ at the beginning of the training dynamics. First, the neurons all remain of small norm during this phase---while the term $\lambda^{-2\varepsilon}$ grows large as $\lambda \to 0$, the other term $|a_j^0|$ also scales in $\lambda$, making their product bounded and arbitrarily small as $\lambda \to 0$.

Second, neurons end up aligned towards a few key directions, given by extremal vectors. There are indeed few such directions: as a first observation, there is at most one extremal vector $D(w,\mathbf{0})$ per activation cone, and there are at most $\bigO{\min(3^{n},n^d)}$ such cones \citep[see e.g.,][Theorem 4]{cover1965geometrical}. %\footnote{\citet[][Theorem 4]{cover1965geometrical} provides a tighter bound on the number of activation cones when $n>\text{rank}(X)$.}
In general, the number of extremal vectors is even much smaller. For example, studies describing the complete parameters dynamics \citep{phuong2020inductive,lyu2021gradient,boursier2022gradient,chistikov2023learning,min2023early,wang2023understanding}
 all count either one or two extremal vectors---with the exception of \citet{glasgow2024sgd}, where the population loss counts $4$ extremal vectors (we refer to \cref{app:glasgow} for more details about this fact). The subsequent work of \citet{boursier2024simplicity} even goes beyond, by studying the extremal vectors of $G$ when the number of data points grows to infinity. In particular, they showed that for some linear data model, there are only two extremal vectors for large number of training samples. 
A general understanding of the number of extremal vectors yet remains open. 
We believe this quantization of represented directions to be closely related to the implicit bias of first order optimisation methods and elaborate further on this aspect in \cref{sec:discussion}.

Note that some neurons $w_j^{\tau}$ are not aligned towards extremal vectors but instead have ${D(w_j^\tau,\mathbf{0})=\mathbf{0}}$. Thanks to the first point of \cref{lemma:interior} in \cref{app:alignment}, this means that the neuron is deactivated with all data points $x_k$. As a consequence, it does not move anymore during training and has no impact at all on the estimated function $h_{\theta^t}$, i.e., these neurons can be ignored after the early alignment phase.

\medskip

The complete proof of \cref{thm:alignment} is given in \cref{app:alignment} and is sketched below, at the end of this section. 
%Its proof first shows the first point by a simple Gr\"onwall inequality argument. From there, the estimated function $h_{\theta^t}$ is nearly zero during this whole phase. This allows to show that the vectors $\frac{w_j^t}{a_j^t}$ nearly follow a sub-gradient flow over the function $G$ and end up almost aligned with critical points of $G$. 
%Guaranteeing that such an alignment occurs after a finite time $\tau$ is actually tricky and necessitates \cref{cond:neurons2} over the neuron~$j$. This condition is discussed further in the two remarks below. 
%
\begin{rem}\label{rem:neurons2a}
The first point of \cref{cond:neurons2} only bounds away from $-1$ the alignment of $w_j^t$ with $D(w_j^t,\mathbf{0})$. When $\alpha_0\to 0$, this covers all the neurons with probability $1$. However, when fixing $\alpha_0>0$ and let $m$ go to infinity, a (small) fraction of neurons does not satisfy this condition. These neurons are hard to control, as their alignment speed is arbitrarily slow at the beginning of the procedure (see \cref{eq:ODEalignsketch}): they can then take an arbitrarily large time before being aligned to some extremal vector.
\end{rem}
\begin{rem}\label{rem:neurons2b}
Neurons such that $w_i(t)=\mathbf{0}$ at some time can spontaneously leave $\mathbf{0}$ in a way that cannot be controlled---both in the time at which it leaves, and in the direction at which it does so---due to the multiplicity of subgradient flow solutions of \cref{eq:ODE1}. The second point in \cref{cond:neurons2} then restricts the analysis to natural solutions, by assuming that as soon as a neuron is $\mathbf{0}$, it does not move anymore. This is what generally happens in practice for ReLU activations, where we fix $\sigma'(0)=0$ in common implementations. Another way to ensure this point is to consider a balanced initialisation $|a_j^0|=\|w_j^0\|$. In that case, a simple Grönwall argument allows to show that both $a_j^t$ and $w_j^t$ never cancel, automatically guaranteeing the second point of \cref{cond:neurons2}. 
Adding weight decay to the optimisation scheme (with any choice of regularisation parameter) would also ensure the second point of \cref{cond:neurons2}.
\end{rem}

A significant assumption in \cref{thm:alignment} is that $G$ has no saddle point. While $G$ can have saddle points in complex data cases, we stress that the absence of saddles for $G$ is also a significant possibility and covers all previous works describing the complete parameters dynamics \citep{phuong2020inductive,lyu2021gradient,boursier2022gradient,chistikov2023learning,min2023early,wang2023understanding,glasgow2024sgd}, as well as any $2$-dimensional data.\footnote{\label{foot:saddlesdim}We refer to \cref{lemma:saddlesdim} for a proof of this fact.} As a consequence, the first phase of training in these works is fully grasped by \cref{thm:alignment}. The additional technical contribution of \citet{glasgow2024sgd} for the first phase  lies in further bounding the difference between gradient flow and SGD during this early alignment phase and we detail further in \cref{app:glasgow} how our results can be applied to the XOR data setting studied by \citet{glasgow2024sgd}. 
We provide in \cref{app:alignmentgeneral} an adapted version of \cref{thm:alignment} that holds in the presence of saddle points, but requires a stronger condition on the neurons.

The presence of saddle points is much harder to handle in general, as neurons can evolve arbitrarily slowly near saddle points. Providing a finite time convergence for such neurons then does not seem possible. Additionally, non trivial phenomena can happen in the presence of saddle points that are due to the non-continuity of the loss gradient. Notably some non-zero neurons can spontaneously leave their activation regions in a way that cannot be uniformly controlled, due to the multiplicity of gradient flow solutions of \cref{eq:ODE1}. In \cref{app:alignmentgeneral}, we again consider specific gradient flow solutions to control such phenomena, via \cref{cond:neurons}.

\medskip

Recall that \cref{ass:Dj} is only required so that \cref{lemma:Dj,lemma:alphamin,lemma:delta0,lemma:interior} hold. A deterministic version of both \cref{thm:alignment,thm:alignmentgeneral} is then possible without \cref{ass:Dj}, if we ensure that \cref{lemma:Dj,lemma:alphamin,lemma:delta0,lemma:interior} all simultaneously hold.

\medskip

The concurrent works of \citet{tsoy2024simplicity,kumar2024directional} provide a complementary characterisation of this early alignment phase. Notably \citet{tsoy2024simplicity} characterize, for smooth activations, the early alignment as well as the first growth of neurons cluster occurring after this alignment.  As an artifact of their analysis, they also require an odd data dimension~$d$. On the other hand, \citet{kumar2024directional} describe the early alignment for more general network architectures, assuming a unique gradient flow solution. 

In contrast to our work, their results do not quantify the initialisation scale $\lambda_{\alpha_0}^*$ at which this early alignment phase happens. \cref{thm:alignment} indeed provides a quantitative bound on such a scale, given by \cref{eq:lambdastaralpha} in \cref{app:alignment}. Importantly, this bound does not depend on the network width $m$. This point is crucial in \cref{sec:noconvergence} as it leads to results that hold for any value of $m$ and thus remain valid in the mean field limit when $m\to\infty$. Our \cref{thm:noconvergence} in \cref{sec:noconvergence} being valid in the mean field limit is a crucial point, since it implies that the seminal result by \citet{chizat2018global} of global convergence of overparameterised networks does not extend to the non-smooth case. This relation to \citet{chizat2018global} is discussed further in \cref{sec:discussion}.

\medskip

Although our scale threshold $\lambda_{\alpha_0}^*$ is independent of $m$, it still depends in the data. It is unclear how this threshold scales with quantities of interest, such as the number of training samples $n$ or the data dimension $d$. Given the generality of the data considered here (\cref{ass:Dj}), we cannot expect a more precise description of this dependence. However, we believe such a description to be both of interest and feasible for typical data models. In particular, the subsequent work of \citet{boursier2024simplicity} studies the evolution of this scale threshold with both $n$ and $d$ for a linear data model with Gaussian inputs.

\paragraph{Sketch of proof.} The first point of the proof follows from a simple Gr\"onwall inequality argument on \cref{eq:ODEs}, using the fact that $|a_j^t|\geq \|w_j^t\|$ for any $t$ by balancedness.

From the first point, $h_{\theta^t}(x_k)=\bigO{\lambda^{2-4\varepsilon}}$ during the early alignment phase. As a consequence, we can derive the following approximate ODE almost everywhere:
\begin{equation}\label{eq:ODEalignsketch}
\frac{\df \langle \w_j^t, D(w_j^t,\mathbf{0}) \rangle}{\df t}=  \|D(w_j^t,\mathbf{0})\|^2 - \langle \w_j^t, D(w_j^t,\mathbf{0})\rangle^2 - \bigO{\lambda^{2-4\varepsilon}},
\end{equation}
where $\w_j^t=\frac{w_j^t}{a_j^t}$ is the direction of the neuron $j$. If $a_j^t>0$ (resp. $a_j^t<0$), this equality corresponds to an approximate projected gradient ascent (resp. descent) of $G$ on the unit ball.

In a first time, thanks to \cref{cond:neurons2}, $|\langle \w_j^t, D(w_j^t,\mathbf{0})\rangle|$ is bounded away from $\|D(w_j^t,\mathbf{0})\|$, which ensures that $\langle \w_j^t, D(w_j^t,\mathbf{0})\rangle$ is increasing at a rate $\Omega_{\alpha_0}(1)$, until either $D(w_j^{t_2},\mathbf{0})=\mathbf{0}$ or $|\langle \w_j^{t_2}, D(w_j^{t_2},\mathbf{0})\rangle|$ is close to $\|D(w_j^t,\mathbf{0})\|$ for some time $t_2=\mathcal{O}_{\alpha_0}(1)$. 

The former case implies that the neuron is deactivated for the remaining of the training, and thus $D(w_j^{\tau},\mathbf{0})=\mathbf{0}$. 
The second case implies $\w_j^t$ is close in direction to $D(w_j^t,\mathbf{0})$. Moreover since $\langle \w_j^t, D(w_j^t,\mathbf{0}) \rangle$ is increasing at the start of training and $G$ has no saddle points, $D(w_j^t,\mathbf{0})$ corresponds to a local maximal direction of $G$ if $a_j^0>0$ (and minimal if $a_j^0<0$). 

As a consequence, we can show by studying the local maxima (or minima) of $G$ that if $\w_j^{t_2}$ is negatively correlated with $D(w_j^{t_2}, \mathbf{0})$, it is positively proportional to $-D(w_j^{t_2},\mathbf{0})$. From there,  $w_j^t$ stays aligned with $-D(w_j^{t_2},\mathbf{0})$ until $\tau$ and only changes in norm.
If instead $\w_j^{t_2}$ is positively correlated with $D(w_j^{t_2}, \mathbf{0})$, then we have a stability result (\cref{lemma:saddles} in \cref{app:globalstability}) showing that
\begin{equation}\label{eq:manifoldstability}
A(w_j^{t})=A(w_j^{t_2}) \quad \text{for any }t\in[t_2,\tau].
\end{equation}
In other words, $D(w_j^t,\mathbf{0})$ is constant on $[t_2,\tau]$, so that \cref{eq:ODEalignsketch} is close to an ODE of the form $f'(t)=c-f(t)^2$ on this interval. It then implies, by Gr\"onwall comparison, exponential convergence of $\langle \w_j^t, D(w_j^t,\mathbf{0}) \rangle$ towards  $\|D(w_j^t,\mathbf{0})\|$ and allows to conclude the second point of \cref{thm:alignment}. \qed

\medskip

Although these arguments are rather intuitive, their rigorous proof is tedious. Indeed, the estimated function $h_{\theta^t}$ is not exactly $0$. It is only close to $0$ during the early alignment phase, so that the neurons' dynamics are not exactly controlled by the vectors $D(w_j^t,\mathbf{0})$, but small perturbations of these vectors. As a consequence, we have to control carefully all the perturbed dynamics. In particular, showing the stability of the critical manifolds, i.e., \cref{eq:manifoldstability}, is quite technical and is done in \cref{app:localstability,app:globalstability}. The fact that stability is still preserved when slightly perturbing the vectors $D(w_j^t,\mathbf{0})$ largely relies on \cref{ass:Dj}.

\section{Convergence Towards Spurious Stationary Points}\label{sec:noconvergence}

As explained in \cref{sec:discussion} below, the early alignment phenomenon has an interesting impact, especially on the implicit bias of gradient descent.
However, it can also lead to undesirable, counter-intuitive results. This section aims to demonstrate that, in simple data examples, the early alignment phenomenon can be the cause of largely overparameterised neural networks converging to spurious stationary points. 
Such a result is somewhat surprising, as most of the current literature suggests that with a large enough number of neurons in the mean field regime, the learnt estimator converges towards a zero training loss function. This discrepancy with previous results is discussed further in \cref{sec:discussion}.

\medskip

We consider in this section the particular case of regression with ReLU activation: $\sigma(x)=\max(0,x)$ and
\begin{equation}\label{eq:squareloss}
\ell(\hat{y},y)=\frac{1}{2}(\hat{y}-y)^2.
\end{equation}
We consider the following $3$ points data example ($n=3$ in this section).
\begin{assumption}\label{ass:noconvergenceall}
The data is given by $3$ points $(x_k,y_k) \in \R^3$, for some $\eta>0$,
\begin{gather*}x_1\in (-1,-1+\eta]\times[1,1+\eta] \text{ and } y_1 \in[1,1+\eta]; \\
x_2\in[-\eta,\eta]\times[1-\eta,1+\eta] \text{ and } y_2\in(0,\eta];\\
x_3 \in [1-\eta,1)\times[1,1+\eta]\text{ and } y_3\in[1,1+\eta].
\end{gather*}
\end{assumption}
This assumption corresponds to a non-zero measure set of $\R^{3\times 3}$, so that it cannot be considered as a specific degenerate case. Also setting all the second coordinates of the $x_k$ to $1$ is possible and would correspond to the univariate dataset shown in \cref{fig:noconvdata} below, with bias terms in the hidden layer of the network.

\begin{figure}[ht]
\centering
\captionsetup{width=.8\linewidth}
\includegraphics[width=0.75\linewidth,trim=0cm 0cm 0cm 0cm, clip]{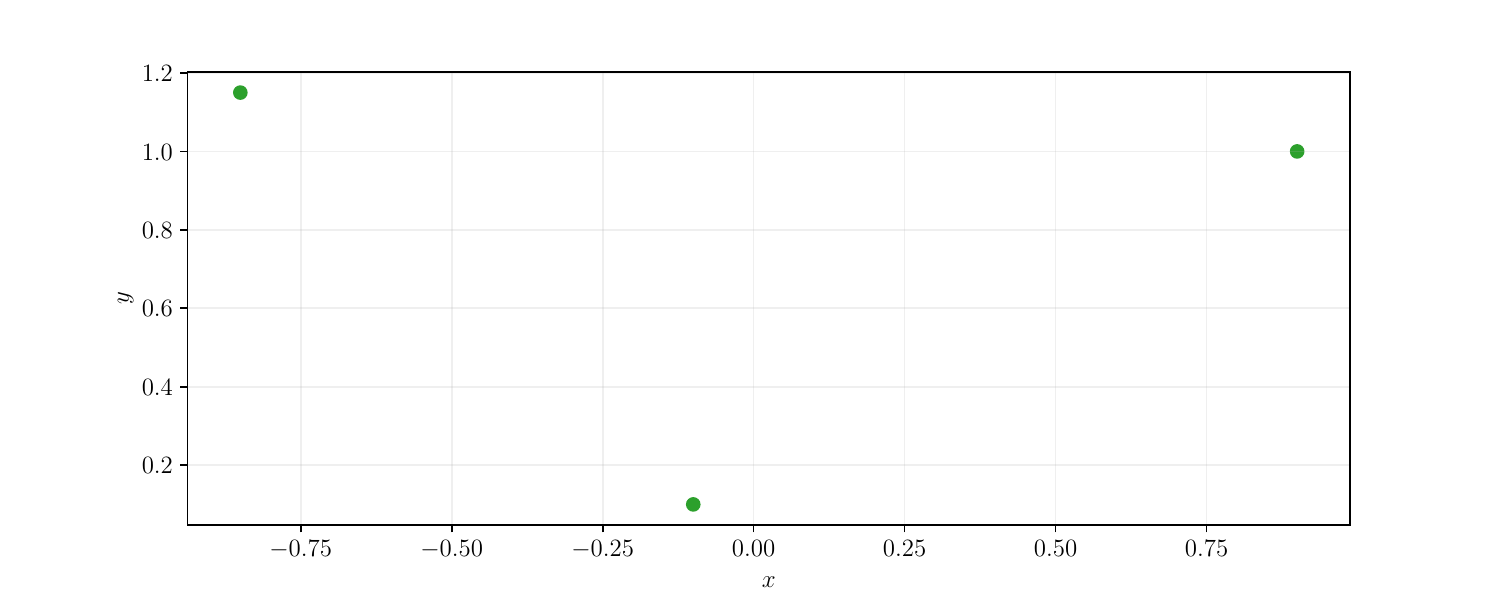}
\caption{example of univariate data verifying \cref{ass:noconvergenceall} with $\eta=\frac{1}{6}$. The second coordinate of the features $x_k$ is here fixed to $1$ to take into account bias terms of the neural network.}
\label{fig:noconvdata}
\end{figure}

\medskip

\begin{thm}\label{thm:noconvergence}
Let $\beta^* \coloneqq  \left(X^{\top} X \right)^{-1}X^{\top}y$ be the ordinary least squares estimator of the data.
If \cref{ass:noconvergenceall} holds with $\eta<\frac{1}{6}$, then there exists some constant $\tilde\lambda=\Theta(1)$ such that for any $\lambda < \tilde\lambda$ and $m \in \N$, the parameters $\theta^t$ converge to some $\theta_{\infty}$ such that
\begin{equation*}
h_{\theta_{\infty}}(x_k)= x_k^{\top}\beta^* \text{ for any } k\in [n]. 
\end{equation*}
In particular, it satisfies $\lim_{t\to\infty}L(\theta^t)>0$. 
%The value of $\lambda^*_2>0$ only depends on the dataset.
\end{thm}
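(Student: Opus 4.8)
The plan is to apply \cref{thm:alignment} to this three-point dataset, show the resulting aligned configuration coincides with a spurious stationary point, and then argue that once the weights land near this configuration they can only converge to the least-squares line $x\mapsto x^\top\beta^*$. First I would analyze the geometry of the dataset under \cref{ass:noconvergenceall}: all three points $x_k$ lie in the (open) right half-plane region with positive first... actually they do \emph{not} all have positive first coordinate --- $x_1$ has negative first coordinate while $x_2,x_3$ are near the vertical axis or to the right --- so the key computation is to identify the activation cones $A^{-1}(u)$ and the sign pattern of the residuals $\partial_1\ell(0,y_k)x_k = -y_k x_k$ at zero output, and to determine the extremal vectors, i.e. the critical points of the piecewise-linear function $G$ on $\bS_1$. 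Since $d=2$ the sphere is a circle and $G$ is a piecewise-linear function of the angle, so this is a finite, explicit computation; I would check that $\eta<\frac16$ is exactly the slack needed to guarantee $G$ has no saddle point (which on a circle means $G$ is not locally constant at any critical point) and that the relevant local maxima/minima of $G$ are the directions $\pm e$ for a specific unit vector $e$. Because all initial $a_j^0$ can have either sign, \cref{thm:alignment}(ii) tells me that after time $\tau$ every (non-killed) neuron with $a_j^0>0$ points along the argmax direction of $G$ and every neuron with $a_j^0<0$ points along the argmin; in either case the \emph{ray} spanned by $w_j^\tau$ is one of at most two directions, and I must verify these two directions give a network whose output, for \emph{any} choice of second-layer magnitudes, is an affine function — in fact equals $x\mapsto x^\top\beta^*$ on all of $\R^2$ — precisely because on the three data points all three are on the same side of the relevant ReLU hyperplane, so $\sigma(\langle w_j,x_k\rangle)$ is linear in $x_k$ over the data.

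The second and harder part is to upgrade the ``at time $\tau$ the neurons are approximately aligned'' statement to ``the flow converges to a stationary point with the claimed value.'' Here I would argue as follows. After $\tau$, write the network as a sum of (at most) two ``macro-neurons'', one aggregating the positive-$a$ neurons near direction $e$ and one aggregating the negative-$a$ neurons near $-e$; because the data points all activate these directions with the same sign, the subgradient flow restricted to a neighborhood of this aligned configuration behaves like gradient flow of a \emph{convex} (indeed quadratic up to the homogeneous reparameterization) reduced problem whose global optimum over that activation pattern is the least-squares fit $\langle\cdot,\beta^*\rangle$. The crucial point — and the main obstacle — is to show the neurons never leave this activation pattern: one must rule out that, as the norms grow, some neuron crosses a hyperplane $\langle w_j,x_k\rangle=0$ and thereby ``discovers'' a direction that would let the loss decrease further. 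This is exactly the ``loss of omnidirectionality'' phenomenon the paper advertises: the aligned neurons are trapped because moving toward the direction that \emph{would} interpolate requires passing through a region where the residual subgradient pushes back. I expect the proof to establish an invariant region (a neighborhood of the aligned ray, intersected with a sign-constraint on the residuals) that is forward-invariant under \cref{eq:ODEs}, using \cref{lemma:Dj}/\cref{lemma:interior} to guarantee the dynamics stay in the interior of the activation cones and using the $\eta<\frac16$ slack to get a strict inequality that survives the $\bigO{\lambda^{\cdot}}$ perturbations from \cref{thm:alignment}(i)–(ii).

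Concretely, the steps in order: (1) compute $D(w,\mathbf 0)$ on each of the finitely many activation cones for this dataset, identify $D_{\max}$, the extremal vectors, and verify $G$ has no saddle point under $\eta<\frac16$; (2) invoke \cref{thm:alignment} with a suitable $\alpha_0$ and small $\lambda<\lambda^*_{\alpha_0}$ to place, at time $\tau$, each surviving neuron within $\mathcal O_{\alpha_0}(\lambda^{c})$ of one of the two rays $\pm e$, with $\|w_j^\tau\|,|a_j^\tau|$ still polynomially small in $\lambda$; (3) identify the limiting ``two-neuron'' stationary point $\theta_\infty$: solve the reduced least-squares problem in the fixed activation pattern and check its minimizer realizes $h_{\theta_\infty}=\langle\cdot,\beta^*\rangle$, and check this $\theta_\infty$ is a genuine (Clarke-)stationary point of $L$ but not a global minimum because $\langle\cdot,\beta^*\rangle$ does not interpolate the three points (the three residuals $x_k^\top\beta^*-y_k$ are not all zero — indeed $X\in\R^{3\times2}$ is not surjective so the least-squares fit has positive residual, giving $\lim_t L(\theta^t)>0$); (4) prove forward-invariance of a neighborhood $\mathcal N$ of $\theta_\infty$'s direction: show that for $\theta^t\in\mathcal N$ the subgradient $\partial_\theta L$ points inward, which combined with convexity of the reduced problem gives convergence $\theta^t\to\theta_\infty$ (here a Łojasiewicz/Grönwall argument on the reduced convex objective, or the standard convergence-of-gradient-flow-to-a-point argument for the quadratic reduced loss, finishes it); (5) conclude the value statement. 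The key robustness input throughout is that all quantities ($G$'s critical directions, the sign of the reduced residuals, the invariance inequality) are stable under the $\mathcal O(\lambda^c)$ errors, which is why $\tilde\lambda=\Theta(1)$ can be taken small but dataset-dependent only.
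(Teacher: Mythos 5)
Your high-level three-phase plan (alignment, growth, local convergence to a spurious stationary point) matches the paper's, and you correctly identified the single positive extremal direction, the role of the $\eta<\frac16$ slack, and why the least-squares residual is nonzero. But there are several genuine gaps. First, the geometry is wrong for the negative output weights: the minimum of $G$ here is $0$, attained on the entire ``dead'' cone $\{w : \langle w,x_k\rangle\le 0\ \forall k\}$, so there is no second ray $-e$; for neurons heading there the bound in \cref{thm:alignment}(ii) is vacuous because $D(w_j^\tau,\mathbf0)=\mathbf0$, and in fact the paper deliberately does \emph{not} control the directions of neurons in $\cN$ after Phase 1 — it controls only that their norms stay small and eventually decay (this is the content of the $R_t$ estimate in Phase 3). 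Your two-ray picture and the assertion that ``these two directions give a network whose output\ldots equals $x^\top\beta^*$'' are therefore not established by the alignment theorem.

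Second, step (4) — ``forward-invariance + convexity of the reduced problem'' — glosses over the main technical difficulty. Between the end of the alignment phase and convergence the positive neurons' directions \emph{move}, and the crux is to show they remain pairwise aligned and positively correlated with all data points throughout the norm explosion; this is precisely why the paper splits Phase 2 into 2a and 2b and invokes \cref{ass:cK} (to keep positive neurons activated on all data) rather than appealing to convexity of a fixed-activation reduced loss. Moreover you must show that a neuron with negative output weight cannot grow in a direction that straddles the activation boundary and start decreasing the loss — that is exactly what \cref{ass:phase3} rules out, and it requires a pointwise sign condition on $\tilde D_u^{\beta^*}$, not just convexity of a reduced quadratic. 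Finally, the paper's convergence argument is a local Polyak–\L ojasiewicz inequality taken \emph{with respect to the spurious stationary value} $L_{\beta^*}$, not the global minimum; establishing it requires simultaneously bounding the linear-regression gradient from the positive neurons and the exponential decay of the negative neurons' contribution. Your outline does not supply any of these estimates, so as written it does not constitute a proof of \cref{thm:noconvergence}.
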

A more refined version of \cref{thm:noconvergence}, stated in \cref{app:proofnoconvergence}, even states that the learnt estimator is very close the positive part of the linear, ordinary least squares estimator, when taking $\lambda\to 0$. 
With the data example given by \cref{ass:noconvergenceall}, the linear estimator corresponding to $\beta^*$ does not fit all the data, while a simple two-neurons network can. In conclusion, although the data still seems easy to fit, \cref{thm:noconvergence} implies that for an initialisation scale smaller than $\tilde{\lambda}$, the learnt parameters converge towards a spurious local stationary point.%\footnote{We detail in \cref{sec:discussion} whether this stationary point is a saddle point or a local minimum.} 

\medskip

The scale of initialisation $\lambda$ does not depend on the width $m$, but solely on the training data. In particular, \cref{thm:noconvergence} even holds in the limit $m\to\infty$. Even though this result might seem to contradict known global convergence results in the infinite width limit at first hand \citep{chizat2018global,wojtowytsch2020convergence}, it is actually compatible with these results and highlights the importance of some assumptions that appear to be technical in nature (differential activation or infinite data) to get these global convergence results. A further discussion on this aspect is given in \cref{sec:discussion}.

\medskip

The proof of \cref{thm:noconvergence} describes the complete dynamics of the parameters $\theta$ during training. This dynamics happens in three distinct phases, described in detail in \cref{sec:sketch}.
In particular, the first phase of the proof is a direct consequence of our early alignment result given by \cref{thm:alignment}. 
Because of this alignment, the positive neurons then nearly behave as a single neuron, which is positively correlated with all data points.
From there, adding a neuron with negative output weights only increases the training loss, for any direction of the weight. As a consequence, it becomes impossible to activate a new direction while training and the network remains equivalent to a single neuron one until the end of training.

\begin{rem}
The condition $\eta<\frac{1}{6}$ does not seem to be tight and is only needed for analytical reasons here. In \cref{sec:expe}, a similar result is empirically observed for a larger $\eta$. We provide in \cref{app:assumptions} more general data assumptions for which \cref{thm:noconvergence} still holds.
\end{rem} 

\subsection{Sketch of Proof}\label{sec:sketch}

This section provides a sketch of proof for \cref{thm:noconvergence}. In particular, the proof relies on the description of a three phases training dynamics as follows.
\begin{enumerate}
\item The first phase corresponds to the early alignment one, after which all the neurons with positive output weights are aligned with the single extremal vector.
\item During the second phase, the neurons with positive output weights all grow in norm, while staying aligned. After this phase, the estimated function $h_{\theta^t}$ is already very close to the single neuron network represented by $\beta^*$.
\item During the last phase, a local Polyak-Łojasiewicz (PL) argument allows to state that the parameters~$\theta$ do not significantly move and converge at an exponential speed towards the linear estimator~$\beta^*$.
\end{enumerate}
The first and second phase are a mere consequence of the fact that for our data example, there is a unique extremal vector given by $\frac{1}{n}\sum_{k=1}^n y_k x_k$. The third phase on the other hand is a consequence of both that i)  all positive neurons are aligned and behave as a single neuron at the end of the early alignment phase; ii) the data is designed so that adding any negative ReLU neuron to the linear estimator will only increase the training loss. In consequence, no additional neuron will be ``created'' during this third phase and the estimator will remain equivalent to $\beta^*$.

\subsubsection{Phase 1}

The first phase of the training dynamics is the early alignment one and can be fully described by \cref{thm:alignment}. 
First define the sets
\begin{gather*}
\cI \coloneqq \left\lbrace i\in[m] \mid a_i^0>0 \text{ and } \exists k\in[n], \langle w_i^0, x_k\rangle >0 \right\rbrace,\\
\cN \coloneqq \left\lbrace i\in[m] \mid a_i^0<0 \right\rbrace.
\end{gather*}
Also define the vectors
\begin{gather*}
D^t = -\frac{1}{n}\sum_{k=1}^n (h_{\theta^t}(x_k)-y_k)x_k \quad \text{and} \quad
D^* = \frac{1}{n}\sum_{k=1}^n y_k x_k.
\end{gather*}
An important property of the considered dataset is that there is a single extremal vector, given by~$D^*$, and all data points are pairwise positively correlated. During the first alignment phase, all the neurons with positive output weights then end up aligned with $D^*$ as stated by \cref{lemma:phase1} below.
\begin{lem}[Phase 1]\label{lemma:phase1}
Under \cref{ass:noconvergenceall} with $\eta<\frac{1}{6}$, for any $\varepsilon\in(0,\frac{1}{3})$, $\lambda<\tilde\lambda$ and $\tau=\frac{-\varepsilon\ln(\lambda)}{D_{\max}}$.
\begin{enumerate}
\item Positive neurons are aligned with $D^*$: $\forall i\in \cI, \langle D^*, \frac{w_i^{\tau}}{a_i^{\tau}} \rangle = \| D^*\| - \bigO{\lambda^{\varepsilon}}$ and $a_i^0 \leq a_i^{\tau} \leq a_i^{0}\lambda^{-2\varepsilon}$.
\item Negative neurons' norm decreased: $\forall i \in \cN, a_{i}^0 \leq a_i^{\tau} \leq 0$.
\item Remaining neurons do not move during the whole training: $\forall i \not\in \cI\cup \cN, \forall t \in \R_+, w_i^t=w_i^0$ and $a_i^t=a_i^0$.
\end{enumerate}
\end{lem}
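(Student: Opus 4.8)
\textbf{Plan of proof for Lemma~\ref{lemma:phase1}.}

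The plan is to derive all three points from \cref{thm:alignment}, so the main task is to verify its hypotheses on this specific three-point dataset and then translate its conclusions into the language of the sets $\cI$ and $\cN$. First I would establish the two structural facts claimed in the surrounding text: that $D^*=\tfrac1n\sum_k y_kx_k$ is the \emph{unique} extremal vector and that all three data points are pairwise positively correlated. Pairwise positive correlation follows by an elementary computation from \cref{ass:noconvergenceall}: each $x_k$ has both coordinates in a neighbourhood of $(\pm 1,1)$ or $(0,1)$, and with $\eta<\tfrac16$ one checks $\langle x_i,x_j\rangle>0$ for all $i,j$ (the worst case being $\langle x_1,x_3\rangle$, whose first-coordinate product is at least $-(1)(1)= -1$ but whose second-coordinate product is at least $1$, and the $\eta$-perturbations are too small to flip the sign). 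For the extremal vectors: on any activation pattern $u\in\{-1,0,1\}^3$ with $u\neq 0$, the minimal-norm subgradient $D(w,\mathbf 0)$ at the origin is a nonneg.\ combination $\tfrac1n\sum_k \eta_k y_k x_k$ with $\eta_k$ determined by $u$ (using $\partial_1\ell(0,y_k)=-y_k$ for the square loss and that all $y_k>0$). One then argues that the only cone $A^{-1}(u)$ whose associated $D(w,\mathbf 0)$ lies inside $A^{-1}(u)\cup -A^{-1}(u)$ is the all-positive cone $u=(1,1,1)$, for which $D(w,\mathbf 0)=D^*$ and $D^*$ indeed activates all three points positively (again because all $y_k>0$ and the points are positively correlated, so $\langle D^*,x_k\rangle>0$). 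Every other pattern either gives $\mathbf 0\in\D_u$ or produces a $D$ pointing out of its own cone. This also shows $G$ has a single critical point, hence no saddle point, so the hypothesis of \cref{thm:alignment} holds, and $D_{\max}=\|D^*\|$.

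Next I would invoke \cref{thm:alignment} directly. Point (i) of the theorem gives, for all $t\le\tau$ and all $j\in[m]$, $|a_j^0|\lambda^{2\varepsilon}\le|a_j^t|\le|a_j^0|\lambda^{-2\varepsilon}$ and $\|w_j^t\|\le|a_j^t|$; since by \cref{lemma:balanced} and \cref{eq:balancedinit} the sign of $a_j^t$ is constant, for $i\in\cN$ we have $a_i^t<0$, giving $a_i^0\le a_i^\tau\le 0$, which is point 2. For point 1, I need every neuron in $\cI$ to satisfy \cref{cond:neurons2} for the relevant $\alpha_0$ and to be aligned to the \emph{extremal} vector (not deactivated). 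The second bullet of \cref{cond:neurons2} is automatic here because common ReLU implementations fix $\sigma'(0)=0$, or alternatively one should note the balanced initialisation $|a_j^0|\ge\|w_j^0\|$ combined with the Grönwall argument of \cref{rem:neurons2b}; the first bullet holds for a suitable $\alpha_0$ depending only on the dataset once one observes there is a uniform gap — since $D^*$ is the unique extremal vector and $a_i^0>0$, the "bad" region is a measure-zero set and one picks $\alpha_0$ so that $\lambda<\lambda^*_{\alpha_0}$. I would actually take $\alpha_0$ absorbed into the $\Theta(1)$ constant $\tilde\lambda$, so the statement is clean. Then \cref{thm:alignment}(ii) says each $i\in\cI$ is either deactivated ($D(w_i^\tau,\mathbf 0)=\mathbf 0$) or aligned to an extremal vector which, since $a_i^0>0$, is a local \emph{maximum} of $G$; with a unique extremal vector $D^*$, that forces alignment to $D^*$ with $\langle D^*,w_i^\tau/a_i^\tau\rangle\ge\|D^*\|-\mathcal O(\lambda^{\varepsilon})$ (here the exponent $\tfrac{\|D^*\|}{D_{\max}}\varepsilon=\varepsilon$ because $D_{\max}=\|D^*\|$). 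I must rule out the deactivated case for $i\in\cI$: by definition $\exists k,\ \langle w_i^0,x_k\rangle>0$, and the whole point of the early-alignment flow is that $\langle\widehat w_i^t,D^*\rangle$ is increasing from $t=0$ (it starts positive-ish and the ODE \cref{eq:ODEalignsketch} keeps it so), so the neuron cannot collapse to the all-zero activation; this is exactly the ``$\langle\widehat w_j^t,D(w_j^t,\mathbf 0)\rangle$ increasing at the start'' branch of the proof sketch of \cref{thm:alignment}, and I would cite \cref{lemma:interior} to identify $D=\mathbf 0$ with full deactivation. Finally the bound $a_i^0\le a_i^\tau$ for $i\in\cI$: since the neuron stays positively correlated with all data and $h_{\theta^t}\approx 0$ during the phase, $\tfrac{\df a_i^t}{\df t}=\langle w_i^t,\D_i^t\rangle$ has the sign of $\langle w_i^t,D^*\rangle\ge 0$, so $a_i^t$ is nondecreasing on $[0,\tau]$; the upper bound $a_i^\tau\le a_i^0\lambda^{-2\varepsilon}$ is just \cref{thm:alignment}(i).

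Point 3 is the cleanest: a neuron $i\notin\cI\cup\cN$ has $a_i^0>0$ but $\langle w_i^0,x_k\rangle\le 0$ for all $k$, so $A(w_i^0)$ is the all-nonpositive pattern; by \cref{eq:balancedinit}, $\|w_i^0\|\le|a_i^0|$ with $a_i^0>0$, and in fact by the absolute continuity in \cref{ass:Dj} we may assume $\langle w_i^0,x_k\rangle< 0$ strictly, so $A(w_i^0)=(-1,-1,-1)$ and the minimal-norm subgradient is $D(w_i^0,\mathbf 0)=\gamma D^*$ with $\gamma=0$, i.e.\ $D(w_i^0,\mathbf 0)=\mathbf 0$; by \cref{eq:ODEs} both $\tfrac{\df w_i^t}{\df t}=0$ and $\tfrac{\df a_i^t}{\df t}=0$ at $t=0$, and one checks this is self-sustaining (the activation pattern does not change because the weight does not move, and $h_{\theta^t}$ does not see this neuron), so $w_i^t=w_i^0,\ a_i^t=a_i^0$ for all $t\ge0$ — this is the ``deactivated neurons are frozen forever'' remark following \cref{thm:alignment}, made rigorous via \cref{lemma:interior}. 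I expect the main obstacle to be the verification that $D^*$ is genuinely the \emph{only} extremal vector — i.e.\ carefully ruling out extremal vectors in the other (up to $3^3$, but really far fewer after \cref{lemma:Dj}) activation cones — since this requires a small case analysis using the quantitative constraints of \cref{ass:noconvergenceall}, and secondarily the bookkeeping to ensure that the dataset-dependent constants $\lambda^*_{\alpha_0}$ and the chosen $\alpha_0$ collapse into a single $\tilde\lambda=\Theta(1)$.
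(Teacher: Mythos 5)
Your overall route matches the paper's — reduce to \cref{thm:alignment} by checking that $D^*$ is the unique non-zero extremal vector and that $G$ has no saddle points, then translate the conclusion into the sets $\cI$ and $\cN$ — but there are two genuine gaps in how you pass from the theorem to the lemma.

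First, deriving $a_i^0\le a_i^\tau\le 0$ for $i\in\cN$ from \cref{thm:alignment}(i) is incorrect: point (i) only yields $|a_i^0|\lambda^{2\varepsilon}\le|a_i^\tau|\le|a_i^0|\lambda^{-2\varepsilon}$, so for $a_i^0<0$ the lower bound it supplies is $a_i^\tau\ge -|a_i^0|\lambda^{-2\varepsilon}$, which is strictly \emph{worse} than the claimed $a_i^\tau\ge a_i^0$. The lower bound $a_i^0\le a_i^\tau$ (both for $i\in\cN$ in point 2 and for $i\in\cI$ in point 1) needs a monotonicity argument that \cref{thm:alignment}(i) does not supply. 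The paper obtains it by observing that on $[0,\tau]$, with $\gamma=0$ and $\lambda$ small enough so that $y_k-h_{\theta^t}(x_k)>0$ for all $k$,
\[
\langle D(w,\theta^t),w\rangle=\frac1n\sum_{k:\langle w,x_k\rangle>0}\bigl(y_k-h_{\theta^t}(x_k)\bigr)\langle x_k,w\rangle\ge 0\qquad\text{for every }w\in\R^d,
\]
hence $a_j^t$ is non-decreasing for \emph{every} neuron $j$ throughout the early phase. You only offer a version of this for $i\in\cI$, conditioned on the neuron staying "positively correlated with all data", which is not something you know at $t=0$.

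Second, your verification of \cref{cond:neurons2} does not deliver a data-only $\tilde\lambda$. Choosing $\alpha_0$ because the "bad region is measure-zero" makes $\alpha_0$ — hence $\lambda^*_{\alpha_0}$ and your $\tilde\lambda$ — depend on the random realisation of $(w_j^0)_j$ and in particular on $m$, contrary to the $\Theta(1)$ in the statement. The same display resolves this deterministically: for $i\in\cI$ the set $\{k:\langle w_i^0,x_k\rangle>0\}$ is nonempty, so $\langle D(w_i^0,\mathbf 0),w_i^0\rangle>0$ strictly, and \cref{cond:neurons2}(1) holds with the fixed choice $\alpha_0=1$. Similarly, for \cref{cond:neurons2}(2) you appeal to implementation conventions or to \cref{rem:neurons2b}, neither of which applies here since the initialisation only guarantees $|a_j^0|\ge\|w_j^0\|$ (not equality); the paper instead uses the monotonicity of $a_i^t$ together with balancedness to get $\|w_i^t\|^2=(a_i^t)^2-(a_i^0)^2+\|w_i^0\|^2\ge\|w_i^0\|^2>0$ for $i\in\cI$ on $[0,\tau]$, so the premise of \cref{cond:neurons2}(2) never fires. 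The remaining steps — ruling out deactivation for $i\in\cI$, and point 3 — are essentially the paper's.
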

\cref{lemma:phase1} is a direct application of \cref{thm:alignment}, up to additional minor remarks, that are specific to \cref{ass:noconvergenceall}.
It does not allow to control the directions of the neurons in $\cN$. Such a control is actually not needed since the norm of the neurons in $\cN$ stays close to $0$ in the following phases.

\subsubsection{Phase 2}\label{sec:phase2}

At the end of the first phase, the neurons in $\cI$ are aligned with $D^*$ and positively correlated with all the points $x_k$. From there, their norm grows during the second phase, while the norm of neurons in $\cN$ remains close to $0$. While the neurons in $\cI$ grow in norm, their direction also changes. 
Controlling both their direction and norm simultaneously is intricate and split into two subphases (2a and 2b) detailed below. At the end of this norm growth and change of direction during the second phase, the group of positive neurons almost behave as the optimal linear regressor $\beta^*$. In other words with $\tau_3$ the end of the second phase, we have
\begin{gather*}
\sum_{i\in \cI} a_i^{\tau_3}w_i^{\tau_3} \approx \beta^*\quad
\text{ and } \quad \forall i\in \cI, \forall k\in [n], \langle w_i^{\tau_3}, x_k\rangle >0.
\end{gather*}
The second inequality implies that each neuron $i\in \cI$ behaves linearly with the data points despite the ReLU activation.

\paragraph{Phase 2a.} 
The subphase 2a is the first part of the phase 2. During  this part, the norms of the neurons in $\cI$ grow until reaching a small, but $\Theta(1)$ threshold given by $\varepsilon_2$. We choose $\varepsilon_2$ small enough, so that during this phase $D^t = D^* + \bigO{\varepsilon_2}$. As a consequence, the dynamics of the neurons during this part is not much different than in the early alignment phase. This behavior leads to \cref{lemma:phase2a} below.
\begin{lem}[Phase 2a]\label{lemma:phase2a}
Consider $\varepsilon,\tau$ fixed by \cref{lemma:phase1}.
Under \cref{ass:noconvergenceall} with $\eta<\frac{1}{6}$, there exist constants $c',\varepsilon_2^*=\Theta(1)$ such that for any $\varepsilon_2\in[c'  \lambda^{\frac{\varepsilon}{2}},\varepsilon_2^*]$ and $\lambda<\tilde\lambda$, there is some $\tau_2 \in (\tau, \infty)$ such that
\begin{enumerate}
\item positive neurons norm reach the threshold: $\sum_{i\in \cI} (a_i^{\tau_2})^2 = \varepsilon_2$;
\item positive neurons are positively correlated with all data points: $\forall i\in \cI, k\in [n], \langle w_i^{\tau_2},x_k \rangle =\Omega(1)$;
\item positive neurons are nearly aligned: $\forall i,j \in \cI, \langle \w_i^{\tau_2},\w_j^{\tau_2}\rangle= 1 - \bigO{\lambda^{1-\varepsilon}}$;
\item negative neurons' norm decreased: $\forall i \in \cN, a_{i}^0 \leq a_i^{\tau_2} \leq 0$.
\end{enumerate}
\end{lem}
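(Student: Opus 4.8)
The plan is to run Phase~2a as a controlled perturbation of the early-alignment dynamics, with the threshold $\varepsilon_2$ keeping the residuals small. We start from the output of \cref{lemma:phase1}: at time $\tau$ every $i\in\cI$ has $\langle D^*,\w_i^\tau\rangle=\|D^*\|-\bigO{\lambda^\varepsilon}$ and $a_i^0\le a_i^\tau$, every $i\in\cN$ has $a_i^0\le a_i^\tau\le 0$, and every other neuron is frozen forever. The whole argument is a bootstrap over three \emph{invariants} on $[\tau,t]$: (a)~$\sum_{i\in\cI}(a_i^s)^2\le\varepsilon_2$; (b)~$\langle w_i^s,x_k\rangle>0$ for all $i\in\cI$, $k\in[n]$; (c)~$a_i^0\le a_i^s\le 0$ for all $i\in\cN$. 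Granting these, balancedness gives $\|w_i^s\|\le|a_i^s|$, so $|h_{\theta^s}(x_k)|=\bigO{\varepsilon_2+\lambda^2}=\bigO{\varepsilon_2}$ (negative and frozen neurons contribute only $\bigO{\lambda^2}$, which is $m$-independent and $\ll\varepsilon_2$ once $\varepsilon_2\ge c'\lambda^{\varepsilon/2}$). Hence $y_k-h_{\theta^s}(x_k)>0$ once $\varepsilon_2^*$ is below a dataset constant, and the subgradient seen by each $i\in\cI$ is $D^s:=\tfrac1n\sum_k(y_k-h_{\theta^s}(x_k))x_k=D^*+\bigO{\varepsilon_2}$.

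The negative neurons close invariant~(c) on the whole phase without any control on their direction: writing $D_i^s=\tfrac1n\sum_k\eta_k(y_k-h_{\theta^s}(x_k))x_k$ with $\eta_k\in[0,1]$, $\eta_k=0$ when $\langle w_i^s,x_k\rangle<0$ and $\eta_k=1$ when $\langle w_i^s,x_k\rangle>0$, one gets $\langle w_i^s,D_i^s\rangle=\tfrac1n\sum_k\eta_k(y_k-h_{\theta^s}(x_k))\langle w_i^s,x_k\rangle\ge 0$ \emph{exactly}, so $\tfrac{\df}{\df s}(a_i^s)^2=2a_i^s\langle w_i^s,D_i^s\rangle\le 0$ for $a_i^s<0$; with the constant sign of $a_i^s$ this gives point~4. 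For $i\in\cI$, invariant~(b) makes the driving field equal to $D^s$ and $\tfrac{\df\w_i^s}{\df s}=D^s-\langle\w_i^s,D^s\rangle\w_i^s$, so the alignment $p_i^s:=\langle\w_i^s,D^*\rangle/\|D^*\|$ satisfies $\tfrac{\df}{\df s}p_i^s\ge\|D^*\|\bigl(1-(p_i^s)^2\bigr)-\bigO{\varepsilon_2}$, and a Gr\"onwall comparison from $p_i^\tau=1-\bigO{\lambda^\varepsilon}$ yields $p_i^s=1-\bigO{\varepsilon_2}$ throughout; since $D^*$ lies strictly inside the fully-activated cone for the dataset of \cref{ass:noconvergenceall} ($\langle D^*,x_k\rangle=\Omega(1)$) and $\|\w_i^s\|\le 1$, this forces $\langle w_i^s,x_k\rangle=\Omega(1)-\bigO{\sqrt{\varepsilon_2}}=\Omega(1)$, closing invariant~(b) and giving point~2. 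The pairwise alignments must be controlled more sharply than $\bigO{\varepsilon_2}$: since $i,j\in\cI$ see the \emph{same} field $D^s$, $q_{ij}^s:=\langle\w_i^s,\w_j^s\rangle$ obeys the \emph{exact} linear ODE $\tfrac{\df}{\df s}(1-q_{ij}^s)=-\|D^s\|(p_i^s+p_j^s)(1-q_{ij}^s)$, a decay at rate $2\|D^*\|(1-o(1))=\Omega(1)$, whence $1-q_{ij}^s\le(1-q_{ij}^\tau)e^{-\Omega(s-\tau)}=\bigO{\lambda^\varepsilon}e^{-\Omega(s-\tau)}$ using $1-q_{ij}^\tau=\bigO{\lambda^\varepsilon}$ from \cref{lemma:phase1}.

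It remains to produce $\tau_2$ and quantify point~3. Summing the output-weight equation over $\cI$ and using $p_i^s=1-\bigO{\varepsilon_2}$, $D^s=D^*+\bigO{\varepsilon_2}$ gives $\tfrac{\df}{\df s}\sum_{i\in\cI}(a_i^s)^2=2\|D^*\|(1+o(1))\sum_{i\in\cI}(a_i^s)^2$, so this quantity grows exponentially from $\sum_{i\in\cI}(a_i^\tau)^2\in(0,\lambda^{2-4\varepsilon}]$ (using $\sum_i(a_i^0)^2\le\lambda^2$ and $a_i^\tau\le a_i^0\lambda^{-2\varepsilon}$); since $\varepsilon_2\gg\lambda^{2-4\varepsilon}$ for small $\lambda$ (as $\varepsilon<1/3$), it reaches $\varepsilon_2$ at a unique finite $\tau_2>\tau$, which is point~1 and, with invariant~(b), closes the bootstrap. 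The crossing time is $\tau_2-\tau=\tfrac{1}{2\|D^*\|}\ln\!\bigl(\varepsilon_2/\sum_i(a_i^\tau)^2\bigr)(1+o(1))\ge\tfrac{2-9\varepsilon/2}{2\|D^*\|}\ln(1/\lambda)(1-o(1))$ using $\varepsilon_2\ge c'\lambda^{\varepsilon/2}$; substituting into the bound on $1-q_{ij}^{\tau_2}$ gives $1-q_{ij}^{\tau_2}\le\bigO{\lambda^\varepsilon}\cdot\lambda^{(2-9\varepsilon/2)(1-o(1))}=\bigO{\lambda^{1-\varepsilon}}$, where the last step uses $\varepsilon<1/3<2/5$. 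The main obstacle is precisely this last accounting: the driving-field error $\bigO{\varepsilon_2}$ is far coarser than the target accuracy $\bigO{\lambda^{1-\varepsilon}}$ for the mutual angles, so one cannot close on the direction $D^*$ itself; the fix is to exploit that all positive neurons feel the \emph{same} perturbed field — so their pairwise angles contract exactly at rate $\Omega(1)$ — and then to calibrate the admissible window $[c'\lambda^{\varepsilon/2},\varepsilon_2^*]$ for $\varepsilon_2$ and the resulting phase length so the exponents of $\lambda$ line up, all while keeping the $\bigO{\cdot}$ constants (the $\lambda^2$-sized residual, the $\Omega(1)$ growth rate, the $\Omega(1)$ cone margin) independent of $m$.
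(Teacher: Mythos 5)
Your proof is correct and takes essentially the same route as the paper: bootstrap the smallness of $h_{\theta^t}$ via $\varepsilon_2$, show $D^t=D^*+\bigO{\varepsilon_2}$, keep positive neurons in the all-activated cone via an ODE on the normalized directions, show negative neurons have non-increasing magnitude from positivity of the residuals, and — crucially — bound the \emph{pairwise} inner products $\langle\w_i^t,\w_j^t\rangle$ rather than alignment with $D^*$, exploiting that all $i\in\cI$ feel the same field so the pairwise misalignment contracts at the $\Omega(1)$ rate $\langle D^t,\w_i^t+\w_j^t\rangle$, and finally combine the exponential contraction with the $\Omega(\ln(1/\lambda))$ lower bound on $\tau_2-\tau$ coming from the upper bound on the growth rate of $\sum_{i\in\cI}(a_i^t)^2$. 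The only cosmetic slip is that your ``exact'' ODE $\frac{\df}{\df s}(1-q_{ij}^s)=-\|D^s\|(p_i^s+p_j^s)(1-q_{ij}^s)$ would be exact with $p_i^s$ defined relative to $D^s$, while you defined it relative to $D^*$; the resulting $\bigO{\varepsilon_2}$ error is harmless to the $\Omega(1)$ rate, so the conclusion stands.
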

Note that the third point of \cref{lemma:phase2a} only states that the positive neurons are pairwise aligned, while \cref{lemma:phase1} stated that they were aligned with $D^*$. This is because at the time $\tau_2$, the positive neurons might not be aligned anymore with $D^*$ at a precision level $\lambda$. The direction of the positive neurons might indeed have slightly moved away from $D^*$, but all the positive neurons kept a common direction during this phase. This property proves useful in the following, as we want the neurons in $\cI$ to behave as a single neuron, \ie we want them all pairwise aligned.

\paragraph{Phase 2b.} 

At the end of the first part, given by the Phase 2a, the positive neurons' have a norm $\varepsilon_2$ and are on the verge of exploding in norm. The Phase 2b corresponds to this explosion until the positive neurons almost correspond to the optimal linear regressor $\beta^*$. More precisely for some $\varepsilon_3>0$, the Phase 2b ends when
\begin{equation*}
\left\|\beta^* - \sum_{i\in \cI}
a_i^{\tau_3} w_i^{\tau_3} \right\| \leq \varepsilon_3.
\end{equation*}
\cref{lemma:phase2b} below states that this condition is reached at some point, while the negative neurons remain small in norm and the positive neurons are still positively correlated with all data points.
\begin{lem}[Phase 2b]\label{lemma:phase2b}
Consider $\varepsilon, \varepsilon_2, \tau_2$ fixed by \cref{lemma:phase2a}. Under \cref{ass:noconvergenceall} with $\eta<\frac{1}{6}$, there exist constants $c_3=\Theta(1)$ and $\tilde\varepsilon_2^*=\Theta(1)$ depending only on the data, such that if $\varepsilon_2\leq \tilde\varepsilon_2^*$ and $\varepsilon_3\geq\lambda^{c_3\varepsilon \varepsilon_2}$ and $\lambda<\tilde\lambda$,
after some time $\tau_3 \in [\tau_2,+\infty)$,
\begin{enumerate}
\item positive neurons behave as the optimal linear regressor: $\left\|\beta^* - \sum_{i\in \cI}
a_i^{\tau_3} w_i^{\tau_3} \right\| \leq \varepsilon_3$;
\item positive neurons are still nearly aligned: $\forall i,j \in \cI, \langle \w_i^{\tau_2},\w_j^{\tau_2}\rangle= 1 - \bigO{\lambda^{\varepsilon}}$;
\item positive neurons are positively correlated with all data points: $\forall i\in \cI, \forall k\in[n],$ $\langle \frac{w_i^{\tau_3}}{a_i^{\tau_3}},x_k\rangle=\Omega(1)$;
\item negative neurons' norm is small: $\forall i\in \cN, a_{i}^0\lambda^{-\varepsilon} < a_i^{\tau_3} \leq 0$.
\end{enumerate}
%where $\delta'$ is some positive constant depending only on the data defined in \cref{eq:deltaprime}.
\end{lem}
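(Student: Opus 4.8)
}
The plan is to exploit that, as long as the positive neurons stay positively correlated with every $x_k$, the ReLU acts as the identity on them, so they behave collectively as a single \emph{linear} neuron of effective weight $\beta^t \coloneqq \sum_{i\in\cI} a_i^t w_i^t$; and since the negative neurons carry a negligible norm, the network essentially runs a preconditioned gradient flow on the convex objective $\beta\mapsto\tfrac1{2n}\|X\beta-y\|^2$, whose unique minimiser is $\beta^*$ ($X^\top X$ is invertible under \cref{ass:noconvergenceall}). Concretely, on any interval where all $\langle w_i^t,x_k\rangle>0$ and $\sum_{i\in\cN}(a_i^t)^2$ is small, \cref{eq:ODEs} reads $\dot w_i^t=a_i^t D^t$, $\dot a_i^t=\langle w_i^t,D^t\rangle$ for $i\in\cI$ with $D^t=-\tfrac1n X^\top(X\beta^t-y)+\bigO{\lambda^{2-\bigO{\varepsilon}}}$ (the error coming from the negative neurons), whence $\dot\beta^t=M^tD^t$ with $M^t=\big(\textstyle\sum_{i\in\cI}(a_i^t)^2\big)I+\sum_{i\in\cI}w_i^t(w_i^t)^\top\succeq\big(\sum_{i\in\cI}(a_i^t)^2\big)I$.

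I would run a bootstrap on the maximal interval $[\tau_2,T)$ on which (i) every positive neuron is $\Omega(1)$-correlated with every $x_k$, (ii) the positive neurons stay pairwise aligned up to a Grönwall-controlled error, and (iii) the negative neurons stay small, $\sum_{i\in\cN}(a_i^t)^2=\bigO{\lambda^{2-\bigO{\varepsilon}}}$ and $|a_i^t|\le|a_i^0|\lambda^{-\varepsilon}$ for $i\in\cN$. On this interval balancedness (\cref{lemma:balanced}) gives $\|w_i^t\|\le|a_i^t|$ with near-equality once the norms are $\Omega(1)$, hence $\|\beta^t\|=\Theta\big(\sum_{i\in\cI}(a_i^t)^2\big)$; combined with $M^t\succeq\big(\sum_{i\in\cI}(a_i^t)^2\big)I$, $\lambda_{\min}(\tfrac1n X^\top X)=\Theta(1)$ and $X^\top X\beta^*=X^\top y$, this yields $\tfrac{\df}{\df t}\|\beta^t-\beta^*\|^2\le-\Omega(\|\beta^t\|)\,\|\beta^t-\beta^*\|^2+(\text{perturbation})$. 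Thus $\beta^t$ first ``explodes'' — its norm grows from $\Theta(\varepsilon_2)$ to $\Theta(1)$ in time $\bigO{\ln(1/\varepsilon_2)}$, during which $\|D^t\|=\Theta(\|D^*\|)$ because $\beta^t$ is still small and $D^t\approx-\tfrac1n X^\top(X\cdot 0-y)=D^*$ — and then converges to $\beta^*$ at a $\Theta(1)$ rate, so $\|\beta^t-\beta^*\|\le\varepsilon_3$ is reached at some finite $\tau_3$, giving conclusion (1). Closing the bootstrap uses $\int_{\tau_2}^{\tau_3}\|D^t\|\,\df t=\bigO{\ln(1/\varepsilon_2)}$, which bounds the growth of each $|a_i^t|$, $i\in\cN$, by a factor $\varepsilon_2^{-\bigO{1}}$; together with $\varepsilon_2\ge c'\lambda^{\varepsilon/2}$ from \cref{lemma:phase2a} this gives conclusions (2)--(3), and the same bookkeeping, applied to the $\lambda^{1-\varepsilon}$ pairwise misalignment and the $\bigO{\lambda^{2-\bigO{\varepsilon}}}$ negative-neuron term in the Grönwall estimate for $\|\beta^t-\beta^*\|^2$, is where the admissible precision $\varepsilon_3\ge\lambda^{c_3\varepsilon\varepsilon_2}$ comes from.

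The delicate step — the main obstacle — is maintaining invariant (i): the common direction of the positive neurons drifts from $\widehat{D^*}$ (at $\tau_2$, by \cref{lemma:phase2a}) towards $\widehat{\beta^*}$ (at $\tau_3$), and since $\int_{\tau_2}^{\tau_3}\|D^t\|\,\df t$ is only $\bigO{\ln(1/\varepsilon_2)}$ and not $\bigO{1}$, a crude bound on the angular drift is too weak to keep all $\langle w_i^t,x_k\rangle$ positive. The resolution is structural: whenever the positive-neuron norm is small, $D^t\approx D^*$ is nearly collinear with the current common direction itself, so the component $D^t_{\perp}$ that actually rotates the neurons is small exactly when $\|D^t\|$ is large; conversely, once the norm is $\Theta(1)$ the residual $\|\beta^t-\beta^*\|$, hence $\|D^t\|$, decays at a $\Theta(1)$ rate. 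Quantifying this trade-off bounds the total angular drift by $\bigO{1}$ and keeps the common direction inside a fixed cone on which $\langle\cdot,x_k\rangle=\Omega(1)$ for all $k$ — using the dataset facts ($y_k>0$, $\langle x_j,x_k\rangle>0$, $\langle\beta^*,x_k\rangle=\Omega(1)$) that hold under \cref{ass:noconvergenceall} with $\eta<\tfrac16$ and place $D^*$, $\beta^*$ and the whole arc of directions between them strictly inside that cone. Invariants (ii)--(iii) are then Grönwall comparisons analogous to, but simpler than, those in the proof of \cref{thm:alignment}, the ``force'' $D^t$ now being common to all positive neurons and uniformly bounded.
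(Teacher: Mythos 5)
Your high-level plan — bootstrap on an interval where the $\cI$-neurons are positively correlated with every $x_k$, treat $\beta^t=\sum_{i\in\cI}a_i^tw_i^t$ as running a preconditioned gradient flow on the least-squares loss, control $\cN$ via $\int\|D^t\|\,\mathrm{d}t$ — matches the paper's proof structure. You also correctly single out the hard step: keeping all $\langle w_i^t,x_k\rangle$ positive over a time horizon that scales like $\varepsilon_2^{-1}\ln(1/\varepsilon_3)$, during which the common direction rotates from $\widehat{D^*}$ toward $\widehat{\beta^*}$.

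Where the proposal has a genuine gap is in \emph{how} that invariant is maintained. You propose to bound the \emph{total angular drift} by $\bigO{1}$ via a trade-off (small perpendicular gradient when $\|D^t\|$ is large, exponential decay once the norm is $\Theta(1)$), and then argue that the whole arc of visited directions lies strictly inside the positively-correlated cone. Two problems. First, the trade-off as stated is circular in the delicate regime: the claim that $D^t_\perp$ is small when $\|D^t\|\approx\|D^*\|$ relies on $\widehat{w_i^t}$ already being close to $\widehat{D^*}$, which is what needs to be propagated; a careful Grönwall gives an equilibrium angle $\theta\lesssim\sum_{i\in\cI}(a_i^t)^2/\|D^*\|$ that grows to a hard-to-control $\Theta(1)$ by the time the norm explodes, and nothing in your plan pins this constant below the cone's margin. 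Second, even granting a finite drift, the hypotheses you invoke ($y_k>0$, $\langle x_j,x_k\rangle>0$, $\langle\beta^*,x_k\rangle>0$) are purely \emph{endpoint} facts; they do not rule out the trajectory leaving the cone and returning. The paper instead uses a \emph{barrier} argument (\cref{lemma:phase2bangle}): under \cref{ass:cK} the quantity $\beta$ of \cref{eq:beta} is strictly positive, which yields a lower bound $\langle D^t,x_k\rangle\ge\beta-\bigO{\langle\w_i^t,x_k\rangle}-\bigO{\lambda^{\varepsilon}}$ for the boundary indices $k\in\cK$; hence $\tfrac{\df}{\df t}\langle\w_i^t,x_k\rangle>0$ whenever $\langle\w_i^t,x_k\rangle$ gets close to the threshold $\delta'\|x_k\|$. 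This self-correcting property — driven by the quantitative condition in \cref{ass:cK}, not by the qualitative facts you list — is the missing ingredient. Your plan never invokes \cref{ass:cK}, and without it the positive-correlation invariant does not close.

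A smaller discrepancy: you attribute the constraint $\varepsilon_3\ge\lambda^{c_3\varepsilon\varepsilon_2}$ to the perturbation terms in the Grönwall estimate for $\|\beta^t-\beta^*\|^2$. In the paper it arises differently: from the time bound $\tau_3-\tau_2=\bigO{\varepsilon_2^{-1}\ln(1/\varepsilon_3)}$ one needs $e^{2\bar D(\tau_3-\tau_2)}=\bigO{\lambda^{-\varepsilon}}$ to keep both the pairwise misalignment and the growth of $|a_i^t|$, $i\in\cN$, under control, and this is exactly what $\varepsilon_3\ge\lambda^{c_3\varepsilon\varepsilon_2}$ buys. Your tighter estimate $\int\|D^t\|\,\mathrm dt=\bigO{\ln(1/\varepsilon_2)}$ relies on the effective learning rate being $\Theta(1)$ throughout Part B, which the paper does not establish (it only shows $\sum_{i\in\cI}(a_i^t)^2\ge\varepsilon_2$ via \cref{lemma:phase2bnorm}); if you want to use it, you must prove it.
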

The main argument in the proof of \cref{lemma:phase2b} is that the Phase 2b happens in a time of order\footnote{The vector $\sum_{i\in \cI} a_i^{t} w_i^{t}$ is indeed approximating the gradient flow of the linear regression of the data, with a learning rate lower bounded by $\varepsilon_2$.} $\bigO{\frac{-\ln(\varepsilon_3)}{\varepsilon_2}}$. Since we choose $\lambda$ very small with respect to $\varepsilon_2,\varepsilon_3$, this time can be seen as very short. In comparison, the phase 2a ends after a time of order $-\ln(\lambda)$. In this short amount of time, the norm of the negative neurons cannot grow significantly enough and the positive neurons cannot significantly disalign. Thanks to the latter and the specific structure of the data, the positive neurons remain positively correlated with all data points during the whole second phase.

\subsubsection{Phase 3}\label{sec:phase3}

The third phase is only theoretical since it describes infinitesimal movement towards the convergence point $\beta^*$. For this phase, we define for $\varepsilon_4, \delta_4>0$,
\begin{align*}
\tau_4 \coloneqq \inf\Big\{t\geq \tau_3 \ \Big| \   \|\theta^t - \theta^{\tau_3}\|_{2} \geq \varepsilon_4 \text{ or } \exists i\in \cI, k\in[n] \langle\w_i^t,x_k \rangle\leq \delta_4 \|x_k\| \Big\}.
\end{align*}

\begin{lem}[Phase 3]\label{lemma:phase3}
If \cref{ass:noconvergenceall} holds with $\eta<\frac{1}{6}$, consider $\varepsilon, \varepsilon_2, \varepsilon_3, \tau_3$ fixed by \cref{lemma:phase2b}. There are positive constants $\tilde\lambda,\varepsilon_3^*,\varepsilon_4^*,\delta^*_4=\Theta(1)$ such that if we also have $\lambda<\tilde\lambda$, $\varepsilon_3 < \varepsilon^*_3$, $\varepsilon_4 =\varepsilon_4^*$ and $\delta_4 \leq \delta^*_4$, then $\tau_4 = \infty$.
Moreover, there then exists $\theta^{\infty}$ such that 
\begin{enumerate}
\item the parameters converge towards some limit: $\displaystyle\lim_{t\to\infty}\theta^t = \theta^{\infty}$;
\item all the neurons of this limit behave linearly with the training data: \\$\displaystyle\forall i \in [m], \left(\forall k \in [n], \langle w_i^{\infty}, x_k \rangle \geq 0  \right) \text{ or } \left(\forall k \in [n], \langle w_i^{\infty}, x_k \rangle \leq 0  \right)$;
\item the active neurons correspond to the optimal linear estimator: ${\!\!\!\!\!\!\!\!\displaystyle\sum_{\substack{i\in[m]\\\forall k\in[m], \langle  w_i^{\infty}, x_k\rangle \geq 0}}\!\!\!\!\!\!\!\! a_i^{\infty} w_i^{\infty} = \beta^*}$.
\end{enumerate}
\end{lem}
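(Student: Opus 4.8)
The plan is to show that, once in the regime of \cref{lemma:phase2b}, the trajectory never leaves the region delimited by $\tau_4$ --- so that $\tau_4=\infty$ --- and that inside this region the network evolves as a mildly perturbed gradient flow on the least-squares objective, converging exponentially fast to $\beta^*$. Fix $t\in[\tau_3,\tau_4)$. By definition of $\tau_4$, every $i\in\cI$ has $\langle w_i^t,x_k\rangle>0$ for all $k$ and hence acts linearly on the data; the neurons outside $\cI\cup\cN$ are frozen by \cref{lemma:phase1} and, being negatively correlated with every $x_k$, contribute nothing to $h_{\theta^t}$ on the data; and, by \cref{lemma:balanced}, the bound $a_i^0\lambda^{-\varepsilon}<a_i^{\tau_3}$ of \cref{lemma:phase2b} and \cref{eq:initialisation}, the neurons in $\cN$ have aggregate squared norm $\bigO{\lambda^{2-2\varepsilon}}$. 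Writing $\beta^t\coloneqq\sum_{i\in\cI}a_i^tw_i^t$, this yields $h_{\theta^t}(x_k)=\langle\beta^t,x_k\rangle+\bigO{\lambda^{2-2\varepsilon}}$, so that (using $X^{\top}y=X^{\top}X\beta^*$, with $X^{\top}X$ invertible under \cref{ass:noconvergenceall})
\begin{equation*}
D^t=-\tfrac1n X^{\top}X(\beta^t-\beta^*)+\rho^t,\qquad \|\rho^t\|=\bigO{\lambda^{2-2\varepsilon}},
\end{equation*}
and, differentiating $\beta^t$ along \cref{eq:ODEs} with $D(w_i^t,\theta^t)=D^t$ for $i\in\cI$,
\begin{equation*}
\frac{\df\beta^t}{\df t}=M^tD^t,\qquad M^t\coloneqq\sum_{i\in\cI}\big((a_i^t)^2 I+w_i^t(w_i^t)^{\top}\big)\succeq\Big(\sum_{i\in\cI}(a_i^t)^2\Big)I\succeq cI
\end{equation*}
for a constant $c=\Theta(1)$: indeed $\|\beta^t\|\le\sum_{i\in\cI}(a_i^t)^2$ by \cref{lemma:balanced}, while $\|\beta^t\|\ge\|\beta^*\|-\bigO{\varepsilon_3}=\Theta(1)$ by \cref{lemma:phase2b} for $\varepsilon_3$ small.

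Next, with the Lyapunov function $V(t)\coloneqq\frac1{2n}(\beta^t-\beta^*)^{\top}X^{\top}X(\beta^t-\beta^*)$, i.e.\ the excess linear-regression loss of $\beta^t$, the two displays above together with $M^t\succeq cI$ and $(X^{\top}X)^2\succeq\lambda_{\min}(X^{\top}X)\,X^{\top}X$ give, via a local Polyak--{\L}ojasiewicz inequality, $\frac{\df}{\df t}\sqrt{V}\le-\kappa\sqrt{V}+\bigO{\|\rho^t\|}$ with $\kappa=\Theta(1)$. A separate argument (the obstacle below) shows that $\|\rho^t\|$ in fact decays exponentially; since $V(\tau_3)=\bigO{\varepsilon_3^2}$ by \cref{lemma:phase2b}, Grönwall's inequality then gives, for all $t\in[\tau_3,\tau_4)$ and some $\kappa'=\Theta(1)$, that $\sqrt{V(t)}=\bigO{\varepsilon_3 e^{-\kappa'(t-\tau_3)}}$ and $\|D^t\|=\bigO{\varepsilon_3 e^{-\kappa'(t-\tau_3)}}$, so that $\int_{\tau_3}^{\infty}\|D^t\|\,\df t=\bigO{\varepsilon_3}$.

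From $\|\dot\theta^t\|=\bigO{\|D^t\|}$ and, for $i\in\cI$, $\frac{\df}{\df t}\w_i^t=D^t-\w_i^t\langle\w_i^t,D^t\rangle$ with $\|\w_i^t\|\le1$ (\cref{lemma:balanced}), integrating the previous bound over $[\tau_3,t]$ gives $\|\theta^t-\theta^{\tau_3}\|=\bigO{\varepsilon_3}$ and $\langle\w_i^t,x_k\rangle\ge\langle\w_i^{\tau_3},x_k\rangle-\bigO{\varepsilon_3}$. Since $\langle\w_i^{\tau_3},x_k\rangle=\Omega(1)$ by \cref{lemma:phase2b}, choosing $\varepsilon_3^*,\varepsilon_4^*,\delta_4^*$ so that $\varepsilon_3$ is small relative to $\varepsilon_4^*,\delta_4^*$ makes both estimates strict versions of the thresholds defining $\tau_4$; a standard continuation argument (together with a bootstrap closing the mild circularity between the bounds on $M^t$ and on $\|D^t\|$) then forces $\tau_4=\infty$. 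Hence $\int_{\tau_3}^{\infty}\|\dot\theta^t\|\,\df t<\infty$, so $t\mapsto\theta^t$ has finite length and converges to some $\theta^\infty$, with $\dot\theta^t\to0$ and thus $D^t\to0$ --- the first point. For $i\in\cI$, $\langle w_i^\infty,x_k\rangle\ge\delta_4\|x_k\|\ge0$; the frozen neurons keep $\langle w_i^\infty,x_k\rangle\le0$; and the control of the negative neurons below shows each $i\in\cN$ is, at the limit, positively correlated with all $x_k$, negatively correlated with all of them, or satisfies $w_i^\infty=\mathbf{0}$ --- the second point. It follows that $h_{\theta^\infty}$ coincides on the data with the linear map of slope $\bar\beta^\infty\coloneqq\sum_{i:\,\forall k\,\langle w_i^\infty,x_k\rangle\ge0}a_i^\infty w_i^\infty$, so $D^t\to0$ reads $X^{\top}(X\bar\beta^\infty-y)=0$, whence $\bar\beta^\infty=(X^{\top}X)^{-1}X^{\top}y=\beta^*$ --- the third point.

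\paragraph{Main obstacle.} The delicate part is controlling the neurons in $\cN$: they are tiny at $\tau_3$, but \cref{lemma:phase1} does not control their directions, and over the infinite horizon of this phase one must preclude any of them from growing or activating a genuinely new direction, so that $\|\rho^t\|$ stays negligible (in fact exponentially small). This hinges on the geometry of \cref{ass:noconvergenceall}: all data points are pairwise positively correlated and there is a single extremal direction $D^*\propto\beta^*$, so that, along the residual $r=X\beta^*-y$, a negative neuron straddling the data (strictly positive on some $x_k$, strictly negative on another) sees a subgradient that strictly contracts its norm, a negative neuron positively correlated with every $x_k$ only adds to the effective linear parameter, and one negatively correlated with every $x_k$ is inactive. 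Making these statements quantitative and uniform in time, and interleaving them with the bootstrap keeping the neurons of $\cI$ positively correlated with the data, is what the bulk of the argument in the appendix carries out.
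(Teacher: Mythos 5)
Your high-level route mirrors the paper's: decompose the dynamics into an effective linear parameter $\beta^t$ plus a small perturbation, derive a local Polyak--Łojasiewicz inequality, deduce exponential decay of the excess loss and of $\|D^t\|$, bound the arc length to get convergence, and identify the limit as $\beta^*$ via linearity. You also correctly single out the hard step --- controlling the negative-output neurons over the infinite horizon using \cref{ass:phase3} --- and your geometric sketch (straddling neurons contract, fully activated ones act linearly, deactivated ones vanish) is the right picture, corresponding to \cref{lemma:Rt} in the appendix.

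There is, however, a concrete flaw in the decomposition that undermines the quantitative claim you need. You define $\beta^t=\sum_{i\in\cI}a_i^tw_i^t$ over the \emph{fixed} set $\cI$, so the remainder $\rho^t$ contains $\sum_{i\in\cN}a_i^t\langle w_i^t,x_k\rangle_+$. A negative-output neuron that becomes \emph{fully positively activated} contributes $a_i^t\langle w_i^t,x_k\rangle$ to this sum, a term that does not shrink over time (it is $\bigO{\lambda^{2-2\varepsilon}}$ but \emph{constant}, not exponentially decaying --- see \cref{lemma:boundeddyn}, where negative neurons are only shown to grow by an $\bigO{1}$ factor). Your ``separate argument'' that $\|\rho^t\|$ decays exponentially is therefore false with this definition of $\beta^t$; without it, Grönwall only gives $\sqrt{V(t)}=\bigO{\varepsilon_3 e^{-\kappa't}}+\bigO{\lambda^{2-2\varepsilon}}$, so $\|D^t\|$ does not go to zero and you cannot conclude the exact equality $\bar\beta^\infty=\beta^*$ required by the third item. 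The paper sidesteps this by defining $\beta^t=\sum_{i\in\cI_t}a_i^tw_i^t$ over the \emph{time-varying} set $\cI_t$ of all neurons activated on at least one data point, so that only the straddling neurons (the set $\cN_t$ in \cref{lemma:Rt}) remain in the perturbation, and their aggregate norm \emph{does} decay at rate $e^{-c_4(t-\tau_3)/4}$ under \cref{ass:phase3}. Your ``obstacle'' paragraph in fact notes informally that fully activated negative neurons ``only add to the effective linear parameter,'' which is exactly the fix --- but the displayed $\beta^t$, $\rho^t$, and PL argument never incorporate it, so as written the proof does not close. Relatedly, the paper avoids the bootstrap circularity you flag for $M^t\succeq cI$ by using the condition $\|\theta^t-\theta^{\tau_3}\|\le\varepsilon_4$ built into $\tau_4$ to bound $h_{\theta^t}$ directly (\cref{lemma:hphase3}) rather than re-deriving it from the conclusion; that is a cleaner route than the bootstrap you sketch.
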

Similarly to \cite{chatterjee2022convergence}, the proof of \cref{lemma:phase3} relies on a local PL condition. The condition is a bit trickier here, since it is with respect to the loss obtained by a spurious stationary point instead of a global minimum. Getting this local PL condition requires to also control the negative neurons. Controlling them is intricate but possible by showing that they either act as a linear operator or that they decrease in norm.

Once we derive the local PL condition (see \cref{lemma:localPL} in \cref{app:phase3}), the remaining of the proof encloses the dynamics of the parameters in a small compact set and then shows exponential convergence towards some parameter $\theta^\infty$ satisfying the conditions of \cref{lemma:phase3}, which implies that
\begin{gather*}
\forall k \in [n], h_{\theta^\infty}(x_k) = \langle \beta^*,x_k \rangle ,\\
\forall x\in \R^d, h_{\theta^\infty}(x) = \langle \beta^*,x \rangle_+ + \bigO{\varepsilon_3+\varepsilon_4}.
\end{gather*}

\section{Experiments}\label{sec:expe}

This section empirically illustrates the results of \cref{thm:alignment,thm:noconvergence}. The considered dataset does not exactly fit the conditions of \cref{thm:noconvergence} to illustrate that \cref{ass:noconvergenceall} with $\eta<\frac{1}{6}$ is only needed for analytical purposes. The dataset is however similar to datasets satisfying \cref{ass:noconvergenceall} (see \eg \cref{fig:noconvdata}) in the sense that all three data points are positively correlated, with positive labels; and the middle point is below the optimal linear regressor. 
The data points are here represented as unidimensional, since their second coordinate is fixed to $1$ to take into account bias terms in the hidden layer of the neural network.

\begin{figure}[b!] 
%\thispagestyle{empty} 
%\floatconts{fig:dynamicsmain}% label for whole figure
\centering
\begin{subfigure}{0.5\linewidth}
\includegraphics[width=\linewidth, trim=1cm 0.9cm 1.5cm 1cm, clip]{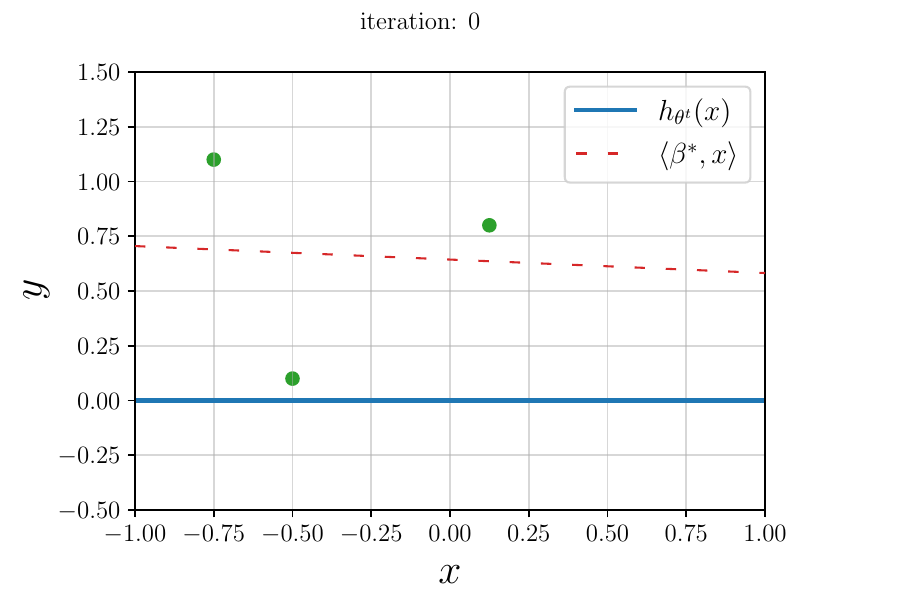}
\caption{\label{fig:estim0}Estimated function at initialisation (iteration $0$)}
\end{subfigure}
\hfill% space out the images a bit
\begin{subfigure}{0.4\linewidth}
      \includegraphics[width=\linewidth,trim=8cm 1.5cm 8cm 1.5cm, clip]{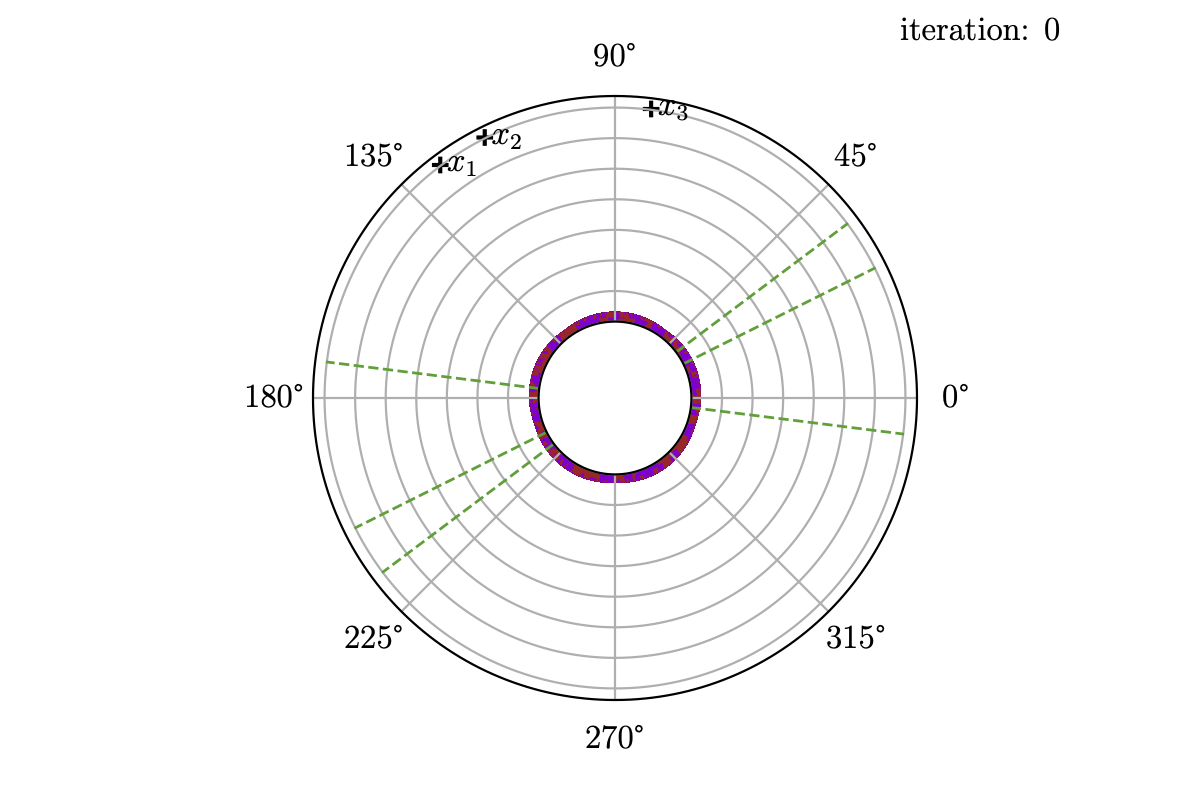}
\caption{\label{fig:alignment0}Weights' repartition at initialisation}
\end{subfigure}
\begin{subfigure}{0.5\linewidth}
\includegraphics[width=\linewidth,trim=1cm 0.9cm 1.5cm 1cm, clip]{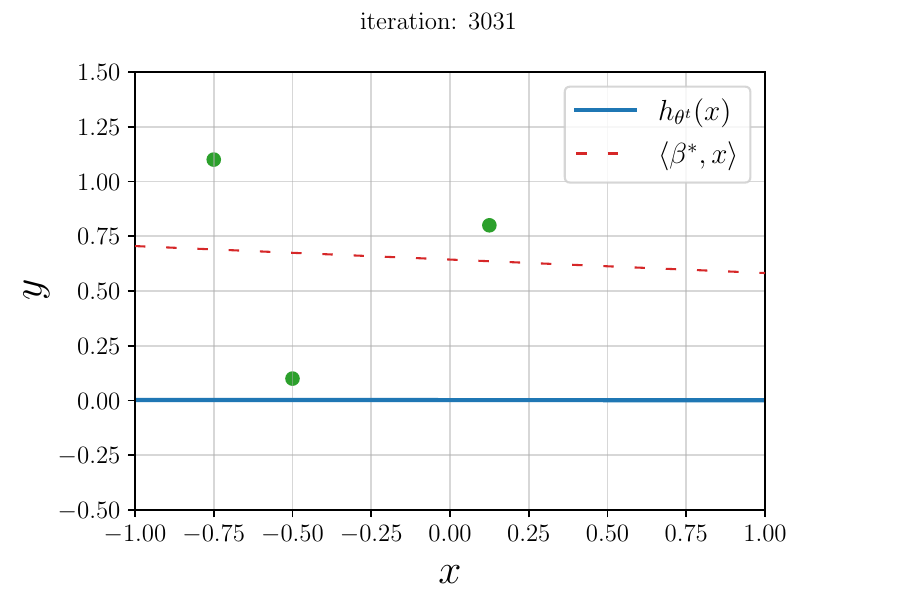}
\caption{\label{fig:estim1}Estimated function after early alignment (iteration $3000$)}
\end{subfigure}
 \hfill% space out the images a bit
 \begin{subfigure}{0.4\linewidth}
      \includegraphics[width=\linewidth,trim=8cm 1.5cm 8cm 1.5cm, clip]{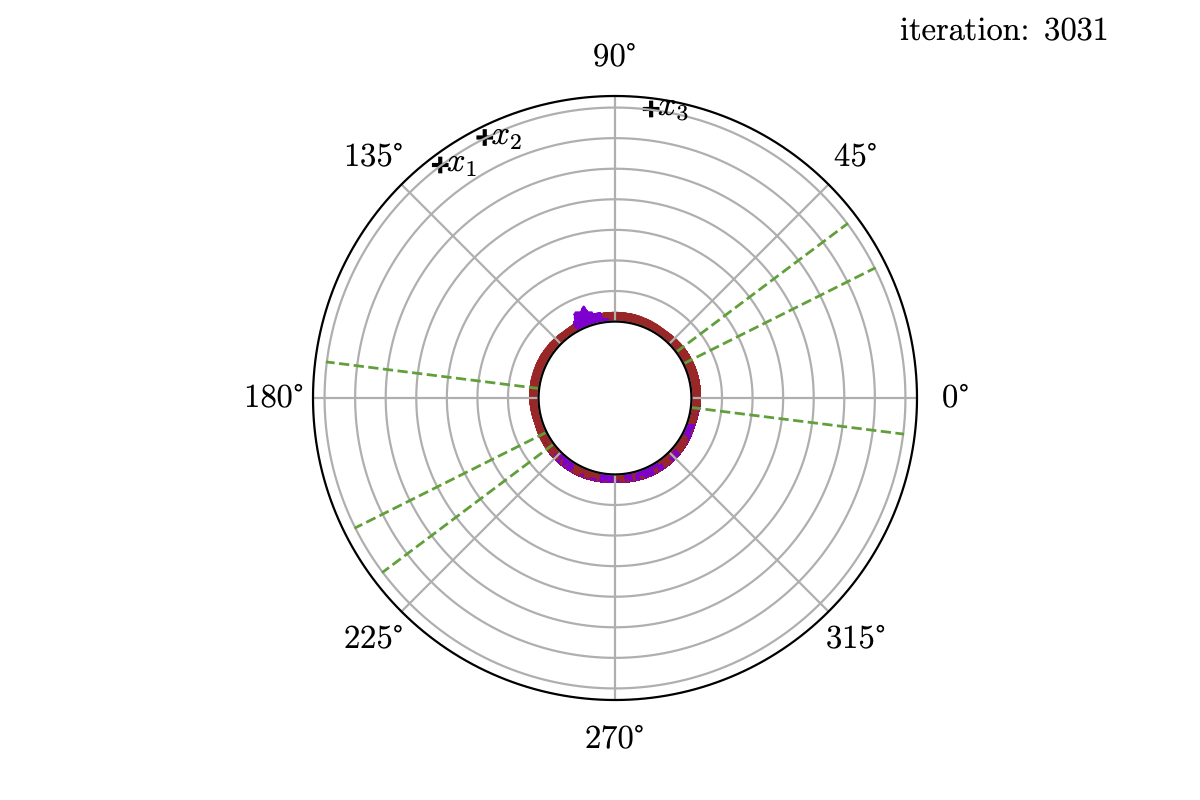}
\caption{\label{fig:alignment1}Weights' repartition after early alignment}
\end{subfigure}
%
%\begin{subfigure}{0.5\linewidth}
%\includegraphics[width=\linewidth,trim=1cm 0.9cm 1.5cm 1cm, clip]{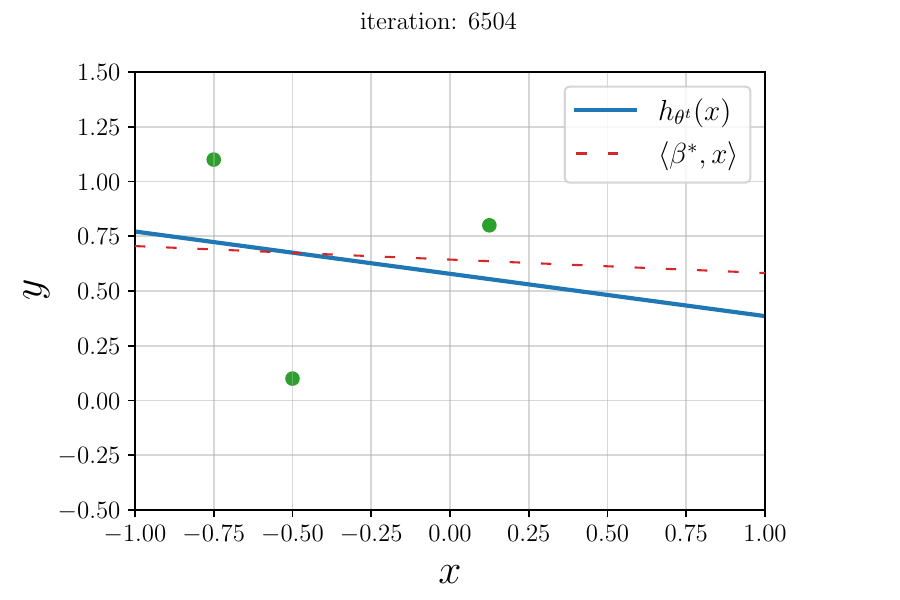}
%\caption{\label{fig:estim2}Estimated function during neurons' growth (iteration $6500$)}
%\end{subfigure}
% \hfill% space out the images a bit
% \begin{subfigure}{0.4\linewidth}
%      \includegraphics[width=\linewidth,trim=8cm 1.5cm 8cm 1.5cm, clip]{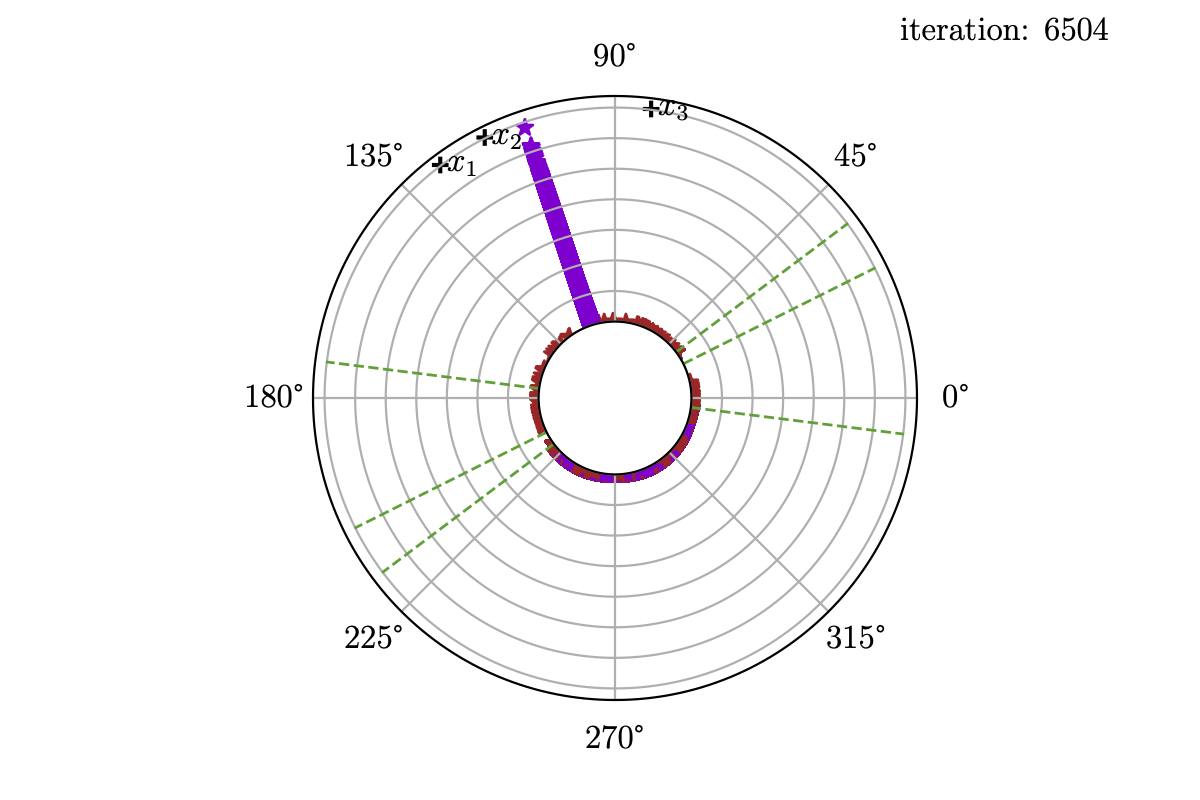}
%\caption{\label{fig:alignment2}Weights' repartition during neurons' growth}
%\end{subfigure}
   %
  \caption{\label{fig:dynamicsmain}Training dynamics (part 1/2).}% caption for whole figure
\end{figure}

    \begin{figure}[t]
    \addtocounter{figure}{-1}
    \centering
  \begin{subfigure}{0.5\linewidth}
    \addtocounter{subfigure}{4}
\includegraphics[width=\linewidth,trim=1cm 0.9cm 1.5cm 1cm, clip]{main_estim_2.pdf}
\caption{\label{fig:estim2}Estimated function during neurons' growth (iteration $6500$)}
\end{subfigure}
 \hfill% space out the images a bit
 \begin{subfigure}{0.4\linewidth}
      \includegraphics[width=\linewidth,trim=8cm 1.5cm 8cm 1.5cm, clip]{main_alignment_2.png}
\caption{\label{fig:alignment2}Weights' repartition during neurons' growth}
\end{subfigure}
\begin{subfigure}{0.5\linewidth}
\includegraphics[width=\linewidth,trim=1cm 0.9cm 1.5cm 1cm, clip]{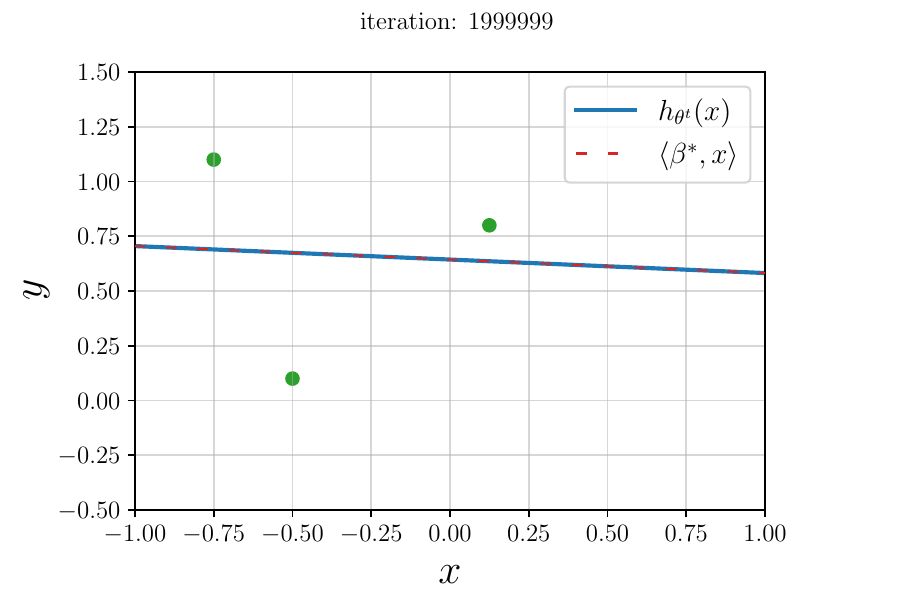}
\caption{\label{fig:estim3}Estimated function at convergence (iteration $2\times 10^6$)}
\end{subfigure}
 \hfill% space out the images a bit
 \begin{subfigure}{0.4\linewidth}
      \includegraphics[width=\linewidth,trim=8cm 1.5cm 8cm 1.5cm, clip]{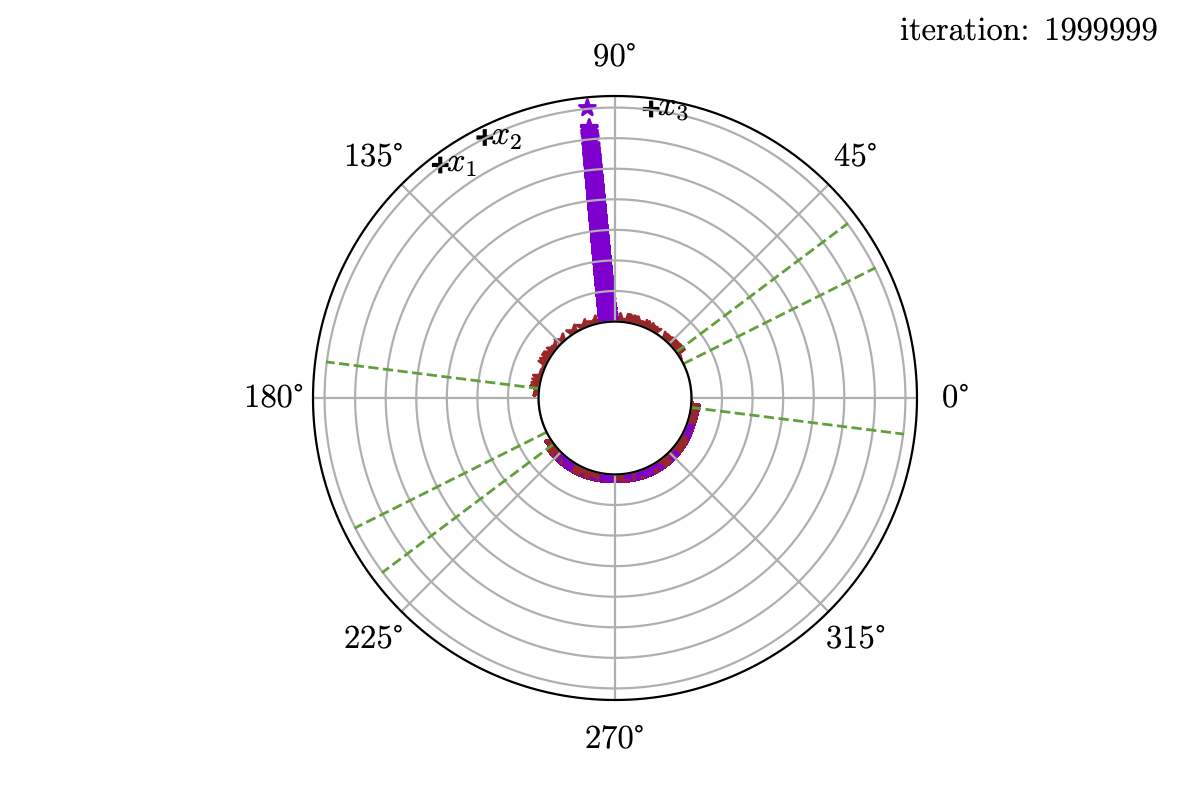}
\caption{\label{fig:alignment3}Weights' repartition at convergence}
\end{subfigure}
\caption{\label{fig:dynamicsmain2}Training dynamics (part 2/2).}
\end{figure}

For the given set of data points, we train a one-hidden ReLU network with gradient descent over the mean square loss given by \cref{eq:squareloss}. Experimental details and additional experiments are provided in \cref{sec:additionalexpe}. The code and animated versions of the figures are also available at \url{github.com/eboursier/early_alignment}.

%\medskip

\cref{fig:dynamicsmain} illustrates the training dynamics over time. The left column represents the estimated function $h_{\theta^t}(x)$ at different training times; while the right column represents the $2$-dimensional repartition of the network weights $w_j^t$ in polar coordinates. 
In the latter, the inner circle corresponds to $0$ norm and is shifted away from the origin to accurately observe the early alignment phenomenon. Each star corresponds to a single neuron $w_j^t\in \R^2$, represented in purple (resp. red) if its associated output weight $a_j^t$ is positive (resp. negative). The data points directions' $x_k$ are represented by black crosses and the dotted green lines delimit the different activation cones $A^{-1}(u)$. 
We recall that the weights $w_j^t$ are two-dimensional, as their second coordinate corresponds to bias terms.

\medskip

\cref{fig:estim0,fig:alignment0} show the parameters at initialisation. Given the small initialisation scale, the neurons are all nearly $0$ norm and the estimated function is close to the zero function. Moreover, the neurons' directions are uniformly spread over the whole space. 

\cref{fig:estim1,fig:alignment1} then show the state of the network after the early alignment phase. In particular, all the positive neurons (in purple) are either aligned with the single extremal vector $\frac{1}{n}\sum_{k=1}^n y_k x_k$ or deactivated with all data points. The latter then never move from their initial position. Meanwhile, the neurons' norm is still close to $0$ so that the estimated function is still nearly $0$. Actually, \cref{fig:estim1,fig:alignment1} precisely show the end of Phase 2a (see \cref{lemma:phase2a}), where the positive neurons are on the verge of exploding in norm. Indeed, we can see on \cref{fig:alignment1} a small purple bump: the positive neurons just start to considerably grow in norm.

\cref{fig:estim2,fig:alignment2} show the learnt parameters during the norm growth of the positive neurons. They can be seen as the end of the Phase 2, where the learnt estimator is close enough to the optimal linear one. While the positive neurons considerably grew in norm, the negative neurons remained small in norm and are actually irrelevant to the learnt estimator. 

\cref{fig:estim3,fig:alignment3} finally illustrate the learnt parameters at convergence. The learnt estimator converged to the optimal linear one, as predicted by \cref{thm:noconvergence}. In \cref{fig:alignment3}, all the neurons (including the negative ones) are located in only two activation cones: either they are activated with all the data points, or they are activated with no data point. This confirms our theoretical analysis and in particular the description given in the proof of \cref{lemma:Rt} in \cref{app:alignment}.
Another interesting observation is that the final direction of the positive neurons changed from their direction at the end of the early alignment (see difference between \cref{fig:alignment1,fig:alignment3}). As explained in \cref{sec:phase2}, this is one of the challenge in analysing the complete dynamics of the parameters: simultaneously controlling the norm growth and change of direction of the neurons is technically intricate.

\section{Discussion}\label{sec:discussion}

\paragraph{Early alignment and implicit bias.}
\cref{thm:alignment} states that neurons satisfying \cref{cond:neurons2} end up aligned towards a few key directions, given by extremal vectors, at the end of the early alignment phase. We believe this quantization of represented directions to be at the origin of the implicit bias of (stochastic) gradient descent. Many recent works indeed suggest that gradient descent is implicitly biased towards small rank hidden weights matrix~$W$ or, equivalently, a sparse number of directions represented by the neurons \citep{shevchenko2022mean,safran2022effective}. Other works evidence that the implicit bias can often be characterised by minimal norm interpolator \citep{lyu2019gradient,chizat2020implicit}, which happen to be closely related to this notion of sparse represented directions \citep{parhi2021banach,stewart2022regression,boursier2023penalising}.
The early alignment phase seems to confirm such a behavior, since it enforces alignment of weights towards a small number of directions, even with an omnidirectional initialisation. 

\medskip

Another interesting point that can be made from both \cref{thm:noconvergence} and the experiments in \cref{sec:expe,sec:additionalexpe} is that even if the training procedure does not manage to perfectly fit all the data, there still seems to be an implicit regularisation of the estimator at convergence. 
Specifically, for the $3$ points example given in \cref{thm:noconvergence,sec:expe}, the network does not manage to fit the data. It actually ends up being equivalent to a single neuron network, while two neurons are necessary to fit the data. However, there still seems to be some implicit regularisation at play, as the learnt estimation is very simple. Even more striking is that the learnt estimator represents the best possible way to fit the data with a single neuron, which is here given by the optimal linear estimator.
A similar observation is made in \cref{sec:stewart}, where the network ultimately becomes equivalent to a $5$ neurons network, but $8$ neurons are required to fit the data. Moreover, it also seems that the obtained estimator is given by the best way to fit the data points with only $5$ neurons here. Although these claims are pure intuitions and empirical observations, we believe that the stationary points reached in our different examples are only bad from an optimisation point of view, but could still yield good generalisation behaviors. Such an intuition has been theoretically confirmed in a subsequent work for a linear data model \citep{boursier2024simplicity}.

\paragraph{Omnidirectionality of the weights.}
\looseness=-1
Omnidirectionality is the property that the weights $(w_j^t,\sign(a_j^t))$ cover all possible directions of $\R^d\!\times\!\{-1,1\}$, in the limit of infinite width $m\to\infty$. This property has been used to show that two layer infinite width networks converge to a global minimum of the loss. In particular with an omnidirectional initialisation, omnidirectionality is preserved along the whole training if either the activation is differentiable \citep{chizat2018global} or with ReLU and an infinite dataset \citep{wojtowytsch2020convergence}. This property allows to never get stuck in a bad local minimum, since there is always a direction along which increasing the neurons' norm decreases the training loss.

On the other hand, the spurious convergence result of \cref{sec:noconvergence} is based on the fact that the early alignment phase can cause loss of omnidirectionality of the weights.\footnote{We stress in the paragraph below that our choice of initialisation is compatible with \citet{chizat2018global,wojtowytsch2020convergence}.} 
At the end of the early alignment phase, the neurons with positive output weights only cover a very small cone around the sole extremal vector, as can be observed in \cref{fig:dynamicsmain}.
This result does not contradict \citet{chizat2018global,wojtowytsch2020convergence}, since the activation is non-differentiable and the data finite in our setting. It actually highlights how fragile and crucial is the omnidirectionality to guarantee convergence towards global minima in the mean field regime.

\medskip

Also, \cref{thm:alignment} does not necessarily imply a loss of omnidirectionality. 
The weights can still cover all possible directions after the early alignment phase. \cref{thm:alignment} instead only implies that a large fraction of them is concentrated around a few key directions. Even with omnidirectional weights, the key directions thus have a much larger number of neurons with respect to other directions. We believe this discrepancy between the directions to be sufficient for getting an implicit bias towards small rank weights matrix.

\paragraph{Scale of initialisation.}
\looseness=-1
A strong feature of \cref{thm:noconvergence} is that the scale initialisation $\tilde\lambda$ does not depend on the network width $m$. The only dependence on $m$ is given by the $\frac{1}{\sqrt{m}}$ scaling. The absence of this scaling corresponds to the lazy regime, where convergence towards global minimum happens, yet without feature learning, which can yield bad generalisation on unseen data \citep{chizat2019lazy}.

Obtaining \cref{thm:noconvergence} with a threshold $\tilde\lambda$ that depends on the width $m$ is technically easier. However, such a setting would not be compatible with the infinite width network regimes considered by \citet{chizat2018global,wojtowytsch2020convergence}. On the other hand when fixing $\lambda$ and taking $m\to\infty$, our result lies in the infinite width setting. Moreover, our initialisation regime can be made compatible with the result of \citet{wojtowytsch2020convergence} by choosing $|a_j^0|=\|w_j^0\|$ for all $j$, as explained at the bottom of their second page. 
As explained above, the only reason that our result does not contradict \citet{chizat2018global,wojtowytsch2020convergence} is that we considered non-differentiable activation with finite data.

\medskip

A conclusion of \cref{thm:alignment,thm:noconvergence} is that choosing a small initialisation scale should yield a better implicit bias towards low rank hidden weights matrix, but at the risk of converging towards spurious stationary points (still with low rank matrix).  On the other hand, choosing a large initialisation scale can affect generalisation on new, unseen data \citep{chizat2019lazy}. In practice, an intermediate scale might thus be the best trade-off to yield both convergence to global minima and small generalisation errors. Yet on the theoretical side, analysing such intermediate regimes is even harder than extremal ones.

\paragraph{Nature of the stationary point.} Until now, we only described the convergence point reached in the no convergence example of \cref{thm:noconvergence} as a spurious stationary point. Its exact nature, i.e., whether it is a saddle point or a spurious local minimum actually depends on the choice of initialisation. If the initialisation counts \textit{perfectly  balanced} neurons with $|a_j^0|=\|w_j^0\|$, the convergence point $\theta^{\infty}$ of the parameters, described by \cref{thm:noconvergencegeneral} in the Appendix, can include zero neurons ($a_j^{\infty}=0$, $w_j^{\infty}=\mathbf{0}$), which corresponds to saddle points of the loss. We can indeed slightly perturb such a neuron in a good direction, so that it decreases the loss.

If on the other hand we only consider \textit{dominated neurons} with $|a_j^0|>\|w_j^0\|$, the convergence point $\theta^{\infty}$ does not include zero neurons, as $a_j^{\infty}$ is non-zero for any $j$. From there,  computations similar to the proof of \cref{lemma:localPL}, along with the description of $\theta^{\infty}$ by \cref{thm:noconvergencegeneral}, yield that the convergence point $\theta^{\infty}$ is actually a local minimum of the loss.

\paragraph{Generality of \cref{thm:noconvergence}.}

\cref{thm:noconvergence} states that for some cases of data, convergence towards spurious stationary points happens. First note that \cref{ass:noconvergenceall} describes a non-zero measure set for any $\eta>0$, so that this data example is not fully degenerate. Moreover, \cref{thm:noconvergence} is proven for $\eta<\frac{1}{6}$, but similar observations are empirically made for datasets that do not satisfy \cref{ass:noconvergenceall} (see \cref{sec:expe}). 

More generally, we believe that this case of bad convergence happens on many different data examples. First, \cref{ass:noconvergenceall} is needed for three technical conditions, detailed in \cref{app:assumptions}.
One of them allows to show that the whole network behaves as a single neuron after the early alignment. Yet on more complex data examples, the network can still get stuck after having created several neurons, as can be observed in \cref{sec:additionalexpe}. Having a precise description of the dynamics then becomes more intricate, but the reason for the failure of training is the same: the loss of weights omnidirectionality hinders the growth of neurons in a good direction. 
We believe that such \textit{bad convergence} could happen in many low-dimensional settings (in the sense $n\gtrsim d$). Indeed in such cases, the early alignment should lead to a sparse number of represented directions at its end, given by the number of critical points of the alignment function $G$. We conjecture this number of directions to be independent of $n$ in the low-dimensional setting for typical structured data models \citep[such as the ones that can be found in][Table 1]{bach2017breaking}. Such an independence would then lead to an impossibility of fitting all data points, since the network would become equivalent to a smaller network with less than $n$ neurons. 

\medskip

Besides its scale, the initialisation has the property of balancedness given by \cref{eq:balancedinit} here. This property allows to fix the sign of the output weights $a_j^t$. 
Without it, some of the weights $a_j^t$ change their sign during the early alignment phase. Because of that, the omnidirectionality of the weights $(w_j,\sign(a_j))$ on $\R^d\times\{-1,1\}$ is still preserved when considering unbalanced initialisations in the example of \cref{thm:noconvergence}. The parameters $\theta^t$ thus converge towards a global minimum for unbalanced initialisation with the data of \cref{ass:noconvergenceall}.

We yet believe that this omnidirectionality is not generally preserved with unbalanced initialisation, as the differentiability of the loss $L(\theta)$ is the general argument for the conservation of the omnidirectionality \citep{chizat2018global,wojtowytsch2020convergence} and would still not be satisfied. This intuition is empirically confirmed in \cref{sec:stewart}, where omnidirectionality of the weights is not preserved and the network fails at finding a global minimum,  despite an unbalanced initialisation. 
Theoretically studying examples where unbalanced initialisations fail to find global minima however is more complex and left open for future work.

\medskip

\cref{thm:noconvergence} focuses on a regression task. 
\citet{stewart2022regression} suggest that finding global minima is easier for classification than regression tasks and could be a reason for the empirical success of binning (i.e., recasting a regression task as a classification one).
Whether a similar failure of training is possible with classification tasks is left open for future work. We believe it is indeed possible, since the early alignment phase still happens and can lead to the loss of omnidirectionality, which is also key in classification settings.%\footnote{Classification is more subtile, as implicit bias can still be obtained in the large initialisation regime through the grokking phenomenon. However, only convergence towards KKT points of the max margin classifier is guaranteed in that case. Loss of omnidirectionality can thus lead to convergence to spurious KKT points instead of the actual max margin classifier.}

Additionally, \cref{thm:noconvergence} is specific to the ReLU activations. This particular choice of activation is crucial to its proof, as it implies that \textit{all} positive neurons either align in a short time towards the extremal vector (point 1 in \cref{lemma:phase1}), or are not relevant (point 2 in \cref{lemma:phase1}). 
Using leaky ReLU activations, neurons in $[m]\setminus(\cI\cup\cN)$ become relevant; and some of them  move at an arbitrarily slow rate during the first phase. As a consequence, omnidirectionality would be preserved during the early alignment phase and convergence towards an interpolator still happens.\footnote{\cref{thm:noconvergence} still holds for leaky ReLU activations if we allow $\tilde\lambda$ to depend on $m$.} We yet allege that failure of convergence can still happen with leaky ReLU activations, yet at the expense of a slightly more intricate data example where omnidirectionality can be lost.

%\FloatBarrier
\section{Conclusion}\label{sec:conclusion}

This work provides a first general and rigorous quantification of the early alignment phenomenon, that happens for small intialisations in one hidden layer (leaky) ReLU networks and was first introduced by \citet{maennel2018gradient}. We believe that such a result can be directly used in future works that study the training dynamics of gradient flow for specific data examples.
In particular, we apply this result to describe the training trajectory on a 3 points example. Despite the simplicity of this example, gradient flow fails at learning a global minimum and only converges towards a spurious stationary point (corresponding to the least squares regressor), even with an infinite number of neurons. 
This example thus illustrates the duality of the early alignment phenomenon: although it induces some sparsity in the learnt representation, which is closely related to the implicit bias of gradient flow, it also comes with a risk of failure in minimising the training loss. 

%We believe that practitioners should consider intermediate scales of initialisation, where implicit bias still happens, without hurting significantly the optimisation scheme. Analysing such intermediate scales is yet even more intricate than analysing the extreme regimes given by either by large (NTK) or small initialisation.
%%
%Also, extending the analysis of the early alignment phenomenon to gradient descent is left for future work and is an interesting direction, since macroscopic step sizes have also been observed to play a key role in the implicit bias~\citep{}.
%We also believe that our negative result (\cref{thm:noconvergence}) can be extending to more general settings, such as classification, unbalanced intialisation or larger input dimensions. Studying these settings would also help in understanding the factors that can help/hinder the minimisation of the training loss.

%\section*{Acknowledgements}
%The authors thank Lena\"ic Chizat for insightful discussions.

\acks{The authors thank Lena\"ic Chizat for insightful discussions. 
This work was partially funded by an unrestricted gift from Google and by the Swiss National Science Foundation (grant number 212111).}

%\section*{Declarations}
%
%Not applicable.

\clearpage
\bibliography{main.bib}

%%%%%%%%%%%%%%%%%%%%%%%%%%%%%%%%%%%%%%%%%%%%%%%%%%%%%%%%%%%%
\clearpage
%%%%%%%%%%%%%%%%%%%%%%%%%%%%%%%%%%%%%%%%%%%%%%%%%%%%%%%%%%%%
\appendix

\addcontentsline{toc}{section}{Appendix} % Add the appendix text to the document TOC
\part{Appendix} % Start the appendix part
%\ptcrule
\parttoc % Insert the appendix TOC

\section{More General Assumptions for \cref{sec:noconvergence}}\label{app:assumptions}
In this section, we provide more general assumptions than \cref{ass:noconvergenceall} that still lead to the spurious convergence result of \cref{thm:noconvergence}.

\begin{assumption}\label{ass:noconvergence}
For all $k,k'$, $\langle x_k, x_{k'}\rangle > 0$, $y_k>0$. 
Also, $n\geq d$ and any family $(x_k)_{k}$ with at most $d$ vectors is linearly independent.
\end{assumption}

The goal of this assumption is to have a single extremal vector, so that the early alignment leads to a concentration of (almost) all neurons towards the same direction. We assume the matrix $X\in \R^{n\times d}$ to be full rank with $n\geq d$ for the sake of simplicity. Note that the neurons do not move in the orthogonal space of $\Span(x_1, \ldots, x_n)$ with gradient methods, so that our results can be extended to the case of rank deficient $X$ (or $n<d$).

\begin{assumption}\label{ass:cK}
For any $k\in\cK$, $y_k\|x_k\|^2>\sqrt{\sum_{k'=1}^n y_{k'}^2}\sqrt{\sum_{k'\neq k}^n \langle x_k', x_k \rangle^2}$, where 
\begin{equation*}
\cK \coloneqq \left\{ k\in[n] \mid \exists w\in \R^d\setminus\{\mathbf{0}\}, \langle w,x_k \rangle =0 \text{ and } \forall k'\in[n],\langle w,x_{k'} \rangle \geq 0 \right\}.
%\cK \coloneqq \left\{ k\in[n] \mid \exists u\in \R^d, \langle u,x_k \rangle =1 \text{ and } \forall k'\in[n],\langle u,x_{k'} \rangle \geq 1 \right\}.
\end{equation*}
\end{assumption}

This assumption is needed to assume that in the dynamics of a single neuron network, this neuron stays activated with all the data points ($\langle w,x_k\rangle >0$ for any $k$) along its trajectory. It is needed so that our estimator behaves as a linear regression in the whole second phase of training.

Define in the following for any $u\in\{-1,0,1\}^n$ and for $\beta^*$ as in \cref{thm:noconvergence},
\begin{gather}
\tD_u^{\beta^*} \coloneqq \frac{1}{n} \sum_{k=1}^n \iind{u_k=1} (y_k-\langle \beta^*,x_k\rangle) x_k,\label{eq:Dtilde}\\
\D_u^{\beta^*}\coloneqq\Big\{ \frac{1}{n}\sum_{k=1}^n \eta_k (y_k-x_k^\top\beta^*)x_k \ \Big|\ \forall k\in[n], \eta_k \begin{cases} \in [0,1] \text{ if }\langle w_j^t,x_k\rangle =0 \\ =1 \text{  if }\langle w_j^t,x_k\rangle>0 \\
=0 \text{ otherwise}
\end{cases}\Big\}\notag.
\end{gather}
\begin{assumption}\label{ass:phase3}
For any $k\in\cK$,  $\langle \beta^*, x_k \rangle<y_k$.\\
Moreover, for any $u\in A(\R^d)$, such that $\exists k,k'\in[n], u_k=1$ and $u_{k'}=-1$, there exists $\delta_u>0$ such that both
\begin{enumerate}
\item $\inf_{D\in\D_u^{\beta^*}}u_k\langle D, x_k\rangle>0$ for any $k\in \cK$ such that both $u_k\neq 0$ and $\inf\limits_{w\in A^{-1}(u)} |\langle \frac{w}{\|w\|},\frac{x_k}{\|x_k\|}\rangle|\leq \delta_u$;
\item for any $\delta\in(0,\delta_u)$, $\inf\limits_{w\in A^{-1}_\delta(u)} \langle \tD_u^{\beta^*}, \frac{w}{\|w\|} \rangle >0$, where
\begin{equation*}
A^{-1}_\delta(u)\coloneqq \left\{w\in A^{-1}(u) \mid \exists k,k' \in \cK, \text{ s.t. } \langle \frac{w}{\|w\|},\frac{x_k}{\|x_k\|}\rangle \geq \delta \text{ and }\langle \frac{w}{\|w\|},\frac{x_{k'}}{\|x_{k'}\|}\rangle\leq-\delta \right\}.
\end{equation*}
\end{enumerate}
\end{assumption}

Although technical, this assumption is equivalent to assuming that the least squares regression estimator $\beta^*$ only gets worse, when  a ReLU neuron with a negative output weight is added to it. This ensures that after the second phase of training, where the estimator is arbitrarily close to the least squares, the neurons with negative output weights do not grow in norm. 

In our three points example, the two conditions stated in \cref{ass:phase3} are equivalent and mean that for the two extreme points, $\langle \beta^*, x_k \rangle <y_k$.

\begin{lem}\label{lem:assumptions}
If \cref{ass:noconvergenceall} holds with $\eta<\frac{1}{6}$, then
the dataset $(x_i,y_i)_{i=1,2,3}$ satisfies \cref{ass:noconvergence,ass:cK,ass:phase3}.
\end{lem}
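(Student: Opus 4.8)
The plan is to verify each of the three technical assumptions—\cref{ass:noconvergence,ass:cK,ass:phase3}—directly from the explicit bounds provided by \cref{ass:noconvergenceall} with $\eta < \frac{1}{6}$. Throughout, I would parameterise the three data points as $x_1 = (-1+s_1, 1+t_1)$, $x_2 = (u_2, 1+t_2)$, $x_3 = (1-s_3, 1+t_3)$ with $|s_i|, |t_i|, |u_2| \leq \eta$, $s_1, s_3 \in [0,\eta]$, and labels $y_k \in [1, 1+\eta]$ (with $y_2 \in (0,\eta]$), then bound all the relevant inner products, norms, and the least-squares solution $\beta^*$ as functions of $\eta$, using $\eta < \frac{1}{6}$ at the end to close the inequalities. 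A useful preliminary computation is the exact form of $\beta^* = (X^\top X)^{-1} X^\top y$ for this $3\times 2$ system; since $x_1$ points up-left, $x_2$ points up, $x_3$ points up-right, and $y_1, y_3 \approx 1$ while $y_2 \approx 0$, the regressor $\beta^*$ has a small positive second coordinate (roughly the average height, pulled down by $y_2$) and a first coordinate of moderate size, and in particular $\langle \beta^*, x_k \rangle < y_k$ for $k = 1, 3$ because the middle point drags the linear fit below the extreme points — this is exactly the geometric content the assumption needs.

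For \cref{ass:noconvergence}, pairwise positivity $\langle x_k, x_{k'}\rangle > 0$ follows since each $x_k$ has second coordinate in $[1-\eta, 1+\eta] > 0$ and first coordinates bounded by $1+\eta$ in absolute value, so $\langle x_k, x_{k'}\rangle \geq (1-\eta)^2 - (1+\eta)^2 \cdot(\text{sign issue})$; the worst case is $\langle x_1, x_3\rangle$ where the first coordinates have opposite sign, giving $\langle x_1, x_3 \rangle \geq (1-\eta)^2 - (1-s_1)(1-s_3) \geq (1-\eta)^2 - 1$, which is \emph{not} obviously positive, so I would instead bound $(1-s_1)(1-s_3) \leq 1$ and note $(1-\eta)^2 \geq (5/6)^2 = 25/36$ while the product of first coords is at most $1$ — hmm, this requires care; the right bound is $\langle x_1, x_3\rangle = (1+t_1)(1+t_3) - (1-s_1)(1-s_3) \geq (1-\eta)^2 - 1$, which is negative, so positivity must come from $s_1, s_3$ being bounded away from... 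Actually re-examining, $x_1 \in (-1, -1+\eta]$ means its first coordinate is close to $-1$, and $x_3 \in [1-\eta, 1)$ means close to $+1$, so their product is close to $-1$, and $(1+t_1)(1+t_3) \leq (1+\eta)^2$ which for $\eta < 1/6$ is at most $49/36 < 2$, so $\langle x_1, x_3\rangle$ could indeed be negative — this means I have misread and the assumption is that we need $\langle x_k, x_{k'}\rangle > 0$; let me trust that the univariate picture (second coord $=1$) makes $\langle x_1, x_3 \rangle = 1 - (1-s_1)(1-s_3) > 0$ when $s_1$ or $s_3$ is positive, so positivity holds strictly for $\eta > 0$ in the degenerate-free case, and I would carry the general perturbation through with the $t_i$ terms only helping. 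Linear independence of any two of the three two-dimensional vectors follows because they point in three genuinely different directions (up-left, up, up-right) — formally, each pairwise determinant is bounded away from zero by an explicit $\Theta(1)$ quantity — and $n = 3 \geq d = 2$ is immediate.

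For \cref{ass:cK}, I would first identify the set $\cK$: a direction $w$ with $\langle w, x_k\rangle = 0$ and $\langle w, x_{k'}\rangle \geq 0$ for all $k'$ is a supporting hyperplane of the positive cone generated by $\{x_1, x_2, x_3\}$, so $\cK$ consists exactly of the two \emph{extreme} points of this cone, namely $k = 1$ and $k = 3$ (the middle point $x_2$ is in the interior of the cone's angular span, so it is never on a supporting hyperplane with all others nonnegative — I'd verify $x_2$ is a strictly positive combination of $x_1$ and $x_3$). Then for $k \in \{1,3\}$ I must check $y_k \|x_k\|^2 > \sqrt{\sum y_{k'}^2}\, \sqrt{\sum_{k' \neq k} \langle x_{k'}, x_k\rangle^2}$; since $\|x_k\|^2 \geq (1-\eta)^2 + (1-\eta)^2$ is $\Theta(1)$ and $\geq$ roughly $2 - O(\eta)$, $y_k \geq 1$, while $\sqrt{\sum y_{k'}^2} \leq \sqrt{2(1+\eta)^2 + \eta^2}$ and the cross terms $\langle x_{k'}, x_k\rangle$ for $k' \neq k$ are at most $\|x_k\|\|x_{k'}\| = \Theta(1)$ but the key is that one of them ($\langle x_2, x_1\rangle$, involving $y_2$-side... no, these are inner products of features not labels) — actually the relevant smallness is that for $k=1$, the sum $\sum_{k'\neq 1}\langle x_{k'}, x_1\rangle^2 = \langle x_2, x_1\rangle^2 + \langle x_3, x_1\rangle^2$ where $\langle x_3, x_1\rangle$ is small (near $0$ or mildly positive) — this is exactly where $\eta < 1/6$ enters to make the inequality quantitative. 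I would plug in the extremal values, reduce to a polynomial inequality in $\eta$, and check it holds on $(0, 1/6)$.

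The main obstacle will be \cref{ass:phase3}, which is the most delicate: beyond the easy part $\langle \beta^*, x_k\rangle < y_k$ for $k \in \cK$ (which I'd get from the explicit $\beta^*$ computation sketched above, using that $y_2 \approx 0$ pulls the linear fit strictly below $y_1 \approx y_3 \approx 1$ at the extreme points), the two conditions require controlling, over every activation cone $u$ that genuinely separates points (some $u_k = 1$, some $u_{k'} = -1$), the sign of $\langle D, x_k\rangle$ for $D \in \D_u^{\beta^*}$ near the cone boundary, and the sign of $\langle \tilde D_u^{\beta^*}, w/\|w\|\rangle$ on the truncated cone $A^{-1}_\delta(u)$. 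Here I would enumerate the finitely many cones $u \in A(\R^2)$ (for $3$ points in $\R^2$ there are at most a handful of cones that split the points), compute $\tilde D_u^{\beta^*}$ and $\D_u^{\beta^*}$ explicitly as partial sums of the residual-weighted features $(y_k - \langle \beta^*, x_k\rangle) x_k$, observe that the residuals are $+\Theta(1)$ at the extreme points and $-\Theta(1)$ at the middle point (this is the signature of an underfitting linear regressor on a convex-looking dataset), and then argue geometrically that including only a subset of points while excluding others makes the corresponding $\tilde D_u^{\beta^*}$ point \emph{out} of the cone $A^{-1}(u)$, so that a negative-output-weight neuron in that cone would only increase the loss. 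Each such check reduces to a bounded computation, and $\eta < 1/6$ is used to keep the residual signs and the cone geometry from degenerating; assembling all cases gives the claim. I'd present the extreme-point residual sign computation in full and relegate the cone enumeration to explicit case checks.
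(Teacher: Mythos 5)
Your high-level plan matches the paper: verify all three assumptions directly, identify $\cK = \{1,3\}$ by observing that $x_2$ lies in the interior of the positive cone of $x_1, x_3$, and for \cref{ass:phase3} show that the least-squares residuals are positive at the extreme points and negative at the middle point, then check the finitely many activation cones. Two things are worth flagging. First, the confusion you run into about whether $\langle x_1, x_3\rangle > 0$ comes from a misread of \cref{ass:noconvergenceall}: the second coordinates of $x_1$ and $x_3$ lie in $[1,1+\eta]$, not $[1-\eta, 1+\eta]$ (only $x_2$ has that symmetric interval), so $(1+t_1)(1+t_3) \geq 1$ and, since the first-coordinate intervals are open at $\mp 1$, the product of first coordinates is strictly greater than $-1$; positivity is then immediate and you need not invoke $\eta < 1/6$ at all for this step. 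Second, and more substantively, your route through \cref{ass:phase3} proposes to compute $\beta^*$ explicitly and read off residual signs; the paper avoids this by exploiting a single structural fact that does double duty, namely the decomposition $x_2 = \alpha_1 x_1 + \alpha_3 x_3$ with $\alpha_1 > \eta$, $\alpha_3 > 0$ (established for \cref{ass:cK}). Feeding this into the normal equations $\sum_i r_i x_i = \mathbf{0}$ (with $r_i = x_i^\top \beta^* - y_i$) gives $r_1 + \alpha_1 r_2 = 0$, $r_3 + \alpha_3 r_2 = 0$, and a short algebraic argument using $\alpha_1 y_1 > y_2$ then forces $r_2 > 0$ hence $r_1, r_3 < 0$ — no explicit inversion of $X^\top X$ and no delicate polynomial inequalities in $\eta$ for this part. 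Your approach would in principle close, but it would require tracking the full $\eta$-perturbed formula for $\beta^*$, which is noticeably messier and risks losing the clean sign information the algebraic identity hands you for free. For the cone enumeration you correctly identify that each relevant $\tD_u^{\beta^*}$ becomes proportional to $\pm x_1$ or $\pm x_3$ with the sign determined by the residual signs, which is exactly how the paper handles the remaining case checks.
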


\begin{proof}
1) First check that \cref{ass:noconvergence} holds. Simple computations indeed lead to $\langle x_k, x_{k'} \rangle>0$ if $\eta<1$. Moreover, we indeed have $y_k>0$. Also, $x_1,x_2,x_3$ are pairwise linearly independent for $\eta<1/4$.

\bigskip

2) Some cumbersome but direct computations (that we here skip for sake of readibility) show that if $\eta< \frac{1}{6}$, we can then write
\begin{equation}\label{eq:x2decomp}
\begin{gathered}
x_2 = \alpha_1 x_1 + \alpha_3 x_3\\
\text{with } \alpha_1>\eta \text{ and }\alpha_3>0.
\end{gathered}
\end{equation}
From there, we can choose $w_1\neq\mathbf{0}$ such that $\langle x_1, w_1 \rangle=0$ and  $\langle x_3, w_1 \rangle>0$. \cref{eq:x2decomp} then directly leads to $\langle x_2, w_1 \rangle>0$. By definition, this yields $1\in\cK$. Using similar arguments, $3\in\cK$.

Assume for some $w\neq \mathbf{0}$, $\langle w,x_2 \rangle=0$ with $\langle w,x_1 \rangle\geq 0$ and $\langle w,x_3 \rangle\geq 0$. \cref{eq:x2decomp} then implies that all the products are actually $0$, so that $w=\mathbf{0}$. Necessarily, we thus have $\cK=\{1,3\}$ in the considered example.

\medskip

Let us now check that the inequalities hold for any $k\in\cK$. By definition of the data, we have for $k\in\{1,3\}$:
\begin{align*}
y_k\|x_k\|^2\geq (1+(1-\eta)^2)^2.
\end{align*}
And also
\begin{align*}
\sqrt{\sum_{k'=1}^n y_k^2}\sqrt{\sum_{k'\neq k}\langle x_{k'},x_k\rangle} &\leq \sqrt{\eta^2 + (1+\eta)^2}\sqrt{(\eta(1-\eta)+(1+\eta)^2)^2+(-(1-\eta)^2+(1+\eta)^2)^2}\\
& = \sqrt{1+2\eta + 2\eta^2}\sqrt{(1+3\eta)^2+16\eta^2}.
\end{align*}
A simple comparison then allows to have when $\eta<\frac{1}{6}$:
\begin{align*}
y_k\|x_k\|^2\geq (1+(1-\eta)^2)^2 > \sqrt{\sum_{k'=1}^n y_k^2}\sqrt{\sum_{k'\neq k}\langle x_{k'},x_k\rangle}.
\end{align*}
So \cref{ass:cK} holds.

\bigskip

3) Let's now check \cref{ass:phase3}. Denote in the following for any $i\in[n]$, $r_i\coloneqq x_i^\top\beta^* - y_i$.
By definition of $\beta^*$,
\begin{equation*}
\sum_{i=1}^3 r_i x_i = \mathbf{0}.
\end{equation*}
In particular, since $x_1$ and $x_3$ are linearly independent, it comes:
\begin{equation}\label{eq:interpolation0}
r_1 + \alpha_1 r_2 = 0 \quad \text{ and } \quad r_3+\alpha_3 r_2=0.
\end{equation}
Also, by definition of $r_i$,
\begin{equation*}
r_2 = \alpha_1 r_1 + \alpha_3 r_3 + \Big( \alpha_1 y_1+\alpha_3 y_3 -y_2 \Big).
\end{equation*}
Since $\alpha_1 y_1>y_2$ and $\alpha_3>0$, the term in parenthesis is positive, so that the last equality becomes
\begin{align*}
r_2 > \alpha_1 r_1 + \alpha_3 r_3\\
= - (\alpha_1^2+\alpha_3^2)r_2.
\end{align*}
The last inequality comes from \cref{eq:interpolation0}. Necessarily, this yields $r_2>0$, but \cref{eq:interpolation0} then yields that $r_1<0$ and $r_3<0$, \ie for any $k\in\cK$, $y_k> \langle \beta^*,x_k \rangle$.

Now, consider $u\in A(\R^d)$ such that $\exists k,k'$ with $u_k=1$ and $u_{k'}=-1$. Since $2\not\in\cK$, we can actually choose $k$ and $k'$ both in $\cK$. Assume without loss of generality that $u_1=1$ and $u_3=-1$.
There are now three cases, given by $u_2\in \{-1,0,1\}$. If $u_2=0$, note that the considered cone is just given by a half line. So that for a small enough $\delta_u>0$, there is no $k\in \cK$ such that $\inf\limits_{w\in A^{-1}(u)} |\langle \frac{w}{\|w\|},\frac{x_k}{\|x_k\|}\rangle|\leq \delta_u$. The first point is then automatic in that case.

If instead $u_2=1$, then for a small enough $\delta_u$, $k=3$ is the only $k\in\cK$ satisfying $\inf\limits_{w\in A^{-1}(u)} |\langle \frac{w}{\|w\|},\frac{x_k}{\|x_k\|}\rangle|\leq \delta_u$. From there, note that $\D_u^{\beta^*} = \{\tD_u^{\beta^*}\}$ and
\begin{align}
\tD_u^{\beta^*} & =\frac{1}{n}\sum_{k=1}^2 (y_k-x_k^\top \beta^*)x_k \notag\\
= -\frac{1}{n}(y_3-x_3^\top \beta^*)x_3.
\label{eq:tDcase1}
\end{align}
Since $(y_3-x_3^\top \beta^*)>0$, this indeed yields that $-\langle \tD_u^{\beta^*} ,x_3\rangle >0$, which also yields the first point in that case.

The remaining case is $u_2=-1$. In that case, we similarly only need to check it for $k=1$ and 
\begin{align}
\tD_u^{\beta^*} 
= \frac{1}{n}(y_1-x_1^\top \beta^*)x_1.
\label{eq:tDcase2}
\end{align}
This then also yields the first point.

\medskip

It now remains to check the second point of \cref{ass:phase3} in all $3$ cases. The cases $u_2=0$ and $u_3=-1$ are actually dealt together, since they have the same $\tD_u^{\beta^*}$. In that case, \cref{eq:tDcase2} actually yields that
\begin{align*}
\inf_{w\in A_\delta^{-1}(u)} \langle \tD_u^{\beta^*}, \frac{w}{\|w\|}\rangle & = \frac{1}{n}(y_1-x_1^\top \beta^*) \langle x_1, \frac{w}{\|w\|}  \rangle\\
& \geq \frac{1}{n}(y_1-x_1^\top \beta^*)\delta\|x_1\|>0.
\end{align*}
A similar argument yields in the case $u_2=1$ to
\begin{align*}
\inf_{w\in A_\delta^{-1}(u)} \langle \tD_u^{\beta^*}, \frac{w}{\|w\|}\rangle  \geq \frac{1}{n}(y_3-x_3^\top \beta^*)\delta\|x_3\|>0.
\end{align*}
This concludes the proof.
\end{proof}

\section{Additional Experiments}\label{sec:additionalexpe}

%The code used for all our experiments and animated versions of \cref{fig:dynamicsmain,fig:dynamicsstewart} are available at \url{github.com/eboursier/early_alignment}.

\subsection{Experimental Details}\label{app:expedetails}

In \cref{sec:expe}, we considered the following univariate $3$ points dataset:
\begin{gather*}x_1=-0.75 \text{ and } y_1=1.1; \\
x_2=-0.5 \text{ and } y_2=0.1;\\
x_3=0.125\text{ and } y_3=0.8.
\end{gather*}
The neural network architecture counts bias terms in the hidden layer, which we recall is equivalent to the parameterisation given by \cref{eq:param}, when adding $1$ as the second coordinate to all the data points $x_i$. The activation function is ReLU, the initialisation follows \cref{eq:initscaling} with $\lambda=10^{-3}$ and
\begin{gather*}
\tilde{w}_j \sim \cN(0,I_2), \\
\tilde{a}_j = s_j \|\tilde{w}_j\| \quad \text{with } s_j \sim \cU(\{-1,1\}).
\end{gather*}
We consider a \textbf{perfectly balanced} initialisation for three reasons:
\begin{itemize}
\item it is compatible with the initialisation regime described by \citet{wojtowytsch2020convergence};
\item it yields more interesting observations, since the neurons with $|a_j|>\|w_j\|$ actually align faster with extremal vectors;
\item given the remark on the nature of the stationary point in \cref{sec:discussion}, convergence towards a bad estimator is less obvious in this perfectly balanced case and might be more easily challenged by the use of finite step sizes.
\end{itemize}
Lastly, the neural network is trained with $m=200\ 000$ neurons to approximate the infinite width regime. We ran gradient descent with learning rate $10^{-3}$ up to $2$ millions of iterations. The parameters clearly converged at this point and do not move anymore.

\subsection{Stewart et al. Example}\label{sec:stewart}

\cref{fig:dynamicsstewart} presents the training dynamics on another example of data. The considered example is here borrowed from \citet{stewart2022regression}. Precisely, we consider $40$ univariate data points $x_i$ sampled uniformly at random in $[-1, 1]$. The labels $y_i$ are given by a teacher network closely resembling the one\footnote{We just choose a simpler teacher network here by removing a small neuron in the representation.} of \citet{stewart2022regression}, so that $y_i=f^*(x_i)$ where $f^*$ is a $8$ neurons network.

The considered neural network follows the same parameterisation as given in \cref{app:expedetails}. We believe this is still a large enough width to approximate the infinite width regime here. The only other difference with \cref{app:expedetails} is that the initialisation is here unbalanced; precisely, the weights $\tilde{w}_j$ and $\tilde{a}_j$ are drawn i.i.d. as
\begin{gather*}
\tilde{w}_j \sim \cN(0,I_2), \\
\tilde{a}_j \sim \cN(0, 1).
\end{gather*}
This is an important remark as such an initialisation is closer to the initialisation chosen in practice and does not satisfy \cref{eq:balancedinit}, which is crucial to our analysis. 
Besides providing another, more complex example of convergence towards spurious stationary points, this experiment thus also illustrates that the balancedness condition given by \cref{eq:balancedinit} might not be always necessary to yield convergence towards spurious stationary points.

As opposed to \cref{fig:dynamicsmain}, the figures on the right column do not show the different activation cones and data points $x_i$ here. This is just for the sake of clarity, since there are $40$ points and $80$ activation cones here.

\medskip

Optimisation is hard on this example, as the teacher function counts two little bumps on the left and right slopes. These bumps are hard to learn while optimising, as they are restricted to small regions, i.e. activation cones. Since omnidirectionality is easily lost during training, having no remaining weights in this small region should be enough to get a ``bad'' training (i.e. convergence towards spurious stationary points).
 
\begin{figure}[b!] 
%\thispagestyle{empty} 
%\floatconts{fig:dynamicsmain}% label for whole figure
\centering
\begin{subfigure}{0.5\linewidth}
\includegraphics[width=\linewidth, trim=1cm 0.9cm 1.5cm 1cm, clip]{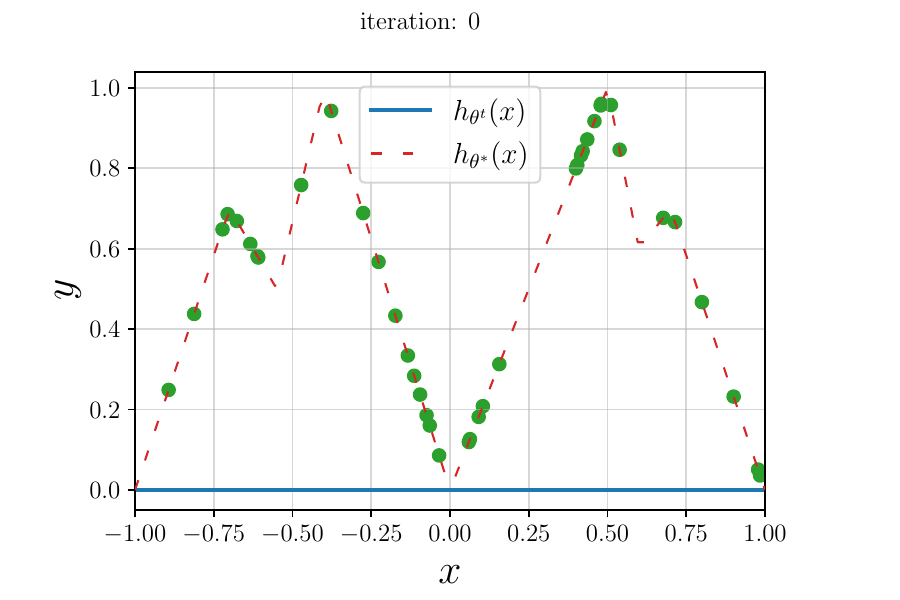}
\caption{\label{fig:stewestim0}Estimated function at initialisation (iteration $0$)}
\end{subfigure}
\hfill% space out the images a bit
\begin{subfigure}{0.4\linewidth}
      \includegraphics[width=\linewidth,trim=2.5cm 0.6cm 2.5cm 0.53cm, clip]{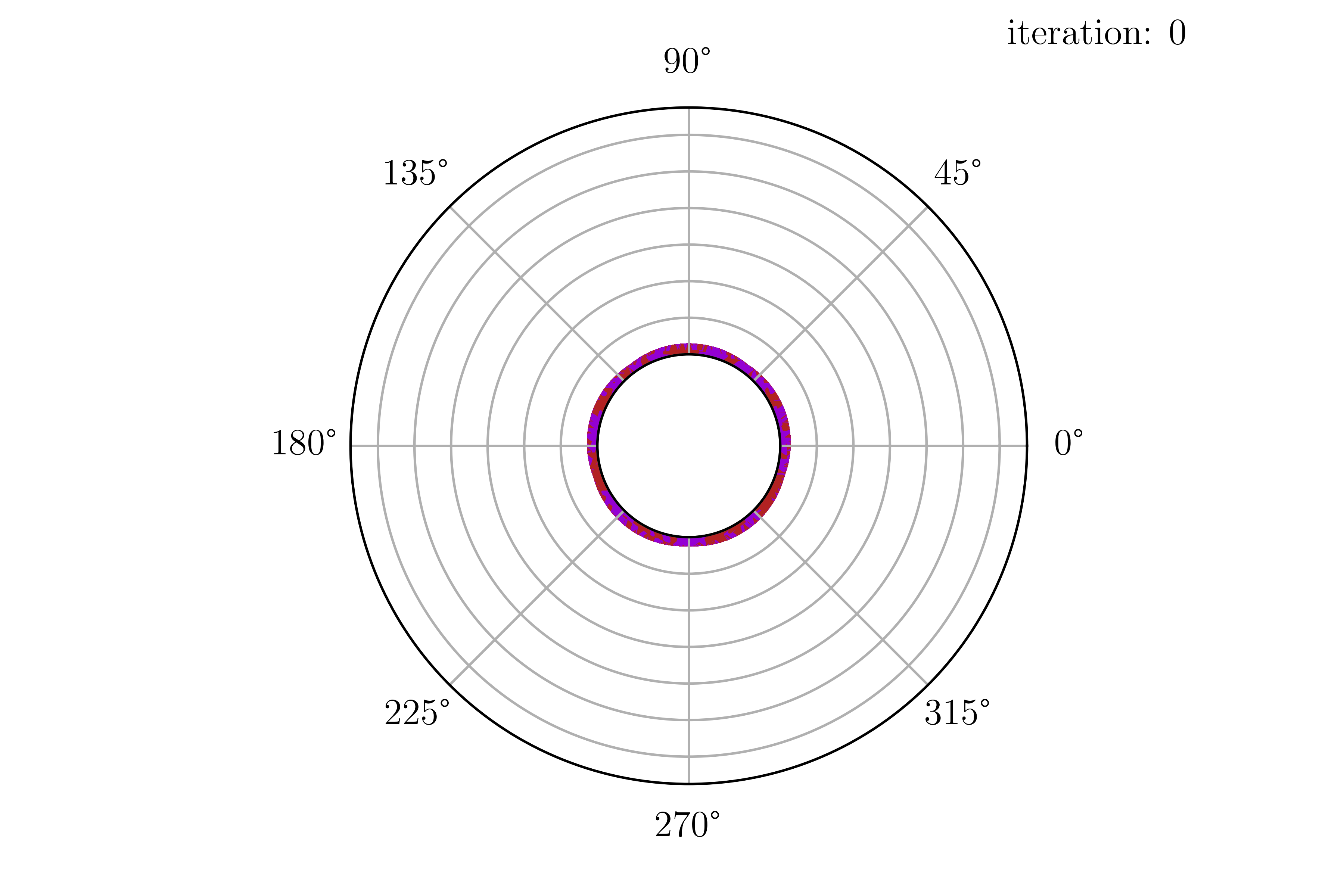}
\caption{\label{fig:stewalignment0}Weights' repartition at initialisation}
\end{subfigure}
\begin{subfigure}{0.5\linewidth}
\includegraphics[width=\linewidth, trim=1cm 0.9cm 1.5cm 1cm, clip]{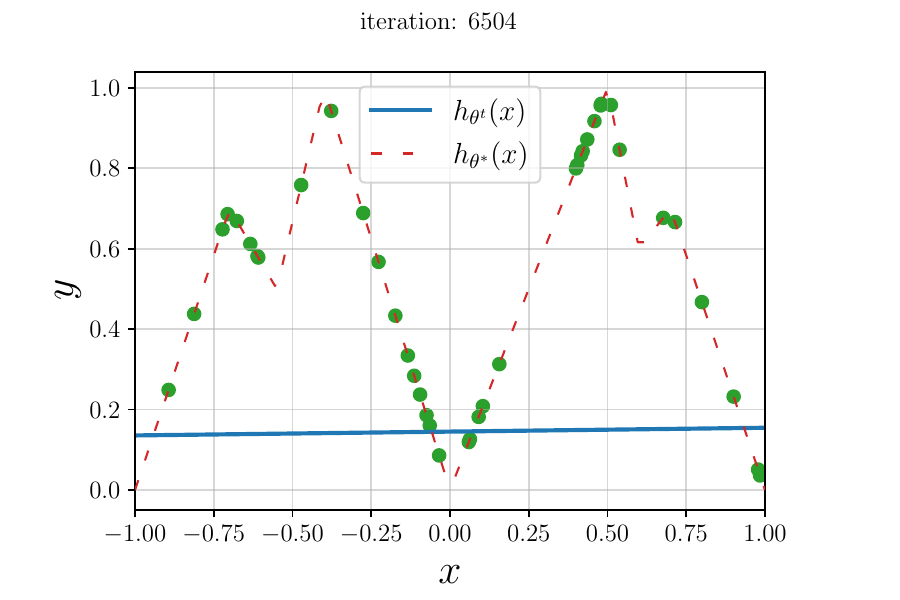}
\caption{\label{fig:stewestim1}Estimated function after early phase (iteration $6500$)}
\end{subfigure}
\hfill% space out the images a bit
\begin{subfigure}{0.4\linewidth}
      \includegraphics[width=\linewidth,trim=2.5cm 0.6cm 2.5cm 0.53cm, clip]{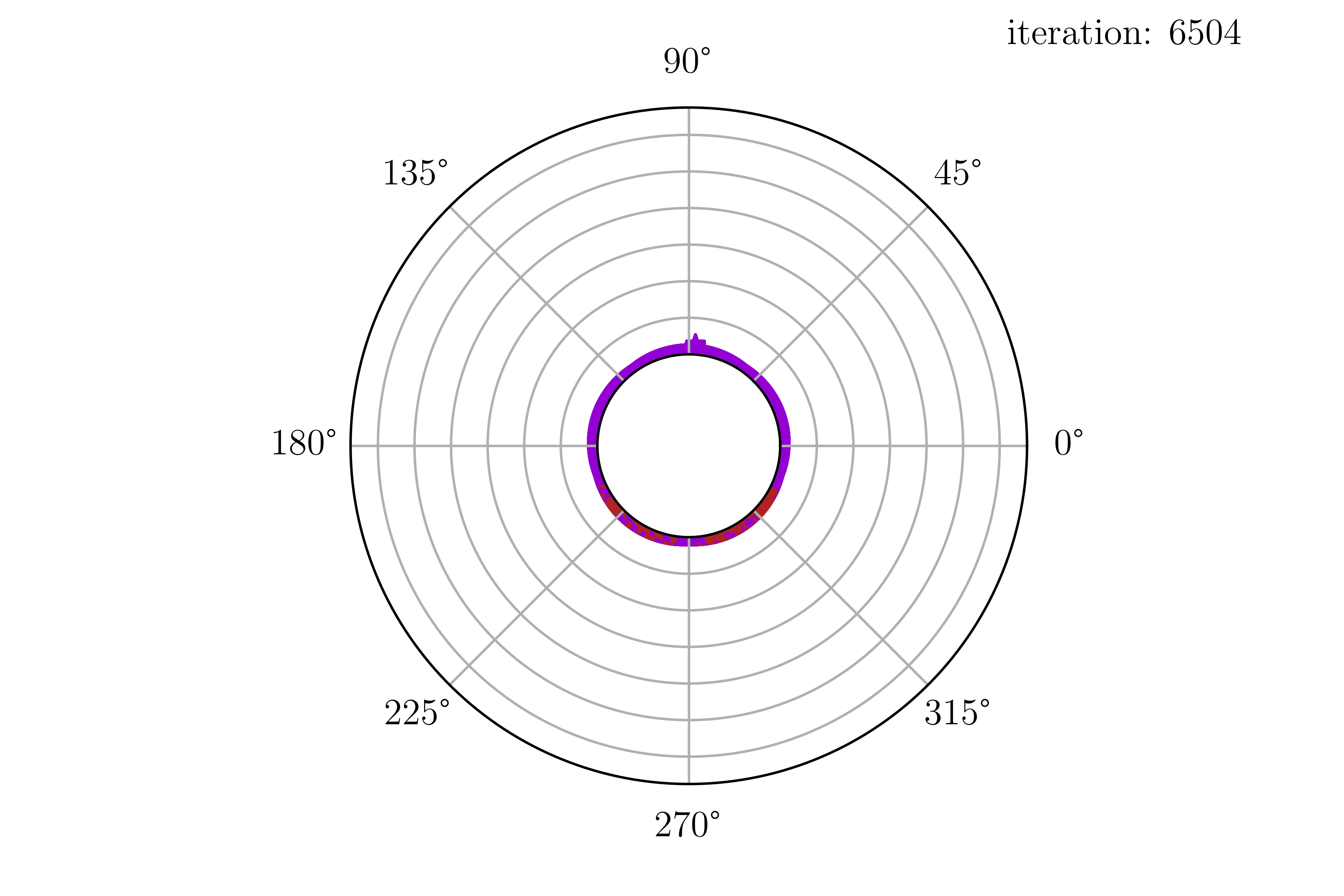}
\caption{\label{fig:stewalignment1}Weights' repartition after early phase}
\end{subfigure}
   %
%   \begin{subfigure}{0.5\linewidth}
%\includegraphics[width=\linewidth, trim=1cm 0.9cm 1.5cm 1cm, clip]{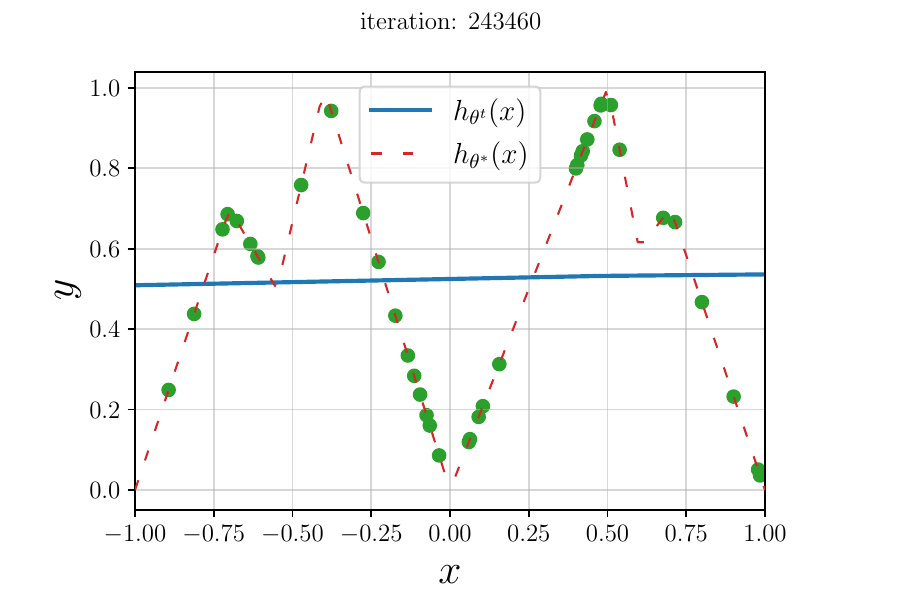}
%\caption{\label{fig:stewestim2}Estimated function after first neuron growth (iteration $240\ 000$)}
%\end{subfigure}
%\hfill% space out the images a bit
%\begin{subfigure}{0.4\linewidth}
%      \includegraphics[width=\linewidth,trim=2.5cm 0.6cm 2.5cm 0.53cm, clip]{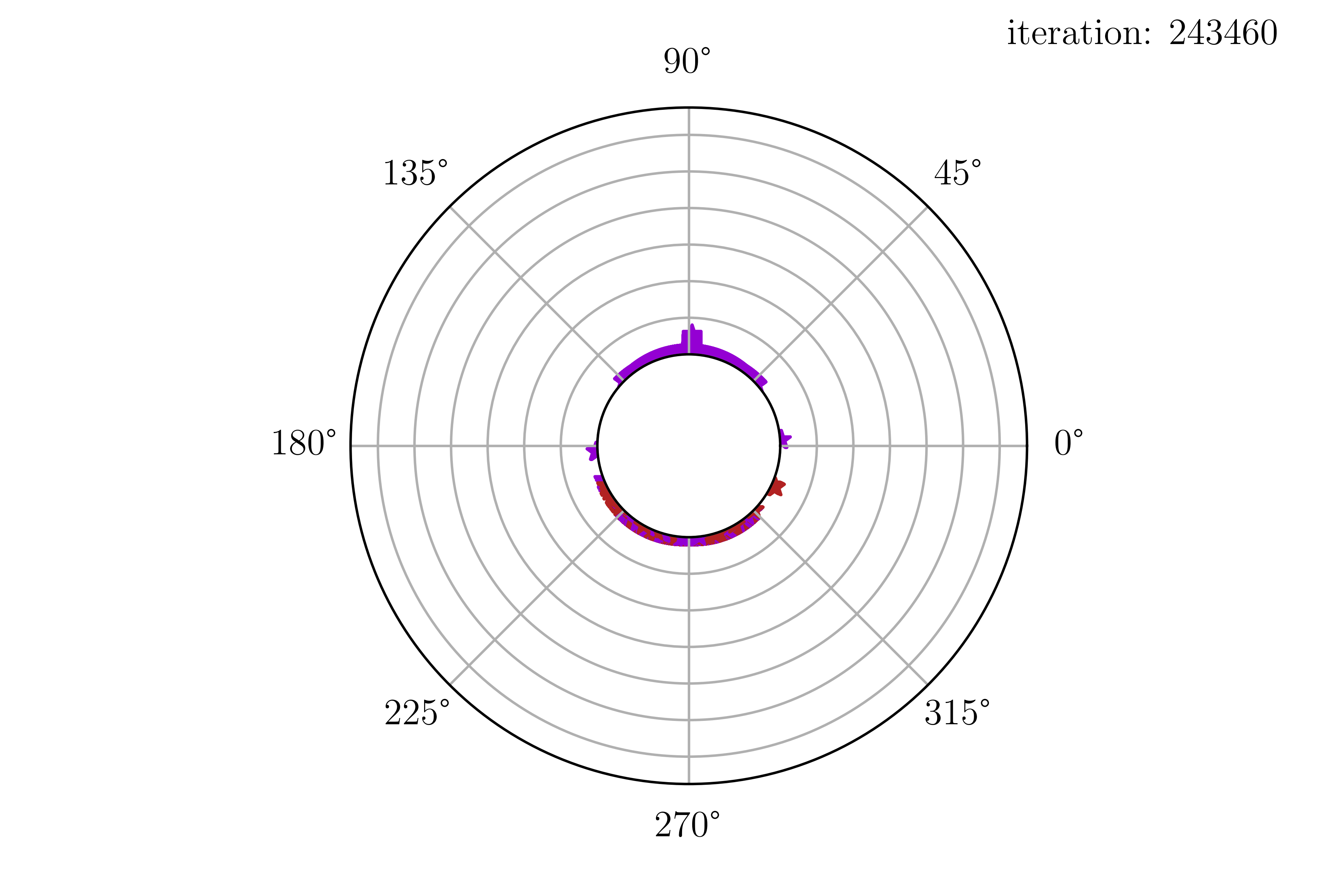}
%\caption{\label{fig:stewalignment2}Weights' repartition after first neuron growth}
%\end{subfigure}
   
  \caption{\label{fig:dynamicsstewart} Training dynamics on \citet{stewart2022regression} example (part 1/2).}% caption for whole figure
\end{figure} 

    \begin{figure}[t]
    \addtocounter{figure}{-1}
    \centering
   \begin{subfigure}{0.5\linewidth}
   \addtocounter{subfigure}{4}
\includegraphics[width=\linewidth, trim=1cm 0.9cm 1.5cm 1cm, clip]{stewart_estim_2.pdf}
\caption{\label{fig:stewestim2}Estimated function after first neuron growth (iteration $240\ 000$)}
\end{subfigure}
\hfill% space out the images a bit
\begin{subfigure}{0.4\linewidth}
      \includegraphics[width=\linewidth,trim=2.5cm 0.6cm 2.5cm 0.53cm, clip]{stewart_alignment_2.png}
\caption{\label{fig:stewalignment2}Weights' repartition after first neuron growth}
\end{subfigure}
       \begin{subfigure}{0.5\linewidth}
\includegraphics[width=\linewidth, trim=1cm 0.9cm 1.5cm 1cm, clip]{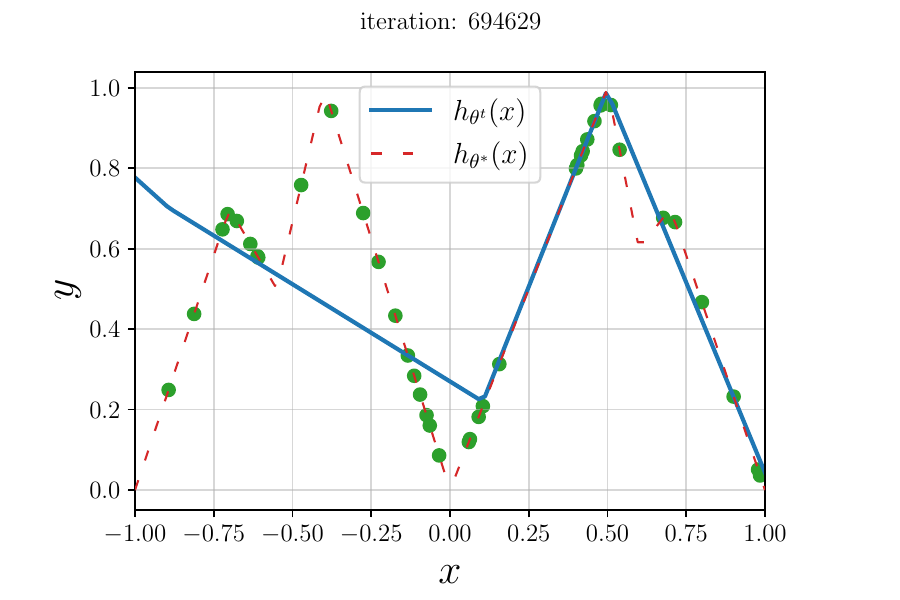}
\caption{\label{fig:stewestim3}Estimated function after second neuron growth (iteration $700\ 000$)}
\end{subfigure}
\hfill% space out the images a bit
\begin{subfigure}{0.4\linewidth}
      \includegraphics[width=\linewidth,trim=2.5cm 0.6cm 2.5cm 0.53cm, clip]{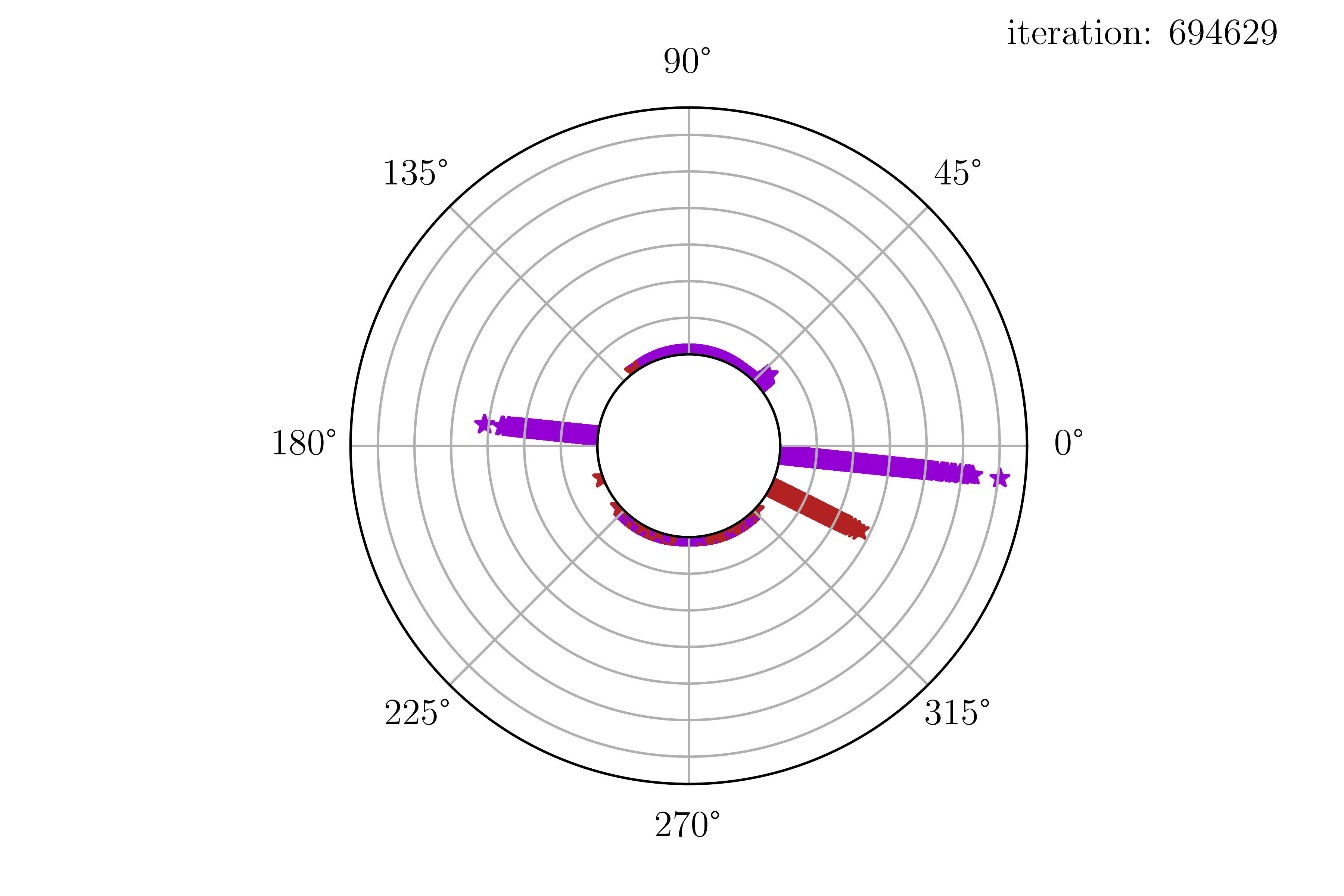}
\caption{\label{fig:stewalignment3}Weights' repartition after second neuron growth}
\end{subfigure}

       \begin{subfigure}{0.5\linewidth}
\includegraphics[width=\linewidth, trim=1cm 0.9cm 1.5cm 1cm, clip]{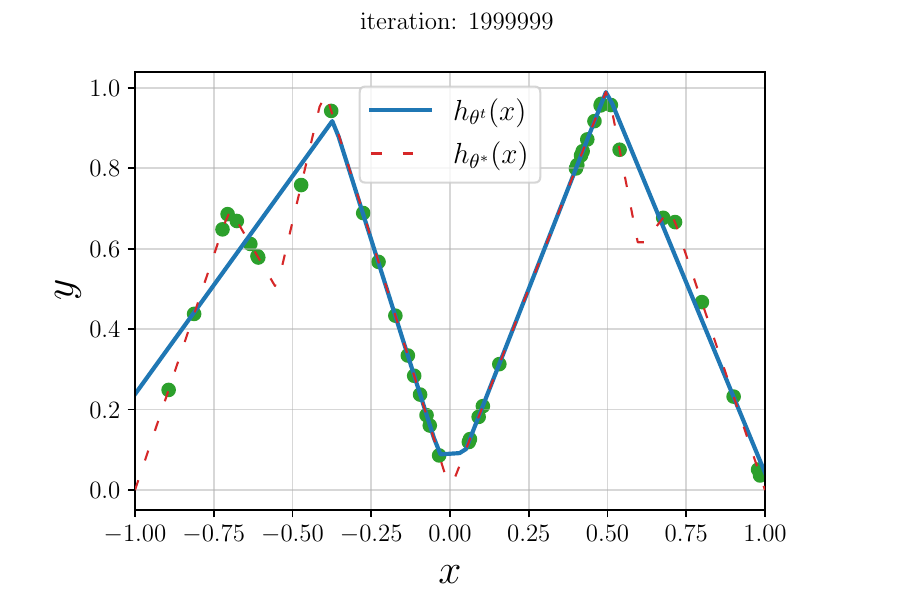}
\caption{\label{fig:stewestim4}Estimated function at convergence (iteration $2\times 10^6$)}
\end{subfigure}
\hfill% space out the images a bit
\begin{subfigure}{0.4\linewidth}
      \includegraphics[width=\linewidth,trim=2.5cm 0.6cm 2.5cm 0.53cm, clip]{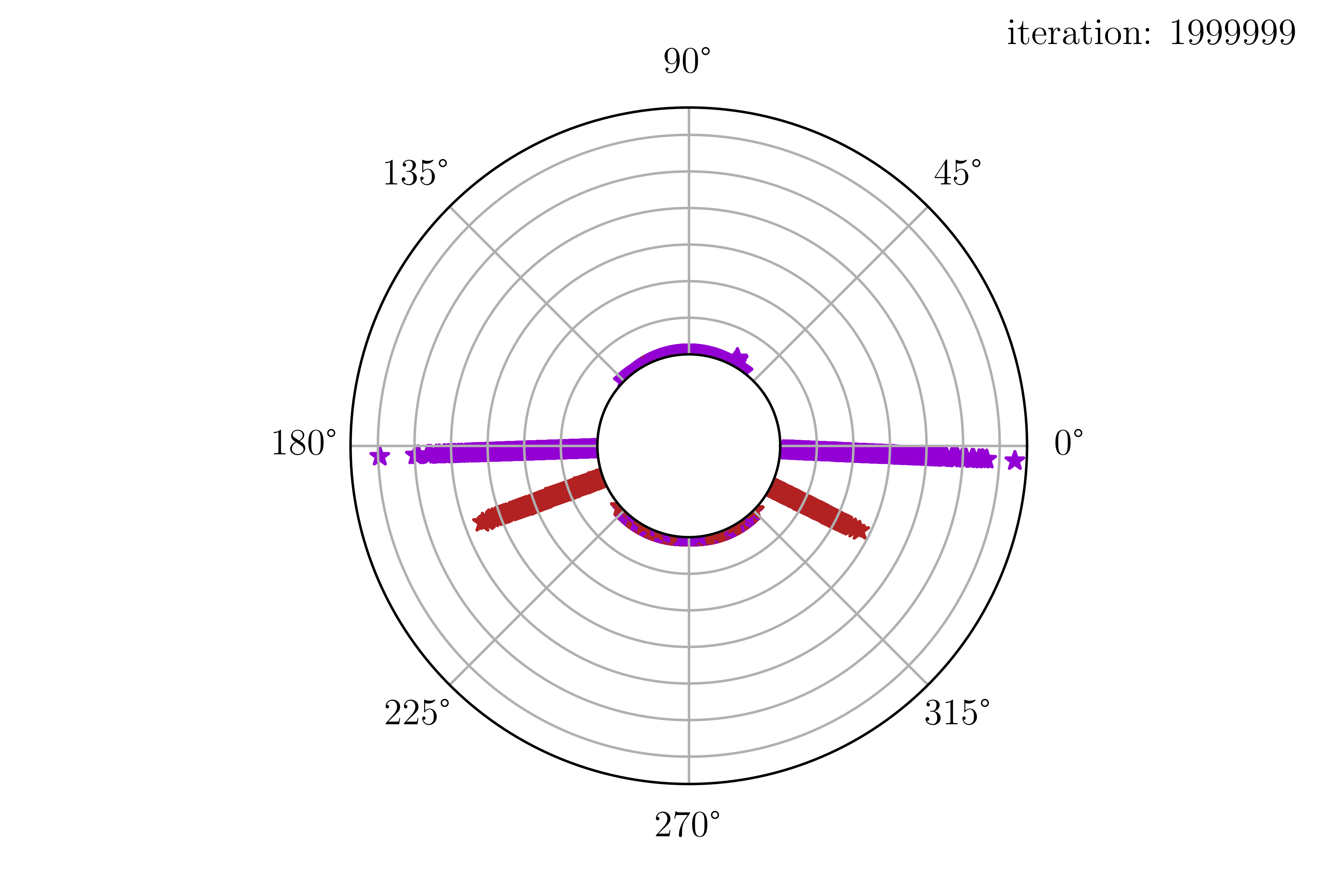}
\caption{\label{fig:stewalignment4}Weights' repartition at convergence}
\end{subfigure}
  \caption{\label{fig:dynamicsstewart2}Training dynamics on \citet{stewart2022regression} example (part 2/2).}
\end{figure}

\cref{fig:dynamicsstewart} illustrates this difficulty: these small cones of interest indeed do not contain any remaining neuron after some point in the training. From there, it becomes impossible to improve the current estimation, leading to convergence towards a spurious stationary point.

\cref{fig:stewestim0,fig:stewalignment0} show the network at initialisation. Due to small scale of initialisation, both neurons and estimated function are nearly $0$ here. 

\cref{fig:stewestim1,fig:stewalignment1} give the learnt parameters at the end of the early phase, just when some neurons start to considerably grow in norm and the estimated function is not zero anymore. A first interesting observation is that there still seems to be omnidirectionality of the weights here, as opposed to \cref{fig:alignment1} for the $3$ points example. This is because the early alignment result of \cref{thm:alignment} only applies to neurons with $|a_j|\geq \|w_j\|$. The alignment of neurons with $|a_j| < \|w_j\|$ can happen at a much slower rate. Such neurons are thus not yet aligned with some extremal vector in \cref{fig:stewalignment1}.

\cref{fig:stewestim2,fig:stewalignment2} show the state of the network after a first neurons growth. At this point, the estimated function is equivalent to a $1$ neuron, constant, network. Also interestingly, \cref{fig:stewalignment2} highlights that omnidirectionality of the weights is already lost at this point of training. The neurons indeed seem to be concentrated in some specific cones, while other cones happen to have no neuron at all. This absence of neurons in key cones is at the origin of failure of training at convergence.

\cref{fig:stewestim3,fig:stewalignment3} illustrates the learnt parameters after the second neurons growth. After this growth, the network is nearly equivalent to a $4$ neurons network (a fifth one is starting to grow). Again, some activation cones do not include any neuron now.

Finally, \cref{fig:stewestim4,fig:stewalignment4} show the state of the network at convergence. The estimated function is now equivalent to a $5$ neurons network and thus fails at fitting all the data that is given by an $8$ neurons network. The neurons that the network fails to learn correspond to the $2$ little bumps mentioned above. At this point the network is unable to learn them, since the cones corresponding to these bumps are empty (of neurons). 
The network thus does not perfectly fit the data in the end. Yet, it gives a good $5$ neurons approximation of the data; which confirms our conjecture made in \cref{sec:discussion} that despite imperfect data fitting, there still is some implicit bias incurred while training.

%\clearpage

\section{Proof of Intermediate Lemmas}\label{app:aux}

\subsection{Proof of \cref{lemma:balanced}}

From \cref{eq:ODEs}, it comes that a.e.
\begin{align*}
\frac{\df (a_i^t)^2}{\df t} - \frac{\df \|w_i^t\|^2}{\df t} & \in 2a_i^t\langle w_i^t, \D_i^t\rangle -  2a_i^t\langle w_i^t, \D_i^t\rangle\\
& = 0.
\end{align*}
The last equality comes from the fact that the scalar product does not depend on the choice of the subgradient $D\in\D_i^t$. 
Finally, $(a_i^t)^2-\|w_i^t\|^2$ is constant, which yields \cref{lemma:balanced}.

\subsection{Proof of \cref{lemma:Dj}}

By contradiction, assume that for some $u$ such that $\mathbf{0}\not\in\D_u$, we have some $D\in\D_u \cap \partial\bar{A^{-1}(u)}$. Let $(\eta_k)_{k\in[n]}$ in $[\gamma,1]^n$ such that
\begin{gather*}
D = -\frac{1}{n}\sum_{k=1}^n \eta_k \partial_1\ell(0,y_k) x_k\\
\eta_k = 1 \text{ if }u_k=1, \text{ and }\eta_k=\gamma \text{ if }u_k=-1.
\end{gather*}
Let $K(u)=\{ k \in[n]\mid u_k=0\}$. Also write in the following
\begin{equation*}
\tilde{D}_u = -\frac{1}{n}\sum_{k=1}^n (\iind{u_k=1}+\gamma\iind{u_k=-1}) \partial_1\ell(0,y_k)x_k.
\end{equation*}
Since $D\in \partial\bar{A^{-1}(u)}$, $\langle D, x_k \rangle=0$ for any $k\in K(u)$.
As a consequence, $D = P_{S(u)}(\tilde{D}_u)$, where $P_{S(u)}$ is the orthogonal projection on  $\{x_k \mid k\in K(u)\}^\perp$. In particular, the previous argument allows to show the following
\begin{equation}\label{eq:extremalvector}
D''\in\D_u \text{ is extremal } \implies D''=\argmin_{D'\in\D_u} \|D'\|. 
\end{equation}

Also, $D\in \partial\bar{A^{-1}(u)}$ implies there is at least one $k$ such that $u_k\neq 0$ and $\langle D, x_k \rangle=0$, i.e,
\begin{equation}\label{eq:Djproof}
\partial_1\ell(0,y_k) (\iind{u_k=1}+\gamma\iind{u_k=-1})\|P_{S(u)}(x_{k})\|^2+ \sum_{k'\neq k} \partial_1\ell(0,y_{k'}) (\iind{u_{k'}=1}+\gamma\iind{u_{k'}=-1})\langle x_k,P_{S(u)}(x_{k'}) \rangle = 0.
\end{equation}
If at least one of the $\iind{u_{k'}=1}+\gamma\iind{u_{k'}=-1}$ is non-zero in the sum, then observe that conditionally on $x_k$ and $\{x_{k'} \mid k'\in K(u)\}$ (all non-zero), this sum follows a continuous distribution. Hence, the event given by \cref{eq:Djproof} has $0$ probability.

If instead, all the $\iind{u_{k'}=1}+\gamma\iind{u_{k'}=-1}$ are zero in the sum, then we necessarily have $\iind{u_{k}=1}+\gamma\iind{u_{k}=-1}$ non-zero, as we assumed that $\mathbf{0}\not\in\D_u$. In that case, \cref{eq:Djproof} becomes
\begin{equation*}
\partial_1\ell(0,y_k)\iind{u_{k}=1}\|P_{S(u)}(x_{k})\|^2 = 0.
\end{equation*}
Again, the left term follows a continuous distribution, which leads to the first part of \cref{lemma:Dj}.

\medskip

The last part is immediate given \cref{ass:Dj} and the fact that $\partial_1\ell(0,y_k)$ is non-zero with probability $1$, thanks to \cref{ass:loss}.

\subsection{Proof of \cref{lemma:extremal}}

Consider the maximisation program
\begin{equation*}
\max_{w\in \bS_d} G(w).
\end{equation*}
Since $G$ is continuous and $\bS_d$ is compact, the maximum is reached at some point $w^*$. The KKT conditions at $w^*$ write
\begin{gather*}
\mathbf{0} \in - \partial G(w^*) + \mu w^*,
\end{gather*}
for some $\mu\in\R$. Now note that $\partial G(w^*)=\D(w^*,\mathbf{0})$, so that the KKT conditions rewrite
\begin{equation}
\mu w^* \in \D(w^*,\mathbf{0}).\label{eq:KKT}
\end{equation}
First assume $\mu\neq 0$. For any $v\in\D(w^*,\mathbf{0})$, $\langle w^*, v \rangle$ has the same value. This means that any $v\in\D(w^*,\mathbf{0})$ writes $v=\mu w^* + v^{\perp}$ where $v^{\perp}$ is orthogonal to $w^*$. As a consequence, $D(w^*,\mathbf{0}) = \mu w^*$ by norm minimisation. In particular, $A(D(w^*,\mathbf{0})) \in \{A(w^*), -A(w^*)\}$, so that $D(w^*,\mathbf{0})$ is extremal.

If instead $\mu=0$, then $\mathbf{0}\in\D(w^*,\mathbf{0})$, so that $D(w^*,\mathbf{0})=0$ and $G(w^*)=0$. But then when considering the minimisation program 
\begin{equation*}
\min_{w\in \bS_d} G(w),
\end{equation*}
the same reasoning yields that either there exists an extremal vector, or $\min_{w\in \bS_d} G(w)=0$.
As a conclusion, if there is no extremal vector, $G$ is constant equal to $0$. But then, note that for any $k$
\begin{align*}
0=G(\frac{x_k}{\|x_k\|}) & = \frac{1}{n\|x_k\|}\sum_{k'=1}^n  \partial_1\ell(0,y_{k'}) \sigma(\langle x_k, x_{k'}\rangle) \\
& = \frac{1}{n\|x_k\|}\sum_{k'\neq k}^n \partial_1\ell(0,y_{k'}) \sigma(\langle x_k, x_{k'}\rangle)  + \frac{\partial_1\ell(0,y_{k})}{n}\|x_k\|.
\end{align*}
If \cref{ass:Dj} holds, this equality only holds with probability $0$, yielding \cref{lemma:extremal}.

\section{Proof of \cref{thm:alignment}}\label{app:alignment}

\subsection{Additional Quantities}

Define for any $u\in \{-1,0,1\}^n$,
\begin{equation}\label{eq:Zu}
\begin{gathered}
Z_u \coloneqq \frac{1}{n} (\partial_1\ell(0,y_i) x_i)_{i\in K(u)} \in \R^{d\times |K(u)|},\\
\text{where } K(u) \coloneqq \left\{ k \in [n] \mid u_k=0\right\}.
\end{gathered}
\end{equation}
In words, the columns of $Z_u$ are given by the vectors $\frac{\partial_1\ell(0,y_i) x_i}{n}$ for all $i$ such that $u_i= 0$. Let also in the following say that a set $\D_u$ is \textbf{extremal} if there is some extremal vector $D\in\D_u$, and define for any $u\in A(\R^d)$ and $\theta$,
\begin{gather*}
D_u^{\theta} = D(w,\theta) \quad \text{ for some }w\in A^{-1}(u),\\
\tD_u =  -\frac{1}{n}\sum_{k=1}^n (\iind{u_k=1}+\gamma\iind{u_k=-1}) \partial_1\ell(0,y_k)x_k.
\end{gather*}
We shorten $D_u=D_u^{\mathbf{0}}$. Note that the definition does not depend on the choice of $w\in A^{-1}(u)$.
We define
\begin{gather}\label{eq:delta0}
\delta_0 \coloneqq \min(\delta'_{0},\delta''_{0})\\
\text{where } \delta'_{0} \coloneqq  \min_{u, \D_u \text{ is extremal}} \min_{k, u_k= 0} \sigma_{\min}(Z_u) \min(\eta^u_k-\gamma, 1-\eta^u_k)\notag\\
\text{and }\delta''_{0} \coloneqq \min_{u, \D_u \text{ is extremal}}\min_{k, u_k\neq 0} \frac{|\langle D_u, x_k \rangle|}{\|x_k\|}\notag.
\end{gather}
By convention in the following, we note $\min \emptyset = +\infty$.
In \cref{eq:delta0}, $\sigma_{\min}(Z_u)$ is the smallest singular value of $Z_u$ and $\pmb{\eta}^u$ is the unique vector $\pmb{\eta}\in\R^n$ satisfying\footnote{This vector is indeed unique when $Z_u$ is injective, i.e., when $\sigma_{\min}(Z_u)>0$. When it is not definitely unique, the value is then $0$ as $\sigma_{\min}(Z_u)=0$.} $D_u = \tilde{D}_u + Z_u \pmb{\eta}$.

\medskip

The value of $\lambda_{\alpha_0}^*$ in \cref{thm:alignment} is given by
\begin{gather}
\lambda^*_{\alpha_0}=\Bigg(\min\left(\frac{n}{\sum_{k=1}^n\|x_k\|^2}\min(\frac{\alpha_{\min}^2}{8}D_{\min},\frac{\alpha_0^2}{4}D_{\min},\delta_0); \ \min_{k\in[n]}\frac{|\partial_1\ell(0,y_k)|}{\|x_k\|}\right)\Bigg)^{\frac{1}{2-4\varepsilon}}.\label{eq:lambdastaralpha}\\
\text{where} \quad D_{\min}\coloneqq\min_{u,\mathbf{0}\not\in\D_u}\min_{D\in\D_u}\|D\|;\notag\\
\alpha_{\min}\coloneqq\min(\alpha_{\min,+},\alpha_{\min,-});\label{eq:alphamin}\\
\alpha_{\min,+} \coloneqq \sqrt{1-\left(\max_{\substack{u, \mathbf{0}\not\in\D_u\\ \D_u \cap A^{-1}(u) =\emptyset}}\max_{w \in \bar{A^{-1}(u)}\setminus\{\mathbf{0}\}}   \frac{\langle w, D_u\rangle}{\|w\|\|D_u\|}\right)^2}\\
\alpha_{\min,-} \coloneqq \sqrt{1-\left(\min_{\substack{u, \mathbf{0}\not\in\D_u\\ \D_u \cap -A^{-1}(u) =\emptyset}}\min_{w \in \bar{A^{-1}(u)}\setminus\{\mathbf{0}\}}   \frac{\langle w, D_u\rangle}{\|w\|\|D_u\|}\right)^2}\\
\text{and }\delta_0 \text{ is defined in \cref{eq:delta0}}.\notag
\end{gather}

\cref{lemma:alphamin,lemma:delta0} below imply that $\lambda_{\alpha_0}^*$ for any $\alpha_0>0$.

\subsection{Additional Lemmas}

\begin{lem}\label{lemma:alphamin}
Under \cref{ass:Dj}, the quantities $\alpha_{\min}$ and $D_{\min}$ defined in \cref{eq:alphamin} almost surely satisfy $\alpha_{\min} >0$ and $D_{\min} >0$.
\end{lem}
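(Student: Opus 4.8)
The plan is to prove separately that $D_{\min} > 0$ and $\alpha_{\min} > 0$, both being consequences of the genericity provided by \cref{ass:Dj} (via \cref{lemma:Dj}) together with the fact that only finitely many sign patterns $u \in \{-1,0,1\}^n$ need to be considered, so that each minimum is over a finite set and it suffices to show each individual term is strictly positive.

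For $D_{\min} = \min_{u,\, \mathbf{0}\not\in\D_u} \min_{D\in\D_u}\|D\|$: the inner minimum is attained since $\D_u$ is a compact (nonempty, convex, bounded) polytope, and its value is $\|D(\mathbf{0},u)\|$ where $D(\cdot,u)$ is the minimal-norm subgradient. If this were zero we would have $\mathbf{0}\in\D_u$, contradicting the constraint $\mathbf{0}\not\in\D_u$. Hence every term in the finite minimum is strictly positive, so $D_{\min}>0$; note this step does not even need \cref{ass:Dj}, only \cref{ass:loss} to guarantee $\D_u$ is well-defined. (In fact, writing $D\in\D_u$ as $D = \tD_u + Z_u\pmb\eta$, one sees $\|D\|=0$ would force a linear dependence among the $\partial_1\ell(0,y_k)x_k$ which \cref{lemma:Dj} rules out anyway when $|K(u)|<d$; but the direct argument via $\mathbf 0\notin\D_u$ is cleaner.)

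For $\alpha_{\min} = \min(\alpha_{\min,+}, \alpha_{\min,-})>0$: I focus on $\alpha_{\min,+}$, the argument for $\alpha_{\min,-}$ being symmetric (replace $A^{-1}(u)$ by $-A^{-1}(u)$, i.e. flip the sign of $D_u$). We have $\alpha_{\min,+}>0$ iff for every $u$ with $\mathbf 0\not\in\D_u$ and $\D_u\cap A^{-1}(u)=\emptyset$, and every $w\in\overline{A^{-1}(u)}\setminus\{\mathbf 0\}$, the cosine $\frac{\langle w,D_u\rangle}{\|w\|\|D_u\|}$ is bounded away from $1$. Since the set of such $u$ is finite and, for each such $u$, $\{w/\|w\| : w\in \overline{A^{-1}(u)}\setminus\{\mathbf 0\}\}$ is a compact set (it is the intersection of the unit sphere with the closed cone $\overline{A^{-1}(u)}$), the supremum of the continuous function $w\mapsto \langle w/\|w\|, D_u/\|D_u\|\rangle$ is attained at some $w^\star$. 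It therefore suffices to rule out $\langle w^\star/\|w^\star\|, D_u/\|D_u\|\rangle = 1$, i.e. $w^\star$ positively proportional to $D_u$. If $w^\star$ lies in the (relative) interior of the cone $A^{-1}(u)$, then $w^\star \propto D_u$ gives $D_u \in A^{-1}(u)$, contradicting $\D_u\cap A^{-1}(u)=\emptyset$. The remaining case is $w^\star \in \partial\overline{A^{-1}(u)}$ (boundary of the manifold); then $w^\star\propto D_u$ would put $D_u \in A^{-1}(u)\cup -A^{-1}(u)$ intersected with $\partial\overline{A^{-1}(u)}\cup -\partial\overline{A^{-1}(u)}$, and since $\mathbf 0\not\in\D_u$, \cref{lemma:Dj} forbids $D_u \in \D_u\cap(\partial\overline{A^{-1}(u)}\cup -\partial\overline{A^{-1}(u)})$ — contradiction. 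Hence the supremum is strictly less than $1$ for each $u$, and taking the finite max over $u$ gives $\alpha_{\min,+}<1$ strictly... wait, precisely $\max_u(\cdots)<1$, so $\alpha_{\min,+}=\sqrt{1-(\max_u(\cdots))^2}>0$. Same for $\alpha_{\min,-}$, and $\alpha_{\min}=\min(\alpha_{\min,+},\alpha_{\min,-})>0$.

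The main obstacle is the careful treatment of the boundary case in the $\alpha_{\min,+}$ argument: when the maximizing direction $w^\star$ sits on $\partial\overline{A^{-1}(u)}$, one must ensure that $w^\star\propto D_u$ genuinely puts $D_u$ in the manifold boundary set to which \cref{lemma:Dj} applies, and that the constraint $\D_u\cap A^{-1}(u)=\emptyset$ combined with $w^\star\in\overline{A^{-1}(u)}$ correctly precludes the interior case — in particular one should check that $D_u\in A^{-1}(u)$ (the open cone) versus $D_u\in\overline{A^{-1}(u)}$ is exactly the distinction needed. Handling $D_u=\mathbf 0$ is excluded by the hypothesis $\mathbf 0\notin\D_u$, so $D_u/\|D_u\|$ is always well defined. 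I would also remark that, strictly speaking, one should verify the maxima/minima defining $\alpha_{\min,\pm}$ are attained (not merely suprema), which follows from compactness as noted; if $A^{-1}(u)$ is such that $\overline{A^{-1}(u)}\setminus\{\mathbf 0\}$ projects to a nonempty compact subset of the sphere, which holds since $A^{-1}(u)\neq\emptyset$ by $u\in A(\R^d)$ (implicit in the definition, as the outer min ranges over $u$ with $\D_u$ nonempty hence $u\in A(\R^d)$).
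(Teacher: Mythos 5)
Your proof is correct and follows the same approach as the paper: compactness gives attainment of the extrema, and then \cref{lemma:Dj} rules out the offending cases. The one wobble is in the boundary case for $\alpha_{\min,+}$: the intermediate claim that $w^\star \propto D_u$ with $w^\star \in \partial\overline{A^{-1}(u)}$ ``would put $D_u \in A^{-1}(u) \cup -A^{-1}(u)$'' is false (those sets are disjoint from the manifold boundary); what the positive-cone structure actually gives, and what your concluding clause correctly uses, is simply $D_u \in \partial\overline{A^{-1}(u)}$, hence $D_u \in \D_u \cap \bigl(\partial\overline{A^{-1}(u)} \cup -\partial\overline{A^{-1}(u)}\bigr)$, which \cref{lemma:Dj} forbids when $\mathbf 0 \notin \D_u$.
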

\begin{proof}
First show that $\alpha_{\min,+}>0$. The defined $\max$ is reached as the supremum of a continuous function on a compact set (the constraint set can indeed be made compact by restricting the problem to the sphere).
By contradiction, if $\alpha_{\min,+}=0$, since the $\max$ is reached, there is $u$ such that $D_u\not\in A^{-1}(u)$ and $\lambda D_u\in \bar{A^{-1}(u)}$ for some $\lambda>0$. This implies that $D_u \in \partial\bar{A^{-1}(u)}$, which contradicts \cref{lemma:Dj}.

The same argument holds to show that $\alpha_{\min,-}>0$, and thus  $\alpha_{\min}>0$

\medskip

For $D_{\min}$, the infimum is also reached as each $\D_u$ is a compact set. By definition, it is necessarily non-zero.
\end{proof}

\begin{lem}\label{lemma:delta0}
Under \cref{ass:Dj}, the quantity $\delta_{0}$ defined in \cref{eq:delta0} almost surely satisfies $\delta_{0} >0$.
\end{lem}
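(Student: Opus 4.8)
The plan is to show that $\delta_0'>0$ and $\delta_0''>0$ separately, since $\delta_0 = \min(\delta_0',\delta_0'')$. Both are minima over a \emph{finite} index set (the set of $u\in\{-1,0,1\}^n$ with $\D_u$ extremal, which is finite, and then over the finitely many coordinates $k$), so it suffices to show that each individual term entering these minima is almost surely strictly positive. Thus the whole statement reduces to finitely many almost-sure events, whose intersection still has probability $1$.

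For $\delta_0''$: fix $u$ with $\D_u$ extremal and $k$ with $u_k\neq 0$. The term is $|\langle D_u, x_k\rangle|/\|x_k\|$, and $\|x_k\|\neq 0$ almost surely under \cref{ass:Dj}. It remains to rule out $\langle D_u, x_k\rangle = 0$. Since $\D_u$ is extremal and $u_k\neq 0$, I would invoke \cref{lemma:Dj}: if $\langle D_u, x_k\rangle=0$ with $u_k\neq 0$, then $D_u$ lies in $\partial\bar{A^{-1}(u)}$ (using the explicit description of the manifold boundary: $D_u$ already satisfies the correct sign constraints for all coordinates since $D_u\in\D_u$ and — because $\D_u$ is extremal, hence $\mathbf{0}\notin\D_u$ and $D_u\in\pm A^{-1}(u)$ — vanishing of $\langle D_u,x_k\rangle$ for some active coordinate $k$ would force $A(D_u)\neq u$ up to sign, placing $D_u$ on the boundary). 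This contradicts the first conclusion of \cref{lemma:Dj}, which states $\D_u\cap(\partial\bar{A^{-1}(u)}\cup-\partial\bar{A^{-1}(u)})=\emptyset$ whenever $\mathbf{0}\notin\D_u$. Hence $\langle D_u,x_k\rangle\neq 0$ almost surely, and the corresponding term is strictly positive.

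For $\delta_0'$: fix $u$ with $\D_u$ extremal and $k$ with $u_k=0$. We need $\sigma_{\min}(Z_u)>0$ and $\min(\eta_k^u-\gamma,\,1-\eta_k^u)>0$, i.e. $\eta_k^u\in(\gamma,1)$ strictly. For $\sigma_{\min}(Z_u)>0$: the columns of $Z_u$ are $\frac{1}{n}\partial_1\ell(0,y_i)x_i$ for $i\in K(u)$; since $\D_u$ is extremal we have $|K(u)|\le$ (the relevant bound) — more to the point, $|K(u)|\le d$ is not automatic, but the relevant fact is that the vectors $(\partial_1\ell(0,y_i)x_i)_{i\in K(u)}$ are linearly independent by the last assertion of \cref{lemma:Dj} when $|K(u)|\le d$; when $|K(u)|>d$ the manifold $A^{-1}(u)$ would be the single point $\mathbf{0}$ (all constraints active), in which case $u\notin A(\R^d)\setminus\{A(\mathbf 0)\}$ — I would argue this case cannot occur for extremal $\D_u$, or simply note that $\D_u$ extremal forces $A^{-1}(u)$ to have nonempty interior on the sphere so that $|K(u)|<d$. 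Hence $Z_u$ has full column rank and $\sigma_{\min}(Z_u)>0$. For $\eta_k^u\in(\gamma,1)$: by definition $\pmb\eta^u$ is determined by $D_u=\tilde D_u + Z_u\pmb\eta^u$, and $\eta_k^u\in[\gamma,1]$ always; the boundary values $\eta_k^u=\gamma$ or $\eta_k^u=1$ correspond exactly to $D_u$ lying on a face where the subgradient for coordinate $k$ is saturated, which again places $D_u$ on the relative boundary of the cell, contradicting \cref{lemma:Dj} as in the $\delta_0''$ case (with the extra wrinkle that the exceptional case $\mathbf{0}\in\D_u$ is excluded because $\D_u$ is extremal). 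So $\eta_k^u\in(\gamma,1)$ almost surely.

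The main obstacle I anticipate is the careful bookkeeping in the $\delta_0'$ argument: precisely translating the conditions ``$\eta_k^u=\gamma$'' or ``$\eta_k^u=1$'' and ``$\sigma_{\min}(Z_u)=0$'' into membership of $D_u$ in $\partial\bar{A^{-1}(u)}$ (or into a linear-dependence event among the $\partial_1\ell(0,y_i)x_i$), so that \cref{lemma:Dj} can be applied; and handling the degenerate sub-case $|K(u)|\ge d$ cleanly. The genericity (absolute continuity) of the data from \cref{ass:Dj} is what kills each of these finitely many bad events, exactly as in the proof of \cref{lemma:Dj} itself.
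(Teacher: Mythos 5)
Your overall plan — reduce to a finite number of $(u,k)$ pairs, split into $\delta_0'$ and $\delta_0''$, and use \cref{lemma:Dj} as the probabilistic input — matches the paper's proof. The $\delta_0''$ part and the $\sigma_{\min}(Z_u)>0$ part of $\delta_0'$ are essentially correct, though your $\delta_0''$ argument is more circuitous than it needs to be: if $\D_u$ is extremal then $D_u\in\pm A^{-1}(u)$, so for $u_k\neq 0$ one has $\sign(\langle D_u,x_k\rangle)=\pm u_k\neq 0$ immediately, with no boundary argument or appeal to \cref{lemma:Dj} required (and in fact $D_u\in\pm A^{-1}(u)$ is incompatible with $D_u\in\pm\partial\bar{A^{-1}(u)}$, since the boundary of the manifold requires $A(\cdot)\neq u$, so your detour through the boundary of $A^{-1}(u)$ cannot fire).

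The real gap is exactly at the step you flag as your main anticipated obstacle: showing $\eta_k^u\in(\gamma,1)$ strictly. You propose to translate $\eta_k^u=1$ (or $=\gamma$) into ``$D_u\in\partial\bar{A^{-1}(u)}$'' and then invoke \cref{lemma:Dj} for $u$. This is the wrong target for the reason above, and there is no direct route from the saturation of $\eta_k^u$ to the boundary of $\bar{A^{-1}(u)}$. The missing idea is to pass to a \emph{neighboring} index: suppose $\eta_k^u=1$ for some $k\in K(u)$ and define $\tilde u$ by $\tilde u_k=1$ and $\tilde u_i=u_i$ for $i\neq k$. Since $\D_{\tilde u}\subset\D_u$ and the minimiser $D_u$ lies in $\D_{\tilde u}$ when $\eta_k^u=1$, one gets $D_{\tilde u}=D_u$. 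Moreover $A^{-1}(u)\subset\partial\bar{A^{-1}(\tilde u)}$ (the sign constraints of $\tilde u$ are satisfied with an equality at $k$, and $A(w)=u\neq\tilde u$), so $D_{\tilde u}=D_u\in\pm A^{-1}(u)\subset\pm\partial\bar{A^{-1}(\tilde u)}$, which contradicts \cref{lemma:Dj} applied to $\tilde u$ (not to $u$). The symmetric argument with $\tilde u_k=-1$ rules out $\eta_k^u=\gamma$. Without this passage to the modified index $\tilde u$, the contradiction you want simply doesn't materialise, so your sketch does not close at this step.
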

\begin{proof}
There is a finite number of $u$ and $k$ to consider in the $\min$ defined in $\delta'_0$ and $\delta''_0$, so that we only need to prove that the quantity is positive for each $u$ and $k$ in the constraint set. We thus consider $u$ and $k$ in the constraint set in the following.  Thanks to \cref{eq:extremalvector}, we necessarily have $\mathbf{0}\not\in\D_u$.

First, it is easy to show that $\delta''_0>0$. Thanks to \cref{eq:extremalvector} again, $D_u$ is extremal. In particular, for $k$ such that $u_k\neq 0$, $\langle D_u, x_k \rangle \neq 0$, which implies that  $\delta''_0>0$.

It is more technical for $\delta'_0$. Since $u\neq \mathbf{0}$, $\Span\left((x_k)_{k\in K(u)}\right) \neq \R^d$, where $K(u)\coloneqq\{k\in[n]\mid u_k=0\}$.  \cref{lemma:Dj} then implies that $|K(u)|< d$ and the vectors  $(\partial_1\ell(0,y_i) x_k)_{k\in K(u)}$ are linearly independent. By definition of $Z_u$, this directly yields that $Z_u$ is injective, i.e., $\sigma_{\min}(Z_u)>0$ and $\eta^u$ is indeed uniquely defined, and by definition of $D_u$, $\eta_k^u\in[\gamma,1]$.

By contradiction, assume that $\eta_k^u = 1$ for some $k\in K(u)$. Then, we can define $\tilde{u} \in\{-1,0,1\}^n$
\begin{equation*}
\tilde{u}_i = \begin{cases} 1 \text{ if } i=k\\ u_i \text{ if }i\neq k \end{cases}.
\end{equation*}
It then comes that $D_{\tilde{u}}=D_u$. In particular, $D_{\tilde{u}}\in -A^{-1}(u)\cup A^{-1}(u) \subset -\partial\bar{A^{-1}(\tilde{u})}\cup \partial\bar{A^{-1}(\tilde{u})}$. This directly contradicts \cref{lemma:Dj}, so that $\eta_k^u < 1$. A symmetric argument holds to prove that $\eta_k^u > \gamma$. In the end, we proved $\delta'_0>0$, which allows to conclude.
\end{proof}

\begin{lem}\label{lemma:interior}
If \cref{ass:Dj} holds, then with probability $1$, for any $\theta$ such that $\frac{1}{n}\sum_{k=1}^n\|h_{\theta}(x_k)x_k\|_2< \delta_0$ and for any $u\in A(\R^d)$,
\begin{enumerate}
\item $D_u=\mathbf{0}\implies D_u^{\theta} =\mathbf{0}$;
\item $D_u \in A^{-1}(u) \implies D_u^{\theta} \in A^{-1}(u)$;
\item $D_u \in -A^{-1}(u) \implies D_u^{\theta} \in -A^{-1}(u)$.
\end{enumerate}
\end{lem}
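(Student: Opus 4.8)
The plan is a stability/perturbation argument: fix $u\in A(\R^d)$ and a representative $w\in A^{-1}(u)$, and compare the whole subdifferential set $\D(w,\theta)$ with $\D(w,\mathbf 0)$. Both are images of the box $[\gamma,1]^{K(u)}$ under affine maps that differ only through the substitution $\partial_1\ell(0,y_k)\mapsto\partial_1\ell(h_\theta(x_k),y_k)$, the multipliers outside $K(u)$ being frozen. Since $\partial_1\ell(\cdot,y)$ is $1$-Lipschitz (\cref{ass:loss}) and every multiplier lies in $[\gamma,1]\subseteq[0,1]$, for matching multipliers the two subgradients differ by at most $\frac1n\sum_k\|h_\theta(x_k)x_k\|<\delta_0$. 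Rather than comparing the minimal-norm elements $D_u^\theta$ and $D_u$ directly (which only yields $\mathcal{O}(\sqrt{\delta_0})$ via the Pythagorean inequality), I would track the norm-minimising \emph{multiplier}, because that is what controls the linear equality constraints.

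For points (ii) and (iii) the sign conditions are the easy half: if $D_u\in A^{-1}(u)$ (with $D_u\neq\mathbf 0$, so $\D_u$ is extremal and the minima defining $\delta_0',\delta_0''$ do range over this $u$; the case $D_u=\mathbf 0$ is handled by (i)), then $\langle D_u,x_k\rangle$ has sign $u_k$ with margin at least $\delta_0''\|x_k\|$ for each $k$ with $u_k\neq0$, by the very definition of $\delta_0''$, so $\langle D_u^\theta,x_k\rangle$ keeps the same sign. The equality constraints $\langle D_u^\theta,x_k\rangle=0$ for $k\in K(u)$ are the crux, being closed conditions. I would use that $D_u\in A^{-1}(u)$ forces $D_u\perp\mathrm{span}\{x_k:k\in K(u)\}=\mathrm{range}(Z_u)$, hence the minimising multiplier $\pmb{\eta}^u$ is an \emph{unconstrained} critical point of the strictly convex quadratic whose minimisation over the box defines $D_u$; and, exactly as in the proof of \cref{lemma:delta0} using \cref{lemma:Dj}, $\pmb{\eta}^u$ lies strictly inside the box, with interiority margin at least $\delta_0'/\sigma_{\min}(Z_u)$. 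A standard perturbation bound for minimisers of strictly convex quadratics (using $\|Z_u^\theta-Z_u\|<\delta_0$, $\|\tD_u^\theta-\tD_u\|<\delta_0$, and $\sigma_{\min}(Z_u^\theta)\geq\sigma_{\min}(Z_u)-\delta_0>0$) then shows the unconstrained minimiser of the perturbed quadratic is $\mathcal{O}(\delta_0)$-close to $\pmb{\eta}^u$ — hence still inside the box by the choice of $\delta_0$ — so it coincides with the box-constrained minimiser, i.e.\ $D_u^\theta\perp\mathrm{range}(Z_u^\theta)=\mathrm{span}\{x_k:k\in K(u)\}$. Together with the sign conditions this gives $D_u^\theta\in A^{-1}(u)$; point (iii) is the mirror image with $-A^{-1}(u)$.

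For point (i), I would first use \cref{ass:Dj} to confine the possibilities. The identity $D_u=\mathbf 0$ means $\mathbf 0\in\D_u$, which forces $\tD_u\in\mathrm{range}(Z_u)=\mathrm{span}\{x_k:k\in K(u)\}$; conditioning on the data points indexed by $K(u)$ (independent of the others, and with $|K(u)|<d$ when $u\neq\mathbf 0$ by \cref{lemma:Dj}), this is a measure-zero event unless no index $k$ with $u_k\neq0$ contributes to $\tD_u$ — that is, unless $u=\mathbf 0$, or (when $\gamma=0$) unless $u_k\in\{-1,0\}$ for all $k$, i.e.\ the neuron is deactivated with every data point. In the deactivated ReLU case the $u_k=-1$ terms carry a zero multiplier, so $\mathbf 0\in\D_u^\theta$ trivially by zeroing the remaining multipliers. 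In the case $u=\mathbf 0$, one picks — a.s.\ possible, by a genericity argument of the \cref{lemma:Dj} type — a strictly interior $\pmb{\eta}^0\in(\gamma,1)^{n}$ with $\sum_k\eta^0_k\partial_1\ell(0,y_k)x_k=\mathbf 0$ and rescales it to $\eta_k:=\eta^0_k\,\partial_1\ell(0,y_k)/\partial_1\ell(h_\theta(x_k),y_k)$, which still lies in $[\gamma,1]$ once $\delta_0$ is small enough and witnesses $\mathbf 0\in\D_u^\theta$. Finally, since there are only finitely many $u\in\{-1,0,1\}^n$, intersecting the finitely many almost-sure events (those of \cref{lemma:Dj}, \cref{lemma:delta0}, and the two genericity facts above) yields the probability-$1$ statement.

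I expect the equality-constraint part of (ii)–(iii) to be the main obstacle: one must show the minimal-norm subgradient stays \emph{exactly} on the critical manifold $\{w:\langle w,x_k\rangle=0,\ k\in K(u)\}$, not merely $\mathcal{O}(\delta_0)$-close to it. This is precisely what forces the argument to go through the minimising multiplier rather than the subgradient, to use the strict interiority coming from \cref{lemma:Dj}, and to match the quantitative perturbation of the multiplier against the margin built into $\delta_0'$ — which is the whole reason $\delta_0$ is defined as it is.
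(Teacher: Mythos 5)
Your proposal matches the paper's proof: both isolate the sign margin $\delta_0''$ for the strict-sign conditions $\langle D_u^\theta,x_k\rangle\neq 0$ ($k\notin K(u)$) and the interiority margin $\delta_0'/\sigma_{\min}(Z_u)$ for the norm-minimising multiplier $\pmb{\eta}^u$, and then show the perturbed multiplier stays strictly inside $[\gamma,1]^{K(u)}$ so that $P_{S(u)}(\tD_u^\theta)$ still lies in $\D_u^\theta$ and must therefore equal $D_u^\theta$, giving the equality constraints $\langle D_u^\theta,x_k\rangle=0$ for $k\in K(u)$. The paper phrases the multiplier bound through $1$-Lipschitzness of the fixed orthogonal projection $P_{S(u)}$ acting on $\tD_u\mapsto\tD_u^\theta$ rather than as a least-squares stability estimate, but this is the same calculation; note that both your version and the paper's also need to absorb the perturbation $Z_u\mapsto Z_u^\theta$ (not just $\tD_u\mapsto\tD_u^\theta$) into the choice of $\delta_0$.
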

\begin{proof}
Let us prove the first two points of \cref{lemma:interior}. The last  part is proven similarly to the second point.

1) Consider some $u\in A(\R^d)$ such that $D_u = \mathbf{0}$ and define $K(u)\coloneqq \{k\in[n] \mid u_k=0 \}$. First recall that for some $\eta_k\in[\gamma,1]$
\begin{equation*}
D_u = -\sum_{k\not\in K(u)} \partial_1\ell(0,y_k)x_k(\iind{u_k=1}+\gamma\iind{u_k=-1}) - \sum_{k\in K(u)} \eta_k\partial_1\ell(0,y_k)x_k
\end{equation*} 

If $(\iind{u_k=1}+\gamma\iind{u_k=-1})\neq 0$ for some $k\not\in K(u)$, \cref{lemma:Dj} implies with probability $1$ that $|K(u)|\geq d$. This then implies $|K(u)|=n$, so that $(\iind{u_k=1}+\gamma\iind{u_k=-1})= 0$ for any $k$. As a consequence, $D_u^{\theta} =\mathbf{0}$.

\medskip

2) Assume that $D_u\in A^{-1}(u)\setminus\{\mathbf{0}\}$.  By definition of $\delta''_0$, for any $k\not\in K(u)$, $|\langle D_u,x_k\rangle|\geq \delta_0 \|x_k\|$. Moreover, note that by $1$-Lipschitz property of $\partial_1\ell$, $$\|D_u^\theta-D_u\|\leq \frac{1}{n}\sum_{k=1}^n \|h_{\theta}(x_k)x_k\|.$$ 
%condition in Lemma
So that when $\frac{1}{n}\sum_{k=1}^n\|h_{\theta}(x_k)x_k\|< \delta_0$, $\langle D_u^\theta, x_k \rangle$ has the same sign than $\langle D_u, x_k \rangle$ for any $k\not\in K(u)$.

\medskip

It now remains to show for any $k\in K(u)$ that $\langle D_u^{\theta}, x_k \rangle=0$.
Let us write $D_u$ as
\begin{gather*}
D_u = \tilde{D}_u - \frac{1}{n}\sum_{k\in K(u)} \eta_k \partial_1\ell(0,y_k) x_k,\\
\text{where }\tilde{D}_u \coloneqq -\frac{1}{n}\sum_{k=1}^n \partial_1\ell(0,y_k)x_k(\iind{u_k=1}+\gamma\iind{u_k=-1})
\text{ and } \eta_k \in [\gamma,1] \text{ for any }k \in K(u).
\end{gather*}
$D_u\in A^{-1}(u)$ implies by definition that $D_u\in \left((x_k)_{k\in K(u)}\right)^{\perp}$, so that $D_u = P_{S(u)}(\tilde{D}_u)$, where $P_{S(u)}$ is the orthogonal projection on the subspace $\left((x_k)_{k\in K(u)}\right)^{\perp}$.

Similarly, define $\tD_u^{\theta}\coloneqq -\frac{1}{n}\sum_{k=1}^n \partial_1\ell(\theta,y_k)x_k(\iind{u_k=1}+\gamma\iind{u_k=-1})$. We can also choose $(\eta'_{k})_{k\in K(u)} \R^{K(u)}$ such that 
\begin{equation*}
P_{S(u)}(D(w,\theta)) - \tilde{D}(w,\theta) = -\frac{1}{n}\sum_{k\in K(u)}\eta'_{k} \partial_1\ell(\theta,y_k) x_k.
\end{equation*}
Note that the $\eta'_k$ are not necessarily in $[\gamma,1]$. Our goal is now to show they are actually in $[\gamma,1]$. Denote for simplicity:
\begin{gather*}
v = P_{S(u)}(\tD_u) - \tD_u \\
v^{\theta} = P_{S(u)}(\tD_u^{\theta}) - \tD_u^{\theta}.
\end{gather*}
Again, by Lipschitz property of the considered functions, it comes
\begin{align*}
\|v - v^{\theta}\|  & \leq \frac{1}{n}\sum_{k=1}^n \|h_{\theta}(x_k)x_k\|<\delta_0.
\end{align*}
Moreover, with the definition of $Z_u$ given by \cref{eq:Zu}, note that
\begin{align*}
\|v - v^\theta\| & = \| Z_u (\eta-\eta')\|\\
&\geq \sigma_{\min}(Z_u) \|\eta-\eta'\|_2\\
&\geq \sigma_{\min}(Z_u) \|\eta-\eta'\|_\infty.
\end{align*}
So overall, it comes
\begin{align*}
\|\eta-\eta'\|_\infty & < \frac{1}{\sigma_{\min}(Z_u)}\delta_0.
\end{align*}
Moreover, by definition of $\delta_0$ (which is positive thanks to \cref{lemma:delta0}), 
\begin{equation*}
\min(\eta_k-\gamma, 1-\eta_k) \geq \frac{\delta_0}{\sigma_{\min}(Z_u)} \quad \text{for any } k\in K(u).
\end{equation*}
This yields that $\eta'_k \in (\gamma,1)$ for any $k\in K(u)$, i.e.
\begin{equation*}
P_{S(u)}(\tD_u^{\theta}) \in \tD_u^{\theta} - \left\{ \frac{1}{n}\sum_{k\in S(u)}\zeta_{k} y_k x_k \mid \zeta_{k} \in (\gamma,1) \text{ for any } k\in K(u) \right\}.
\end{equation*}
By minimization of the norm, this necessarily implies $D_u^{\theta}=P_{S(u)}(\tD_u^{\theta})$. In particular, $\langle D_u^{\theta}, x_k \rangle=0$ for any $k\in K(u)$. This finally yields that $A(D_u^{\theta})=u$ and concludes the proof.

\medskip

3) The last case is proved similarly to the second one.
\end{proof}

\subsection{Proof of \cref{thm:alignment} (i)}\label{app:proofalignmenti}

We prove in this subsection the point (i) of \cref{thm:alignment}. \\
Define the stopping time $t_1 = \min \left\{ t\geq 0 \mid \sum_{j=1}^m (a_j^t)^2 \geq \lambda^{2-4\varepsilon} \right\}$. Thanks to \cref{eq:initialisation}, $t_1>0$.
Moreover \cref{eq:ODEs} yields for any $t\leq t_1$,
\begin{align*}
\left|\frac{\df a_j^t}{\df t}\right| & \leq \|w_j^t\|\  \|D(w_j^t, \theta^t) \|\\
& \leq  \|w_j^t\| \left(D_{\max} + \frac{1}{n}\sum_{k=1}^n \|h_{\theta^t}(x_k)x_k\| \right)\\
& \leq  |a_j^t| \left(D_{\max} + \frac{\lambda^{2-4\varepsilon}}{n}\sum_{k=1}^n \|x_k\|^2 \right).
\end{align*}
The first inequality comes from the fact that $\langle w_j^t, D\rangle$ is independent from the choice of $D\in\D(w_j^t, \theta^t)$. The second one comes from observing that $\|D(w_j^t,\theta^t)- D(w_j^t, \mathbf{0})\|\leq \frac{1}{n}\sum_{k=1}^n \|h_{\theta^t}(x_k)x_k\|$, thanks to the Lipschitz property of \cref{ass:loss}.
Gr\"onwall's inequality then implies for any $t\leq \min(t_1,\tau)$:
\begin{align*}
|a_j^t| \leq |a_j^0| \exp\left(t \left(D_{\max} + \frac{\lambda^{2-4\varepsilon}}{n}\sum_{k=1}^n \|x_k\|^2 \right) \right).
\end{align*}
Plugging the value of $\tau$ yields for $\lambda\leq\lambda^*$,
\begin{align*}
|a_j^t| < |a_j^0| \lambda^{-\varepsilon\left(1+\frac{\lambda^{2-4\varepsilon}}{nD_{\max}}\sum_{k=1}^n \|x_k\|^2\right)}\leq |a_j^0| \lambda^{-2\varepsilon} \quad \text{for any } t<\min(t_1,\tau). 
\end{align*} % #bound on \lambda
This implies $t_1\geq\tau$. Similarly, we have $|a_j^t| > |a_j^0|\lambda^{2\varepsilon}$ for any $t<\tau$, which yields the first point of \cref{thm:alignment}.

\medskip

The following sections aim at proving point (ii) of \cref{thm:alignment}.  In that objective, we first need to prove several technical intermediate lemmas that are essential to control the neurons trajectory.

\subsection{Local Stability of Critical Manifolds} \label{app:localstability}

For any $u\in A(\R^d)$, we denote $\cM_u \coloneqq A^{-1}(u)$ the activation manifold given by $u$. 
We also say a manifold $\cM_u$ is \textbf{critical} if $D_u\in -\cM_u\cup\cM_u\cup\{\mathbf{0}\}$ and $\cM_u\neq\{\mathbf{0}\}$, i.e. if it is associated to a critical direction of $G$. Similarly to the proof of \cref{lemma:extremal}, a study of the KKT points of $G$ shows that some vector $v\in\cM_u$ is a critical point\footnote{$v$ is of the form $\pm \frac{D_u}{\|D_u\|}$ if $D_u\neq\mathbf{0}$.} of $G$. Due to the absence of saddle points of $G$, such a point is actually a local extremum of $G$. This implies an even stronger stability property of the manifold $\cM_u$.
This observation is described precisely by \cref{lemma:saddles1} below.

Before that, define for any $v\in\R^d$,
\begin{equation*}
\bar{D}_u(v) = \lim_{t\to 0^+} D(w+tv,\mathbf{0}) \quad \text{ for any } w\in A^{-1}(u).
\end{equation*}
The definition of $\bar{D}_u(v)$ is valid as it does not depend on $w\in A^{-1}(u)$. More precisely, we have $\bar{D}_u(v)=D_{u(v)}$, where
\begin{equation*}
u(v)_k = \begin{cases} u_k \text{ if } u_k\neq 0\\
\sign(\langle v,x_k \rangle) \text{ otherwise}. 
\end{cases}
\end{equation*}

\begin{lem}\label{lemma:saddles1}
Assume $G$ does not have any saddle point, \cref{ass:Dj} holds, $\theta$ satisfies the condition of \cref{lemma:interior} and for any $k\in[n]$, $|h_{\theta}(x_k)|<|\partial_1\ell(0,y_k)|$.
Then with probability $1$, for any critical manifold $\cM$, $\exists\varepsilonM\in\{-1,1\}$, such that  $\forall v\in \bar{\cM}^{\perp}$,
\begin{equation*}
\varepsilonM\langle v,D_{u(v)}^\theta \rangle \leq 0.
\end{equation*}
\end{lem}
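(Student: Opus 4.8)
The plan is to use the fact that $G$ has a critical point on $\cM$ which, by the no-saddle-point assumption, is a local extremum, and to convert this local extremality into the desired one-sided inequality on the normal directions $v\in\bar{\cM}^\perp$. Fix a critical manifold $\cM=\cM_u$ and let $v\in\bar{\cM}^\perp\setminus\{\mathbf 0\}$. First I would identify the activation pattern $u(v)$ reached when perturbing a point $w\in\cM$ in the direction $v$: since $v\perp\bar\cM$, the coordinates $k$ with $u_k\neq 0$ are unchanged (for small perturbations $w+tv$ stays on the correct side), and the coordinates with $u_k=0$ become $\sign(\langle v,x_k\rangle)$, giving exactly the formula for $u(v)$ recalled just before the lemma, hence $\bar D_u(v)=D_{u(v)}$. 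The key point is that along the ray $t\mapsto w+tv$ the function $t\mapsto G$ (suitably normalised) has derivative controlled by $\langle v, D_{u(v)}\rangle$ near $t=0$; more precisely, on the cone $A^{-1}(u(v))$, which $w+tv$ enters for small $t>0$, the minimal-norm subgradient is constantly $D_{u(v)}$, so that $\langle w+tv, D(w+tv,\mathbf 0)\rangle = \langle w, D_{u(v)}\rangle + t\langle v, D_{u(v)}\rangle$ is affine in $t$.

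Next I would invoke the no-saddle-point hypothesis: the critical point on $\cM$ is either a local max or a local min of $G$ on $\bS_d$ (this dichotomy, uniform over $\cM$, is what produces the single sign $\varepsilonM\in\{-1,1\}$). If it is a local maximum, then $G$ cannot increase when we leave $\cM$ in any normal direction $v$, which forces $\langle v, D_{u(v)}\rangle\le 0$; if it is a local minimum, the reverse inequality holds for all $v$. This gives the unperturbed statement $\varepsilonM\langle v, D_{u(v)}\rangle\le 0$. I then need to upgrade from $D_{u(v)}=D^{\mathbf 0}_{u(v)}$ to $D^\theta_{u(v)}$. Here the assumption $\frac1n\sum_k\|h_\theta(x_k)x_k\|<\delta_0$ (via \cref{lemma:interior}) guarantees $A(D^\theta_{u(v)})=u(v)$ and, combined with the Lipschitz bound $\|D^\theta_{u(v)}-D_{u(v)}\|\le\frac1n\sum_k\|h_\theta(x_k)x_k\|$, that $D^\theta_{u(v)}$ lies in the same cone and stays close to $D_{u(v)}$. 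The extra hypothesis $|h_\theta(x_k)|<|\partial_1\ell(0,y_k)|$ ensures the relevant coordinates of the perturbed subgradient do not vanish, so that strict sign relations are preserved; the remaining task is to check that the affine-in-$t$ argument still runs with $D^\theta$ in place of $D$, which it does because $D^\theta_{u(v)}$ is likewise constant on the cone $A^{-1}(u(v))$.

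The main obstacle I anticipate is making the ``local extremum'' step rigorous on a non-smooth, only piecewise-linear $G$: one must argue that a critical point of $G$ that is neither a local max nor a local min is exactly what is meant by a saddle point here, and that the absence of saddle points therefore yields the clean dichotomy and the uniform sign $\varepsilonM$ over the whole manifold $\cM$ (not just pointwise). A secondary subtlety is handling the normal directions $v$ for which $u(v)\notin A(\R^d)$ or for which the perturbed cone is degenerate; there one should check $D_{u(v)}$ or $D^\theta_{u(v)}$ is still well-defined (or the inequality is vacuous). Careful bookkeeping with \cref{ass:Dj}, as in the proofs of \cref{lemma:Dj,lemma:delta0}, is what rules out these degeneracies with probability $1$.
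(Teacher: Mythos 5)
Your unperturbed argument (local extremum of $G$ on $\bS_d$ at a critical point $\bar w\in\cM$, plus piecewise linearity along a normal direction, yielding $\varepsilonM\langle v,D_{u(v)}\rangle\le 0$) is exactly the paper's opening step, and the obstacle you flag — making the local-extremum dichotomy rigorous — is actually the easy part. The real gap is the passage to $D^\theta_{u(v)}$, which your sketch treats as a soft perturbation argument that does not go through. The inequality you derive in the unperturbed case is not strict: nothing in the local-maximum argument excludes $\langle v,D_{u(v)}\rangle=0$, and a perturbation of size $\bigO{\frac1n\sum_k\|h_\theta(x_k)x_k\|}$ can flip the sign in precisely that degenerate case, so the phrase ``strict sign relations are preserved'' presupposes strictness you have not established. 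Your appeal to \cref{lemma:interior} to get $A(D^\theta_{u(v)})=u(v)$ is also misplaced: that lemma only transfers the properties $D_u\in\cM\cup(-\cM)\cup\{\mathbf 0\}$ for critical cones $u$; the neighboring cone $u(v)$ is in general not critical, so the lemma says nothing about $D^\theta_{u(v)}$.

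The paper closes the gap with two pieces you omit entirely. First, it shows (via \cref{ass:Dj} and \cref{lemma:Dj}) that the equality case $\langle v,D_{u(v)}\rangle=0$ forces $D_{u(v)}=\mathbf 0$, so the dangerous boundary case is actually harmless. Second, it runs a componentwise argument: for each $i\in K(u)$ it builds a test direction $v_i\perp\big(\{x_k\mid k\in K(u),k\neq i\}\cup\{\bar w\}\big)$ with $\langle v_i,x_i\rangle=1$, applies the unperturbed inequality to $v_i$, and uses $\eta_i\in(\gamma,1)$ from \cref{lemma:delta0} to deduce $\partial_1\ell(0,y_i)>0$. Only after this does the hypothesis $|h_\theta(x_i)|<|\partial_1\ell(0,y_i)|$ do any work: it yields $\partial_1\ell(h_\theta(x_i),y_i)>0$, and since $D_u^\theta\in\cM\cup(-\cM)$ gives $\langle v,D_u^\theta\rangle=0$, one writes $\langle v,D^\theta_{u(v)}\rangle=-\frac1n\sum_{k\in K(u),\,u(v)_k\neq 0}\big(\eta'_k(\theta)-\eta_k(\theta)\big)\partial_1\ell(h_\theta(x_k),y_k)\langle v,x_k\rangle$ and checks each summand is non-negative. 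Your affine-in-$t$ reasoning alone cannot produce this conclusion.
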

\cref{lemma:saddles1} states that for any critical manifold $\cM$, the gradients of $G$ around $\bar{\cM}$ either all points toward $\bar{\cM}$, or point outside $\bar{\cM}$. The case $\varepsilonM=1$ corresponds to the former case, in which case a local maximum of $G$ lies in $\cM$. In that case, \cref{lemma:saddles1} implies that the manifold $\bar{\cM}$ is locally stable when running gradient ascent over the function $G$, even with a small perturbation in the function $G$ due to the parameters $\theta$.
Conversely, $\varepsilonM=-1$ corresponds to a local minimum, that is stable when running gradient descent over $G$.

\bigskip

\begin{proof}[Proof of \cref{lemma:saddles1}]
Let $\cM=A^{-1}(u)$ be a critical manifold. Necessarily, there is $\bar{w}\in\bS_d\cap\cM$ a critical point of $G$. By assumption it is either a local minimum or maximum of $G$. Without loss of generality, we assume in the following it is a local maximum. The case of local minimum is dealt with similarly, after a change of sign for $\varepsilonM$.

\medskip

Let us first show that for any $v\in\bar{w}^{\perp}$, $\langle v,D_{u(v)} \rangle\leq0$. Let $v\in\bS_d\cap\bar{w}^{\perp}$ and define for any $\delta>0$,
\begin{equation*}
w(\delta) = \sqrt{1-\delta^2}\bar{w} + \delta v \in \bS_d.
\end{equation*}
Since $\bar{w}$ is a local maximum of $G$, it comes for small enough $\delta$ that $G(w)\geq G(w(\delta))$. Moreover, as $D(w(\delta),\mathbf{0})$ is piecewise constant, it comes for small enough $\delta$ by definition of $\bar{D}_u(v)$ that
\begin{equation*}
G(w(\delta))=\langle w(\delta), \bar{D}_u(v) \rangle.
\end{equation*}
Moreover, by continuity of $G$ it comes $G(\bar{w})=\langle \bar{w}, \bar{D}_u(v) \rangle$, so that
\begin{align*}
0&\geq \langle w(\delta)-\bar{w} , D_{u(v)}\rangle.
\end{align*}
Noting that $w(\delta)-\bar{w} = \delta v +\bigO{\delta^2}$, this necessarily implies that $\langle v , D_{u(v)}\rangle\leq 0$. By a rescaling argument, we thus have for any $v\in\bar{w}^{\perp}$, 
\begin{equation}\label{eq:stability1}
\langle v,D_{u(v)} \rangle\leq0.
\end{equation}
Now assume for $v\in\bar{w}^{\perp}$ that $\langle v,D_{u(v)} \rangle=0$. 
Let $S(u(v))=\Span\{x_k \mid u(v)_k=0\}$. Note that for any $\Delta\in S(u(v))^\perp$, $D_{u(v)}=D_{u(v+t\Delta)}$ for a small enough $t>0$. By \cref{eq:stability1}, we then have for any $\Delta\in \left(S(u(v))\cup\{\bar{w}\}\right)^\perp$,
\begin{align*}
\langle \Delta,D_{u(v)} \rangle\leq0,
\end{align*}
which actually implies $\langle \Delta,D_{u(v)} \rangle=0$ by symmetry, i.e.
\begin{equation*}
D_{u(v)} \in S(u(v))+\R\bar{w}.
\end{equation*}
Necessarily, $D_{u(v)} \in S(u(v))+G(\bar{w})\bar{w}$ since $\langle D_{u(v)},\bar{w} \rangle=G(\bar{w})$.  The KKT conditions at $\bar{w}$ imply that $D_u\in\R\bar{w}$ and again, this implies that $D_u=G(w)\bar{w}$. From there, we then have $\alpha_k$ such that
\begin{equation*}
D_{u(v)}= D_u + \sum_{k, u(v)_k=0}\alpha_k x_k.
\end{equation*}
But by definition of $D_u$ and $D_{u(v)}$, we also have $\eta_k$ and $\eta_k'$ such that
\begin{equation}\label{eq:Duv1}
D_{u(v)} = D_u -\frac{1}{n}\sum_{k, u_k=0} (\eta'_k-\eta_k)\partial_1\ell(0,y_k)x_k.
\end{equation}
In particular, both equations imply
\begin{equation*}
\sum_{k,u_k=0}\left(\alpha_k\iind{\langle v,x_k\rangle=0}+\frac{\eta'_k-\eta_k}{n}\partial\ell_1(0,y_k) \right)x_k = 0.
\end{equation*}
\cref{lemma:Dj} then implies a.s. that either $u_k=0$ for at least $d$ values of $k$; or $\alpha_k\iind{\langle v,x_k\rangle=0}+\frac{\eta'_k-\eta_k}{n}\partial\ell_1(0,y_k)=0$ for any $k$ such that $u_k=0$.

The former yields that $S(u)=\R^d$. Since $\bar{w}\in S(u)^\perp$, this yields $\bar{w}=0$, which contradicts $\bar{w}\in \bS_d$. 
Necessarily, $\alpha_k\iind{\langle v,x_k\rangle=0}+\frac{\eta'_k-\eta_k}{n}\partial\ell_1(0,y_k)=0$ for any $k$ such that $u_k=0$. In particular, for any $k$ such that $u(v)_k\neq 0$, $\eta_k=\eta_k'$. Then note that \cref{eq:Duv1} becomes
\begin{equation*}
D_{u(v)} = D_u -\frac{1}{n}\sum_{k, u(v)_k=0} (\eta'_k-\eta_k)\partial_1\ell(0,y_k)x_k.
\end{equation*}
For any $k$ such that $u(v)_k=0$, $\eta'_k$ is chosen in $[\gamma,1]$ to minimise the norm of $D_{u(v)}$. The choice $\eta'_k=\eta_k$ then minimises the norm, so that $D_{u(v)} = D_u$. In that case \cref{lemma:Dj} necessarily implies that $D_{u(v)}=\mathbf{0}$ (otherwise, we would have $D_{u(v)}=D_u\in-\cM\cup\cM \subset -\partial\bar{A^{-1}(u(v))}\cup\partial\bar{A^{-1}(u(v))}$).

\medskip

We just showed that for any $v\in\bar{w}^\perp$
\begin{equation}
\langle v, D_{u(v)} \rangle <0\quad \text{ or }\quad D_{u(v)}=\mathbf{0}.
\end{equation}
Now consider any $v\in\mathcal{\bar{w}}^{\perp}$. First, if $D_{u(v)}=\mathbf{0}$, then \cref{lemma:interior} directly implies that $D_{u(v)}^{\theta}=\mathbf{0}$. Now assume that $D_{u(v)}\neq\mathbf{0}$, so that $\langle D_{u(v)}, v' \rangle < 0$ for any $v'\in \cS_v$ where
\begin{equation*}
\cS_v \coloneqq \{ v' \in \bar{w}^\perp \mid u(v')=u(v)\}.
\end{equation*}
Let $i\in K(u)$ in the following.
First note that the set $\Span(\{x_k \mid k\in K(u), k\neq i\})$ is of dimension at most $d-2$ with probability $1$, since $\dim\left(\Span(\{x_k \mid k\in K(u)\})\right)\leq d-1$. Indeed, if this last set was $\R^d$, we would have $D_u=\mathbf{0}$. 
In particular, $\left(\{x_k \mid k\in K(u), k\neq i\}\cup\{\bar{w}\}\right)^{\perp}$ is of dimension at least $1$. Moreover, \cref{ass:Dj} along with $\bar{w}\perp K(u)$ implies that a.s.
\begin{equation*}
x_i \not\in \Span(\{x_k \mid k\in K(u), k\neq i\}) + \R \bar{w}.
\end{equation*}
This directly implies that
\begin{equation*}
\left(\{x_k \mid k\in K(u), k\neq i\}\cup\{\bar{w}\}\right)^{\perp} \not\subset \{x_i\}^{\perp}.
\end{equation*}
Therefore, there is some $v_i \in \left(\{x_k \mid k\in K(u), k\neq i\}\cup\{\bar{w}\}\right)^{\perp}$ such that $\langle v_i, x_i \rangle =1$.
Now note that
\begin{align}
0 &\geq \langle v_i , D_{u(v_i)}\rangle\notag\\
& = -\frac{\eta'_i-\eta_i}{n}\partial\ell_1(0,y_i),\label{eq:componentwise}
\end{align}
where
\begin{gather*}
D_u = -\frac{1}{n}\sum_{k=1}^n \eta_k \partial\ell_1(0,y_k)x_k\\
D_{u(v_i)} = -\frac{1}{n}\sum_{k=1}^n \eta'_k \partial\ell_1(0,y_k)x_k.\\
\end{gather*}
\cref{eq:componentwise} comes from the fact that $\eta_k$ and $\eta'_k$ coincide for any $k\not\in K(u)$; for $k\in K(u)$, we then used the fact that $\langle v_i, x_k \rangle = \iind{k=i}$.

In particular, \cref{lemma:delta0} implies that $\eta_i \in (\gamma, 1)$, so that
$\eta'_i-\eta_i=1-\eta_i>0$.  \cref{eq:componentwise} then implies that $\partial\ell_1(0,y_i)\geq 0$. This then holds for any $i\in K(u)$.
By \cref{ass:loss,ass:Dj} the inequality is actually strict. In particular, for any $\theta$ satisfying the inequality in \cref{lemma:saddles1}, it also comes $\partial\ell_1(h_\theta(x_i),y_i)> 0$. Moreover, \cref{lemma:interior} also yields that $D_u^\theta\in-\cM\cup\cM$. This then yields for any $v\in\bar{\cM}^{\perp}$:
\begin{align*}
\langle v, D_{u(v)}^{\theta} \rangle & = -\frac{1}{n} \sum_{\substack{k\in K(u)\\u(v)_k\neq 0}} (\eta'_k(\theta)-\eta_k(\theta))\partial_1\ell(h_\theta(x_k),y_k)\langle v,x_k\rangle.
\end{align*}
Note that $\eta'_k(\theta)-\eta_k(\theta)$ is of the same sign than $\langle v,x_k\rangle$ (or zero). So that finally, every summand is non-negative.

Finally, we just showed that for any $v\in\bar\cM^\perp$ and $\theta$ with $h_\theta$ small enough,
\begin{equation*}
\langle v, D_{u(v)}^{\theta} \rangle \leq 0.
\end{equation*}
\end{proof}

\subsection{Global Stability}\label{app:globalstability}
\begin{lem}\label{lemma:saddles}
If \cref{ass:Dj} holds, the function $G$ does not admit any saddle point and $\lambda<\lambda_{1}^*$, then for any $j\in[m]$ and $t\in[0,\tau]$, 
\begin{equation*}
a_j^{t}D(w_j^t,\theta^t)\in A^{-1}(A(w_j^t))\cup\{\mathbf{0}\} \text{ and } w_j^t\neq \mathbf{0}\implies w_j^{t'}\in A^{-1}(A(w_j^t)) \text{ for any }t'\in[t,\tau].
\end{equation*}
\end{lem}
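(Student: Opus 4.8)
The plan is to fix $j$ and $t$ with $w_j^t\neq\mathbf{0}$ satisfying the left-hand side, set $u:=A(w_j^t)$, $\cM:=A^{-1}(u)$, $S(u):=\Span\{x_k\mid u_k=0\}$ and $\varepsilon_a:=\sign(a_j^0)$, and then show $\cM$ is forward-invariant for the flow on $[t,\tau]$. First I would record the standing facts. By \cref{lemma:balanced} and \cref{eq:balancedinit}, $(a_j^s)^2=\|w_j^s\|^2+(a_j^0)^2-\|w_j^0\|^2$, so $\sign(a_j^s)=\varepsilon_a$ and $a_j^s\neq0$ while $w_j^s\neq\mathbf{0}$; by part~(i) (already proven), $\frac{1}{n}\sum_k\|h_{\theta^s}(x_k)x_k\|=\bigO{\lambda^{2-4\varepsilon}}$ on $[0,\tau]$, so after shrinking $\lambda_1^*$ every $\theta^s$, $s\in[0,\tau]$, meets the hypotheses of \cref{lemma:interior,lemma:saddles1}, and also $D_u^{\theta^s}=\mathbf{0}\iff D_u=\mathbf{0}$ (using $\|D_u^{\theta^s}-D_u\|<D_{\min}$ with $D_{\min}>0$ from \cref{lemma:alphamin}). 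Feeding the hypothesis $\varepsilon_a D(w_j^t,\theta^t)\in\cM\cup\{\mathbf{0}\}$ into \cref{lemma:interior} yields $\varepsilon_a D_u\in\cM\cup\{\mathbf{0}\}$, so $\cM$ is a critical manifold; and since $G$ restricted to $\cM\cap\bS_d$ is $w\mapsto\langle w,D_u\rangle$, whose critical point $\varepsilon_a D_u/\|D_u\|$ is a local maximum exactly when $\varepsilon_a=1$, the absence of saddle points of $G$ forces the sign $\varepsilonM$ of \cref{lemma:saddles1} for $\cM$ to equal $\varepsilon_a$. The degenerate case $D_u=\mathbf{0}$ is handled separately and easily: genericity (\cref{lemma:Dj}, $n\geq d$) pins the only velocity in $a_j^s\D(w_j^s,\theta^s)$ orthogonal to $S(u)$ to $\mathbf{0}$, so the neuron cannot move while in $\cM$.

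Then I would run a stopping-time argument with $t^\star:=\sup\{s\in[t,\tau]\mid w_j^{s'}\in\cM\ \forall s'\in[t,s]\}$ and prove $t^\star=\tau$. On $[t,t^\star)$ one has $\langle w_j^s,x_k\rangle\equiv0$ for $k\in K(u)$, so absolute continuity gives $\dot w_j^s\perp S(u)$ a.e.; repeating the projection computation from the proof of \cref{lemma:interior}(2) — injectivity of $Z_u$ on a critical manifold (\cref{lemma:delta0}) plus the strict $[\gamma,1]$-margin of the coordinates $\eta^u_k$ — this perpendicularity upgrades to $\dot w_j^s=a_j^sD_u^{\theta^s}$, which by \cref{lemma:interior} lies in $\varepsilon_a\cM$. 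Hence $w_j^s$ stays in $S(u)^\perp$ and, for $k\notin K(u)$, $u_k\langle w_j^s,x_k\rangle$ is non-decreasing (its derivative equals $|a_j^s|\,|\langle D_u^{\theta^s},x_k\rangle|\geq|a_j^s|\big(\delta_0-\frac{1}{n}\sum_l\|h_{\theta^s}(x_l)x_l\|\big)\|x_k\|>0$ by the $\delta_0$-margin), so passing to the limit $s\uparrow t^\star$ gives $w_j^{t^\star}\in\cM$. If $t^\star<\tau$, the inequalities $u_k\langle w_j^s,x_k\rangle>0$ persist on some $[t^\star,t^\star+\epsilon]$, so the only escape from $\cM$ is out of $S(u)^\perp$; to exclude it I would use $\rho(s):=\|P_{S(u)}(w_j^s)\|^2$, noting $\rho(t^\star)=0$ and, whenever $\rho(s)>0$ with $v:=P_{S(u)}(w_j^s)/\|P_{S(u)}(w_j^s)\|$ (so $v\in\bar\cM^\perp=S(u)$ and $A(w_j^s)=u(v)$),
\[
\tfrac{\df}{\df s}\rho(s)=2\,\|P_{S(u)}(w_j^s)\|\,|a_j^s|\,\varepsilonM\langle v,D_{u(v)}^{\theta^s}\rangle\le 0 ,
\]
where the passage from an arbitrary subgradient $D^s\in\D(w_j^s,\theta^s)$ to $D_{u(v)}^{\theta^s}$ is legitimate because $D^s-D_{u(v)}^{\theta^s}\in\Span\{x_k\mid k\in K(u(v))\}\perp v$, and the sign is \cref{lemma:saddles1} together with $\varepsilonM=\varepsilon_a$. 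Then $\rho\equiv0$ forward from $t^\star$, so $w_j^s\in\cM$ there, contradicting maximality of $t^\star$. Finally, closedness of the $K(u)$-constraints together with the monotonicity just established on the remaining coordinates gives $w_j^\tau\in\cM$ as well.

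The main obstacle is that the subgradient flow does not single out $\dot w_j^s$, so invariance of $\cM$ cannot be read off a single vector field. Inside $\cM$ this is resolved by genericity (\cref{lemma:Dj,lemma:delta0}), which collapses the admissible velocity to the minimal-norm drift $a_j^sD_u^{\theta^s}$; \emph{outside} $\cM$ one genuinely must bound an arbitrary selection, and this is exactly where \cref{lemma:saddles1} — combined with the orthogonality $D^s-D_{u(v)}^{\theta^s}\perp v$ — becomes indispensable. The remaining work is purely organisational: untangling, through nested stopping times, the coupling between ``staying in $S(u)^\perp$'' and ``preserving the signs of $\langle w_j^s,x_k\rangle$ for $u_k\neq0$'', and disposing of the degenerate critical manifolds with $D_u=\mathbf{0}$.
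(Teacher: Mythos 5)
Your overall architecture (collapse the differential inclusion to the minimal-norm drift $a_j^s D_u^{\theta^s}$ inside the cone via the $\delta_0$-margin of \cref{lemma:delta0,lemma:interior}, and control the transverse escape with \cref{lemma:saddles1} and a distance-squared monotonicity argument) is the same as the paper's. The genuine gap is in how you pin down the sign $\varepsilonM$. You assert $\varepsilonM=\varepsilon_a$ by arguing that $\bar w=\varepsilon_a D_u/\|D_u\|$ is a local maximum of $G|_{\cM\cap\bS_d}$ exactly when $\varepsilon_a=1$, and that ``no saddle points'' then forces $\bar w$ to be a local extremum of $G$ on all of $\bS_d$ of the same type. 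That lift fails whenever $\cM$ is a half-line, i.e., $|K(u)|=d-1$, so $\cM\cap\bS_d$ is a single point: the restriction of $G$ to a singleton gives no information, and nothing rules out that $\bar w$ is a local minimum of $G$ (equivalently $\varepsilonM=-\varepsilon_a$). Such one-dimensional critical manifolds do arise --- \cref{lemma:quantization} is precisely about them --- and a concrete two-point example (say $x_1=(1,0)$, $x_2=(-\tfrac12,1)$, $y_1=y_2=1$, ReLU, $u=(1,0)$) yields a half-line critical $\cM_u$ with $D_u\in\cM_u$ at which $G$ has a strict local minimum, so ``$\varepsilonM=\varepsilon_a$'' is false as a general statement about critical manifolds.

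The paper avoids relying on any such structural claim. It splits on whether one can take $t_0=0$: if so, $\cM$ is a.s.\ full-dimensional ($K(u)=\emptyset$), so there is no transverse direction to worry about and the sign of each $\langle x_k,w_j^t\rangle$ is preserved directly by $a_j^tD_u^{\theta^t}\in\cM$. If not, the trajectory must have crossed from outside $\Span\bar\cM$ to $\cM$ at time $t_0>0$, so $d(w_j^s,\Span\bar\cM)$ went from positive at $t_0-\eta$ to zero at $t_0$; \cref{lemma:saddles1} gives $\varepsilonM\,\tfrac{\df}{\df s}d(w_j^s,\Span\bar\cM)^2\leq 0$, and the observed decrease then \emph{forces} $\varepsilonM=1$. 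This dynamical (crossing) determination of $\varepsilonM$ is precisely the idea your proposal is missing, and it is what makes the lemma true even when a half-line critical manifold is repelling (in that case the hypothesis simply cannot be reached). A secondary, minor imprecision: you invoke \cref{lemma:interior} to pass from $D_u^{\theta}\in\cM\cup\{\mathbf{0}\}$ to $D_u\in\cM\cup\{\mathbf{0}\}$, but \cref{lemma:interior} gives the converse implication; the paper obtains the direction you need from $\alpha_{\min}>0$ (\cref{lemma:alphamin}), which keeps $D_u^{\theta}$ uniformly away from $\cM$ when $D_u$ is not extremal.
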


\begin{proof}
Recall that $\lambda_1^*$ is given by \cref{eq:lambdastaralpha} for $\alpha_0=1$. 
The first point of \cref{thm:alignment} holds thanks to \cref{app:proofalignmenti}. Thus, for any $t\leq\tau$, $\theta^t$ satisfies the conditions of \cref{lemma:saddles1}, thanks to our choice of $\lambda$.
Let $j\in[m]$ and $t_0\in[0,\tau]$ such that $a_j^{t_0}D(w_j^{t_0},\theta^{t_0})\in A^{-1}(A(w_j^{t_0}))\cup\{\mathbf{0}\}$ and $w_j^t\neq \mathbf{0}$. For $u\coloneqq A(w_j^{t_0})$, note that $\cM=\cM_u$ is a critical manifold by definition. Assume in the following that $a_j^{t_0}>0$. The negative case is dealt with similarly. 

The definition of $\alpha_{\min}$ implies if $D_u$ is not extremal that
\begin{equation*}
\min_{w\in\bar{A^{-1}(u)}}1-\frac{\langle w, D_u \rangle^2}{\|w\|^2\|D_u\|^2} \geq \alpha_{\min}^2.
\end{equation*}

First, note that when $\frac{1}{n}\sum_{k=1}^n\|h_\theta(x_k)x_k\|\leq \delta$, 
\begin{equation*}
\left\| \frac{D_u^{\theta}}{\|D_u^{\theta}\|}-\frac{D_u}{\|D_u\|} \right\| \leq \frac{2\delta}{\|D_u\|-\delta}
\end{equation*}
As a consequence, our choice of $\lambda$ would imply if\footnote{We here use $\frac{1}{n}\sum_{k=1}^n\|x_k\|^2 \lambda^{2-4\varepsilon}\leq \frac{1}{8}D_{\min}\alpha_{\min}^2$.} $D_u$ was not extremal that
\begin{equation*}
\min_{w\in\bar{A^{-1}(u)}}1-\frac{\langle w, D_u^{\theta} \rangle^2}{\|w\|^2\|D_u^{\theta}\|^2} >0,
\end{equation*}
which contradicts that $D(w_j^{t_0},\theta^{t_0})\in A^{-1}(A(w_j^{t_0}))\cup\{\mathbf{0}\}$. Necessarily, for our choice of $\lambda$ and thanks to \cref{lemma:interior},
\begin{align*}
D_u^{\theta} \in A^{-1}(u) & \implies D_u \in A^{-1}(u) \\
& \implies D_u^{\theta'} \in A^{-1}(u).
\end{align*}
and similarly with $D_u^{\theta} \in -A^{-1}(u)$ and $D_u^{\theta}=\mathbf{0}$.
As a consequence, for any $t\in[t_0,\tau]$, it comes that $a_j^{t}D(w_j^{t_0},\theta^t)\in A^{-1}(A(w_j^{t_0}))\cup\{\mathbf{0}\}$.

\medskip

If $t_0=0$, then with probability $1$, $\cM$ is an open manifold of dimension $d$, i.e. $u_k\neq0$ for all $k$. The inclusion $a_j^{t}D(w_j^{t_0},\theta^{t})\in A^{-1}(A(w_j^{t_0}))\cup\{\mathbf{0}\}$ then directly implies that the value of $|\langle x_k, w_j^t\rangle|$ is increasing over time on $[t_0, \tau]$ as
\begin{align*}
\frac{\df \langle x_k, w_j^t\rangle}{\df t} = a_j^t \langle x_k, D(w_j^{t_0},\theta^{t})\rangle.
\end{align*}
Thus for any $t\in[t_0, \tau]$, $w_j^t\in A^{-1}(A(w_j^{t_0})\cup\{\mathbf{0}\}$.

Now assume that we cannot choose $t_0=0$, i.e. $a_j^{0}D(w_j^{0},\theta^0)\not\in A^{-1}(A(w_j^0))\cup\{\mathbf{0}\}$. The distance of $w_j^t$ to $\bar{\cM}$ evolves as follows a.e., with $\PM$ the orthogonal projection on $\Span(\bar{\cM})$,

\begin{align*}
\frac{\df d(w_j^t,\Span(\bar{\cM}))}{\df t} & = 2\langle \frac{\df (w_j^t - \PM(w_j^t))}{\df t},w_j^t - \PM(w_j^t)\rangle\\
& = 2 \langle \frac{\df w_j^t}{\df t},w_j^t - \PM(w_j^t)\rangle 
\end{align*}
The second equality comes from the fact that $\frac{\df  \PM(w_j^t)}{\df t}\in\Span(\bar{\cM})$, while $ w_j^t - \PM(w_j^t)\in \bar{\cM}^{\perp}$.

Since $w_j^{t_0}\in\cM$ and $w_j^t$ is continuous in time, there exists $\eta>0$ such that for any $t\in[t_0-\eta, t_0+\eta]$, $\{k\mid \langle x_k , w_j^t \rangle = 0\}\subset \{k \mid u_k=0\}$, where we recall $\cM=\cM_u$.
As a consequence, $\frac{\df w_j^t}{\df t}\in D(w_j^t,\theta^t) + \Span(\{x_k\mid \langle x_k, w_j^t\rangle=0\})\subset D(w_j^t,\theta^t) + \bar{\cM}^{\perp}\cap \{w_j^t\}^{\perp}$. This then implies
\begin{align*}
\frac{\df d(w_j^t,\Span(\bar{\cM}))}{\df t} & = 2 \langle \frac{\df w_j^t}{\df t},w_j^t - \PM(w_j^t)\rangle \\
&= 2 \langle D(w_j^t,\theta^t),w_j^t - \PM(w_j^t)\rangle .
\end{align*}
Moreover by continuity, we can also choose $\eta>0$ such that for any $t\in[t_0-\eta, t_0+\eta]$, $D(w_j^t,\theta^t)=D_{u(v^t)}^{\theta^t}$ where $v^t=w_j^t - \PM(w_j^t)\in \bar{\cM}^{\perp}$. 
\cref{lemma:saddles1} then implies a.s. for any $t\in[t_0-\eta, t_0+\eta]$:
\begin{equation}\label{eq:manifolddist}
\varepsilonM \frac{\df d(w_j^t,\Span(\bar{\cM}))}{\df t}\leq 0.
\end{equation}
We can choose $\eta>0$ so that $w_j^{t_0-\eta}\not\in\Span(\bar{\cM})$. In that case, \cref{eq:manifolddist} implies that\footnote{The symmetric case $a_j^t<0$ would here result in $\varepsilonM=-1$.} $\varepsilon_{\cM}=1$. The $\varepsilonM=-1$ case would indeed have resulted in a growth of $d(w_j^t,\Span(\bar{\cM}))$ on $[t_0-\eta,t_0]$, contradicting the fact that it is non zero at $t_0-\eta$ and zero at $t_0$.

As a consequence, for any $t_1\leq \tau$ such that $w_j^{t_1}$, we have for similar reasons that $d(w_j^t,\Span(\bar{\cM}))$ is non-increasing on $[t_1,t_1+\eta]$, so that $w_j^t \in \Span(\bar{\cM})$ for any $t\in[t_1,t_1+\eta]$ for a small enough $\eta$. Moreover, note that $a_j^{t}D(w_j^{t_0},\theta^{t})\in A^{-1}(A(w_j^{t_0}))\cup\{\mathbf{0}\}$ implies here again that the values of $|\langle x_k, w_j^t\rangle|$ are non-decreasing for any $k$ as long as $w_j^t\in\cM$ and $t\leq \tau$.
The last two points thus imply that the conditions $\langle w_j^t, x_k \rangle=0$ are stable and the conditions $u_k \langle w_j^t, x_k \rangle>0$ are enforced as long as $w_j^t\in\cM$ and $t\leq \tau$. This concludes the proof of \cref{lemma:saddles}.
\end{proof}

\subsection{Quantization of Misalignment}

\begin{lem}\label{lemma:quantization}
For  any critical manifold $\cM_u$ such that $D_u\neq\mathbf{0}$, if $w_u\in\cM_u\cap \bS_d$ is a local maximum of $G$, then either $D_u\in\cM_u$ or $\cM_u\cap\bS_d = \{-\frac{D_u}{\|D_u\|}\}$.

Similarly for  any critical manifold $\cM_u$ such that $D_u\neq\mathbf{0}$, if $w_u\in\cM_u\cap \bS_d$ is a local minimum of $G$, then either $D_u\in-\cM_u$ or $\cM_u\cap\bS_d = \{\frac{D_u}{\|D_u\|}\}$.
\end{lem}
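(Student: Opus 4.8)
The plan is to exploit the piecewise-linear structure of $G$: on the critical cone $\cM_u$ the map $G$ is linear, $G(w)=\langle w,D_u\rangle$, since $D(w,\mathbf 0)=D_u$ is constant there. The key observation I would make first is that $\cM_u=A^{-1}(u)$ is cut out of the subspace $V\coloneqq(\Span\{x_k:u_k=0\})^{\perp}$ by the strict inequalities $u_k\langle w,x_k\rangle>0$ for $u_k\neq0$, so it is a nonempty \emph{relatively open} convex polyhedral cone in $V$; moreover, since $\cM_u$ is critical and $D_u\neq\mathbf0$, we have $D_u\in\cM_u\cup(-\cM_u)\subseteq V$. Writing $\bS_V\coloneqq\bS_d\cap V$ for the unit sphere of $V$, it follows that $\cM_u\cap\bS_d=\cM_u\cap\bS_V$ is relatively open in $\bS_V$.

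Next I would split on $\dim V$. When $\dim V\geq 2$: choosing a relative neighbourhood $O\subseteq\cM_u$ of $w_u$ in $V$, the function $G$ coincides with the smooth linear function $w\mapsto\langle w,D_u\rangle$ on the $\bS_V$-neighbourhood $O\cap\bS_V$ of $w_u$, so $w_u$ is a local maximum of $\langle\cdot,D_u\rangle$ on $\bS_V$. By Cauchy--Schwarz this linear function attains its maximum on $\bS_V$ only at $D_u/\|D_u\|$ and its minimum only at $-D_u/\|D_u\|$, and for $\dim V\geq 2$ these are its only critical points; hence the only local maximum is $D_u/\|D_u\|$, giving $w_u=D_u/\|D_u\|$ and therefore $D_u=\|D_u\|\,w_u\in\cM_u$ since $\cM_u$ is a cone --- the first alternative. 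When $\dim V=1$, we have $V=\R D_u$, and a relatively open convex cone in a line is a half-line, so $\cM_u$ is $\R_{>0}D_u$ (then $D_u\in\cM_u$) or $\R_{<0}D_u$ (then $\cM_u\cap\bS_d=\{-D_u/\|D_u\|\}$); either way one of the two alternatives holds. The local minimum statement I would deduce by the same argument with the roles of max and min, and of $D_u/\|D_u\|$ and $-D_u/\|D_u\|$, exchanged: for $\dim V\geq 2$ one gets $w_u=-D_u/\|D_u\|$, hence $D_u\in-\cM_u$; for $\dim V=1$, $\cM_u$ is $\R_{>0}D_u$ (so $\cM_u\cap\bS_d=\{D_u/\|D_u\|\}$) or $\R_{<0}D_u$ (so $D_u\in-\cM_u$).

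The routine parts --- the Cauchy--Schwarz extremality and the critical-point count on $\bS_V$ --- are immediate. The one point that needs a little care, and which I expect to be the main (minor) obstacle, is the reduction from ``$w_u$ is a local extremum of $G$ on $\bS_d$'' to ``$w_u$ is a local extremum of the smooth linear function $\langle\cdot,D_u\rangle$ on $\bS_V$'': this relies on $\cM_u$ being relatively open in $V$, so that $G$ genuinely coincides with $\langle\cdot,D_u\rangle$ on a full $\bS_V$-neighbourhood of $w_u$; and the degenerate case $\dim V=1$, where $\bS_V$ is discrete and this reduction is vacuous, must instead be settled directly from the half-line description of $\cM_u$, as above.
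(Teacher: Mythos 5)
Your proof is correct and, while it reaches the same endpoints, is organized differently from the paper's. The paper applies the first-order (KKT) condition on the full sphere $\bS_d$ at the local extremum $w_u$ — namely $\mu w_u\in\D(w_u,\mathbf 0)$ for some multiplier $\mu$ — and, using the minimal-norm characterisation of $D_u$, immediately pins down $w_u=\pm D_u/\|D_u\|$; it then rules out the $-D_u/\|D_u\|$ case (unless $\cM_u\cap\bS_d$ is a singleton) by noting that any other point of $\cM_u\cap\bS_d$ has strictly larger $G$-value. You instead avoid subdifferential calculus altogether: you restrict to $\bS_V$, where $G$ coincides with the smooth linear form $\langle\cdot,D_u\rangle$ on a relative neighbourhood of $w_u$, and you split on $\dim V$. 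These are genuinely the same underlying geometry, but your dimension split makes explicit a point the paper's final ``Necessarily, this yields\ldots'' step leaves implicit: since $\cM_u\cap\bS_d$ is a relatively open subset of $\bS_V$, it is either a single point (only possible when $\dim V=1$) or a genuine neighbourhood of $w_u$ in $\bS_V$, in which case $-D_u/\|D_u\|$ is a strict local minimum of $\langle\cdot,D_u\rangle$ on $\bS_V$ and therefore cannot be a local maximum of $G$. Your route is a bit more elementary and self-contained; the paper's is more compact because it reuses the KKT machinery already set up in the proof of Lemma~\ref{lemma:extremal}. Your identification of the reduction from $\bS_d$ to $\bS_V$ and of the $\dim V=1$ degeneracy as the only non-routine points is accurate.
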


\begin{proof}
Consider the first case of \cref{lemma:quantization}.
The KKT condition of \cref{eq:KKT} at $w_u$ then yields  that $\mu w_u=D_u$. Since $D_u\neq\mathbf{0}$, this necessarily yields that $w_u=\pm\frac{D_u}{\|D_u\|}$. If $w_u=\frac{D_u}{\|D_u\|}$, we then simply have $D_u\in\cM_u$.
Now assume that $w_u=-\frac{D_u}{\|D_u\|}$. Note that $w_u$ is a local maximum of $G$ and $G(w_u)=-\|D_u\|$ in that case. 
Now observe that if $\cM_u\cap\bS_d \neq \{-\frac{D_u}{\|D_u\|}\}$, we can choose $w\in\cM_u\cap\bS_d$ that is not perfectly aligned with $-D_u$. In particular 
\begin{equation*}
G(w) = \langle w,D_u \rangle > -\|D_u\|.
\end{equation*}
Necessarily, this yields that $\cM_u\cap\bS_d = \{-\frac{D_u}{\|D_u\|}\}$.

\medskip

The second point of \cref{lemma:quantization} is proved with symmetric arguments.
\end{proof}

\subsection{Proof of \cref{thm:alignment} (ii)} \label{app:proofalignmentii}

Define for the remaining of the proof $\w_j^t = \frac{w_j^t}{a_j^t}$ and assume both $a_j^0>0$ and $j$ satisfies \cref{cond:neurons}. We then have $a_j^t>0$ for any $t$ thanks to \cref{lemma:balanced} (and a simple Gr\"onwall argument if $\|w_j^0\|=|a_j^0|$). The symmetric case ($a_j^0<0$) is dealt with similarly. Note that $\w_j^t\in B(0,1)$. Moreover, we have for any $j\in[m]$ the differential inclusion
\begin{equation}\label{eq:ODEdirection}
\frac{\df \w_j^t}{\df t} \in \D_j^{t} - \langle \w_j^t, \D_j^{t} \rangle \w_j^t. 
\end{equation}
Here again, the set $\langle \w_j^t, \D(w_j^t,\theta) \rangle$ is a singleton for any $\theta$, i.e. the scalar product does not depend on the choice of the vector $D\in\D(w_j^t,\theta)$, so that \cref{eq:ODEdirection} rewrites for any $D,D'\in\D(w_j^t, \mathbf{0})$:
\begin{equation*}
\frac{\df \langle\w_j^t,D\rangle}{\df t} = \frac{\df \langle\w_j^t,D'\rangle}{\df t} \in \langle\D_j^{t},D'\rangle - \langle \w_j^t, D(w_j^t,\theta^t) \rangle \langle \w_j^t,D\rangle. 
\end{equation*}
$D(w_j^t,\mathbf{0})$ is piecewise constant, so that $\frac{\df D(w_j^t,\mathbf{0})}{\df t}=0$ almost everywhere. Also, the quantity $\langle \w_j^t, D(w_j^t,\mathbf{0}) \rangle$ is absolutely continuous. In particular, for almost any $t\leq \tau$ and $D'\in\D(w_j^t, \mathbf{0})$, 
\begin{align*}
\frac{\df \langle \w_j^t, D(w_j^t,\mathbf{0}) \rangle}{\df t} & \in  \langle D',\D_j^t \rangle - \langle \w_j^t, D(w_j^t,\mathbf{0})\rangle\langle \w_j^t, D(w_j^t,\theta^t)\rangle.
\end{align*}
Note that for any $D\in\D_j^t$, there exists $D'\in\D(w_j^t,\mathbf{0})$ such that $\|D-D'\|\leq \frac{1}{n}\sum_{k=1}^n\|h_{\theta^t}(x_k)x_k\|$, which yields
\begin{align*}
\frac{\df \langle \w_j^t, D(w_j^t,\mathbf{0}) \rangle}{\df t} & \geq \max_{D'\in\D(w_j^t,\mathbf{0})}\min_{D\in\D_j^t}\langle D,D'\rangle - \langle \w_j^t, D(w_j^t,\mathbf{0})\rangle\langle \w_j^t, D(w_j^t,\theta^t)\rangle\\
& \geq \min_{D'\in\D(w_j^t,\mathbf{0})}\left(\|D'\|^2-\frac{\|D'\|}{n}\sum_{k=1}^n\|h_{\theta^t}(x_k)x_k\|\right)- \langle \w_j^t, D(w_j^t,\mathbf{0})\rangle\langle \w_j^t, D(w_j^t,\theta^t)\rangle.
\end{align*}
Moreover, thanks to the first part of \cref{thm:alignment}
\begin{align}
\frac{\df \langle \w_j^t, D(w_j^t,\mathbf{0}) \rangle}{\df t}&\geq \min_{D'\in\D(w_j^t,\mathbf{0})}\left(\|D'\|^2-\frac{\|D'\|}{n}\lambda^{2-4\varepsilon}\sum_{k=1}^n\|x_k\|^2\right)\notag\\
&\phantom{\geq}- \langle \w_j^t, D(w_j^t,\mathbf{0})\rangle^2 - \frac{\|D(w_j^t,\mathbf{0})\|}{n}\lambda^{2-4\varepsilon}\sum_{k=1}^n\|x_k\|^2\notag\\ 
& \geq  \|D(w_j^t,\mathbf{0})\|^2 - \langle \w_j^t, D(w_j^t,\mathbf{0})\rangle^2 - \frac{2}{n}\|D(w_j^t,\mathbf{0})\|\lambda^{2-4\varepsilon}\sum_{k=1}^n \|x_k\|^2.\label{eq:ODEalign1}
\end{align}
The last inequality comes from $\lambda<\lambda_{\alpha_0}^*$, which yields that the minimum is reached for the minimal value of $\|D'\|$ in the considered set.

Let $u_j^t = A(w_j^t)$ and $D_{u_j^t}\coloneqq D(w_j^t,\mathbf{0})\in\D_{u_j^t}$ in the following. 
If $D_{u_j^t}\not\in A^{-1}(u_j^t)\cup \{\mathbf{0}\}$, the definition of $\alpha_{\min,+}$ (which is positive thanks to \cref{lemma:alphamin}), given by \cref{eq:alphamin}, implies that $$\langle \w_j^t,D(w_j^t,\mathbf{0})\rangle \leq \|D(w_j^t,\mathbf{0})\| \sqrt{1-\alpha_{\min}^2}.$$ 
%we here use the fact that $a_j^t>0$
\cref{eq:ODEalign1} implies that as long as $\|D(w_j^t,\mathbf{0})\|^2 - \langle \w_j^t, D(w_j^t,\mathbf{0})\rangle^2 \geq \frac{2}{n}\|D(w_j^t,\mathbf{0})\|\lambda^{2-4\varepsilon}\sum_{k=1}^n \|x_k\|^2$, the quantity $\langle \w_j^t, D(w_j^t,\mathbf{0}) \rangle$ is increasing. In particular, the choice of $\lambda_{\alpha_0}^*$ makes it increasing as long as $D_{u_j^t}\not\in A^{-1}(u_j^t)\cup\{\mathbf{0}\}$ and $\langle \w_j^t, D_{u_j^t} \rangle\geq -\sqrt{1-\min(\alpha_{\min},\alpha_0)^2} \|D(w_j^t,\mathbf{0}\|$. Note that this inequality holds at initialisation. Moreover when entering a new activation cone, either $\langle \w_j^t, D_{u_j^t} \rangle\geq -\sqrt{1-\min(\alpha_{\min,-},\alpha_0)^2} \|D_{u_j^t}\|$ or $D_{u_j^t}\in-A^{-1}(u_j^t)\cup\{\mathbf{0}\}$. 
We thus have by continuous induction that as long as $D_{u_j^t}\not\in A^{-1}(u_j^t)\cup\{\mathbf{0}\}$ and $w_j^t$ does not \textbf{enter} an activation cone such that $D_{u_j^t}\in-A^{-1}(u_j^t)$:
\begin{equation*}
-\|D(w_j^t,\mathbf{0})\|\sqrt{1-\min(\alpha_{\min},\alpha_0)^2}\leq \langle \w_j^t, D_{u_j^t} \rangle\leq \|D(w_j^t,\mathbf{0})\| \sqrt{1-\alpha_{\min}^2}.
\end{equation*}
\cref{eq:ODEalign1} and our choice of $\lambda$ then imply that as long as $D_{u_j^t}\not\in A^{-1}(u_j^t)\cup\{\mathbf{0}\}$ and $w_j^t$ does not \textbf{enter} an activation cone such that $D_{u_j^t}\in-A^{-1}(u_j^t)$:
\begin{equation*}
\frac{\df \langle \w_j^t,D(w_j^t,\mathbf{0}) \rangle}{\df t} \geq \frac{D_{\min}^2}{2}\min(\alpha_{\min},\alpha_0)^2.
\end{equation*}
Since $\langle \w_j^t,D(w_j^t,\mathbf{0}) \rangle$ is bounded in absolute value by $D_{\max}$, there is a time $t_2$ bounded as $t_2\leq\frac{4D_{\max}}{D_{\min}^2\min(\alpha_{\min},\alpha_0)^2}=\Theta_{\alpha_0}(1)$ such that
\begin{gather}
D_{u_j^{t_2}}\in A^{-1}(u_j^{t_2})\cup\{\mathbf{0}\}\label{eq:criticalmanifold1}\\
\text{or } D_{u_j^{t_2}}\in -A^{-1}(u_j^{t_2}) \text{ and } u_j^{t_2}\neq u_j^{0}.\label{eq:criticalmanifold2}
\end{gather}
Note that either $\lambda^{\varepsilon}=\Omega_{\alpha_0}(1)$ or $t_2\leq\tau$. The first case trivially yields the second point of \cref{thm:alignment}, so that we assume from now that $t_2\leq\tau$.

In the case of \cref{eq:criticalmanifold2}, an argument similar to the proof of \cref{lemma:saddles} yields for $\cM=A^{-1}(u_j^{t_2})$ that $\varepsilonM=1$, where $\varepsilonM$ is defined by \cref{lemma:saddles1}. This corresponds to the case where a local maximum of $G$ lies in $\cM$. But since $D_{u_j^{t_2}}\in -A^{-1}(u_j^{t_2})$, \cref{lemma:quantization} implies that $\cM=\R_-^* D_{u_j^{t_2}}$. 

Similarly to the proof of \cref{lemma:saddles}, the conditions $\langle w_j^t,x_k \rangle=0$ are then stable as long as $w_j^t\in A^{-1}(u_j^{t_2})$. But since $w_j^t$ is proportional to $D_{u_j^{t}}$ in this space and $D_{u_j^{t_2}}\in -A^{-1}(u_j^{t_2})$, the values of $u_j^t$ can only change all at once when $w_j^t =\mathbf{0}$. We thus have from here, thanks to the second point of \cref{cond:neurons2}, that either $w_j^{t} \in \R_-^* D_{u_j^{t_2}}$ for any $t\in[t_2,\tau]$, or $w_j^{\tau}=\mathbf{0}$. The latter actually implies $a_j^\tau\neq 0$ and
\begin{equation*}
\langle D(w_j^{\tau},\mathbf{0}),\frac{w_j^{\tau}}{a_j^\tau}\rangle \geq \|D(w_j^{\tau},\mathbf{0})\|.
\end{equation*}

\medskip

With $\cM=A^{-1}(u_j^{t_2})$ again, the remaining cases of \cref{eq:criticalmanifold1} correspond to
\begin{itemize}
\item $w_j^{t_2}=\mathbf{0}$;
\item or $D_{u_j^{t_2}}\in\cM\cup\{\mathbf{0}\}$ and $w_j^{t_2}\neq\mathbf{0}$.
\end{itemize}
The first case directly yields $w_j^{\tau}$ thanks to the second point of \cref{cond:neurons2}.
The second case yields, thanks to \cref{lemma:saddles} that
\begin{equation*}
w_j^{t}\in \cM \text{ for any }t\in[t_2,\tau].
\end{equation*}
\cref{eq:ODEalign1} then rewrites for any $t\in[t_2,\tau]$
\begin{equation}
\frac{\df \langle \w_j^t, D(w_j^t,\mathbf{0}) \rangle}{\df t}  \geq  \|D(w_j^t,\mathbf{0})\|^2 - \langle \w_j^t, D(w_j^t,\mathbf{0})\rangle^2 - \frac{2}{n}\|D(w_j^t,\mathbf{0})\|\lambda^{2-4\varepsilon}\sum_{k=1}^n \|x_k\|^2,\label{eq:ODEalign2}
\end{equation}
where $D(w_j^t,\mathbf{0})$ is constant on $[t_2,\tau]$.
Solutions of the ODE $f'(t)= c^2 - f^2(t)$ with value in $(-c, c)$ are of the form $f(t)= c \tanh(c(t -
t_0))$ for $t_0\in\R$. A Gr\"onwall type comparison leads to
\begin{equation}\label{eq:tanh2}
\begin{gathered}
\forall t \in [t_2, \tau], \langle \w_j^t, D(w_j^t,0) \rangle \geq c_j \tanh(c_j(t-t_0))\\
\text{where } \quad \langle \w_j^{t_2}, D(w_j^{t_2},0) \rangle = c_j \tanh(c_j(t_2-t_0)) \\\text{ and }\quad c_j = \sqrt{\|D(w_j^{t_2},0)\|^2 -\frac{2}{n}\|D(w_j^{t_2},0)\|\lambda^{2-4\varepsilon}\sum_{k=1}^n \|x_k\|^2}.
\end{gathered}\end{equation}
Since $\langle \w_j^{t_2}, D(w_j^{t_2},0) \rangle\geq -\|D(w_j^{t_2},0)\|\sqrt{1-\min(\alpha_{\min}^2,\alpha_0^2)}$, a simple computation using the inequality $\tanh(x)\leq -1 + e^x$ yields
\begin{align*}
e^{c_j(t_2-t_0)}&\geq 1-\frac{\sqrt{1-\min(\alpha_{\min}^2,\alpha_0^2)}}{\sqrt{1-\frac{2}{n\|D(w_j^{t_2},0)\|}\lambda^{2-4\varepsilon}\sum_{k=1}^n \|x_k\|^2}}\\
& \geq  1-\frac{\sqrt{1-\min(\alpha_{\min}^2,\alpha_0^2)}}{\sqrt{1-\frac{\min(\alpha_{\min}^2,\alpha_0^2)}{2}}}\\
&\geq \frac{\min(\alpha_{\min}^2,\alpha_0^2)}{4}.
\end{align*}
The second inequality comes from the choice of $\lambda_{\alpha_0}^*$, while the third one comes from the convex inequality $1-\frac{\sqrt{1-x}}{\sqrt{1-\frac{x}{2}}}\geq \frac{x}{4}$.

\medskip

Using $\tanh(x) \geq 1-e^{-x}$, we now get for any $t \in [t_2, \tau]$
\begin{align*}
\langle \w_j^t, D(w_j^t,0) \rangle & \geq c_j(1-e^{-c_j(t_2-t_0)}e^{c_j(t-t_2)}) \\
 &\geq c_j \left(1-\frac{4}{\min(\alpha_{\min}^2,\alpha_0^2)}e^{-c_j(t-t_2)}\right).
\end{align*}
Plugging the values of $c_j,t_2$ and using the choice of $\lambda_{\alpha_0}^*$, this finally leads to
\begin{align*}
\langle \w_j^{\tau},D(w_j^\tau,\mathbf{0}) \rangle & \geq \|D(w_j^\tau,\mathbf{0})\| - \frac{4 D_{\max}}{\min(\alpha_{\min}^2,\alpha_0^2)}e^{\frac{4D_{\max}^2}{D_{\min}^2\min(\alpha_{\min},\alpha_0)^2}}\lambda^{\frac{\|D(w_j^{\tau},0)\|}{D_{\max}}\varepsilon}\lambda^{-\sqrt{\frac{2\sum_{k=1}^n \|x_k\|^2}{n D_{\max}}}\lambda^{1-2\varepsilon}\varepsilon}\\&\phantom{\geq}-\sqrt{\frac{2}{n}\|D(w_j^{\tau},0)\|\sum_{k=1}^n \|x_k\|^2}\lambda^{1-2\varepsilon}.
\end{align*}
This yields \cref{thm:alignment} where the hidden constant\footnote{We here use the fact that $-x\ln(x)\leq\frac{1}{e}$ and $\varepsilon<\frac{1}{3}$ to deal with the $\lambda^{-\lambda}$ term.} $c_{\alpha_0}$ in the $\mathcal{O}_{\alpha_0}$ is
\begin{equation}\label{eq:calpha0}
c_{\alpha_0} = \frac{4 D_{\max}}{\min(\alpha_{\min}^2,\alpha_0^2)}e^{\frac{4D_{\max}^2}{D_{\min}^2\min(\alpha_{\min},\alpha_0)^2}}e^{\sqrt{\frac{2 \sum_{k=1}^n \|x_k\|^2}{n D_{\max}e^2}}}+\sqrt{\frac{2}{n}D_{\max}\sum_{k=1}^n \|x_k\|^2},
\end{equation}
given that $t_2\leq\tau$ (otherwise, we can trivially take $c_{\alpha_0}=2D_{\max}$).

\section{General Alignment Theorem}\label{app:alignmentgeneral}

This section provides an alternative version of \cref{thm:alignment}, that holds even if the function $G$ has saddle points. In return, it requires a stronger condition on the neurons, that we explain further below.

\begin{condition}\label{cond:neurons}
The neuron $j\in[m]$ satisfies:
\begin{enumerate}
\item $\langle D(w_j^0,\mathbf{0}),\frac{w_j^0}{a_j^0}\rangle>0$;
\item for any $t\in \R_+$ and $\varepsilon>0$:
\begin{equation*}
a_j^{t+\delta} D(w_j^t,\theta^{t+\delta})\in  A^{-1}(A(w_j^t)) \text{ for any } \delta \in [0,\varepsilon]\implies w_j^{t+\delta}\in  A^{-1}(A(w_j^{t})) \text{ for any } \delta\in[0,\varepsilon].
\end{equation*}
\end{enumerate}
\end{condition}

\begin{thm}\label{thm:alignmentgeneral}
If \cref{ass:loss,ass:Dj} hold, then the following holds almost surely for any constant $\varepsilon\in(0,\frac{1}{3})$ and initialisation scale $\lambda < \lambda_1^*$, where $\lambda_1^*>0$ only depends\footnote{The exact value of $\lambda_1^*$ is given by \cref{eq:lambdastaralpha} in \cref{app:alignment}.} on the data $(x_k,y_k)_k$; with $D_{\max} \coloneqq \max\limits_{w\in\R^d}\|D(w,\mathbf{0})\|$ and $\tau\coloneqq -\frac{\varepsilon\ln(\lambda)}{D_{\max}}$,
\begin{enumerate}[itemsep=-0.5em, topsep=-1em, label=(\roman*),leftmargin=1cm]
\item neurons' norms do not change until $\tau$:
\begin{equation*}
\forall t \leq \tau,\forall j\in[m], |a_j^0|\lambda^{2\varepsilon} \leq |a_j^t| \leq |a_j^0|\lambda^{-2\varepsilon}.
\end{equation*}

\item Moreover,  for any neuron $j$ satisfying \cref{cond:neurons}, $D(w_j^\tau,\mathbf{0})$ is an extremal vector, along which $w_j^\tau$ is aligned:
\begin{equation*}
\langle D(w_j^\tau,\mathbf{0}),\frac{w_j^\tau}{a_j^\tau}\rangle\geq \|D(w_j^\tau,\mathbf{0})\| -\bigO{\lambda^{\frac{\|D(w_j^\tau,\mathbf{0})\|}{D_{\max}}\varepsilon}}.
\end{equation*}
\end{enumerate} 
\end{thm}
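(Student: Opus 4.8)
The plan is to re-run the proof of \cref{thm:alignment} from \cref{app:alignment}, replacing the two places where the hypothesis ``$G$ has no saddle point'' was used (\cref{lemma:saddles} and \cref{lemma:quantization}) by the stronger per-neuron hypothesis \cref{cond:neurons}.

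\textbf{Part (i).} This is verbatim the Grönwall argument of \cref{app:proofalignmenti}, which never invokes $G$: introduce $t_1=\min\{t\geq 0 \mid \sum_j(a_j^t)^2\geq\lambda^{2-4\varepsilon}\}$, use balancedness ($\|w_j^t\|\leq|a_j^t|$ from \cref{lemma:balanced}) and the $1$-Lipschitzness of $\partial_1\ell$ to bound $|\df a_j^t/\df t|\leq |a_j^t|\big(D_{\max}+\tfrac{\lambda^{2-4\varepsilon}}{n}\sum_k\|x_k\|^2\big)$ on $[0,\min(t_1,\tau)]$, and conclude by Grönwall that $t_1\geq\tau$ and $|a_j^0|\lambda^{2\varepsilon}\leq|a_j^t|\leq|a_j^0|\lambda^{-2\varepsilon}$ on $[0,\tau]$.

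\textbf{Part (ii), setup and monotonicity.} Take $a_j^0>0$ without loss of generality (the case $a_j^0<0$ is symmetric), so $a_j^t>0$ for all $t$ by \cref{lemma:balanced}; set $\w_j^t=w_j^t/a_j^t$ and $f_j^t=\langle\w_j^t,D(w_j^t,\mathbf{0})\rangle$. Part (i) gives $h_{\theta^t}(x_k)=\bigO{\lambda^{2-4\varepsilon}}$ on $[0,\tau]$, which yields a.e. the differential inequality \cref{eq:ODEalign1}, $\df f_j^t/\df t\geq \|D(w_j^t,\mathbf{0})\|^2-(f_j^t)^2-\tfrac{2}{n}\|D(w_j^t,\mathbf{0})\|\lambda^{2-4\varepsilon}\sum_k\|x_k\|^2$. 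By \cref{cond:neurons} point~1, $f_j^0>0$; since the right-hand side equals $\|D(w_j^t,\mathbf{0})\|^2-\bigO{\lambda^{2-4\varepsilon}}\geq D_{\min}^2-\bigO{\lambda^{2-4\varepsilon}}>0$ whenever $f_j^t=0$ and $D(w_j^t,\mathbf{0})\neq\mathbf{0}$, the quantity $f_j^t$ can never cross $0$ downward. Hence $f_j^t>0$ on $[0,\tau]$, so $D(w_j^t,\mathbf{0})\neq\mathbf{0}$ throughout (the neuron is never deactivated) and $\|D(w_j^t,\mathbf{0})\|\geq D_{\min}>0$ by \cref{lemma:alphamin}. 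This strict positivity is precisely what lets us dispense with the no-saddle assumption: a monotone strictly-positive $f_j^t$ can never converge to a cone $u$ with $D_u\in-A^{-1}(u)$ (that would force $f_j^t\to-\|D_u\|<0$), which is exactly the case (\cref{eq:criticalmanifold2}) that in \cref{thm:alignment} required \cref{lemma:saddles}/\cref{lemma:quantization}; only cones with $D_u\in A^{-1}(u)$ can be limits, and whether such a direction is a genuine local maximum or a saddle of $G$ is irrelevant.

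\textbf{Part (ii), settling and $\tanh$ comparison.} As in \cref{app:proofalignmentii}, while the current cone $u_j^t=A(w_j^t)$ is not critical, i.e.\ $D_{u_j^t}\notin A^{-1}(u_j^t)$, the definition of $\alpha_{\min,+}$ in \cref{eq:alphamin} (positive a.s.\ by \cref{lemma:alphamin}) gives $f_j^t\leq\|D_{u_j^t}\|\sqrt{1-\alpha_{\min,+}^2}$, so \cref{eq:ODEalign1} and $\lambda<\lambda_1^*$ force $\df f_j^t/\df t\geq\tfrac12 D_{\min}^2\alpha_{\min}^2=\Omega(1)$; since $f_j^t$ is monotone and bounded by $D_{\max}$, the total time spent in non-critical cones (including the transient cones with $D_u\in-A^{-1}(u)$, through which the same $\Omega(1)$ growth pushes $w_j^t$ out before it can align) is $\bigO{D_{\max}/(D_{\min}^2\alpha_{\min}^2)}=\Theta(1)$. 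Thus there is $t_2=\bigO{1}$ with $w_j^{t_2}$ in a cone $u$ such that $D_u\in A^{-1}(u)$; we may assume $t_2\leq\tau$, else $\lambda^{\varepsilon}=\Omega(1)$ and the bound in (ii) is vacuous. For $t\geq t_2$, $D_u\in A^{-1}(u)$ together with $\lambda<\lambda_1^*$ and \cref{lemma:interior} give $D(w_j^{t_2},\theta^t)=D_u^{\theta^t}\in A^{-1}(u)$, i.e.\ the perturbed gradient at $w_j^{t_2}$ points into the cone on $[t_2,\tau]$; this is the hypothesis of \cref{cond:neurons} point~2, which therefore yields $w_j^t\in A^{-1}(u)$ for all $t\in[t_2,\tau]$ (the replacement for \cref{lemma:saddles}). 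On $[t_2,\tau]$ the vector $D(w_j^t,\mathbf{0})=D_u$ is constant, so \cref{eq:ODEalign1} becomes $\df f/\df t\geq c^2-f^2$ with $c=\sqrt{\|D_u\|^2-\tfrac{2}{n}\|D_u\|\lambda^{2-4\varepsilon}\sum_k\|x_k\|^2}$; Grönwall comparison against the solutions $c\tanh(c(\cdot-t_0))$ gives $f_j^t\geq c\tanh(c(t-t_0))$ with $f_j^{t_2}=c\tanh(c(t_2-t_0))$, and $f_j^{t_2}>0$ forces $t_0<t_2$, hence $e^{c(t_2-t_0)}\geq 1$ --- here the clean constant (no $\alpha_0$-dependence, unlike \cref{thm:alignment}) comes from the strict positivity in \cref{cond:neurons} point~1. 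Plugging $\tau=-\varepsilon\ln(\lambda)/D_{\max}$ and $t_2=\bigO{1}$ into $f_j^\tau\geq c(1-\bigO{e^{-c(\tau-t_2)}})$ gives $\langle\w_j^\tau,D(w_j^\tau,\mathbf{0})\rangle\geq\|D(w_j^\tau,\mathbf{0})\|-\bigO{\lambda^{\|D(w_j^\tau,\mathbf{0})\|\varepsilon/D_{\max}}}$, and since $w_j^\tau\in A^{-1}(u)$ with $D_u\in A^{-1}(u)$, $D_u=D(w_j^\tau,\mathbf{0})$ is extremal by \cref{def:extremal}.

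\textbf{Main obstacle.} As in \cref{thm:alignment}, the genuinely delicate part is making the perturbed dynamics rigorous: the $\Omega(1)$-growth argument bounding time in non-critical cones must survive replacing $D(w_j^t,\mathbf{0})$ by $D(w_j^t,\theta^t)$ (this is where the $\lambda_1^*$ budget and \cref{lemma:interior} are needed, to prevent a non-critical cone from behaving like a stable one under the perturbation), and the manifold-stability step must genuinely reduce, via \cref{lemma:interior}, to the hypothesis \cref{cond:neurons} point~2. Everything else is a careful but routine transcription of \cref{app:proofalignmenti,app:proofalignmentii}, with the observation that the strict inequality $\langle D(w_j^0,\mathbf{0}),\w_j^0\rangle>0$ makes $f_j^t$ strictly positive for all time and thereby short-circuits the only place the no-saddle hypothesis was used.
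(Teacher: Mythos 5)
Your proposal reproduces the paper's own proof of \cref{thm:alignmentgeneral}: part~(i) is the verbatim Grönwall argument of \cref{app:proofalignmenti}; part~(ii) derives \cref{eq:ODEalign1}, uses $\alpha_{\min,+}$ and the strict positivity $\langle \w_j^0, D(w_j^0,\mathbf{0})\rangle>0$ from \cref{cond:neurons} to get $\Omega(1)$ growth of $\langle \w_j^t, D(w_j^t,\mathbf{0})\rangle$ until a critical cone with $D_u\in A^{-1}(u)$ is reached at some $t_2=\bigO{1}$, then invokes \cref{lemma:interior} plus the second point of \cref{cond:neurons} (in place of \cref{lemma:saddles}) to lock the cone on $[t_2,\tau]$, and finishes with the $\tanh$ comparison. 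This is precisely the path taken in the appendix, so the proposal is correct and essentially identical. One remark: you phrase the elimination of the $D_u\in -A^{-1}(u)$ case as "convergence to such a cone would force $f_j^t\to -\|D_u\|$", but this is not how the mechanism works (the neuron need not align with $-D_u$ while transiently in such a cone); what actually rules these cones out as resting places is the same $\Omega(1)$ lower bound on $\df f_j^t/\df t$ that you already invoke for all cones with $D_u\notin A^{-1}(u)$, combined with the $f_j^t\geq 0$ invariant. Your sentence immediately after ("through which the same $\Omega(1)$ growth pushes $w_j^t$ out") states the correct reason, so the parenthetical aside is harmless but slightly misleading.
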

\begin{rem}
The first point of \cref{cond:neurons} lower bounds the quantity $\langle D(w_j^t,\mathbf{0}),\frac{w_j^t}{a_j^t}\rangle$ by $0$ at initialisation and is needed for the second point of \cref{thm:alignmentgeneral}. Even though $\langle D(w_j^t,\mathbf{0}),\frac{w_j^t}{a_j^t}\rangle$ is increasing over time, the quantity $\|D(w_j^t,\mathbf{0})\|+\langle D(w_j^t,\mathbf{0}),\frac{w_j^t}{a_j^t}\rangle$ is not necessarily monotone, since $\|D(w_j^t,\mathbf{0})\|$ can drastically change from an activation cone to another. As a consequence, it becomes challenging to bound $\langle D(w_j^t,\mathbf{0}),\frac{w_j^t}{a_j^t}\rangle$ away from $-\|D(w_j^t,\mathbf{0})\|$ during the whole alignment phase; which would allow to lower bound the alignment rate. The $\langle D(w_j^0,\mathbf{0}),\frac{w_j^0}{a_j^0}\rangle > 0$ condition allows to do so, since it holds during the whole alignment phase by monotonicity.

In the absence of saddle points of $G$, a more general condition $\langle D(w_j^0,\mathbf{0}),\frac{w_j^0}{a_j^0}\rangle > -\sqrt{1-\alpha_0^2}\|D(w_j^t,\mathbf{0})\|$ is used. In that case, we can indeed show that $\langle D(w_j^t,\mathbf{0}),\frac{w_j^t}{a_j^t}\rangle$ is either bounded away from $-\|D(w_j^t,\mathbf{0})\|$ when entering a new activation cone, or is exactly $-\|D(w_j^t,\mathbf{0})\|$, in which case it stays aligned with $-D(w_j^t,\mathbf{0})$ for the whole phase. 
Such a \textit{quantisation} of the possible values of $\langle D(w_j^t,\mathbf{0}),\frac{w_j^t}{a_j^t}\rangle$ when entering a new activation cone does not hold with the presence of saddle points.
\end{rem}

\begin{rem}
The second point in \cref{cond:neurons} roughly states that the neuron $j$ does not spontaneously leave an activation manifold when it can remain inside this manifold (see \cref{ex:condition} in \cref{sec:excondition} for further insights on the condition). This condition is needed to avoid degenerate situations that could happen near saddle points of $G$ (or their corresponding activation manifolds).
\cref{lemma:saddles} indeed states that this point is guaranteed (at least up to a time $\tau$, which is sufficient to prove \cref{thm:alignment}) with the absence of saddle points.
\end{rem}

\subsection{Understanding \cref{cond:neurons}}\label{sec:excondition}

\begin{ex}\label{ex:condition}
Consider a simplified case with a single data point $(x_k,y_k)=(1,1)$. In the early phase, \cref{eq:ODE1} can then be rewritten with some positive function $g_j(t)$ as\footnote{To be rigorous $g_j(t)$ should also depend on $(w_j^s)_{s<t}$, but this does not change the point.}
\begin{equation}\label{eq:excondition}
\frac{\df w_j^t}{\df t} \in g_j(t) \partial \sigma(w_j^t).
\end{equation}
In the case of ReLU activation ($\gamma = 0$) and $w_j^0=0$, the set of solutions of \cref{eq:excondition} are described by functions of the following form, for all $t_0 \in \R_+\cup\{\infty\}$,
\begin{equation*}
w_j^t = \begin{cases}
0 \text{ for any } t\in[0,t_0]\\
\int_{t_0}^t g_j(s)\df s \text{ for any } t>t_0
\end{cases}.
\end{equation*}
In particular, only the constant solution $w_j^t \equiv 0$ satisfies the second point of \cref{cond:neurons}. \cref{cond:neurons} indeed prevents the neuron to spontaneously leave at a finite time (given by $t_0$) its activation manifold (here given by $w=0$) when it can stay within. 
\end{ex}

Although solutions satisfying the second point of \cref{cond:neurons} might seem natural from a gradient flow point of view, we believe that the limits of gradient descent dynamics with arbitrarily small step size (i.e. Euler solutions) do not necessarily correspond to this kind of solutions. Non-zero step size solutions can indeed bounce from a side to another side of a stable manifold (without never being exactly located on this manifold). They could then escape this manifold if it later becomes unstable, due to the changes in the estimated function $h_\theta$. Characterising the exact limit of gradient descent with non-differentiable activations is out of our scope and remains open for future work.

\subsection{Proof of \cref{thm:alignmentgeneral}}

(i) The first point follows the exact same lines as \cref{app:proofalignmenti}.

\bigskip

\noindent (ii) Consider a neuron $j$ that satisfies \cref{cond:neurons} and $a_j^0>0$. The symmetric case ($a_j^0$) is dealt with similarly. The exact same arguments as in the proof of  \cref{thm:alignment} in \cref{app:proofalignmentii} lead again to \cref{eq:ODEalign1} that we recall here.
\begin{equation*}
\frac{\df \langle \w_j^t, D(w_j^t,\mathbf{0}) \rangle}{\df t} \geq  \|D(w_j^t,\mathbf{0})\|^2 - \langle \w_j^t, D(w_j^t,\mathbf{0})\rangle^2 - \frac{2}{n}\|D(w_j^t,\mathbf{0})\|\lambda^{2-4\varepsilon}\sum_{k=1}^n \|x_k\|^2.
\end{equation*}

\medskip

We define again $u_j^t = A(w_j^t)$ and $D_{u_j^t}\coloneqq D(w_j^t,\mathbf{0})\in\D_{u_j^t}$ in the following. 
If $D_{u_j^t}\not\in A^{-1}(u_j^t)\cup \{\mathbf{0}\}$, the definition of $\alpha_{\min,+}$ (which is positive thanks to \cref{lemma:alphamin}), given by \cref{eq:alphamin}, implies that $$\langle \w_j^t,D(w_j^t,\mathbf{0})\rangle \leq \|D(w_j^t,\mathbf{0})\| \sqrt{1-\alpha_{\min,+}^2}.$$ 
%we here use the fact that $a_j^t>0$
\cref{eq:ODEalign1} implies that as long as $\|D(w_j^t,\mathbf{0})\|^2 - \langle \w_j^t, D(w_j^t,\mathbf{0})\rangle^2 \geq \frac{2}{n}\|D(w_j^t,\mathbf{0})\|\lambda^{2-4\varepsilon}\sum_{k=1}^n \|x_k\|^2$, the quantity $\langle \w_j^t, D(w_j^t,\mathbf{0}) \rangle$ is increasing. In particular, the choice of $\lambda$ makes it increasing as long as $D_{u_j^t}\not\in A^{-1}(u_j^t)$ and $\langle \w_j^t,D_{u_j^t}\rangle> 0$. %#bound
Since  $\langle \w_j^0,D_{u_j^0}\rangle> 0$ by \cref{cond:neurons}, we have again by continuous induction that as long as $D_{u_j^t}\not\in A^{-1}(u_j^t)$:
\begin{equation}
\frac{\df \langle \w_j^t, D(w_j^t,\mathbf{0}) \rangle}{\df t} \geq \frac{D_{\min}^2}{2} \alpha_{\min}^2.
\end{equation}
This then implies that after a time $t_2$, $D_{u_j^{t_2}}\in A^{-1}(u_j^{t_2})$ with
\begin{equation*}
t_2 \leq \frac{2D_{\max}}{D_{\min}^2\alpha_{\min}^2}.
\end{equation*}
Again, we can observe that either $\lambda^{\varepsilon}=\Omega(1)$ or $t_2\leq \tau$ and we focus only on the latter case. 
%
%Moreover, since the quantity $\langle \w_j^t, D(w_j^t,\mathbf{0}) \rangle$ is increasing and positive at initialisation, $D(w_j^{t_2},\mathbf{0})\neq 0$. 
From there, \cref{lemma:interior}, proven in \cref{app:alignment}, implies that for any $t\in[t_2,\tau]$, $D(w_j^{t_2},\theta^t) \in A^{-1}(u_j^{t_2})$. The second point of \cref{cond:neurons} then implies that $w_j^t \in A^{-1}(u_j^{t_2})$ for any $t\in[t_2,\tau]$. \cref{eq:ODEalign1} now rewrites
\begin{equation*}
\frac{\df \langle \w_j^t, D(w_j^t,\mathbf{0}) \rangle}{\df t}  \geq  \|D(w_j^t,\mathbf{0})\|^2 - \langle \w_j^t, D(w_j^t,\mathbf{0})\rangle^2 - \frac{2}{n}\|D(w_j^t,\mathbf{0})\|\lambda^{2-4\varepsilon}\sum_{k=1}^n \|x_k\|^2,
\end{equation*}
where $D(w_j^t,\mathbf{0})$ is constant on $[t_2,\tau]$.
Again, comparing with solutions of the ODE $f'(t)= c^2 - f^2(t)$, we get
\begin{equation*}
\begin{gathered}
\forall t \in [t_2, \tau], \langle \w_j^t, D(w_j^t,\mathbf{0}) \rangle \geq c_j \tanh(c_j(t-t_2))\\
\text{where } \quad c_j = \sqrt{\|D(w_j^t,\mathbf{0})\|^2 -\frac{2}{n}\|D(w_j^t,\mathbf{0})\|\lambda^{2-4\varepsilon}\sum_{k=1}^n \|x_k\|^2}.
\end{gathered}\end{equation*}
Since $\tanh(x) \geq 1-e^{-x}$, the previous equation rewrites for any $t \in [t_2, \tau]$
\begin{equation*}
\langle \w_j^t, D(w_j^t,\mathbf{0}) \rangle \geq c_j \left(1-e^{-c_j(t-t_2)}\right).
\end{equation*}
Using the value of $c_j$ and bound on $t_2$,
\begin{align*}
\langle \w_j^{\tau}, D(w_j^{\tau},\mathbf{0}) \rangle & \geq c_j\left(1-e^{c_j t_2}\lambda^{\frac{c_j}{D_{\max}}\varepsilon}\right)\\
&\geq \|D(w_j^{\tau},\mathbf{0})\| - D_{\max}e^{\frac{2D_{\max}^2}{D_{\min}^2\alpha_{\min}^2}}\lambda^{\frac{\|D(w_j^{\tau},0)\|}{D_{\max}}\varepsilon}\lambda^{-\sqrt{\frac{2\sum_{k=1}^n \|x_k\|^2}{n D_{\max}}}\lambda^{1-2\varepsilon}\varepsilon}\\&\phantom{\geq}-\sqrt{\frac{2}{n}\|D(w_j^{\tau},0)\|\sum_{k=1}^n \|x_k\|^2}\lambda^{1-2\varepsilon}.
\end{align*}
% we use $\varepsilon\leq\frac{1}{3}$ in this last inequality, otherwise, there is an additional $\bigO{\lambda^{2-4\varepsilon}}$ term
This leads to the second part of \cref{thm:alignmentgeneral} where the hidden constant $c$ in the $\mathcal{O}$ is
\begin{equation}\label{eq:c}
c= D_{\max}e^{\frac{2D_{\max}^2}{D_{\min}^2\alpha_{\min}^2}}e^{\sqrt{\frac{2 \sum_{k=1}^n \|x_k\|^2}{n D_{\max}e^2}}} + \sqrt{\frac{2}{n}D_{\max}\sum_{k=1}^n \|x_k\|^2},
\end{equation}
given that $t_2\leq\tau$ (otherwise, we take $c=2D_{\max}$).

\subsection[Absence of Saddle Points for Dimension 2.]{Absence of Saddle Points for $d\leq 2$.}\label{app:saddlesdim}

\cref{foot:saddlesdim} states that saddles points of $G$ only exist on sectors of dimension at most $d-2$. A rigorous statement of this affirmation is given by \cref{lemma:saddlesdim} below. For that we need an assumption on the data generation, that is slightly weaker than \cref{ass:Dj}.

\begin{assumption}\label{ass:saddlesdim}
The data $(x_k,y_k)$ are drawn independently at random, following a distribution such that for any $k,k'\in[n]$, the random variable $y_k \langle \frac{x_{k'}}{\|x_{k'}\|}, x_k \rangle$ admits a density in $\R$.
\end{assumption}

\begin{lem}\label{lemma:saddlesdim}
If $d\leq 2$ and \cref{ass:saddlesdim} holds, then for any stationary point $w^*\in\bS_d$ of the function $G$ defined in \cref{eq:G}, $w^*$ is a local minimum of $G$ on $\bS_d$.
\end{lem}
%Intuitively, \cref{lemma:saddlesdim} comes from the fact that $\bS_2$ can be reparameterised by $\theta \in \R$ and that a piecewise affine function in $\R$ does not admit any saddle point. We prove this rigorously below.
\begin{proof}
If $d=1$, then $\bS_d$ is discrete and so any point is a local minimum.

Consider now $d=2$ and a stationary point $w^*$ of $G$. First assume that $\langle w^*, x_k \rangle \neq 0$ for any $k\in[n]$ such that $y_k x_k\neq 0$. Then $G$ is by definition linear in a neighbourhood of $w^*$, so that $w^*$ is not a saddle point of $G$.

Now assume instead that $\Span(\{x_k \mid \langle w^*,x_k\rangle=0\})$ has dimension $1$---it cannot be of dimension $2$ since $w^*\neq \mathbf{0}$. Even if it means reordering the data, we can assume that $x_1\neq 0$, $\langle x_1, w^*\rangle = 0$. Moreover, \cref{ass:saddlesdim} implies that almost surely, for any $k\geq 2$, $(x_1, x_k)$ are unaligned and so $\langle w^*, x_k\rangle \neq 0$. 
Then note that $\R^d = \Span(x_1) \oplus^\perp \Span(w^*)$. In consequence, we can consider the following neighbourhood of $w^*$ in $\bS_d$ for any $\delta$:
\begin{equation}
N(\delta) = \left\{\cos(\theta) w^* + \frac{\sin(\theta)}{\|x_1\|} x_1 \mid |\theta|<\delta\right\}.
\end{equation}
At proximity of $w^*$ (i.e., for small enough $\theta$),
\begin{align}
G\left(\cos(\theta) w^* + \frac{\sin(\theta)}{\|x_1\|} x_1\right) & = G(w^*)+
\sin(\theta)_+\frac{y_1\|x_1\|}{n} + \frac{1}{n}\sum_{k\geq 2} y_k\sin(\theta)\langle \frac{x_1}{\|x_1\|}, x_k \rangle \\& \phantom{=}+ \frac{1-\cos(\theta)}{n}\sum_{k \geq 2}y_k \langle w^* , x_k \rangle\notag\\
& = G(w^*) +g_1(\theta) + \bigO{\theta^2},\label{eq:DLsaddlesdim}
\end{align}
where $g_1(t) = \frac{y_1\|x_1\|}{n} (t)_+ +\frac{t}{n}\sum_{k\geq 2} y_k \langle \frac{x_1}{\|x_1\|}, x_k \rangle$. Note that $g_1$ is a continuous, piecewise affine function on $\R$. Moreover, both its left and right slopes at $0$ are almost surely non-zero, thanks to \cref{ass:saddlesdim}. As a consequence, the piecewise affine terms will dominate in \cref{eq:DLsaddlesdim}, and $w^*$ is necessarily a local minimum---this comes from the fact that a piecewise affine function in $\R$ does not admit any saddle point.
\end{proof}

\section{Proof of \cref{thm:noconvergence}}\label{app:proofnoconvergence}

We prove in this section a more general version of \cref{thm:noconvergence}, given by \cref{thm:noconvergencegeneral} below.
\begin{thm}\label{thm:noconvergencegeneral}
If \cref{ass:noconvergence,ass:cK,ass:phase3} hold, then there exists some constant $\tilde\lambda=\Theta(1)$ such that for any $\lambda < \tilde\lambda$ and $m \in \N$, the parameters $\theta^t$ converge to some $\theta^{\infty}_{\lambda}$ such that
\begin{gather*}
h_{\theta^{\infty}_\lambda}(x)= x^{\top}\beta^* \text{ for any }x\in\conex,
\end{gather*}
where $\conex=\left\{tz \mid t\in\R_+, z\in\convex(\{x_k\mid k\in[n]\})\right\}$ is the cone generated by the convex hull of the data points. 
Moreover
\begin{gather*}
\forall x\in\R^2, \lim_{\lambda\to 0^+}h_{\theta^{\infty}_\lambda}(x) = (x^{\top}\beta^*)_+.
\end{gather*}
\end{thm}
In the following, we prove \cref{lemma:phase1,lemma:phase2a,lemma:phase2b,lemma:phase3}, but in more general versions, as follows:
\begin{itemize}
\item \cref{lemma:phase1} only require \cref{ass:noconvergence} (instead of \cref{ass:noconvergenceall}).
\item \cref{lemma:phase2a,lemma:phase2b} only require \cref{ass:noconvergence,ass:cK}.
\item \cref{lemma:phase3} only require \cref{ass:noconvergence,ass:cK,ass:phase3}.
\end{itemize}

\subsection{Phase 1}

\begin{proof}[Proof of \cref{lemma:phase1}.]
Note in this proof $\w_i^t = \frac{w_i^t}{a_i^t}$.
A crucial observation is that the only non-zero extremal vector is $D^*$, which corresponds to a local maximum of $G$. \cref{ass:noconvergence} actually implies \cref{lemma:Dj,lemma:alphamin,lemma:delta0,lemma:interior} (and the absence of saddle points), that are sufficient (instead of \cref{ass:Dj}), to apply\footnote{Otherwise, we could just state \cref{thm:noconvergence} for almost all the datasets in the considered open set.} \cref{thm:alignment}. Showing these implications is direct once we observe $D^*$ is the only extremal vector and is omitted for the sake of conciseness.

\medskip

1) The only non-zero extremal vector is $D^*$, which corresponds to a local maximum of $G$. Also notice that for every $i \in \cI$, \cref{ass:noconvergence} implies that $\langle D(w_i^0,0),\w_i^0\rangle>0$. Moreover, we can show that $\langle D(w,\theta^t),w\rangle\geq 0$ for any $t\leq \tau$ and $w\in\R^d$, so that $a_i^t$ is non-decreasing during the early phase for $i\in\cI$. This yields $w_i^t\neq\mathbf{0}$ for $i\in \cI$ and $t\leq \tau$. 
As a consequence, any neuron $i\in \cI$ satisfies \cref{cond:neurons2}.
\cref{thm:alignment} can then be applied and implies the neuron $i$ is aligned with $D^*$ at time $\tau$ i.e.,
\begin{equation*}
\langle D^*, \w_i^{\tau} \rangle \geq \|D^*\| - c\lambda^{\varepsilon}.
\end{equation*}
Moreover, $a_i^0 \leq a_i^{\tau} \leq a_i^{0}\lambda^{-2\varepsilon}$ follows by \cref{thm:alignment} and noticing that $a_i^t$ is non-decreasing on $[0,\tau]$ for any $i\in \cI$.

\medskip

2) For the same reason as in 1), $a_i^t$ is non-decreasing, and non-positive thanks to \cref{lemma:balanced}.

\medskip

3) By definition of $\cI$ and $\cN$, note that a.s. for $i\not\in \cI\cup \cN$, $\langle w_i^0, x_k\rangle < 0$ for any $k\in[n]$. As a consequence, any neuron $i \not\in \cI\cup \cN$ can be ignored, since its gradient is null during the whole training: $\D(w_i^t, \theta^t)=\{\mathbf{0}\}$ for any $t\geq 0$. 
\end{proof}

\subsection{Phase 2a}
\begin{proof}[Proof of \cref{lemma:phase2a}.]
Consider some $\varepsilon_2>0$ and define 
\begin{equation*}
\tau_2 \coloneqq \inf\left\lbrace t\geq \tau \mid \sum_{i\in \cI} (a_i^t)^2\geq \varepsilon_2 \text{ or } \sum_{i\in \cN} (a_i^t)^2 \geq 2\lambda^2 \right\rbrace.
\end{equation*}
First, for any $t\in[\tau,\tau_2]$, $ -\bigO{\lambda^2 y_k}\leq h_{\theta^t}(x_k)\leq \bigO{\varepsilon_2 y_k}$ for any $k$. 
Thanks to \cref{ass:noconvergence} and our choice of $\lambda$, this implies the following equality for any $t\in[\tau,\tau_2]$
\begin{gather*}
D^t = \|D^*\| + \bigO{\varepsilon_2}
\end{gather*}
As a consequence, for a small enough $\varepsilon^*_2 =\Theta(1)$ such that $\varepsilon_2\leq \varepsilon_2^*$ and positive constants $\delta_k =\langle \frac{D^*}{\|D^*\|} ,x_k\rangle - \Theta(\varepsilon_2)$, this yields
\begin{equation*}
D^t \in \left\{ u \in \R^d \mid \forall k\in[n],\langle \frac{u}{\|u\|} , x_k \rangle \geq \delta_k \right\}.
\end{equation*}
From there, \cref{eq:ODEdirection} implies that for any $t\in[\tau,\tau_2]$ and as long as $\langle \w_i^{t}, x_k\rangle > 0$ for any $k$:
\begin{equation*}
\frac{\df \langle \w_i^t, x_k\rangle}{\df t} \geq \|D^{t}\|\left(\delta_k -\langle \w_i^t,x_k\rangle\right). 
\end{equation*}
Moreover, $\langle \w_i^{\tau}, x_k\rangle \geq \delta_k$ thanks to \cref{lemma:phase1,lemma:wixktau} for a small enough $\frac{\lambda^{\frac{\varepsilon}{2}}}{\varepsilon_2}$. This last ODE then yields that $\langle \w_j^t, x_k\rangle\geq \delta_k$ for any $k\in[n]$ and $t\in[\tau,\tau_2]$, which is the second point of \cref{lemma:phase2a}. 

\medskip

Also we can choose $\varepsilon_2^*=\Theta(1)$ and $\tilde\lambda=\Theta(1)$ small enough so $|h_{\theta^t}(x_k)|< y_k$ for any $t\in[\tau,\tau_2]$. This then implies that $a_i^t$ is non-decreasing for any $i\in[m]$ on $[\tau,\tau_2]$. In particular, this is the case for $i\in \cN$, which implies the fourth point of \cref{lemma:phase2a}. 
%condition on $\lambda$

\medskip

Moreover, we can bound the growth of $a_i^t$ for $i\in \cI$. In particular, we have for $i\in \cI$ and $t\in[\tau,\tau_2]$,
\begin{align}\label{eq:growthbound}
\frac{\df a_i^t}{\df t} & = a_i^t \langle \w_i^t,D^t\rangle \geq \frac{a_i^t}{n}\sum_{k=1}^n \delta_k (1-\bigO{\varepsilon_2})y_k.
\end{align}
This implies an exponential lower bound on the growth of $a_i^t$, and so we have $\tau_2 < +\infty$. Since the second condition in the definition of $\tau_2$ does not break at $\tau_2$ (thanks to the fourth point already shown above), the first condition necessarily breaks at $\tau_2$. By continuity, this implies the first point of \cref{lemma:phase2a}: $\sum_{i\in \cI}(a_i^{\tau_2})^2=\varepsilon_2$.

\medskip

It now remains to prove the third point of \cref{lemma:phase2a}. We can also upper bound the growth of $a_i^t$ on $[\tau, \tau_2]$:
\begin{align*}
\frac{\df a_i^t}{\df t} & \leq \frac{a_i^t}{n}\sum_{k=1}^n (1+\bigO{\lambda^2})y_k\langle \w_i^t,x_k\rangle\\
& = a_i^t (1+\bigO{\lambda^2})\langle \w_i^t, D^*\rangle \\
& \leq a_i^t(1+\bigO{\lambda^2}) \| D^*\|.
\end{align*}
A Gr\"onwall comparison argument then allows to write, with $\varepsilon$ given by \cref{lemma:phase1}:
\begin{align*}
\sum_{i\in \cI} (a_i^{\tau_2})^2 & \leq  e^{2(1+\bigO{\lambda^2})\| D^*\|(\tau_2-\tau)}\sum_{i\in \cI} (a_i^{\tau})^2 \\
& \leq  e^{2(1+\bigO{\lambda^2})\| D^*\|(\tau_2-\tau)} \lambda^{2-4\varepsilon}.
\end{align*}
Since the left term is exactly $\varepsilon_2$, this gives the following bound on $\tau_2$:
\begin{align}
\tau_2 - \tau &\geq -\frac{1-2\varepsilon}{\| D^*\|}\ln(\lambda) -\bigO{\ln(\frac{1}{\varepsilon_2})+\lambda^2}\notag\\
&\geq -\frac{1-2\varepsilon}{\| D^*\|}\ln(\lambda) -\bigO{\ln(\frac{1}{\varepsilon_2})}\label{eq:tau2bound}.
\end{align}
Moreover, a similar bound to \cref{eq:growthbound} yields for any $i\in \cI$ and $t\in[\tau,\tau_2]$:
\begin{align*}
\langle \w_i^t, D^t\rangle & \geq \frac{1}{n}\sum_{k=1}^n (1-\bigO{\varepsilon_2})\delta_k y_k\\
& = (1-\bigO{\varepsilon_2})\langle \frac{ D^*}{\| D^*\|}, D^*\rangle \\
& =  (1-\bigO{\varepsilon_2})\| D^*\|.
\end{align*}
Let $i,j\in \cI$, we can now study the evolution of $\langle \w_i^t, \w_j^t \rangle$ for $t\in[\tau, \tau_2]$:
\begin{align*}
\frac{\df \langle \w_i^t, \w_j^t\rangle}{\df t} & = \langle D^t, \w_i^t + \w_j^t\rangle \left(1-\langle \w_i^t,\w_j^t \rangle \right)\\
& \geq 2(1-\bigO{\varepsilon_2})\| D^*\|\left(1-\langle \w_i^t,\w_j^t \rangle \right).
\end{align*}
Again, a Gr\"onwall comparison argument implies, using the bound on $\tau_2-\tau$ in \cref{eq:tau2bound},
\begin{align}
1-\langle \w_i^{\tau_2}, \w_j^{\tau_2}\rangle & \leq e^{-2(1-\bigO{\varepsilon_2}) \| D^*\|(\tau_2-\tau)}\left( 1-\langle \w_i^{\tau}, \w_j^{\tau}\rangle\right)\notag\\
&\leq e^{-(1-\bigO{\varepsilon_2})\left(-(2-4\varepsilon)\ln(\lambda)-\bigO{\ln(\frac{1}{\varepsilon_2})}\right)}\left( 1-\langle \w_i^{\tau}, \w_j^{\tau}\rangle\right)\notag\\
&\leq \bigO{\varepsilon_2^{-1+\bigO{\varepsilon_2}}\lambda^{(1-\bigO{\varepsilon_2})(2-4\varepsilon)}}\left( 1-\langle \w_i^{\tau}, \w_j^{\tau}\rangle\right)\label{eq:wiwj}.
\end{align}
Moreover, we can bound $\left( 1-\langle \w_i^{\tau}, \w_j^{\tau}\rangle\right)$, thanks to \cref{lemma:phase1}. We can indeed write
\begin{gather*}
\w_i^{\tau} = \langle \w_i^{\tau},\frac{ D^*}{\| D^*\|} \rangle \frac{ D^*}{\| D^*\|} + v_i^{\tau}\\
\text{where } \langle v_i^{\tau},  D^*\rangle = 0 \text{ and } \|v_i^{\tau}\|^2=\bigO{\lambda^{\varepsilon}}.
\end{gather*}
This then implies
\begin{align*}
\langle \w_i^{\tau}, \w_j^{\tau}\rangle & = \langle \w_i^{\tau},\frac{ D^*}{\| D^*\|} \rangle\langle \w_j^{\tau},\frac{ D^*}{\| D^*\|} \rangle + \langle  v_i^{\tau}, v_{j}^{\tau}\rangle \\
&  \geq 1 - \bigO{\lambda^{\varepsilon}}.
\end{align*}
\cref{eq:wiwj} then rewrites
\begin{align*}
1-\langle \w_i^{\tau_2}, \w_j^{\tau_2}\rangle & \leq \bigO{\varepsilon_2^{-1+\bigO{\varepsilon_2}}\lambda^{(1-\bigO{\varepsilon_2})(2-4\varepsilon)+\varepsilon}}\\
& \leq \bigO{\varepsilon_2^{-1}\lambda^{(1-\bigO{\varepsilon_2})(2-4\varepsilon)+\varepsilon}}.
\end{align*}
The third point of \cref{lemma:phase2a} then follows for a small enough choice of $\varepsilon_2^*=\Theta(1)$ (that can depend on $\varepsilon$) and $\varepsilon_2=\Omega(\lambda^{\varepsilon})$.
% condition on $\lambda$: $1+2\rho\lambda^2 \leq \frac{3\sqrt{2}}{4}$
\end{proof}
\begin{lem}\label{lemma:wixktau}
Consider $\varepsilon, \tau$ defined in \cref{lemma:phase1}.
For any $\delta_k =\langle \frac{D^*}{\|D^*\|} ,x_k\rangle - \Theta(\varepsilon_2)$, there exists a large enough constant $c'$ such that if \cref{ass:noconvergence} holds and $\varepsilon_2\geq c'\lambda^{\frac{\varepsilon}{2}}$, then for all $i\in \cI$:
\begin{equation*}
\forall k\in [n], \langle \w_i^{\tau},x_k\rangle \geq \delta_k,
\end{equation*}
where $\delta_k$ is defined in the proof of \cref{lemma:phase2a}.
\end{lem}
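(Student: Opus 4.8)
The plan is to exploit the alignment conclusion of \cref{lemma:phase1}, namely that every positive neuron $i\in\cI$ satisfies $\langle D^*,\w_i^{\tau}\rangle = \|D^*\| - \bigO{\lambda^{\varepsilon}}$, together with $\|\w_i^{\tau}\|\leq 1$, which follows from the first point of \cref{thm:alignment} (since $\|w_i^t\|\leq|a_i^t|$ and $a_i^{\tau}>0$ for $i\in\cI$, so $\w_i^{\tau}$ is well defined with norm at most $1$). First I would decompose $\w_i^{\tau}$ along $D^*$: writing $\w_i^{\tau} = c_i\frac{D^*}{\|D^*\|} + v_i^{\tau}$ with $v_i^{\tau}\perp D^*$, one gets $c_i = 1 - \bigO{\lambda^{\varepsilon}}$ and, by Pythagoras, $\|v_i^{\tau}\|^2 = \|\w_i^{\tau}\|^2 - c_i^2 \leq 1 - (1-\bigO{\lambda^{\varepsilon}})^2 = \bigO{\lambda^{\varepsilon}}$, hence $\|v_i^{\tau}\| = \bigO{\lambda^{\varepsilon/2}}$.

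Next I would expand the inner product with each data point,
\[
\langle \w_i^{\tau}, x_k\rangle = c_i\left\langle \frac{D^*}{\|D^*\|},x_k\right\rangle + \langle v_i^{\tau}, x_k\rangle \geq \left(1-\bigO{\lambda^{\varepsilon}}\right)\left\langle \frac{D^*}{\|D^*\|},x_k\right\rangle - \|v_i^{\tau}\|\,\|x_k\|,
\]
and use that \cref{ass:noconvergence} forces $\langle x_k,x_{k'}\rangle>0$ and $y_k>0$, so that $\langle D^*,x_k\rangle = \frac{1}{n}\sum_{k'}y_{k'}\langle x_{k'},x_k\rangle>0$ and $\langle \frac{D^*}{\|D^*\|},x_k\rangle$ is a positive dataset constant, bounded above by $\|x_k\|$. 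The right-hand side above is therefore at least $\langle \frac{D^*}{\|D^*\|},x_k\rangle - \bigO{\lambda^{\varepsilon}} - \bigO{\lambda^{\varepsilon/2}} = \langle \frac{D^*}{\|D^*\|},x_k\rangle - \bigO{\lambda^{\varepsilon/2}}$, using $\lambda^{\varepsilon}\leq\lambda^{\varepsilon/2}$ for $\lambda<1$ and absorbing dataset-dependent factors into the $\bigO{\cdot}$.

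Since $\delta_k = \langle \frac{D^*}{\|D^*\|},x_k\rangle - \Theta(\varepsilon_2)$, the desired inequality $\langle \w_i^{\tau},x_k\rangle\geq\delta_k$ then holds as soon as $\Theta(\varepsilon_2)\geq\bigO{\lambda^{\varepsilon/2}}$, i.e. as soon as $\varepsilon_2\geq c'\lambda^{\varepsilon/2}$ for a constant $c'$ depending only on the data (and on the implicit constant hidden in the $\Theta(\varepsilon_2)$ used to define $\delta_k$). I do not expect a genuine obstacle: the only point requiring a little care is the bookkeeping of constants so that a single $c'$ works uniformly over the finitely many $k\in[n]$ and over all $i\in\cI$, which is immediate since $[n]$ is finite and the alignment estimate of \cref{lemma:phase1} is itself uniform in $i\in\cI$.
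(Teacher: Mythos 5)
Your proof is correct and follows essentially the same route as the paper's: decompose $\w_i^{\tau}$ along $\frac{D^*}{\|D^*\|}$, use the alignment bound from \cref{lemma:phase1} together with $\|\w_i^\tau\|\leq 1$ (via Pythagoras) to get $\|v_i^\tau\|=\bigO{\lambda^{\varepsilon/2}}$, then expand $\langle \w_i^\tau,x_k\rangle$ and absorb the $\bigO{\lambda^{\varepsilon/2}}$ error into the $\Theta(\varepsilon_2)$ slack in the definition of $\delta_k$. The paper's proof is just a slightly terser version of the same computation.
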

\begin{proof}
For any $i\in \cI$, we use the same decomposition as in the proof of \cref{lemma:phase2a}, 
\begin{gather*}
\w_i^{\tau} = \langle \w_i^{\tau},\frac{ D^*}{\| D^*\|} \rangle \frac{ D^*}{\| D^*\|} + v_i^{\tau}\\
\text{where } \langle v_i^{\tau},  D^*\rangle = 0 \text{ and } \|v_i^{\tau}\|^2=\bigO{\lambda^{\varepsilon}}.
\end{gather*}
This decomposition yields thanks to \cref{lemma:phase1}, for some data dependent constant $c>0$
\begin{align*}
\langle \w_i^{\tau}, x_k\rangle & \geq \left( 1-\bigO{\lambda^{\varepsilon}} \right) \langle \frac{ D^*}{\| D^*\|}, x_k \rangle - \|v_i^{\tau}\| \|x_k\|\\
& \geq \left( 1-\frac{c}{\| D^*\|}\lambda^{\varepsilon} \right) \langle \frac{ D^*}{\| D^*\|}, x_k \rangle - \bigO{\lambda^{\frac{\varepsilon}{2}}}\\
&\geq \langle  \frac{D^*}{\|D^*\|}, x_k \rangle - \bigO{\lambda^{\frac{\varepsilon}{2}}}.
\end{align*}
We thus have for a large enough $c'$ with $\varepsilon_2\geq c' \lambda^{\frac{\varepsilon}{2}}$,
\begin{equation*}
\langle \w_i^{\tau}, x_k\rangle\geq  \delta_k.
\end{equation*}
\end{proof}

\subsection{Phase 2b}
Define in the following
\begin{gather}
\beta \coloneqq  \min_{k\in\cK}\frac{y_k\|x_k^2\|}{n}-\frac{1}{n}\sqrt{\sum_{k'=1}^n y_{k'}^2}\sqrt{\sum_{k'\neq k}\langle x_k', x_k\rangle^2}\label{eq:beta},\\
%\bar{D}=  2\sqrt{\frac{1}{n}\sum_{k'=1}^n y_{k'}^2}\sqrt{\frac{1}{n}\sum_{k'=1}^n \|x_{k'}\|^2}\label{eq:barD},
\end{gather}
%and $\beta_2$ is some constant, depending only on the data defined in \cref{eq:beta2}. 
\cref{ass:cK} implies that $\beta$ is positive.

\begin{proof}[Proof of \cref{lemma:phase2b}.]
For this phase, define for some $\delta'=\Theta(1)$
\begin{equation*}
\begin{aligned}
\tau_3 \coloneqq \inf\big\{ t\geq \tau_2 \mid & \left\|\beta^* - \sum_{i\in I\cup N}a_i^{t} w_i^{t} \right\| \leq \varepsilon_3 \text{ or } \exists i,j\in \cI, \langle \w_i^t, \w_j^t \rangle \leq 1-\lambda^{\varepsilon}  \\
& \text{or } \exists i\in \cI, \exists k\in [n],  \langle \w_i^t, x_k \rangle \leq \delta'\|x_k\| \text{ or } \sum_{i\in \cN}(a_i^t)^2 \geq  \lambda^{2(1-\varepsilon)} \big\}.
\end{aligned}
\end{equation*}
Thanks to \cref{lemma:phase2bangle,lemma:phase2bnorm} proven below, we have for any $t\in[\tau_2,\tau_3]$:
\begin{gather*}
\forall  i\in \cI, \forall k\in [n], \langle \w_i^t, x_k\rangle > \delta'\|x_k\|\\
\text{and }\sum_{i\in \cI} (a_i^t)^2 \geq \varepsilon_2.
\end{gather*}
We note in the following $\beta_{\cI}^t = \sum_{i\in \cI} a_i^t w_i^t$, $H = \frac{1}{n}X^{\top}X$ and for any vector $w\in\R^d$, $\|w\|_H^2=w^{\top}Hw$. The distance between $\beta^*$ and $\beta_{\cI}^t$ then decreases as
\begin{align}
\frac{\df \|\beta^*-\beta^t_{\cI}\|^2_H}{\df t} & = 2(\beta_{\cI}^t-\beta^*)^{\top}H\frac{\df \beta_{\cI}^t}{\df t}  \\
& = 2(\beta_{\cI}^t-\beta^*)^{\top}H \left(\sum_{i\in \cI}(a_i^t)^2 I_d + w_i^t w_i^{t \ \top}\right) D^t.\label{eq:linearrateODE}
\end{align}
\cref{eq:linearrateODE} comes from the fact that neurons in $\cI$ are activated along all data points. 
Also, note that for any $t\in[\tau_2, \tau_3]$
\begin{align*}
D^t & = -\frac{1}{n}\sum_{k=1}^n (h^{\theta^t}(x_k)-y_k)x_k \\
& = -\frac{1}{n}\sum_{k=1}^n \Big( (\langle \beta_{\cI}^t, x_k\rangle-y_k)x_k + \sum_{i\in \cN} a_i^t \langle w_i^t, x_k\rangle_+ x_k\Big)\\
& = -\frac{1}{n} X^{\top}(X \beta_{\cI}^t - \mathbf{y}) + \bigO{\lambda^{2-2\varepsilon}} \\
& = -H(\beta_{\cI}^t-\beta^*) + \bigO{\lambda^{2-2\varepsilon}} ,
\end{align*}
For the sake of clarity, we denote in the following $\sigma_{\min}$ and $\sigma_{\max}$ respectively for the smallest and largest eigenvalue of $H$.
\cref{eq:linearrateODE} now rewrites with \cref{lemma:boundA}
\begin{align*}
\frac{\df \|\beta^*-\beta_{\cI}^t\|^2_H}{\df t} & = - 2(\beta_{\cI}^t-\beta^*)^{\top}H \left(\sum_{i\in \cI}(a_i^t)^2 I_d + w_i^t w_i^{t \ \top}\right)\left(H(\beta_{\cI}^t-\beta^*) - \bigO{\lambda^{2-2\varepsilon}} \right)\\
& \leq -2\varepsilon_2 (\beta_{\cI}^t-\beta^*)^{\top}H^2(\beta_{\cI}^t-\beta^*) + \bigO{\|\beta^*-\beta_{\cI}^t\|_H\lambda^{2-2\varepsilon}}\\
&\leq  -2\varepsilon_2 \sigma_{\min}\|\beta_{\cI}^t-\beta^*\|_H^2 + \bigO{\|\beta^*-\beta_{\cI}^t\|_H\lambda^{2-2\varepsilon}}.
\end{align*}
This finally yields the comparison for any $t\in[\tau_2,\tau_3]$
\begin{equation*}
\frac{\df \|\beta^*-\beta_{\cI}^t\|_H}{\df t} \leq -\varepsilon_2\sigma_{\min}\|\beta_{\cI}^t-\beta^*\|_H +\bigO{\lambda^{2-2\varepsilon}} .
\end{equation*}
A Gr\"onwall comparison argument gives for $t\in[\tau_2,\tau_3]$
\begin{align*}
\|\beta^*-\beta_{\cI}^t\|_H \leq \|\beta^{\tau_2}_{\cI}-\beta^*\|_H e^{-\varepsilon_2 \sigma_{\min}(t-\tau_2)} + \bigO{\frac{\lambda^{2-2\varepsilon}}{\varepsilon_2}}.
\end{align*}
As $\varepsilon_2=\Omega( \lambda^{\varepsilon/2})$, using the comparison between $\|\cdot \|_2$ and $\|\cdot\|_H$ norms,
\begin{align*}
\|\beta^*-\beta_{\cI}^t\|_2 & \leq \frac{1}{\sqrt{\sigma_{\min}}}\|\beta^*-\beta_{\cI}^t\|_{H}\\
&\leq \frac{1}{\sqrt{\sigma_{\min}}} \|\beta^*-\beta_{\cI}^{\tau_2}\|_{H} e^{-\varepsilon_2 \sigma_{\min}(t-\tau_2)} + \bigO{\lambda^{2-\frac{5}{2}\varepsilon}}\\
&\leq \left(\sqrt{\frac{\sigma_{\max}}{\sigma_{\min}}} \|\beta^*\|+\bigO{\varepsilon_2}\right) e^{-\varepsilon_2 \sigma_{\min}(t-\tau_2)}+ \bigO{\lambda^{2-\frac{5}{2}\varepsilon}}.
\end{align*}
After some time, the first condition in the definition necessarily breaks. This allows to bound $\tau_3$, thanks to our choice of $\lambda$ and $\varepsilon_3$, which finally yields the bound
\begin{equation}\label{eq:tau3bound}
\tau_3 - \tau_2 \leq \frac{1}{\sigma_{\min} \varepsilon_2} \ln\left(\frac{\kappa\|\beta^*\|+\bigO{\varepsilon_2}}{\varepsilon_3-\bigO{\lambda^{2-\frac{5}{2}\varepsilon}}}\right)_+,
\end{equation}
where $\kappa= \sqrt{\frac{\sigma_{\max}}{\sigma_{\min}}}$ is the condition number of the matrix $X$.

We already know that the third condition in the definition of $\tau_3$ does not break at $\tau_3$, thanks to \cref{lemma:phase2bangle}. The remaining of the proof shows that neither the second nor fourth condition can break at $\tau_3$, given the \textit{small} amount of time given between $\tau_2$ and $\tau_3$.

\medskip

For the second condition, by following the ODE computed in the proof of \cref{lemma:phase2a} for any~$i,j\in I$,
\begin{align*}
\frac{\df \langle \w_i^t, \w_j^t \rangle}{\df t} & \geq -2 \|D^t\| (1-\langle \w_i^t, \w_j^t\rangle)\\
&\geq -2\bar{D}(1-\langle \w_i^t, \w_j^t\rangle),
\end{align*}
where $\bar{D}=\sqrt{\frac{1}{n}\sum_{k=1}^ny_k^2}\sqrt{\frac{1}{n}\sum_{k=1}^n x_k^2}+\bigO{\lambda}$, thanks to the bound on $\|D^t\|$ given in the proof of \cref{lemma:phase2bangle}. From there, a Gr\"onwall comparison directly yields for any $t\in[\tau_2,\tau_3]$
\begin{align*}
\langle \w_i^t, \w_j^t \rangle & \geq 1 - \left( 1-\langle \w_i^{\tau_2}, \w_j^{\tau_2} \rangle\right)e^{2\bar{D} (t-\tau_2)}\\
&\geq  1 - \bigO{\lambda^{1-\varepsilon}}\max\left(1,\left(\frac{\kappa\beta^*+\bigO{\varepsilon_2}}{\varepsilon_3-\bigO{\lambda^{2-\frac{5}{2}\varepsilon}}}\right)^{\frac{2\bar{D}}{\sigma_{\min}\varepsilon_2}}\right),
\end{align*}
where we used in the second inequality the bound on $\tau_3-\tau_2$ given by \cref{eq:tau3bound} and the bound on $\langle \w_i^{\tau_2}, \w_j^{\tau_2} \rangle$ given by \cref{lemma:phase2a}. Now, thanks to the choice of $\lambda$, $\varepsilon_2$ and $\varepsilon_3$, for a small enough $c_3>0$ depending only on the data
\begin{equation*}
\max\left(1,\left(\frac{\kappa\beta^*+\bigO{\varepsilon_2}}{\varepsilon_3-\bigO{\lambda^{2-\frac{5}{2}\varepsilon}}}\right)^{\frac{2\bar{D}}{\sigma_{\min}\varepsilon_2}}\right) =\bigO{\lambda^{-\varepsilon}}
\end{equation*}
and so
\begin{equation*}
\langle \w_i^t, \w_j^t \rangle \geq 1 - \bigO{\lambda^{1-2\varepsilon}}.
\end{equation*}
Thus, the second condition in the definition of $\tau_3$ does not break for a small enough $\tilde\lambda$ since $\varepsilon<\frac{1}{3}$.

\medskip

We now show that the fourth condition does not break either. For any $i\in \cN$, $a_i^t \leq 0$ and we can bound the (absolute) growth of $a_i^t$ as
\begin{align*}
-\frac{\df a_i^t}{\df t} & \leq  -a_i^t \|D_i^t\| \\
&\leq -\bar{D}a_i^t,
\end{align*}
as the bound on $\|D^t\|$ actually also holds for any $D_i^t$. From there, using \cref{lemma:phase2a}, for any $t\in[\tau_2,\tau_3]$:
\begin{align*}
-a_i^t & \leq -a_i^{\tau_2}e^{\bar{D}(t-\tau_2)}.
\end{align*}
Here again, this yields\footnote{We indeed just showed in the previous computations that $e^{2\bar{D} (t-\tau_2)}=\bigO{\lambda^{-\varepsilon}}$ for $t\in[\tau_2,\tau_3]$.}
\begin{equation*}
-a_i^t = -a_i^0\bigO{\lambda^{-\frac{\varepsilon}{2}}}
\end{equation*}
This gives the third point of \cref{lemma:phase2b} and also shows that the fourth condition in the definition of $\tau_3$ does not break for a small enough $\tilde\lambda=\Theta(1)$. Necessarily, the first condition in the definition in $\tau_3$ breaks at $\tau_3$, which ends the proof of \cref{lemma:phase2b}.
\end{proof}

\begin{lem}\label{lemma:boundA}
If $\lambda<\tilde\lambda$ for a small enough $\tilde\lambda=\Theta(1)$, for any $t\in[\tau_2,\tau_3]$
\begin{equation*}
\sum_{i\in \cI}(a_i^t)^2 =\bigO{1}.
\end{equation*}
\end{lem}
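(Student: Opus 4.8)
The plan is to control directly the scalar $A_t\coloneqq\sum_{i\in\cI}(a_i^t)^2$ and show it cannot grow past an $\bigO{1}$ barrier on $[\tau_2,\tau_3]$. First observe that, by the very definition of $\tau_3$, on $[\tau_2,\tau_3)$ every positive neuron is activated along all data points, so for $i\in\cI$ the set $\D_i^t$ reduces to the singleton $\{D^t\}$; moreover $a_i^t>0$ since the sign of $a_i^t$ is constant and $a_i^0>0$ for $i\in\cI$. Hence, writing $\beta_{\cI}^t=\sum_{i\in\cI}a_i^t w_i^t=\sum_{i\in\cI}(a_i^t)^2\w_i^t$, \cref{eq:ODEs} gives a.e.
\begin{equation*}
\frac{\df A_t}{\df t} = 2\sum_{i\in\cI}a_i^t\langle w_i^t,D^t\rangle = 2\langle \beta_{\cI}^t,D^t\rangle .
\end{equation*}

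Next I would relate $\|\beta_{\cI}^t\|$ to $A_t$. Balancedness (\cref{lemma:balanced}) together with \cref{eq:balancedinit} yields $\|w_i^t\|\le|a_i^t|$, i.e. $\|\w_i^t\|\le 1$, so $\|\beta_{\cI}^t\|\le A_t$. Conversely, the pairwise alignment constraint appearing in the definition of $\tau_3$ gives $\langle\w_i^t,\w_j^t\rangle\ge 1-\lambda^\varepsilon$ for all $i,j\in\cI$, hence
\begin{equation*}
\|\beta_{\cI}^t\|^2 = \sum_{i,j\in\cI}(a_i^t)^2(a_j^t)^2\langle\w_i^t,\w_j^t\rangle \ge (1-\lambda^\varepsilon)A_t^2 ,
\end{equation*}
so $\|\beta_{\cI}^t\|\ge\tfrac12 A_t$ for $\lambda$ small enough. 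Now I plug into the ODE the identity $D^t=-H(\beta_{\cI}^t-\beta^*)+\bigO{\lambda^{2-2\varepsilon}}$ already established in the proof of \cref{lemma:phase2b} (that derivation only uses the constraints defining $\tau_3$, not \cref{lemma:boundA}, so there is no circularity). Bounding $\langle\beta_{\cI}^t,-H\beta_{\cI}^t\rangle\le-\sigma_{\min}\|\beta_{\cI}^t\|^2$ and $\langle\beta_{\cI}^t,H\beta^*\rangle\le\sigma_{\max}\|\beta^*\|\,\|\beta_{\cI}^t\|$, and using $\tfrac12 A_t\le\|\beta_{\cI}^t\|\le A_t$, I obtain a quadratic differential inequality of the form
\begin{equation*}
\frac{\df A_t}{\df t} \le -\frac{\sigma_{\min}}{2}A_t^2 + \bigO{1}\,A_t = A_t\Big(\bigO{1}-\frac{\sigma_{\min}}{2}A_t\Big).
\end{equation*}

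Finally, I would conclude by a barrier argument. There is a constant $M=\bigO{1}$ such that $\frac{\df A_t}{\df t}\le 0$ whenever $A_t\ge M$; since $A_{\tau_2}=\varepsilon_2\le\varepsilon_2^*=\Theta(1)$ by the first point of \cref{lemma:phase2a}, a standard first-exit-time argument (if $A_t$ ever exceeded $\max(\varepsilon_2^*,M)$, take the last time before it was equal to that value and derive a contradiction from $\dot A\le 0$ in between) shows $A_t\le\max(\varepsilon_2^*,M)=\bigO{1}$ for all $t\in[\tau_2,\tau_3]$, which is exactly the claim. The only mildly delicate point is the bookkeeping: one must make sure that the expression for $D^t$ and the near-alignment estimate — which are what turn the drift term into a genuine $-\Theta(1)A_t^2$ dissipation rather than just $\le 0$ in the Cauchy–Schwarz sense — are available on $[\tau_2,\tau_3)$ directly from the definition of $\tau_3$, so that \cref{lemma:boundA} can legitimately be used inside the proof of \cref{lemma:phase2b}; everything else is an elementary scalar comparison.
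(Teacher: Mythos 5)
Your proof is correct, and it is genuinely different from the one in the paper. Let me first address the circularity concern you flagged yourself: the expression $D^t = -H(\beta_{\cI}^t - \beta^*) + \bigO{\lambda^{2-2\varepsilon}}$ and the pairwise alignment $\langle\w_i^t,\w_j^t\rangle > 1-\lambda^\varepsilon$ are indeed available on $[\tau_2,\tau_3)$ purely from the definition of $\tau_3$ (conditions 2, 3, 4 haven't broken), without invoking \cref{lemma:boundA} itself or \cref{lemma:phase2bangle} (whose proof does cite \cref{lemma:boundA}). So there is no vicious circle: your barrier argument and the paper's proof can both sit upstream of everything that uses the $\bigO{1}$ bound. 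One point worth being explicit about: your lower bound $\|\beta_{\cI}^t\|^2\geq(1-\lambda^\varepsilon)A_t^2$ tacitly uses the $i=j$ case of the alignment constraint, i.e. $\|\w_i^t\|^2>1-\lambda^\varepsilon$ — this is what prevents a degenerate scenario (say $|\cI|=1$ with very unbalanced initialisation) where $\|\w_i^t\|$ is bounded away from $1$ and $\|\beta_{\cI}^t\|\ll A_t$. Since the definition of $\tau_3$ quantifies over all $i,j\in\cI$, the case $i=j$ is included and the argument goes through, but this is the load-bearing spot.

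The paper takes a quite different, more elementary route. It never touches the ODE for $A_t$ or the structure of $D^t$. Instead it uses the monotonicity of the loss along gradient flow to get $\sum_k(h_{\theta^t}(x_k)-y_k)^2\leq\sum_ky_k^2+\bigO{\lambda^2}$, then lower-bounds $\sum_kh_{\theta^t}(x_k)$ by $n\alpha\sum_{i\in\cI}a_i^t\|w_i^t\|-\bigO{\lambda^{2(1-\varepsilon)}}$ using a positivity constant $\alpha>0$ (the infimum of $\tfrac1n\sum_k\langle w,x_k\rangle$ over unit vectors in the positive cone, which is $\Theta(1)$ under \cref{ass:noconvergence}), and finally uses balancedness to replace $\|w_i^t\|$ by $a_i^t-a_i^0$, giving $A_t-\lambda\sqrt{A_t}=\bigO{1}$. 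That argument only needs $\langle w_i^t,x_k\rangle\geq 0$ for $i\in\cI$, not the full alignment constraint, and gives an unconditional upper bound rather than one anchored at $A_{\tau_2}=\varepsilon_2$. Your approach is more mechanistic — it exhibits the explicit $-\Theta(1)A_t^2$ dissipation driving the bound — but it leans on more of the phase-2b machinery (the formula for $D^t$, the pairwise alignment including the self term, and the starting value $\varepsilon_2$). Both prove the lemma.
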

\begin{proof}
Define
\begin{equation}
\alpha\coloneqq\inf\limits_{\substack{w\in \bS_{d}\\\forall k \in[n], \langle w, x_k\rangle\geq 0}}\frac{1}{n}\sum_{k=1}^n \langle w,x_k\rangle\label{eq:alpha}.
\end{equation}
First note that $\alpha$ is positive. It is defined as the infimum of a continuous function on a non-empty (thanks to \cref{ass:noconvergence}) compact set, thus its minimum is reached for some $w^* \in \bS_{d}$ such that $\langle w^*, x_k\rangle\geq 0$ for any $k$. Since the vectors $(x_k)_k$ span the whole space $\R^d$, the scalar product $\langle w^*, x_k\rangle$ is zero for all $k$ if and only if $w^*=0$. This necessarily implies that $\alpha > 0$.

\medskip

Now, the decreasing of the loss over time yields for any $t\geq 0$:
\begin{align*}
\sum_{k=1}^n (h_{\theta^t}(x_k)-y_k)^2 &\leq \sum_{k=1}^n (h_{\theta^0}(x_k)-y_k)^2 \\
&= \sum_{k=1}^n y_k^2 + \bigO{\lambda^2}.
\end{align*}
The triangle inequality then implies
\begin{align*}
\left(\sqrt{\sum_{k=1}^n h_{\theta^t}(x_k)^2}-\sqrt{\sum_{k=1}^n y_k^2}\right)^2&=\bigO{1}.
\end{align*}
Moreover, note that by comparison between $1$ and $2$ norms:
\begin{align*}
\sqrt{n \sum_{k=1}^n h_{\theta^t}(x_k)^2} & \geq \sum_{k=1}^n h_{\theta^t}(x_k) \\
& = \sum_{k=1}^n \sum_{i\in \cI} a_i^t \langle w_i^t, x_k \rangle + \sum_{k=1}^n \sum_{i\in N} a_i^t \langle w_i^t, x_k \rangle_+ \\
&\geq n\alpha\sum_{i\in \cI} a_i^t \|w_i^t\|  - \bigO{\lambda^{2(1-\varepsilon)}}.
\end{align*}
The last inequality comes from the fact that for $t\in[\tau_2,\tau_3]$, for any $i\in \cI$ and $k\in [n]$, $\langle w_i^t, x_k \rangle \geq 0$; which allows to bound using $\alpha$ defined above.

Moreover, using the balancedness property (\cref{lemma:balanced}), $\|w_i^t\| \geq a_i^t - a_i^0$, which gives
\begin{align*}
\sum_{i\in \cI} a_i^t \|w_i^t\|  & \geq \sum_{i\in \cI} (a_i^t)^2 - \sum_{i\in \cI} a_i^t a_i^0 \\
&\geq \sum_{i\in \cI} (a_i^t)^2 - \sqrt{\sum_{i\in \cI} (a_i^0)^2}\sqrt{\sum_{i\in \cI} (a_i^t)^2} \\
&\geq \sum_{i\in \cI} (a_i^t)^2 - \lambda\sqrt{\sum_{i\in \cI} (a_i^t)^2}.
\end{align*}
Wrapping up the different inequalities, we finally have for  $t\in[\tau_2,\tau_3]$:
\begin{equation*}
\sum_{i\in \cI} (a_i^t)^2 - \lambda\sqrt{\sum_{i\in \cI} (a_i^t)^2} =\bigO{1}.
\end{equation*}
This then implies that $\sum_{i\in \cI} (a_i^t)^2 = \bigO{1}$.
\end{proof}

\begin{lem}\label{lemma:phase2bangle}
For small enough $\tilde{\lambda}=\Theta(1)$ and $\delta'=\Theta(1)$, if $\lambda<\tilde{\lambda}$, then for any $t\in [\tau_2,\tau_3]$,
\begin{gather*}
\forall i\in \cI, \forall k\in[n], \langle \w_i^t, x_k \rangle >\delta'\|x_k\|.
\end{gather*}\end{lem}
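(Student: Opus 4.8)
The plan is to run a barrier (first-exit) argument for the scalars $\langle\w_i^t,x_k\rangle$, $i\in\cI$, $k\in[n]$, over $[\tau_2,\tau_3]$. On this whole interval the definition of $\tau_3$ (in the proof of \cref{lemma:phase2b}) together with \cref{lemma:boundA} guarantees that the positive neurons are pairwise aligned, $\langle\w_i^t,\w_j^t\rangle>1-\lambda^{\varepsilon}$ (hence $\|\w_i^t-\w_j^t\|=\bigO{\lambda^{\varepsilon/2}}$), that $\sum_{i\in\cI}(a_i^t)^2=\bigO{1}$, that $\sum_{i\in\cN}(a_i^t)^2<\lambda^{2(1-\varepsilon)}$, and that every positive neuron is activated along every data point; for such a neuron $\D_i^t=\{D^t\}$ with the exact identity $D^t=\frac1n\sum_{k'}s_{k'}^tx_{k'}$, where $s_{k'}^t:=y_{k'}-h_{\theta^t}(x_{k'})$. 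Monotonicity of the loss gives $\sum_{k'}(s_{k'}^t)^2\le\sum_{k'}(y_{k'}-h_{\theta^0}(x_{k'}))^2=\sum_{k'}y_{k'}^2+\bigO{\lambda^2}$ and $\|D^t\|=\bigO{1}$, uniformly on $[\tau_2,\tau_3]$.

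The heart is a push-back estimate at the boundary of the positive cone. Fix a data-dependent threshold $\mu_1=\Theta(1)$, small enough that $\mu_1$ is below $\tfrac12\min_k\langle\w_i^{\tau_2},x_k\rangle/\|x_k\|$ (which is $\Omega(1)$ by \cref{lemma:phase2a}) and below suitable fixed multiples of $\beta$ and of $\min_ky_k$, where $\beta>0$ is the constant of \cref{eq:beta}. I claim: for $i\in\cI$, $k\in\cK$ and any $t\in[\tau_2,\tau_3]$ with $\langle\w_i^t,x_k\rangle\le2\mu_1\|x_k\|$, one has $\frac{\df\langle\w_i^t,x_k\rangle}{\df t}\ge\beta/2>0$. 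Indeed, since the negative neurons only lower $h_{\theta^t}$, alignment and $\sum_{i\in\cI}(a_i^t)^2=\bigO1$ give $h_{\theta^t}(x_k)\le\sum_{j\in\cI}(a_j^t)^2\langle\w_j^t,x_k\rangle=\bigO{\mu_1+\lambda^{\varepsilon/2}}$, hence $s_k^t\ge y_k-\bigO{\mu_1+\lambda^{\varepsilon/2}}$; then Cauchy--Schwarz and the residual bound yield
\[
n\langle D^t,x_k\rangle = s_k^t\|x_k\|^2+\sum_{k'\neq k}s_{k'}^t\langle x_{k'},x_k\rangle \ \ge\ y_k\|x_k\|^2-\sqrt{\textstyle\sum_{k'}y_{k'}^2}\,\sqrt{\textstyle\sum_{k'\neq k}\langle x_{k'},x_k\rangle^2}-\bigO{\mu_1+\lambda^{\varepsilon/2}},
\]
and by \cref{ass:cK} the first two terms equal $n\beta_k\ge n\beta$; since the term $\langle\w_i^t,D^t\rangle\langle\w_i^t,x_k\rangle$ in \cref{eq:ODEdirection} is $\bigO{\mu_1}$ in absolute value (as $\|D^t\|=\bigO1$), the claim follows once $\mu_1$ and $\tilde\lambda$ are small enough.

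With this in hand, starting from $\langle\w_i^{\tau_2},x_k\rangle>2\mu_1\|x_k\|$ the push-back forbids $\langle\w_i^t,x_k\rangle$ from ever dropping below $2\mu_1\|x_k\|$ on $[\tau_2,\tau_3]$ for $k\in\cK$: if at some time it took a value $<2\mu_1\|x_k\|$, integrating the positive lower bound on the derivative from the last preceding time it equalled $2\mu_1\|x_k\|$ produces a contradiction. Consequently $\|\w_i^t\|\ge2\mu_1$, and since the facets of the positive cone are indexed by a subset of $\cK$, the unit vector $\w_i^t/\|\w_i^t\|$ remains in a fixed compact subset of the cone's interior; a compactness argument then bounds $\langle\w_i^t,x_k\rangle/\|x_k\|$ below by a positive constant for the remaining $k\notin\cK$ as well. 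Taking $\delta'=\Theta(1)$ below all these constants proves the lemma. The main obstacle is precisely the push-back estimate: it requires $\langle D^t,x_k\rangle$ to stay bounded below by a \emph{fixed} positive constant throughout the entire norm-growth phase whenever $\w_i^t$ approaches the $x_k$-facet, which is exactly what \cref{ass:cK} encodes — but this only becomes usable after the loss-monotonicity bound caps the off-diagonal Cauchy--Schwarz term by $\sqrt{\sum_{k'}y_{k'}^2}$ at \emph{all} times (not just at $\tau_2$) and the pairwise alignment of the $\cI$-neurons forces $s_k^t\approx y_k$ on the facet; arranging that the $\bigO{\lambda^{\varepsilon/2}}$ alignment errors and $\bigO{\lambda^{2(1-\varepsilon)}}$ negative-neuron contributions stay dominated by the $\Theta(1)$ margin $\beta$ is the remaining bookkeeping.
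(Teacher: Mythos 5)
Your proposal is correct and follows the same core strategy as the paper: a barrier (first-exit) argument on $\langle\w_i^t,x_k\rangle$, with the push-back coming from the constant $\beta>0$ of \cref{eq:beta} (i.e.\ \cref{ass:cK}), loss monotonicity to cap the off-diagonal Cauchy--Schwarz term, and pairwise alignment of the positive neurons to show $h_{\theta^t}(x_k)$ is small near a facet. The one place you diverge is in how the indices outside $\cK$ are handled. You run the barrier only for $k\in\cK$, then argue that $\w_i^t$ is trapped in the cone's interior (since an exit would pass through a facet indexed by $\cK$) and invoke compactness of the buffered spherical slice to extract a uniform positive lower bound for $k\notin\cK$. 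The paper instead introduces $\cK_{\delta'}$ and observes that at the \emph{first} time the barrier $\langle\w_i^t,x_k/\|x_k\|\rangle=\delta'$ is hit, the vector $\w_i^t$ itself certifies $k\in\cK_{\delta'}\subset\cK$, so the differential inequality only needs to be checked for $k\in\cK_{\delta'}$ and no separate treatment of $k\notin\cK$ is required. Both routes are sound; the paper's is marginally more economical since it avoids the extra topological/compactness step. Two small points to tighten in a write-up: your push-back estimate should be claimed only for $\langle\w_i^t,x_k\rangle$ in a neighborhood of $2\mu_1\|x_k\|$ rather than for all values $\le 2\mu_1\|x_k\|$ (the term $-\langle\w_i^t,D^t\rangle\langle\w_i^t,x_k\rangle$ is not $\bigO{\mu_1}$ if $\langle\w_i^t,x_k\rangle$ is allowed to be very negative, though this is harmless since the barrier argument only uses the bound at the threshold); and the inequality $h_{\theta^t}(x_k)\le\sum_{j\in\cI}(a_j^t)^2\langle\w_j^t,x_k\rangle$ should really carry a positive part on $\langle\w_j^t,x_k\rangle$, which again does not affect the conclusion.
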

\begin{proof}
Let 
\begin{equation*}
\cK_{\delta'} \coloneqq \left\{ k\in[n] \mid \exists v\in \R^d, \langle v,\frac{x_k}{\|x_k\|} \rangle =\delta' \text{ and } \forall k'\in[n],\langle v,\frac{x_{k'}}{\|x_{k'}\|} \rangle \geq \delta' \right\}.
\end{equation*}

By continuity, the condition $\langle \w_i^t, \frac{x_k}{\|x_k\|} \rangle>\delta'$ would necessarily break for some $k\in \cK_{\delta'}$ first. It is thus sufficient to show \cref{lemma:phase2bangle} for any $k\in \cK_{\delta'}$. 
Since for all $i,j\in \cI$ and $t\in[\tau_2,\tau_3]$, $\langle \w_i^t, \w_j^t \rangle \geq 1-\lambda^{\varepsilon}$ by definition of $\tau_3$, we can use a decomposition similar to \cref{lemma:wixktau}:
\begin{gather*}
\w_j^t = \alpha_{ij}^t \w_i^t + v_{ij}^t \quad \text{where } \alpha_{ij}^t = 1+\bigO{\lambda^{\varepsilon}},\\
v_{ij}^t\perp \w_i^t \quad \text{ and } \quad \|v_{ij}^t\|=\bigO{\lambda^{\varepsilon}},
\end{gather*}
where we used the fact that $\|\w_i^t\|\geq 1-\lambda^{\varepsilon}$.
Using this decomposition, we can write
\begin{align*}
h_{\theta^t}(x_k)&=\sum_{j\in \cI}\alpha_{ij}^t (a_j^t)^2 \langle \w_i^t, x_k\rangle + \sum_{j\in \cI} (a_j^t)^2 \langle v_{ij}^t , x_k \rangle +\sum_{i\in \cN} a_i^t \langle w_i^t, x_k \rangle_+ \\
&\leq A\langle \w_i^t, x_k\rangle + \bigO{\lambda^{\varepsilon}},
\end{align*}
where $A=\Theta(1)$ is a constant bounding $\sum_{j\in \cI} (a_j^t)^2 $, thanks to \cref{lemma:boundA}. From there,
\begin{align*}
\langle D^t, x_k \rangle &=\frac{1}{n}\sum_{k'\neq k} \left( y_{k'} -h_{\theta^t}(x_{k'}) \right) \langle x_{k'}, x_{k}\rangle + \frac{1}{n} (y_k -h_{\theta^t}(x_k))\|x_k\|^2 \\
&\geq -\sqrt{\frac{1}{n}\sum_{k'=1}^n \left( y_{k'} -h_{\theta^t}(x_{k'})\right)^2}\sqrt{\frac{1}{n}\sum_{k'\neq k}  \langle x_{k'}, x_{k}\rangle^2} +\frac{y_k}{n}\|x_k\|^2 - \frac{A}{n}\langle \w_i^t,x \rangle - \bigO{\lambda^{\varepsilon}}.
\end{align*}
Using the non-increasing property of gradient flow, 
\begin{align*}
\frac{1}{n}\sum_{k'=1}^n \left( y_{k'} -h_{\theta^t}(x_{k'})\right)^2&\leq \frac{1}{n}\sum_{k'=1}^n \left( y_{k'} -h_{\theta^0}(x_{k'})\right)^2 \\
&=\frac{1}{n}\sum_{k'=1}^n y_{k}^2 + \bigO{\lambda^2}.
\end{align*}
Note that $ \cK_{\delta'} \subset \cK$, using the correspondence $w=v - \delta'\frac{x_k}{\|x_k\|}$ in the definition of both sets.\footnote{The case $w=\mathbf{0}$ is a particular case, which implies $v= \delta'\frac{x_k}{\|x_k\|}$ and so all the $x_k$ would be aligned, which contradicts the fact that $X$ is full rank.} Thanks to \cref{ass:cK}, the constant $\beta$ defined in \cref{eq:beta} is positive. 
This then implies for any $k\in \cK_{\delta'}$ and $t\in[\tau_2, \tau_3]$:
\begin{equation*}
\langle D^t, x_k \rangle \geq \beta - \frac{A}{n}\langle \w_i^t,x \rangle - \bigO{\lambda^{\varepsilon}}.
\end{equation*}
Again by our choice of $\lambda$, it also comes by monotonicity of the loss
\begin{align*}
\|D^t\| & \leq \sqrt{\frac{1}{n}\sum_{k'=1}^n y_{k'}^2}\sqrt{\frac{1}{n}\sum_{k'=1}^n \|x_{k'}\|^2}+\bigO{\lambda}=\bar{D} = \Theta(1).
\end{align*}
From these last two inequalities, for any $k\in \cK_{\delta'}$ and $t\in[\tau_2,\tau_3]$, as long as $\langle \w_i^t, x_{k'}\rangle \geq \delta' \|x_{k'}\|$ for any $k'$
\begin{align*}
\frac{\df \langle \w_i^t, x_k\rangle}{\df t} & =  \langle D^t, x_{k}\rangle - \langle \w_i^t, D^t\rangle \langle \w_i^t, x_k\rangle\\
& \geq \beta -  (\frac{A}{n}+\bar{D})\langle \w_i^t, x_k\rangle- \bigO{\lambda^{\varepsilon}}\\
& \geq  \beta - \bigO{\langle \w_i^t, x_k\rangle}- \bigO{\lambda^{\varepsilon}}.
\end{align*}
Now note that for small enough $\tilde\lambda=\Theta(1)$, we can choose $\delta'=\Omega(1)$ small enough, so that $\beta-(\frac{A}{n}+\bar{D})\delta'\|x_k\| -\bigO{\lambda^{\varepsilon}} >0$ and that $\langle \w_i^{\tau_2}, x_k\rangle>\delta'\|x_k\|$, thanks to \cref{lemma:phase2a}. This necessarily implies that $\langle \w_i^{t}, x_k\rangle> \delta'\|x_k\|$ for any $t\in[\tau_2, \tau_3]$, as $\langle \w_i^t,x_k \rangle$ would be increasing before crossing the $\langle \w_i^{t}, x_k\rangle = \delta'\|x_k\|$ threshold.
\end{proof}

\begin{lem}\label{lemma:phase2bnorm}
There exist $\varepsilon^*_2=\Theta(1)$ and $\tilde\varepsilon^*_2=\Theta(1)$ small enough such that for $\varepsilon_2$ satisfying the conditions of \cref{lemma:phase2a,lemma:phase2b} and any $t\in [\tau_2,\tau_3]$,
\begin{gather*}
\sum_{i\in \cI} (a_i^t)^2 \geq \varepsilon_2.
\end{gather*}
\end{lem}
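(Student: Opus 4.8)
The plan is a stopping-time argument on the window $[\tau_2,\tau_3]$. Define $\tau_2' \coloneqq \inf\big\{ t\geq\tau_2 \mid \sum_{i\in\cI}(a_i^t)^2 < \varepsilon_2 \big\}$ and suppose, towards a contradiction, that $\tau_2' < \tau_3$. By \cref{lemma:phase2a} the quantity $t\mapsto\sum_{i\in\cI}(a_i^t)^2$ equals $\varepsilon_2$ at $\tau_2$, it stays $\geq\varepsilon_2$ on $[\tau_2,\tau_2')$ by definition of the infimum, and it is continuous, so $\sum_{i\in\cI}(a_i^{\tau_2'})^2=\varepsilon_2$ while this quantity takes values $<\varepsilon_2$ arbitrarily close to the right of $\tau_2'$. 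It therefore suffices to prove that $\frac{\df}{\df t}\sum_{i\in\cI}(a_i^t)^2 > 0$ at every $t\in[\tau_2,\tau_3)$ for which $\sum_{i\in\cI}(a_i^t)^2 = \varepsilon_2$, which yields the sought contradiction at $\tau_2'$. On $[\tau_2,\tau_3)$ the conditions defining $\tau_3$ (introduced in the proof of \cref{lemma:phase2b}) supply all the structure needed: each $i\in\cI$ satisfies $\langle\w_i^t,x_k\rangle > \delta'\|x_k\|$ for every $k$ and $a_i^t>0$ (a Gr\"onwall argument on $\frac{\df a_i^t}{\df t}=a_i^t\langle\w_i^t,D^t\rangle$), hence $\langle w_i^t,x_k\rangle>0$; the negative neurons are negligible, $\sum_{i\in\cN}(a_i^t)^2<\lambda^{2(1-\varepsilon)}$; and the neurons outside $\cI\cup\cN$ never move and stay inactive (\cref{lemma:phase1}). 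In particular, with $\beta_{\cI}^t\coloneqq\sum_{i\in\cI}a_i^t w_i^t$, the ReLU acts linearly on the active neurons, so $h_{\theta^t}(x_k)=\langle\beta_{\cI}^t,x_k\rangle+\bigO{\lambda^{2(1-\varepsilon)}}$ uniformly in $k$ (with constant depending only on the data).

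Next I would compute the derivative. From \cref{eq:ODEs} and the fact that $\D_i^t=\{D^t\}$ for $i\in\cI$ (all data points strictly activated and $\gamma=0$), $\frac{\df}{\df t}\sum_{i\in\cI}(a_i^t)^2 = 2\sum_{i\in\cI}a_i^t\langle w_i^t,D^t\rangle = 2\langle\beta_{\cI}^t,D^t\rangle$. Substituting the linear approximation of $h_{\theta^t}$ into $D^t=-\frac1n\sum_k(h_{\theta^t}(x_k)-y_k)x_k$ and using $D^*=\frac1n X^\top y=H\beta^*$ with $H=\frac1n X^\top X$ yields $D^t=D^*-H\beta_{\cI}^t+\bigO{\lambda^{2(1-\varepsilon)}}$, so
\begin{equation*}
\frac{\df}{\df t}\sum_{i\in\cI}(a_i^t)^2 = 2\langle\beta_{\cI}^t,D^*\rangle - 2\langle\beta_{\cI}^t,H\beta_{\cI}^t\rangle + \bigO{\|\beta_{\cI}^t\|\,\lambda^{2(1-\varepsilon)}}.
\end{equation*}
I would then bound the three terms when $\sum_{i\in\cI}(a_i^t)^2=\varepsilon_2$. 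Balancedness (\cref{lemma:balanced}) with \cref{eq:balancedinit} gives $\|w_i^t\|\leq|a_i^t|$, hence $\|\beta_{\cI}^t\|\leq\sum_{i\in\cI}a_i^t\|w_i^t\|\leq\sum_{i\in\cI}(a_i^t)^2=\varepsilon_2$; consequently $\langle\beta_{\cI}^t,H\beta_{\cI}^t\rangle\leq\sigma_{\max}\varepsilon_2^2$, where $\sigma_{\max}$ is the largest eigenvalue of $H$, and the remainder is $\bigO{\varepsilon_2\lambda^{2(1-\varepsilon)}}$. For the main term, $\langle\beta_{\cI}^t,x_k\rangle=\sum_{i\in\cI}a_i^t\langle w_i^t,x_k\rangle>\delta'\|x_k\|\varepsilon_2$, so with $C_1\coloneqq\frac{\delta'}{n}\sum_k y_k\|x_k\|=\Theta(1)>0$ and $y_k>0$ we obtain $\langle\beta_{\cI}^t,D^*\rangle=\frac1n\sum_k y_k\langle\beta_{\cI}^t,x_k\rangle>C_1\varepsilon_2$.

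Putting these together, $\frac{\df}{\df t}\sum_{i\in\cI}(a_i^t)^2 \geq 2\varepsilon_2\big(C_1-\sigma_{\max}\varepsilon_2-\bigO{\lambda^{2(1-\varepsilon)}}\big)$, which is strictly positive as soon as $\varepsilon_2\leq\varepsilon_2^*$ for a small enough $\varepsilon_2^*=\Theta(1)$ (so that $\sigma_{\max}\varepsilon_2^*\leq C_1/4$) and $\lambda\leq\tilde\lambda$ for a small enough $\tilde\lambda=\Theta(1)$; here one uses $\varepsilon_2\geq c'\lambda^{\varepsilon/2}$ together with $\varepsilon<\frac13$, which make $\lambda^{2(1-\varepsilon)}=o(\varepsilon_2)$. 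This contradicts $\sum_{i\in\cI}(a_i^t)^2$ falling below $\varepsilon_2$ immediately to the right of $\tau_2'$, so $\tau_2'\geq\tau_3$ and the lemma follows. I expect the only real work to be bookkeeping: verifying that the perturbations contributed by the negative and frozen neurons are uniformly $\bigO{\lambda^{2(1-\varepsilon)}}$ across all $x_k$, and choosing $\varepsilon_2^*$, $\tilde\varepsilon_2^*$, $\tilde\lambda$ compatibly with the constants already fixed in \cref{lemma:phase1,lemma:phase2a,lemma:phase2b}. There is no circularity, since the estimate is applied strictly inside $[\tau_2,\tau_3)$, where the stopping conditions defining $\tau_3$ are in force.
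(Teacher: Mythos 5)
Your proof is correct and reaches the conclusion, but it takes a genuinely different and more quantitative route than the paper's. The paper's argument is shorter and more elementary: it chooses $\varepsilon_2^*$ so that $\sum_{i\in\cI}(a_i^t)^2 \leq 2\varepsilon_2$ forces $h_{\theta^t}(x_k) < y_k$ for every $k$ (using $\|w_i^t\|\leq|a_i^t|$ and the smallness of the negative neurons built into the definition of $\tau_3$), then writes
\[
\frac{\df}{\df t}\sum_{i\in\cI}(a_i^t)^2 = \frac{2}{n}\sum_{i\in\cI}(a_i^t)^2\sum_{k=1}^n\langle\w_i^t,x_k\rangle\big(y_k - h_{\theta^t}(x_k)\big)
\]
and simply observes that, by \cref{lemma:phase2bangle}, every summand is nonnegative --- no control of $D^t$, of $\sigma_{\max}(H)$, or of $\|\beta_{\cI}^t\|$ is needed. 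You instead pass through the linear-regression decomposition $D^t = D^* - H\beta_{\cI}^t + \bigO{\lambda^{2(1-\varepsilon)}}$ and pit the linear term $\langle\beta_{\cI}^t, D^*\rangle \gtrsim \varepsilon_2$ against the quadratic $\langle\beta_{\cI}^t, H\beta_{\cI}^t\rangle\lesssim \varepsilon_2^2$. Both work; the paper's sign argument is leaner, while yours makes the escape rate near the origin explicit (a quantity the paper never needs at this step).

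One small technical point to tighten in your write-up: you establish positivity of the derivative only on the level set $\{\sum_{i\in\cI}(a_i^t)^2 = \varepsilon_2\}$. The trajectory is merely absolutely continuous, so the derivative need not exist pointwise at $\tau_2'$, and strict positivity \emph{at} $\tau_2'$ alone does not preclude the quantity from falling below $\varepsilon_2$; you need the sign of $\frac{\df}{\df t}\sum_{i\in\cI}(a_i^t)^2$ almost everywhere on a one-sided neighbourhood of $\tau_2'$, i.e.\ on a slab such as $\{\sum_{i\in\cI}(a_i^t)^2 \leq 2\varepsilon_2\}$ (which is precisely why the paper takes $2\varepsilon_2$ as the buffer). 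Your estimates in fact extend verbatim to $\sum_{i\in\cI}(a_i^t)^2 = s \leq \varepsilon_2$ --- they give $\langle\beta_{\cI}^t,D^*\rangle > C_1 s$ and $\langle\beta_{\cI}^t,H\beta_{\cI}^t\rangle \leq \sigma_{\max}s\varepsilon_2$, hence derivative $\geq 2s\big(C_1 - \sigma_{\max}\varepsilon_2 - \bigO{\lambda^{2(1-\varepsilon)}}\big) > 0$ --- so the fix is only a matter of stating the bound on the slab rather than on the boundary.
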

\begin{proof}
Note that we can chose $\varepsilon^*_2$ small enough, so that
\begin{equation}\label{eq:varepsilon2}
\sum_{i\in \cI} (a_i^t)^2 \leq 2 \varepsilon_2 \implies \forall k\in [n], y_k > h_{\theta^t}(x_k).
\end{equation}
Moreover we have for any $t\in[\tau_2,\tau_3]$ the ODE
\begin{align*}
\frac{\df \sum_{i\in \cI} (a_i^t)^2}{\df t}  &= 2\sum_{i\in \cI}(a_i^t)^2 \langle \w_i^t, D^t\rangle \\
 &= \frac{2}{n}\sum_{i\in \cI}(a_i^t)^2 \left(\sum_{k=1}^n \langle \w_i^t, x_k\rangle \left(y_k-h_{\theta^t}(x_k)\right) \right).
\end{align*}
Thanks to \cref{lemma:phase2bangle}, all the terms $\langle \w_i^t, x_k\rangle$ in the sum are positive for $t\in[\tau_2,\tau_3]$. Thanks to \cref{eq:varepsilon2}, this sum is thus positive as soon as $\sum_{i\in I}(a_i^t)^2 \leq 2\varepsilon_2$. Thus during this phase, $\sum_{i\in \cI}(a_i^t)^2$ is increasing as soon as it is smaller than $2\varepsilon_2$. By continuity and since it is exactly $\varepsilon_2$ at the beginning of the phase, it is always at least $\varepsilon_2$ during this phase.
\end{proof}

\subsection{Phase 3}\label{app:phase3}
Define for this proof
\begin{gather*}
\beta^t \coloneqq \sum_{i\in \cI_t} a_i^t w_i^t
\text{ and } R_t \coloneqq \frac{1}{2} \sum_{i \in \cN_t} \|w_i^t\|^2 \\
\text{where } \cI_t \coloneqq  \left\{ i\in[m] \Big| \exists k \in [n], \langle w_i^t, x_k \rangle > 0 \right\}\\
\text{and } \cN_t \coloneqq \left\{ i\in[m] \Big| \exists k,k' \in [n] \text{ s.t. } \langle w_i^t, x_k \rangle > 0 \text{ and } \langle w_i^t, x_{k'} \rangle < 0 \right\}.
\end{gather*}

Note the $\beta^t$ differs from $\beta^t_{\cI}$ in the proof of \cref{lemma:phase2b}, as it does not only count the neurons in $\cI$, but the neurons in $\cI_t\supset I$.

\begin{lem}\label{lemma:hphase3}
For any $t\in[\tau_3, \tau_4)$ and $x\in\R^2$, 
\begin{equation*}
|h_{\theta^t}(x)-\langle\beta^*,x\rangle_+| = \bigO{(\varepsilon_3+\varepsilon_4)\|x\|}.
\end{equation*}
\end{lem}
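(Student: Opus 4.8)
The plan is to evaluate $h_{\theta^t}(x_k)$ directly by splitting the sum over neurons into the three groups of \cref{lemma:phase1} --- the positive neurons $\cI$, the negative neurons $\cN$, and the remaining ones --- and to show that the $\cI$-block reproduces $\langle\beta^*,x_k\rangle$ up to an error $\bigO{\varepsilon_3+\varepsilon_4}$, while the other two blocks are negligible.

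First I would use the definition of $\tau_4$: on $[\tau_3,\tau_4)$ one has $\|\theta^t-\theta^{\tau_3}\|_2<\varepsilon_4$ and $\langle\w_i^t,x_k\rangle>\delta_4\|x_k\|>0$ for all $i\in\cI$, hence each neuron of $\cI$ acts linearly and $\sum_{i\in\cI}a_i^t\sigma(\langle w_i^t,x_k\rangle)=\langle\beta_\cI^t,x_k\rangle$ with $\beta_\cI^t\coloneqq\sum_{i\in\cI}a_i^t w_i^t$. The neurons outside $\cI\cup\cN$ never move (\cref{lemma:phase1}(3)) and are a.s.\ deactivated at every $x_k$, so they contribute $0$. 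Using balancedness (\cref{lemma:balanced}), which gives $\|w_i^t\|\le|a_i^t|$, the $\cN$-block is bounded by $\|x_k\|\sum_{i\in\cN}(a_i^t)^2$, and $\sum_{i\in\cN}(a_i^t)^2\le 2\sum_{i\in\cN}(a_i^{\tau_3})^2+2\|\theta^t-\theta^{\tau_3}\|_2^2=\bigO{\lambda^{2-2\varepsilon}+\varepsilon_4^2}=\bigO{\varepsilon_4}$ by \cref{lemma:phase2b}(3) and $\varepsilon_4=\Theta(1)$; so this block is $\bigO{\varepsilon_4}$.

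The core step is the estimate $\|\beta_\cI^t-\beta^*\|=\bigO{\varepsilon_3+\varepsilon_4}$. I would write $\beta_\cI^t-\beta_\cI^{\tau_3}=\sum_{i\in\cI}(a_i^t-a_i^{\tau_3})w_i^t+\sum_{i\in\cI}a_i^{\tau_3}(w_i^t-w_i^{\tau_3})$ and bound each term by Cauchy--Schwarz, using $\|w_i^t\|\le|a_i^t|$ and $\|\theta^t-\theta^{\tau_3}\|_2<\varepsilon_4$; together with $\|\beta_\cI^{\tau_3}-\beta^*\|\le\varepsilon_3$ from \cref{lemma:phase2b}(1) this gives the claim, after which $|h_{\theta^t}(x_k)-\langle\beta^*,x_k\rangle|\le\|\beta_\cI^t-\beta^*\|\,\|x_k\|+\bigO{\varepsilon_4}=\bigO{\varepsilon_3+\varepsilon_4}$ since $\|x_k\|=\bigO{1}$.

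The one ingredient that needs care is the uniform bound $\sum_{i\in\cI}(a_i^t)^2=\bigO{1}$ on $[\tau_3,\tau_4)$ used in the Cauchy--Schwarz step. At $t=\tau_3$ this is \cref{lemma:boundA} (by continuity); and either one re-runs that lemma's argument verbatim on $[\tau_3,\tau_4)$ --- it only invokes monotonicity of the loss, positive correlation of the $\cI$-neurons with all $x_k$, and smallness of the negative neurons, all of which hold there (the last by the previous paragraph) --- or one simply uses $(a_i^t)^2\le 2(a_i^{\tau_3})^2+2(a_i^t-a_i^{\tau_3})^2$ summed over $\cI$, which gives $\sum_{i\in\cI}(a_i^t)^2\le 2\sum_{i\in\cI}(a_i^{\tau_3})^2+2\varepsilon_4^2=\bigO{1}$ directly from \cref{lemma:boundA} at $\tau_3$. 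So this is the main, but ultimately minor, obstacle.
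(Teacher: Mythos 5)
Your proof is correct and rests on exactly the same ingredients as the paper's: the Phase~2b bound $\|\beta_\cI^{\tau_3}-\beta^*\|\leq\varepsilon_3$ at time $\tau_3$, the constraint $\|\theta^t-\theta^{\tau_3}\|_2<\varepsilon_4$ from the definition of $\tau_4$, Cauchy--Schwarz together with balancedness, and the uniform bound $\sum_i(a_i^t)^2=\bigO{1}$ from \cref{lemma:boundA}. The only cosmetic difference is that you split $h_{\theta^t}(x_k)$ into the three neuron blocks ($\cI$, $\cN$, rest) and argue each block separately, while the paper bounds the drift $|h_{\theta^t}(x_k)-h_{\theta^{\tau_3}}(x_k)|$ in one shot via $\sum_i\|a_i^t w_i^t-a_i^{\tau_3}w_i^{\tau_3}\|\,\|x_k\|$, which avoids any activation-pattern bookkeeping because the per-neuron map $(a,w)\mapsto a\,\sigma(\langle w,x_k\rangle)$ is controlled directly by $\|a\,w\|$ when the sign of $a$ is fixed; both decompositions lead to the same estimate.
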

\begin{proof}
Thanks to \cref{lemma:phase2b},
\begin{equation*}
|h_{\theta^{\tau_3}}(x)-\langle\beta^*,x\rangle_+| =\bigO{(\varepsilon_3  + \lambda^{\varepsilon})\|x\|}.
\end{equation*}
Moreover for $x\neq\mathbf{0}$,
\begin{align*}
\frac{1}{\|x\|}|h_{\theta^{t}}(x)-h_{\theta^{\tau_3}}(x)| & \leq \sum_{i=1}^{m} \|a_i^t w_i^t - a_i^{\tau_3}w_i^{\tau_3}\|_2 \\
& \leq \sum_{i=1}^{m} |a_i^{\tau_3}| \|w_i^t - w_i^{\tau_3}\|_2 + |a_i^t- a_i^{\tau_3}| \|w_i^{t}\|_2 \\
&\leq \sum_{i=1}^{m} |a_i^{\tau_3}| \|w_i^t - w_i^{\tau_3}\|_2 + |a_i^t- a_i^{\tau_3}| |a_i^t| \\
&\leq 2 \max\left(\sqrt{\sum_{i=1}^{m} (a_i^{\tau_3})^2},\sqrt{\sum_{i=1}^{m} (a_i^{t})^2} \right)\|\theta^t-\theta^{\tau_3}\|_2 \\
&\leq 2\left(\sqrt{\sum_{i=1}^{m} (a_i^{\tau_3})^2}+\varepsilon_4\right)\varepsilon_4.
\end{align*}
\cref{lemma:boundA} yields
\begin{equation*}
\sum_{i=1}^{m} (a_i^{\tau_3})^2 = \bigO{1},
\end{equation*}
so that 
\begin{align*}
|h_{\theta^{t}}(x_k)-h_{\theta^{\tau_3}}(x_k)| = \bigO{\varepsilon_3+\varepsilon_4+\lambda^{\varepsilon}}.
\end{align*}
This finally  yields \cref{lemma:hphase3} as $\lambda^{\varepsilon}=\bigO{\varepsilon_4}$.
\end{proof}

Note that for any $k\in[n]$, $\langle\beta^*,x_k\rangle_+=\langle\beta^*,x_K\rangle$ in our example.

Consider in the following some $\delta'_4>0$ such that \cref{ass:phase3} holds with $\delta_u\geq \delta'_4$ for any considered $u$. We then define 
\begin{gather}\label{eq:alpha4}\alpha_4\coloneqq \inf_{\substack{u\in A(\R^d)\\\exists k, u_k=1\\
\exists k', u_{k'}=1}}\inf_{\substack{k\in\cK, u_k\neq 0\\ \exists w\in A^{-1}(u)\cap \bS_d, |\langle w,\frac{x_k}{\|x_k\|} \rangle|\leq \delta'_4}}\langle \tD_{u}^{\beta^*}, x_k\rangle.
\end{gather}
\cref{ass:phase3} directly yields $\alpha_4>0$ since the infimum is over a finite set. We then consider a small enough $\tdelta_4 = \Theta(\min(\delta'_4,\alpha_4))$ and define
\begin{equation}\label{eq:c4}
c_4 \coloneqq \inf_{\substack{u\in A(\R^d)\\\exists k, u_k=1\\
\exists k', u_{k'}=1}}\inf_{w\in A^{-1}_{\tdelta_4}(u)}\langle \tD_{u}^{\beta^*},\frac{w}{\|w\|}\rangle.
\end{equation}
Here again, \cref{ass:phase3} yields $c_4>0$. 
\begin{lem}\label{lemma:Rt}
Under \cref{ass:noconvergence,ass:cK,ass:phase3}, if $\lambda,\varepsilon_3,\varepsilon_4$ and $\delta_4$ satisfy the conditions of \cref{lemma:phase3} for small enough constants $\tilde\lambda,\varepsilon_3^*,\varepsilon_4^*,\delta^*_4$, then  
\begin{equation*}
R_t \leq  \bigO{\lambda^{2(1-\varepsilon)} e^{-c_4(t-\tau_3)}} \text{ for any }t \in[\tau_3, \tau_4).
\end{equation*}
\end{lem}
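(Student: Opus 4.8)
I would prove \cref{lemma:Rt} by a continuous–induction / Grönwall argument on $R_t$ over $[\tau_3,\tau_4)$, combining the balancedness of negative neurons with the structural information on $\tD_u^{\beta^*}$ supplied by \cref{ass:phase3}. The overall target is a differential inequality of the form $\tfrac{\mathrm d^+}{\mathrm dt}R_t\le -c_4 R_t$ up to negligible corrections; together with the initial bound $R_{\tau_3}=\bigO{\lambda^{2(1-\varepsilon)}}$ — which follows from the third point of \cref{lemma:phase2b} since $\|w_i^{\tau_3}\|\le |a_i^{\tau_3}|$ by \cref{lemma:balanced} and $\sum_{i\in\cN}(a_i^{\tau_3})^2\le\lambda^{2(1-\varepsilon)}$ by \cref{eq:initialisation} — a Grönwall comparison then gives the claim.

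First I would identify the neurons that can lie in $\cN_t$. The neurons of $\cI$ stay in the all-positive activation cone throughout $[\tau_3,\tau_4)$ by the very definition of $\tau_4$, hence never belong to $\cN_t$; the neurons outside $\cI\cup\cN$ never move and are deactivated with every data point (third point of \cref{lemma:phase1}), hence never belong to $\cN_t$ either. Therefore $\cN_t\subseteq\cN$, so $a_i^t<0$ for $i\in\cN_t$ and, by \cref{lemma:balanced} together with \cref{eq:balancedinit}, $-a_i^t\ge\|w_i^t\|$. Moreover, a negative neuron is never in the all-positive cone on $[\tau_3,\tau_4)$: during early alignment (and the following phases, where its norm stays $\bigO{\lambda^{1-\varepsilon}}$) its weight is driven away from that cone, since the relevant subgradient keeps $\langle w_i^t,D_i^t\rangle\ge 0$ and $a_i^t<0$; consequently the activation pattern of a negative neuron is always either entirely $\le 0$ (in which case $\langle w_i^t,D_i^t\rangle=0$ and $\|w_i^t\|$ is frozen) or mixed.

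Next, for $i\in\cN_t$ with $u=A(w_i^t)$ mixed, $\tfrac12\tfrac{\mathrm d\|w_i^t\|^2}{\mathrm dt}=a_i^t\langle w_i^t,D_i^t\rangle$, and since $\langle w_i^t,D_i^t\rangle$ does not depend on the subgradient choice on the $u_k=0$ coordinates, \cref{lemma:hphase3} gives $\langle w_i^t,D_i^t\rangle=\langle w_i^t,\tD_u^{\beta^*}\rangle+\bigO{(\varepsilon_3+\varepsilon_4)\|w_i^t\|}$. I then split into two regimes. If $w_i^t\in A^{-1}_{\tdelta_4}(u)$, \cref{eq:c4} yields $\langle w_i^t,\tD_u^{\beta^*}\rangle\ge c_4\|w_i^t\|$, so for $\varepsilon_3,\varepsilon_4$ small enough relative to $c_4$ and using $a_i^t\le-\|w_i^t\|$ we obtain $\tfrac{\mathrm d\|w_i^t\|^2}{\mathrm dt}\le -c_4\|w_i^t\|^2$. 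If $w_i^t\notin A^{-1}_{\tdelta_4}(u)$, then the first point of \cref{ass:phase3} together with \cref{eq:alpha4} supplies a coordinate $k\in\cK$ with $u_k\ne 0$, $|\langle\w_i^t,x_k/\|x_k\|\rangle|\le\delta_4'$ and $u_k\langle D_i^t,x_k\rangle\ge\alpha_4-\bigO{\varepsilon_3+\varepsilon_4}>0$; since $a_i^t<0$ this forces $u_k\langle w_i^t,x_k\rangle$ to decrease, i.e. the neuron is pushed towards the boundary of its cone and leaves it within time $\bigO{1/\alpha_4}$ (never into the all-positive cone, by the previous paragraph). During such a ``shallow'' passage we only control $\langle w_i^t,\tD_u^{\beta^*}\rangle\ge-\bigO{(\varepsilon_3+\varepsilon_4)\|w_i^t\|}$, so $\|w_i^t\|^2$ can grow, but only by a factor $1+\bigO{\varepsilon_3+\varepsilon_4}$; since the activation pattern of a negative neuron evolves monotonically towards the all-negative cone, each neuron undergoes at most $3^n$ such passages, so the cumulative growth factor is still $1+\bigO{\varepsilon_3+\varepsilon_4}$.

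Combining the two regimes, along $[\tau_3,\tau_4)$ each $\|w_i^t\|^2$, $i\in\cN$, equals a bounded factor $1+\bigO{\varepsilon_3+\varepsilon_4}$ times $e^{-c_4\cdot(\text{time in deep regime})}$, with the time not spent in deep regimes being $\bigO{1}$ per neuron; hence $\|w_i^t\|^2=\bigO{\|w_i^{\tau_3}\|^2 e^{-c_4(t-\tau_3)}}$, and summing over $\cN$ and invoking the $\tau_3$ bound gives $R_t\le\bigO{\lambda^{2(1-\varepsilon)}e^{-c_4(t-\tau_3)}}$. The main obstacle is the bookkeeping for the time-varying set $\cN_t$: one has to make the ``push-out of the shallow layer'' and ``negative neurons avoid the all-positive cone'' statements rigorous at the (measure-zero) cone-crossing times, typically via a Dini-derivative argument, and this is exactly where the first point of \cref{ass:phase3} is indispensable.
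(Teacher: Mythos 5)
Your skeleton matches the paper's: control $R_{\tau_3}=\bigO{\lambda^{2(1-\varepsilon)}}$ from \cref{lemma:phase2b} and balancedness, then bound each $\|w_i^t\|^2$ for $i\in\cN_{\tau_3}$ via a ``deep'' versus ``shallow'' dichotomy in $A^{-1}_{\tdelta_4}(u)$, with exponential decay at rate $\Theta(c_4)$ in the deep regime and a bounded excursion in the shallow regime, and sum. Two points, however, are not established and differ from what the paper actually does. First, your claim that in the shallow regime $\langle w_i^t,\tD_u^{\beta^*}\rangle\ge-\bigO{(\varepsilon_3+\varepsilon_4)\|w_i^t\|}$, and hence that the growth factor per shallow passage is $1+\bigO{\varepsilon_3+\varepsilon_4}$, does not follow from \cref{ass:phase3}: the second item of that assumption only gives positivity of $\langle\tD_u^{\beta^*},w/\|w\|\rangle$ for $w\in A^{-1}_{\delta}(u)$, which is precisely the \emph{deep} regime, so outside it you only have the trivial bound $|\langle w_i^t,D_i^t\rangle|=\bigO{\|w_i^t\|}$. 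The paper uses exactly this trivial bound, $\frac{\df\|w_i^t\|^2}{\df t}=\bigO{\frac{|a_i^t|}{\|w_i^t\|}\|w_i^t\|^2}$, together with the push-out estimate $\int\frac{|a_i^s|}{\|w_i^s\|}\df s=\Theta(1)$ to conclude the cumulative shallow growth factor is a data-dependent constant $e^{\bigO{1}}$, which is all that is required. Second, your bookkeeping via ``the activation pattern evolves monotonically towards the all-negative cone'' and ``at most $3^n$ passages'' is heuristic: in the deep regime the direction drifts away from $\tD_u^{\beta^*}$, which is not obviously ``towards the all-negative cone''. The paper instead proves directly that $\cN_t$ is non-increasing on $[\tau_3,\tau_4)$ (a neuron that touches the boundary of $\cN_t$ is ejected and cannot re-enter), and that in the shallow regime $\max_{k\in\cK}\langle\tw_i^t,\frac{x_k}{\|x_k\|}\rangle$ decreases at a definite rate, so each $i\in\cN_{\tau_3}$ follows a single three-interval trajectory (deep, then shallow, then out), with no re-entries — avoiding the monotone-activation claim entirely. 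Neither issue is fatal, since in the end a constant number of shallow passages with a constant growth factor suffices, but the two intermediate claims you use need to be replaced by the weaker (and actually justifiable) ones above. You also do not need, and should not assert, that a negative neuron never visits the all-positive cone: the non-increasing property of $\cN_t$ already guarantees such a neuron never rejoins $\cN_t$, which is all \cref{lemma:Rt} needs.
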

Importantly in \cref{lemma:Rt}, the $\mathcal{O}$ hides a constant that only depends on the data (and not on $t$).
\begin{proof}
First not that for any $i\in \cN_t$, $a_i^t< 0$. Moreover, for $u_i^t = A(w_i^t)$ and $\tilde{D}_u^{\beta^*}$ defined by in \cref{eq:Dtilde}.
\begin{align*}
\frac{\df \|w_i^t\|^2}{\df t} & = 2a_i^t \langle D(w_i^t,\theta^t),w_i^t \rangle\\
&\leq 2a_i^t \langle \tD_{u_i^t}^{\beta^*},w_i^t \rangle - 2a_i^t \|w_i^t\| \|\tD_{u_i^t}^{\beta^*} - \tD_{u_i^t}(\theta^t)\|,
\end{align*}
where $\tD_{u_i^t}(\theta^t)=\frac{1}{n}\sum_{k=1}^n \iind{u_k=1}(y_k-h_{\theta^t}(x_k))x_k$.
Thanks to \cref{lemma:hphase3} for our choice $\lambda$,
%condition on $\varepsilon_3$, $\varepsilon_4$ and $\lambda$
\begin{align*}
\|\tD_{u_i^t}^{\beta^*} - \tD_{u_i^t}(\theta^t)\| =\bigO{\varepsilon_3+\varepsilon_4}
\end{align*}
The previous inequality becomes
\begin{align}\label{eq:decreaserate}
\frac{\df \|w_i^t\|^2}{\df t} & \leq 2a_i^t \langle \tD_{u_i^t}^{\beta^*},w_i^t \rangle- 2 a_i^t \|w_i^t\| \bigO{\varepsilon_3+\varepsilon_4}.
\end{align}
First, $i\in \cN_t$ implies that there are $k,k'$ such that $u_{i\ k}^t=-1$ and $u_{i\ k'}^t=1$. Moreover at time $t$, we have two possibilities:
\begin{itemize}
\item either $w_i^t\in A^{-1}_{\tdelta_4}(u_i^t)$;
\item or $w_i^t\not\in A^{-1}_{\tdelta_4}(u_i^t)$.
\end{itemize}
In the former, \cref{eq:decreaserate} yields for small enough $\varepsilon_3^*,\varepsilon_4^*=\Theta(1)$
%condition on $\varepsilon_3, \varepsilon_4$
\begin{align}
\frac{\df \|w_i^t\|^2}{\df t}&\leq 2c_4 a_i^t \|w_i^t\| - 2a_i^t\|w_i^t\| \bigO{\varepsilon_3+\varepsilon_4}\notag\\
&\leq -(2c_4-\bigO{\varepsilon_3+\varepsilon_4})\|w_i^t\|^2.\label{eq:decreasecase1}
\end{align}
In the latter case, by definition of $A^{-1}_{\tdelta_4}$, we either have $\langle \frac{w_i^t}{\|w_i^t\|}, \frac{x_k}{\|x_k\|} \rangle \leq \tdelta_4$ for all $k\in\cK$ or $-\langle \frac{w_i^t}{\|w_i^t\|}, \frac{x_k}{\|x_k\|} \rangle \leq \tdelta_4$ for all $k\in\cK$.
Assume in the following the first case and consider $k\in\cK$ such that $\langle \frac{w_i^t}{\|w_i^t\|}, \frac{x_k}{\|x_k\|} \rangle \leq \tdelta_4$. The symmetric case is dealt with similarly.

Denote\footnote{This can be defined, as $w_i^t\neq\mathbf{0}$ as long as $i\not\in N_t$.} in the following $\tw_i^t=\frac{w_i^t}{\|w_i^t\|}$. From there, it comes
\begin{align*}
\frac{\df \langle \tw_i^t , \frac{x_k}{\|x_k\|} \rangle}{\df t} & \in \frac{a_i^t}{\|w_i^t\|}\left(\langle \D_i^t, \frac{x_k}{\|x_k\|} \rangle - \langle \D_i^t,\tw_i^t \rangle \langle \tw_i^t, \frac{x_k}{\|x_k\|}\rangle\right). 
\end{align*}
Thanks to \cref{ass:phase3} and the definition of $\tdelta_4$
\begin{align*}
\frac{\df \langle \tw_i^t , \frac{x_k}{\|x_k\|} \rangle}{\df t} & \leq \frac{a_i^t}{\|w_i^t\|}\left(\alpha_4 - \bar{D}\tdelta_4 - \bigO{\varepsilon_3+\varepsilon_4}\right)\\
& \leq\frac{a_i^t}{\|w_i^t\|}\left( \alpha_4 - \bigO{\tdelta_4+\varepsilon_3+\varepsilon_4}\right)
\end{align*}
From there, we can choose $\varepsilon_3^*,\varepsilon_4^*$ and $\tilde{\delta}_4$ small enough (but still $\Theta(1)$)
so that $\langle \tw_i^t , \frac{x_k}{\|x_k\|} \rangle$ decreases at a rate $\frac{a_i^t}{\|w_i^t\|}\Theta(\alpha_4)$. This reasoning actually holds for any $k\in\argmax_{k\in\cK}\langle \tw_i^t , \frac{x_k}{\|x_k\|} \rangle$ as long as $i\in N_t$ and $w_i^t\not\in A^{-1}_{\delta_4}(u_i^t)$. 
As a consequence, if all $\langle \tw_i^t,\frac{x_k}{\|x_k\|} \rangle$ are smaller than $\delta_4$ with at least one of them positive for $k\in\cK$, then their maximal value (among $\cK$) decreases at the above rate, so that after a time at most $\Theta(1)$, all the values for $k\in\cK$ become non-negative for $k\in\cK$.
This would then also imply that all values $\langle \tw_i^t,\frac{x_k}{\|x_k\|} \rangle$ for $k\in[n]$ become non-negative. Indeed, if $\{k' \mid \langle w,x_{k'} \rangle>0\}$ is non-empty, we can, thanks to \cref{ass:noconvergence}, substract vectors of the form $\alpha_{k'}x_{k'}$ to $w$. This would decrease all values $\langle w,x_{k'} \rangle$. We can proceed this strategy until all the scalar products are non-positive and at least one of them is\footnote{The vector resulting from these substractions cannot be $\mathbf{0}$, since it would imply that $w$ is positively correlated with all data points.} $0$. This would then imply that the $k'$ for which it is $0$ are in $\cK$, i.e., $\{k' \mid \langle w,x_{k'} \rangle>0\}$ is either empty, or intersects $\cK$.

So if at some point $i\in \cN_t$ and $w_i^t\not\in A^{-1}_{\tdelta_4}(u_i^t)$, then after a time $\Delta t$ , $i\not\in \cN_{t+\Delta t}$ where $\Delta t$ satisfies
\begin{equation}\label{eq:Deltat}
\int_{t}^{t+\Delta t} \frac{|a_i^s|}{\|w_i^s\|}\df s =\Theta\left( 1\right).
\end{equation}
Moreover, during this whole time,
\begin{align}
\frac{\df \|w_i^t\|^2}{\df t} &= \bigO{\frac{|a_i^t|}{\|w_i^t\|} \|w_i^t\|^2}.\label{eq:decreasecase2}
\end{align}
Also, the above argument yields that the sets $\cN_t$ are non-increasing over time on $[\tau_3,\tau_4]$. Indeed, a neuron $i$ cannot enter in the set $\cN_t$, as it would either have $w_i^t\neq\mathbf{0}$ and $\langle \tw_i^t, x_k\rangle=0$ for some $k\in\cK$ at entrance, or $w_i^t=\mathbf{0}$ at entrance. 
In the first case, $w_i^t\not\in A^{-1}_{\tdelta_4}(u_i^t)$ and is thus immediately ejected out of $\cN_t$ (and is hence not entering). 
In the second case, note that $\frac{a_i^t}{\|w_i^t\|}$ is arbitrarily large at entrance in $\cN_t$. Thus, if $w_i^t$ enters such that $w_i^t\not\in A^{-1}_{\tdelta_4}(u_i^t)$, it is also immediately ejected out\footnote{A simpler argument is here to directly consider the non-scaled version $\frac{\df \langle w_i^t, x_k\rangle}{\df t}$.} of $\cN_t$. If instead $w_i^t$ enters such that $w_i^t\in A^{-1}_{\tdelta_4}(u_i^t)$, its norm decreases following \cref{eq:decreasecase1}, while it is actually $0$ at entrance. Hence in all the cases, it cannot enter $\cN_t$.

\medskip

For the neuron $i\in\cN_{\tau_3}$, we can now partition $[\tau_3,\tau_4)$ into three successive disjoint intervals such that
\begin{itemize}
\item \cref{eq:decreasecase1} holds in $[\tau_3,t_0]$;
\item the second interval is of the form $[t_0,t_0+\Delta t]$ with \cref{eq:Deltat,eq:decreasecase2};
\item $i\not\in \cN_t$ in $[t_0+\Delta t,\tau_4)$.
\end{itemize}
Using this partition, a Grönwall argument then yields for any $t\in[\tau_3,\tau_4)$ and $\varepsilon_3^*,\varepsilon_4^*$ small enough,
\begin{align*}
\|w_i^t\|^2 \iind{i\in \cN_t} &\leq  \iind{t\leq t_0+\Delta t}\|w_i^{\tau_3}\|^2 e^{-\frac{c_4}{2}(\min(t,t_0)-\tau_3)}\exp\left(\bigO{\int_{t_0}^{t_0+\Delta t} \frac{|a_i^s|}{\|w_i^s\|} \df s }\right)\\
&=\bigO{\|w_i^{\tau_3}\|^2 e^{-\frac{c_4}{2}(t-\tau_3)}}.
\end{align*}
From there, using the fact that $R_{\tau_3}\leq\lambda^{2(1-\varepsilon)}$, a simple summation yields
\begin{align*}
R_t & = \bigO{\lambda^{2(1-\varepsilon)} e^{-\frac{c_4}{2}(t-\tau_3)}} \text{ for any }t \in[\tau_3, \tau_4).
\end{align*}
\end{proof}

\begin{lem}\label{lemma:localPL}
Under \cref{ass:noconvergence,ass:cK,ass:phase3}, if $\lambda,\varepsilon_3,\varepsilon_4$ and $\delta_4$ satisfy the conditions of \cref{lemma:phase3}  for small enough constants $\tilde\lambda,\varepsilon_3^*, \varepsilon_4^*, \delta^*_4$, then for any $t\in (\tau_3,\tau_4)$:
\begin{equation*}
\forall g_t \in \partial L(\theta^t), \|g_t\|^2 \geq \sigma_{\min}(H)\|\beta^*\|\left(L(\theta^t)-L_{\beta^*}  \right)+\bigO{\lambda^{2(1-\varepsilon)}e^{-\frac{c_4}{4}(t-\tau_3)}},
\end{equation*}
where $L_{\beta^*} = \frac{1}{2n}\sum_{k=1}^n \left(\langle\beta^*, x_k\rangle - y_k\right)^2$ and $\sigma_{\min}(H)$ is the smallest eigenvalue of $H=\frac{1}{n}X^{\top}X$.
\end{lem}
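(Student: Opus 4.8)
The plan is to reduce the subgradient lower bound to a Polyak--{\L}ojasiewicz inequality for linear regression around $\beta^*$, exploiting the fact that $\beta^*$ solves the normal equations $X^{\top}(X\beta^*-\mathbf{y})=\mathbf{0}$. Fix $t\in(\tau_3,\tau_4)$ and let $D^t=-\frac1n\sum_{k=1}^n(h_{\theta^t}(x_k)-y_k)x_k$. For $i\in\cI$ the definition of $\tau_4$ forces $\langle w_i^t,x_k\rangle>0$ for all $k$, so neuron $i$ has a strict activation pattern and its $(w_i,a_i)$-block in \emph{any} $g_t\in\partial L(\theta^t)$ is exactly $(-a_i^tD^t,-\langle w_i^t,D^t\rangle)$; since these blocks sit in disjoint coordinates, $\|g_t\|^2\geq\big(\sum_{i\in\cI}(a_i^t)^2\big)\|D^t\|^2$. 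I would then prove (i) $\sum_{i\in\cI}(a_i^t)^2\geq\tfrac12\|\beta^*\|$ and (ii) $\|D^t\|^2\geq 2\sigma_{\min}(H)(L(\theta^t)-L_{\beta^*})-\bigO{\lambda^{2(1-\varepsilon)}e^{-\frac{c_4}{4}(t-\tau_3)}}$, then multiply.

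For (ii), I would set $\beta^t\coloneqq\sum_{i\in\cI_t}a_i^tw_i^t$ and $\rho^t_k\coloneqq h_{\theta^t}(x_k)-\langle\beta^t,x_k\rangle$; the correction $\rho^t$ is produced only by the neurons in $\cN_t$, and \cref{lemma:Rt} together with balancedness ($|a_i^t|\geq\|w_i^t\|$, \cref{lemma:balanced}) gives $\|\rho^t\|=\bigO{R_t}=\bigO{\lambda^{2(1-\varepsilon)}e^{-\frac{c_4}{2}(t-\tau_3)}}$. Writing $X\beta^t-\mathbf{y}+\rho^t=X(\beta^t-\beta^*)+(X\beta^*-\mathbf{y})+\rho^t$ and using $X^{\top}(X\beta^*-\mathbf{y})=\mathbf{0}$, the cross term $\langle X(\beta^t-\beta^*),X\beta^*-\mathbf{y}\rangle$ vanishes, so $L(\theta^t)-L_{\beta^*}=\tfrac12\|\beta^t-\beta^*\|_H^2+\bigO{R_t}$ and $D^t=H(\beta^*-\beta^t)-\tfrac1nX^{\top}\rho^t$. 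Since $\|\theta^t-\theta^{\tau_3}\|<\varepsilon_4$, \cref{lemma:boundA} and \cref{lemma:phase2b} give $\|\beta^t-\beta^*\|=\bigO1$, hence $\|D^t\|^2\geq\|H(\beta^*-\beta^t)\|^2-\bigO{R_t}\geq\sigma_{\min}(H)\|\beta^*-\beta^t\|_H^2-\bigO{R_t}$, and substituting $\|\beta^*-\beta^t\|_H^2=2(L(\theta^t)-L_{\beta^*})+\bigO{R_t}$ yields (ii).

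For (i), balancedness gives $\|w_i^t\|\leq|a_i^t|$, so $\sum_{i\in\cI}(a_i^t)^2\geq\sum_{i\in\cI}|a_i^t|\|w_i^t\|\geq\big\|\sum_{i\in\cI}a_i^tw_i^t\big\|$. By \cref{lemma:phase2b}, $\big\|\sum_{i\in\cI}a_i^{\tau_3}w_i^{\tau_3}-\beta^*\big\|\leq\varepsilon_3$, and $\|\theta^t-\theta^{\tau_3}\|<\varepsilon_4$ together with \cref{lemma:boundA} and a Cauchy--Schwarz estimate as in the proof of \cref{lemma:hphase3} give $\big\|\sum_{i\in\cI}(a_i^tw_i^t-a_i^{\tau_3}w_i^{\tau_3})\big\|=\bigO{\varepsilon_4}$; hence $\sum_{i\in\cI}(a_i^t)^2\geq\|\beta^*\|-\varepsilon_3-\bigO{\varepsilon_4}\geq\tfrac12\|\beta^*\|$ once $\varepsilon_3^*,\varepsilon_4^*$ are small relative to the data-dependent constant $\|\beta^*\|$. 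Multiplying (i) and (ii), and treating separately the harmless case where $L(\theta^t)-L_{\beta^*}<0$ (which by the computation in (ii) can happen only up to an additive $\bigO{R_t}$), gives $\|g_t\|^2\geq\sigma_{\min}(H)\|\beta^*\|(L(\theta^t)-L_{\beta^*})-\bigO{R_t}$, and $\bigO{R_t}=\bigO{\lambda^{2(1-\varepsilon)}e^{-\frac{c_4}{4}(t-\tau_3)}}$ by \cref{lemma:Rt}.

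The hard part is not any individual computation but the uniform (in $t$) bookkeeping of the negative-output neurons: one must verify that their contribution to both $h_{\theta^t}$ and $D^t$ is controlled by $R_t$ for every $t<\tau_4$ (precisely what \cref{lemma:Rt} and balancedness deliver) and that $\sum_{i\in\cI}(a_i^t)^2$ cannot drop below $\tfrac12\|\beta^*\|$, which relies on the $\|\theta^t-\theta^{\tau_3}\|<\varepsilon_4$ confinement built into $\tau_4$ and on $\varepsilon_4^*$ being chosen small. Conceptually, the subtlety is that since $\beta^*$ is only a \emph{spurious} stationary point we cannot appeal to a global PL condition; instead the normal equations $X^{\top}(X\beta^*-\mathbf{y})=\mathbf{0}$ make the Pythagorean decomposition of $L(\theta^t)-L_{\beta^*}$ exact up to the negative-neuron error, reducing the bound to the matrix-factorisation-style PL inequality for least-squares regression around $\beta^*$.
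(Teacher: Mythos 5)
Your approach is essentially the one the paper takes: lower-bound $\|g_t\|^2$ by restricting to the $w_i$-blocks of the $\cI$-neurons (which have a strict activation pattern on $(\tau_3,\tau_4)$ by definition of $\tau_4$), giving $\|g_t\|^2\geq\big(\sum_{i\in\cI}(a_i^t)^2\big)\|D^t\|^2$; then reduce the error made by the $\cN_t$-neurons to $R_t$ via Cauchy--Schwarz and balancedness; use the normal equations $X^{\top}(X\beta^*-\mathbf{y})=\mathbf{0}$ for the Pythagorean decomposition $L(\theta^t)-L_{\beta^*}\leq\tfrac12\|\beta^t-\beta^*\|_H^2+\mathrm{error}$; lower-bound $\sum_{i\in\cI}(a_i^t)^2$ using balancedness plus the Phase~2b estimate and the $\varepsilon_4$-confinement; and finally invoke \cref{lemma:Rt} to absorb the error into $\bigO{\lambda^{2(1-\varepsilon)}e^{-\frac{c_4}{4}(t-\tau_3)}}$. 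The paper carries the factor $(\|\beta^*\|-\bigO{\varepsilon_3+\varepsilon_4})$ through instead of replacing it by $\tfrac12\|\beta^*\|$ at the outset, then shrinks $\varepsilon_3^*,\varepsilon_4^*$ at the end; this is a cosmetic difference.

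One small inaccuracy in your bookkeeping: the bound $\|\rho^t\|=\bigO{R_t}$ is not correct as stated. The Cauchy--Schwarz/balancedness computation actually gives
\begin{equation*}
|\rho_k^t|\leq \sqrt{\textstyle\sum_{i\in\cN_t}(a_i^t)^2}\cdot\sqrt{\textstyle\sum_{i\in\cN_t}\|w_i^t\|^2}\,\|x_k\|\leq\sqrt{2R_t+\lambda^2}\cdot\sqrt{2R_t}\,\|x_k\|,
\end{equation*}
and when $R_t\ll\lambda^2$ this is of order $\lambda\sqrt{R_t}$, which is much larger than $R_t$. Both bounds, however, are still $\bigO{\lambda^{2(1-\varepsilon)}e^{-\frac{c_4}{4}(t-\tau_3)}}$ (note the exponent $c_4/4$, not $c_4/2$: the $\sqrt{R_t}$ halves the decay rate), so your final displayed rate is correct even though the intermediate $\bigO{R_t}$ line, and the transient appearance of $e^{-\frac{c_4}{2}(t-\tau_3)}$, are not. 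This is exactly why the paper's statement of \cref{lemma:localPL} carries the slower rate $e^{-\frac{c_4}{4}(t-\tau_3)}$ rather than the rate of $R_t$ itself; you should tighten that line of your argument to make the $c_4/4$ appear for a reason rather than by coincidence.
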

\begin{proof}
For any $t\in (\tau_3, \tau_4)$, it comes by definition of $\tau_4$ that $\langle w_i^t, x_k \rangle >0$ for any $i\in \cI$ and $k\in [n]$. It then holds for any $g_t \in \partial L(\theta^t)$, when considering only the norm of its components corresponding to the derivatives along $w_i^t$ for $i\in\cI$:
\begin{align}\label{eq:g1}
\|g_t\|^2_2 \geq \sum_{i\in \cI} (a_i^t)^2 \|D^t\|^2.
\end{align} 
Note that by definition of $\beta^t$ and $R_t$:
\begin{align*}
h_{\theta^t}(x_k) & = \sum_{i=1}^m a_i^t \langle w_i^t, x_k\rangle_+ \\
& = \sum_{i=1}^m a_i^t (\langle w_i^t, x_k\rangle + \langle w_i^t, x_k\rangle_-) \\
& = \langle\beta^t,x_k\rangle +\sum_{i\in \cN_t}a_i^t\langle w_i^t, x_k\rangle_- \ .
\end{align*}
So that
\begin{align*}
|h_{\theta^t}(x_k) - \langle \beta^t, x_k\rangle| & \leq \sum_{i\in \cN_t} |a_i^t|\|w_i^t\|\|x_k\|\\
&\leq \sqrt{\sum_{i\in \cN_t} (a_i^t)^2}\sqrt{\sum_{i\in \cN_t} \|w_i^t\|^2}\|x_k\|\\
&\leq \sqrt{2R_t+\lambda^2}\sqrt{2R_t}\|x_k\|\\
&=\bigO{\lambda^{2(1-\varepsilon)}e^{-\frac{c_4}{4}(t-\tau_3)}}.
\end{align*}
The last inequality here comes from \cref{lemma:Rt}.
By definition of $D^t$ and $\beta^*$,
\begin{align*}
D^t & = -\frac{1}{n}\sum_{k=1}^n (h_{\theta^t}(x_k)-y_k)x_k\\
& = -\frac{1}{n}\sum_{k=1}^n (h_{\theta^t}(x_k)-\langle \beta^*,x_k\rangle)x_k\\
&  = -H(\beta^t-\beta^*) - \frac{1}{n}\sum_{k=1}^n (h_{\theta^t}(x_k)-\langle \beta^t,x_k\rangle)x_k\\
& =  -H(\beta^t-\beta^*)  +\bigO{\lambda^{2(1-\varepsilon)}e^{-\frac{c_4}{4}(t-\tau_3)}}.
\end{align*}
In the following, we note $\sigma_{\min}$ instead of $\sigma_{\min}(H)$ for simplicity. \cref{eq:g1} now becomes for $t\in(\tau_3, \tau_4)$
\begin{align}
\|g_t\|^2_2 & \geq \sum_{i\in \cI} (a_i^t)^2 \left(\sqrt{\sigma_{\min}}\|\beta^t-\beta^*\|_H - \bigO{\lambda^{2(1-\varepsilon)}e^{-\frac{c_4}{4}(t-\tau_3)}}\right)^2\notag\\
 &\geq \sigma_{\min}\sum_{i\in \cI} (a_i^t)^2 \|\beta^t-\beta^*\|_H^2 - \bigO{\lambda^{2(1-\varepsilon)}e^{-\frac{c_4}{4}(t-\tau_3)}}\notag\\
&\geq (\|\beta^*\|_2-\bigO{\varepsilon_3+\varepsilon_4})\sigma_{\min} \|\beta^t-\beta^*\|_H^2  -\bigO{\lambda^{2(1-\varepsilon)}e^{-\frac{c_4}{4}(t-\tau_3)}}\label{eq:PL1},
\end{align}
where the second inequality uses a bound $\|\beta^t-\beta^*\|_2=\bigO{1}$, which can be proved similarly to \cref{lemma:hphase3}. Note that we again have $\sum_{i\in \cI} (a_i^t)^2=\bigO{1}$ in this phase, thanks to \cref{lemma:boundA} and the definition of $\tau_4$.

On the other hand, we also have for $\tilde{R}_t(x) \coloneqq h_{\theta^t}(x)-\langle\beta^t,x\rangle$
\begin{align*}
L(\theta^t)-L_{\beta^*}& = \frac{1}{2n} \sum_{k=1}^n (h_{\theta^t}(x_k)-y_k)^2-(\langle \beta^*, x_k\rangle-y_k)^2\\
& = \frac{1}{2n} \sum_{k=1}^n (\langle\beta^t,x_k\rangle-y_k)^2-(\langle \beta^*, x_k\rangle-y_k)^2 + \frac{1}{2n} \sum_{k=1}^n (2(\langle\beta^t,x_k\rangle-y_k) \tilde{R}_t(x_k)+\tilde{R}_t(x_k)^2) \\
& = \frac{1}{n}(\beta^t-\beta^*)^{\top}X^{\top}(X\beta^*-Y) +  \frac{1}{2n}(\beta^t-\beta^*)^{\top}X^{\top}X(\beta^t-\beta^*) \\&\phantom{=}+ \frac{1}{2n} \sum_{k=1}^n \left(2(h_{\theta^t}(x_k)-y_k)\tilde{R}_t(x_k)-\tilde{R}_t(x_k)^2\right)\\
& \leq \frac{1}{2}\|\beta^t-\beta^*\|_H^2 + \frac{1}{n}\sum_{k=1}^n (h_{\theta^t}(x_k)-y_k)\tilde{R}_t(x_k).
\end{align*}
The last inequality comes from the definition of $\beta^*$ that implies $X^{\top}(X\beta^*-Y)=\mathbf{0}$. Recall that $|\tilde{R}_t(x_k)|=\bigO{\lambda^{2(1-\varepsilon)}e^{-\frac{c_4}{4}(t-\tau_3)}}$. 
%For our choices of $\lambda,\varepsilon_3$ and $\varepsilon_4$, 
Also, note that $\frac{1}{n}\sum_{k=1}^n (h_{\theta^t}(x_k)-y_k)=\bigO{1}$ by monotonicity of the loss. It yields with \cref{lemma:Rt}:
%conditions on  $\lambda,\varepsilon_3$ and $\varepsilon_4$
\begin{align}
L(\theta^t)-L_{\beta^*}
& \leq \frac{1}{2}\|\beta^t-\beta^*\|_H^2 + \bigO{\lambda^{2(1-\varepsilon)}e^{-\frac{c_4}{4}(t-\tau_3)}}.\label{eq:PL2}
\end{align}
Combining \cref{eq:PL1,eq:PL2}; it finally yields:
%condition on $\lambda, \varepsilon_3$ and $\varepsilon_4$
\begin{align*}
\|g_t\|^2_2 & \geq \sigma_{\min}\left(\|\beta^*\|-\bigO{\varepsilon_3+\varepsilon_4}\right) \|\beta^t-\beta^*\|_H^2  - \bigO{\lambda^{2(1-\varepsilon)}e^{-\frac{c_4}{4}(t-\tau_3)}}\\
& \geq 2\sigma_{\min}\left(\|\beta^*\|-\bigO{\varepsilon_3+\varepsilon_4}\right)\left(L(\theta^t)-L_{\beta^*}  \right)-\bigO{\lambda^{2(1-\varepsilon)}e^{-\frac{c_4}{4}(t-\tau_3)}}.
\end{align*}
\cref{lemma:localPL} then follows for small enough $\varepsilon^*_3$ and $\varepsilon^*_4$, so that the first term is larger than $ \sigma_{\min}\|\beta^*\|\left(L(\theta^t)-L_{\beta^*}  \right)$.
\end{proof}
%laststop
\begin{lem}\label{lemma:lossdecrease}
Under \cref{ass:noconvergence,ass:cK,ass:phase3}, if $\lambda,\varepsilon_3,\varepsilon_4$ and $\delta_4$ satisfy the conditions of \cref{lemma:phase3}  for small enough constants $\tilde\lambda,\varepsilon_3^*, \varepsilon_4^*, \delta^*_4$, then for any $t\in[\tau_3, \tau_4)$ and $a\coloneqq\min\left(\frac{\sigma_{\min}(H)\|\beta^*\|}{2},\frac{c_4}{4}\right)$: 
\begin{equation*}
L(\theta^t) - L_{\beta^*} = \bigO{\varepsilon_3^2e^{-a(t-\tau_3)}}.
\end{equation*}
\end{lem}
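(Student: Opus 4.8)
The plan is to combine the local Polyak--Łojasiewicz inequality of \cref{lemma:localPL} with a Grönwall argument, after first controlling the loss gap at the starting time $\tau_3$. Write $\phi(t)\coloneqq L(\theta^t)-L_{\beta^*}\geq 0$ and $\mu\coloneqq\sigma_{\min}(H)\|\beta^*\|$. Along the subgradient flow, $t\mapsto L(\theta^t)$ is absolutely continuous and $\frac{\df}{\df t}L(\theta^t)=-\|\dot\theta^t\|_2^2$ for almost every $t$ (chain rule along the subgradient flow), where $\dot\theta^t\coloneqq\frac{\df\theta^t}{\df t}\in-\partial L(\theta^t)$. Since $-\dot\theta^t\in\partial L(\theta^t)$, applying \cref{lemma:localPL} to the subgradient $g_t=-\dot\theta^t$ yields, for almost every $t\in(\tau_3,\tau_4)$, the differential inequality $\dot\phi(t)\leq-\mu\,\phi(t)+\bigO{\lambda^{2(1-\varepsilon)}e^{-\frac{c_4}{4}(t-\tau_3)}}$.

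Next I would integrate this inequality using the integrating factor $e^{\mu(t-\tau_3)}$, which gives
\begin{equation*}
\phi(t)\;\leq\;\phi(\tau_3)\,e^{-\mu(t-\tau_3)}\;+\;\bigO{\lambda^{2(1-\varepsilon)}}\int_{\tau_3}^{t}e^{-\mu(t-s)-\frac{c_4}{4}(s-\tau_3)}\,\df s .
\end{equation*}
Evaluating the integral is routine: distinguishing $\mu>c_4/4$, $\mu=c_4/4$ and $\mu<c_4/4$, and using $x e^{-bx}\leq (eb)^{-1}$ for $b>0$ to absorb the at most linear prefactor that arises when $\mu$ is close to $c_4/4$ into a marginally slower exponential, one obtains that it is $\bigO{e^{-a(t-\tau_3)}}$ with $a=\min(\mu/2,c_4/4)$, exactly the constant in the statement. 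Since $\mu\geq a$, also $e^{-\mu(t-\tau_3)}\leq e^{-a(t-\tau_3)}$, so $\phi(t)=\bigO{(\phi(\tau_3)+\lambda^{2(1-\varepsilon)})\,e^{-a(t-\tau_3)}}$ on $[\tau_3,\tau_4)$.

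It then remains to check $\phi(\tau_3)=\bigO{\varepsilon_3^2}$ and $\lambda^{2(1-\varepsilon)}=\bigO{\varepsilon_3^2}$. For the latter I would use $\varepsilon_3\geq\lambda^{c_3\varepsilon\varepsilon_2}$ from \cref{lemma:phase2b} and shrink $\varepsilon_2^*=\Theta(1)$ so that $c_3\varepsilon\varepsilon_2\leq 1-\varepsilon$ (recall $\varepsilon<\frac{1}{3}$ and $\varepsilon_2\leq\varepsilon_2^*$), giving $\lambda^{2(1-\varepsilon)}\leq\varepsilon_3^2$. For the former, \cref{lemma:phase2b} provides $\|\beta^*-\beta_{\cI}^{\tau_3}\|=\bigO{\varepsilon_3}$ with $\beta_{\cI}^{\tau_3}=\sum_{i\in\cI}a_i^{\tau_3}w_i^{\tau_3}$, the positivity $\langle w_i^{\tau_3},x_k\rangle>0$ for all $i\in\cI$, $k\in[n]$, and $\sum_{i\in\cN}(a_i^{\tau_3})^2=\bigO{\lambda^{2(1-\varepsilon)}}$, whence $h_{\theta^{\tau_3}}(x_k)=\langle\beta_{\cI}^{\tau_3},x_k\rangle+\bigO{\lambda^{2(1-\varepsilon)}}$. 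Writing
\begin{equation*}
\phi(\tau_3)=\frac{1}{2n}\sum_{k=1}^{n}\big(h_{\theta^{\tau_3}}(x_k)-\langle\beta^*,x_k\rangle\big)^2+\frac{1}{n}\sum_{k=1}^{n}\big(\langle\beta^*,x_k\rangle-y_k\big)\big(h_{\theta^{\tau_3}}(x_k)-\langle\beta^*,x_k\rangle\big),
\end{equation*}
the quadratic term is $\bigO{(\varepsilon_3+\lambda^{2(1-\varepsilon)})^2}=\bigO{\varepsilon_3^2}$, while in the cross term, substituting $h_{\theta^{\tau_3}}(x_k)-\langle\beta^*,x_k\rangle=\langle\beta_{\cI}^{\tau_3}-\beta^*,x_k\rangle+\bigO{\lambda^{2(1-\varepsilon)}}$, the genuinely linear part equals $(\beta_{\cI}^{\tau_3}-\beta^*)^{\top}\frac1n X^{\top}(X\beta^*-Y)=0$ by the normal equations defining $\beta^*$, leaving a remainder $\bigO{\lambda^{2(1-\varepsilon)}}$. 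Hence $\phi(\tau_3)=\bigO{\varepsilon_3^2+\lambda^{2(1-\varepsilon)}}=\bigO{\varepsilon_3^2}$, and combining with the previous paragraph gives $L(\theta^t)-L_{\beta^*}=\bigO{\varepsilon_3^2 e^{-a(t-\tau_3)}}$.

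The main obstacle I expect is the interplay of the two competing exponential rates in the Grönwall step: the perturbation in \cref{lemma:localPL} already decays at rate $c_4/4$, so the resonant case $\mu=c_4/4$ would produce a spurious polynomial-in-$t$ factor, and it is precisely to absorb this that the statement downgrades $\mu$ to $\mu/2$ inside $a$. A secondary subtlety is that the crude estimate $\phi(\tau_3)=\bigO{\varepsilon_3}$ read off directly from \cref{lemma:phase2b} is not strong enough: one must exploit the optimality of the least-squares estimator $\beta^*$ (the vanishing of $X^{\top}(X\beta^*-Y)$) to kill the order-$\varepsilon_3$ contribution and reach the claimed order-$\varepsilon_3^2$ bound.
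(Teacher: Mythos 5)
Your proof is correct and follows essentially the same route as the paper: apply \cref{lemma:localPL} along the subgradient flow, Grönwall-integrate, and control $L(\theta^{\tau_3})-L_{\beta^*}$ by $\varepsilon_3^2$ via the normal equations $X^{\top}(X\beta^*-Y)=\mathbf{0}$ (the paper packages that last step inside \cref{eq:PL2}). The only cosmetic difference is the order of operations in the Grönwall step — the paper first weakens the perturbation exponent from $c_4/4$ to $a<\mu$ so the comparison ODE $f'=-\mu f+be^{-at}$ never resonates, whereas you integrate with $c_4/4$ in place and absorb the possible $T e^{-\mu T}$ factor afterward — both are valid and give the same rate.
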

\begin{proof}
By definition of the gradient flow, there is some $g_t \in \partial L(\theta^t)$ such that $\frac{\df \theta^t}{\df t} = - g_t$ a.e. It then comes from \cref{lemma:localPL} that a.e.\footnote{The fact that the chain rule can be applied to $\frac{\df L(\theta^t) }{\df t}$ here is not straightforward, but is possible a.e. \citep[see e.g.,][Lemma 2, Corollary 1 and Proposition 2]{bolte2021conservative}}
\begin{align*}
\frac{\df (L(\theta^t)-L_{\beta^*})}{\df t} & = -\|g_t\|^2 \\
& \leq -\sigma_{\min}\|\beta^*\|( L(\theta^t)-L_{\beta^*}) + \bigO{\lambda^{2(1-\varepsilon)}e^{-\frac{c_4}{4}(t-\tau_3)}}\\
& \leq -\sigma_{\min}\|\beta^*\|( L(\theta^t)-L_{\beta^*}) + \bigO{\lambda^{2(1-\varepsilon)}e^{-a(t-\tau_3)}},
\end{align*}
where again we write $\sigma_{\min}$ for $\sigma_{\min}(H)$. 
Solutions of the ODE $f'(t) = -c f(t) + be^{-a t}$ are of the form
\begin{equation*}
f(t) = \frac{b}{c-a}(e^{-at}-e^{-ct}) + f(0)e^{-ct} \quad \text{ if } a\neq c.
\end{equation*}
Since $a<\sigma_{\min}\|\beta^*\|$, a Gr\"onwall comparison then yields for any $t\in[\tau_3, \tau_4)$,
\begin{equation*}
L(\theta^t)-L_{\beta^*} \leq \left(L(\theta^{\tau_3})-L_{\beta^*}\right)e^{-\sigma_{\min}\|\beta^*\|(t-\tau_3)}+ \bigO{\lambda^{2(1-\varepsilon)}e^{-a(t-\tau_3)}}.
\end{equation*}

Moreover, it comes from \cref{eq:PL2} that
%condition on $\lambda$
\begin{align*}
L(\theta^{\tau_3})-L_{\beta^*} & \leq \frac{1}{2}\|\beta^{\tau_3}-\beta^*\|_H^2+\bigO{\lambda^{2(1-\varepsilon)}}\\
& \leq \frac{\sigma_{\max}}{2}\|\beta^{\tau_3}-\beta^*\|_2^2+\bigO{\lambda^{2(1-\varepsilon)}}\\
&\leq \frac{\sigma_{\max}}{2}\varepsilon_3^2+\bigO{\lambda^{2(1-\varepsilon)}},
\end{align*}
where the last inequality comes from \cref{lemma:phase2b}. This allows to conclude as $\lambda^{1-\varepsilon}=\bigO{\varepsilon_3}$.
\end{proof}

\begin{lem}\label{lemma:Dt}
Under \cref{ass:noconvergence,ass:cK,ass:phase3}, if $\lambda,\varepsilon_3,\varepsilon_4$ and $\delta_4$ satisfy the conditions of \cref{lemma:phase3}  for small enough constants $\tilde\lambda,\varepsilon_3^*,\varepsilon_4^*,\delta^*_4$, then
\begin{equation*}
\int_{\tau_3}^{\tau_4} \|D^t\|\df t =\bigO{\varepsilon_3}.
\end{equation*}
\end{lem}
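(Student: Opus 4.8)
The plan is to bound $\|D^t\|$ pointwise by an exponentially decaying quantity with prefactor of order $\varepsilon_3$, and then integrate. The natural starting point is the identity already derived in the proof of \cref{lemma:localPL}: writing $h_{\theta^t}(x_k)=\langle\beta^t,x_k\rangle+\tilde R_t(x_k)$ with $\tilde R_t(x)\coloneqq h_{\theta^t}(x)-\langle\beta^t,x\rangle$, using $X^{\top}(X\beta^*-y)=\mathbf{0}$, and using $|\tilde R_t(x_k)|=\bigO{\lambda^{2(1-\varepsilon)}e^{-\frac{c_4}{4}(t-\tau_3)}}$ (which comes from \cref{lemma:Rt}), one gets $D^t=-H(\beta^t-\beta^*)+\bigO{\lambda^{2(1-\varepsilon)}e^{-\frac{c_4}{4}(t-\tau_3)}}$ for $t\in[\tau_3,\tau_4)$. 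Hence $\|D^t\|\le\sqrt{\sigma_{\max}(H)}\,\|\beta^t-\beta^*\|_H+\bigO{\lambda^{2(1-\varepsilon)}e^{-\frac{c_4}{4}(t-\tau_3)}}$, using $\|Hv\|\le\sqrt{\sigma_{\max}(H)}\,\|v\|_H$.

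Next I would convert $\|\beta^t-\beta^*\|_H$ into a loss gap. The same expansion of the quadratic loss around $\beta^*$ as in the proof of \cref{lemma:localPL} (the lines leading to \cref{eq:PL2}) is in fact an identity up to a controlled error: $\tfrac12\|\beta^t-\beta^*\|_H^2=\bigl(L(\theta^t)-L_{\beta^*}\bigr)-\tfrac1n\sum_k(h_{\theta^t}(x_k)-y_k)\tilde R_t(x_k)+\tfrac1{2n}\sum_k\tilde R_t(x_k)^2$. Since $\tfrac1n\sum_k(h_{\theta^t}(x_k)-y_k)^2=\bigO{1}$ by monotonicity of the loss, and $|\tilde R_t(x_k)|$ is small, the last two terms are $\bigO{\lambda^{2(1-\varepsilon)}e^{-\frac{c_4}{4}(t-\tau_3)}}$, so $\|\beta^t-\beta^*\|_H^2\le 2\bigl(L(\theta^t)-L_{\beta^*}\bigr)+\bigO{\lambda^{2(1-\varepsilon)}e^{-\frac{c_4}{4}(t-\tau_3)}}$, and therefore, using $\sqrt{a+b}\le\sqrt a+\sqrt b$, $\|\beta^t-\beta^*\|_H\le\sqrt{2\bigl(L(\theta^t)-L_{\beta^*}\bigr)}+\bigO{\lambda^{1-\varepsilon}e^{-\frac{c_4}{8}(t-\tau_3)}}$.

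Plugging this into the bound on $\|D^t\|$ and invoking \cref{lemma:lossdecrease}, which gives $L(\theta^t)-L_{\beta^*}=\bigO{\varepsilon_3^2e^{-a(t-\tau_3)}}$, yields $\|D^t\|=\bigO{\varepsilon_3 e^{-\frac a2(t-\tau_3)}}+\bigO{\lambda^{1-\varepsilon}e^{-\frac{c_4}{8}(t-\tau_3)}}$. Using $\lambda^{1-\varepsilon}=\bigO{\varepsilon_3}$ (already noted in the proof of \cref{lemma:lossdecrease}) and setting $b\coloneqq\min(\tfrac a2,\tfrac{c_4}{8})=\Theta(1)>0$, this becomes $\|D^t\|=\bigO{\varepsilon_3 e^{-b(t-\tau_3)}}$ uniformly on $[\tau_3,\tau_4)$. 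Since the integrand is nonnegative, $\int_{\tau_3}^{\tau_4}\|D^t\|\,\df t\le\int_{\tau_3}^{\infty}\bigO{\varepsilon_3 e^{-b(t-\tau_3)}}\,\df t=\bigO{\varepsilon_3/b}=\bigO{\varepsilon_3}$, which is the claim.

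There is no genuine obstacle here: this lemma is essentially a corollary of \cref{lemma:Rt} and \cref{lemma:lossdecrease}. The only point that requires a little care is checking that all the error terms are uniformly $\bigO{\cdot}$ with data-only constants on the whole interval $[\tau_3,\tau_4)$ — in particular that the decay rates $a$ and $c_4$ are $\Theta(1)$ and that the crude bounds $\|\beta^t-\beta^*\|_2=\bigO{1}$ and $\tfrac1n\sum_k(h_{\theta^t}(x_k)-y_k)^2=\bigO{1}$ hold throughout; these follow from \cref{lemma:boundA}, \cref{lemma:hphase3} and monotonicity of the loss.
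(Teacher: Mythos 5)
Your proposal is correct and follows essentially the same route as the paper's own proof: bound $\|D^t\|$ by $\sqrt{\sigma_{\max}(H)}\|\beta^t-\beta^*\|_H$ plus a $\tilde R_t$-error term controlled by \cref{lemma:Rt}, convert $\|\beta^t-\beta^*\|_H$ to a loss gap via the expansion from \cref{lemma:localPL}, invoke \cref{lemma:lossdecrease}, and integrate the resulting exponential. Your treatment is in fact slightly more careful than the paper's (you correctly get $e^{-c_4(t-\tau_3)/8}$ after the square root, whereas the paper writes $c_4/4$, an inconsequential slip absorbed into the final rate constant $a/2$).
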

\begin{proof}
We already proved in the proof of \cref{lemma:localPL} for any $t\in[\tau_3,\tau_4)$:
\begin{align*}
\|D_t\|& \leq \sqrt{\sigma_{\max}}\|\beta^t-\beta^*\|_H+\bigO{\lambda^{2(1-\varepsilon)} e^{-\frac{c_4}{4} (t-\tau_3)}}.
\end{align*}
Moreover, we also showed
\begin{align*}
\|\beta^t - \beta^*\|_H^2 & = 2(L(\theta^t)-L_{\beta^*})+\frac{1}{n}\sum_{k=1}^n\left( (y_k-h_{\theta^t}(x_k))\tilde{R}_t(x_k) + \tilde{R}_t(x_k)^2\right)\\
\|\beta^t - \beta^*\|_H & \leq \sqrt{2(L(\theta^t)-L_{\beta^*})} + \bigO{\lambda^{(1-\varepsilon)}e^{-\frac{c_4}{4}(t-\tau_3)}}.
\end{align*}
Thanks to \cref{lemma:lossdecrease}, it comes for our choice of $\lambda$ and $\varepsilon_3$:
%\varepsilon \varepsilon_2 c_3 < 1-\varepsilon f
\begin{equation*}
\|D_t\|\leq \bigO{\varepsilon_3e^{-\frac{a}{2}(t-\tau_3)}},
\end{equation*}
where the constants hidden in $\mathcal{O}$ do not depend on $t$ but only the dataset. 
Integrating the exponential then concludes the proof.
\end{proof}

\begin{lem}\label{lemma:boundeddyn}
Under \cref{ass:noconvergence,ass:cK,ass:phase3},  if $\lambda,\varepsilon_3,\varepsilon_4$ and $\delta_4$ satisfy the conditions of \cref{lemma:phase3}  for small enough constants $\tilde\lambda,\varepsilon_3^*,\varepsilon_4^*,\delta^*_4$, then for any $i\in \cI$ and $t\in [\tau_3,\tau_4)$:
\begin{itemize}
\item $|a_i^t-a_i^{\tau_3}|\leq \left(e^{\bigO{\varepsilon_3}}-1\right) a_i^{\tau_3}$;
\item $\|w_i^t-w_i^{\tau_3}\|\leq \left(e^{\bigO{\varepsilon_3}}+\bigO{\varepsilon_3}-1\right) a_i^{\tau_3} $;
\item $\|\w_i^t-\w_i^{\tau_3}\|\leq \bigO{\varepsilon_3} $.
\end{itemize}
Moreover, for any $i\in \cN$ and $t\in [\tau_3,\tau_4)$,
\begin{itemize}
\item $|a_i^t-a_i^{\tau_3}|\leq \bigO{1}a_i^{\tau_3}$;
\item $\|w_i^t-w_i^{\tau_3}\|\leq \bigO{1} a_i^{\tau_3}$,
\end{itemize}
where again the constants hidden in $\mathcal{O}$ neither depend on $t$ nor $i$, but only on the dataset.
\end{lem}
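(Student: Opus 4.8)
The plan is to integrate the per‑neuron flows \eqref{eq:ODEs} and \eqref{eq:ODEdirection} over $[\tau_3,\tau_4)$, using the already established bound $\int_{\tau_3}^{\tau_4}\|D^t\|\,\df t=\bigO{\varepsilon_3}$ from \cref{lemma:Dt} together with the balancedness inequality $\|w_i^t\|\le|a_i^t|$ coming from \cref{lemma:balanced}. Throughout, set $\phi(t):=\int_{\tau_3}^t\|D^s\|\,\df s$, so that $\phi(t)=\bigO{\varepsilon_3}$ for $t<\tau_4$.

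\emph{Positive neurons.} For $i\in\cI$ and $t\in[\tau_3,\tau_4)$, the definition of $\tau_4$ guarantees $\langle w_i^t,x_k\rangle>0$ for every $k$, hence $\D(w_i^t,\theta^t)=\{D^t\}$. Then $\bigl|\tfrac{\df a_i^t}{\df t}\bigr|=|\langle w_i^t,D^t\rangle|\le\|w_i^t\|\,\|D^t\|\le a_i^t\|D^t\|$, so Gr\"onwall gives $a_i^{\tau_3}e^{-\phi(t)}\le a_i^t\le a_i^{\tau_3}e^{\phi(t)}$, which is the first bullet. For the direction, \eqref{eq:ODEdirection} reduces to $\tfrac{\df\w_i^t}{\df t}=D^t-\langle\w_i^t,D^t\rangle\w_i^t$, whence $\bigl\|\tfrac{\df\w_i^t}{\df t}\bigr\|\le\|D^t\|(1+\|\w_i^t\|^2)\le2\|D^t\|$ since $\|\w_i^t\|\le1$; integrating yields $\|\w_i^t-\w_i^{\tau_3}\|\le2\phi(t)=\bigO{\varepsilon_3}$, the third bullet. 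The second bullet follows either by writing $w_i^t=a_i^t\w_i^t$ and combining the two previous estimates with the triangle inequality, or directly by integrating $\|a_i^sD^s\|=a_i^s\|D^s\|\le a_i^{\tau_3}e^{\phi(s)}\phi'(s)$ to get $\|w_i^t-w_i^{\tau_3}\|\le a_i^{\tau_3}(e^{\phi(t)}-1)$.

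\emph{Negative neurons.} Here $a_i^t\le0$ throughout, and I would partition $[\tau_3,\tau_4)$ according to the activation pattern of $w_i^t$, relying on the case analysis already carried out in the proof of \cref{lemma:Rt}, in particular on the fact that $t\mapsto\cN_t$ is non‑increasing on $[\tau_3,\tau_4)$. When $w_i^t$ is orthogonal to, or on the wrong side of, all data points, the neuron is frozen. When $w_i^t$ lies in the cone where all data points are active, the same Gr\"onwall argument as for $\cI$ applies (using $\|w_i^t\|\le|a_i^t|$), so $|a_i^t|$ and $\|w_i^t\|$ change by at most a factor $e^{\bigO{\varepsilon_3}}$. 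Finally, while $i\in\cN_t$, the estimates \eqref{eq:decreasecase1}--\eqref{eq:decreasecase2} from the proof of \cref{lemma:Rt} show that $\|w_i^t\|^2$ either decays exponentially or grows over an interval of bounded intrinsic time $\int_{t_0}^{t_0+\Delta t}\tfrac{|a_i^s|}{\|w_i^s\|}\,\df s=\Theta(1)$ (see \eqref{eq:Deltat}), hence by at most a constant factor. Chaining the at most three successive regimes gives $\|w_i^t\|\le\bigO{1}\,|a_i^{\tau_3}|$ and, via $(a_i^t)^2=\|w_i^t\|^2+((a_i^0)^2-\|w_i^0\|^2)$ together with the smallness of the balancedness constant, also $|a_i^t|\le\bigO{1}\,|a_i^{\tau_3}|$, which yields the last two bullets.

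The main obstacle is the negative‑neuron part: one must track a neuron as it possibly passes through all three activation regimes and check that the constant factors incurred while it sits in $\cN_t$ (where \cref{lemma:Rt} only controls the aggregate $R_t$, not the individual scale $|a_i^{\tau_3}|$) compose without blowing up. This is exactly where the monotonicity of $\cN_t$ and the intrinsic‑time bound are essential. The positive‑neuron part, by contrast, is a routine Gr\"onwall computation once \cref{lemma:Dt} is in hand.
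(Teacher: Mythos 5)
Your proof mirrors the paper's own: the $\cI$-part is the same Gr\"onwall computation driven by $\int_{\tau_3}^{\tau_4}\|D^t\|\,\df t=\bigO{\varepsilon_3}$ from \cref{lemma:Dt}, and the $\cN$-part chains the same three regimes that the proof of \cref{lemma:Rt} already isolates (exponential decay, bounded-intrinsic-time transition out of $\cN_t$, then frozen or fully activated), using the monotonicity of $t\mapsto\cN_t$ exactly as the paper does. One minor imprecision: deducing $|a_i^t|=\bigO{1}|a_i^{\tau_3}|$ from $\|w_i^t\|=\bigO{1}|a_i^{\tau_3}|$ needs only $(a_i^0)^2-\|w_i^0\|^2\leq(a_i^{\tau_3})^2$, which is automatic from \cref{lemma:balanced}, not any absolute smallness of the balancedness offset; the paper sidesteps this by running the Gr\"onwall bound on $a_i^t$ directly in each regime.
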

\begin{proof}
For any $i\in \cI$, the neuron $i$ is activated along all $x_k$ for all $t\in [\tau_3,\tau_4)$ by definition of $\tau_4$. As a consequence, we can bound the derivative of $a_i^t>0$ as:
\begin{align*}
\left|\frac{\df a_i^t}{\df t}\right| & = |\langle w_i^t, D^t\rangle |\\
& \leq a_i^t \|D^t\|.
\end{align*}
Gr\"onwall inequality along with \cref{lemma:Dt} then yield the first item of \cref{lemma:boundeddyn}. We also have the following bound:
\begin{align*}
\left\| \frac{\df \w_i^t}{\df t}\right\| & = \left\| D^t - \langle \w_i^t,D^t\rangle\w_i^t \right\|\\
& \leq \|D^t\|.
\end{align*}
This yields the third point of \cref{lemma:boundeddyn}. Moreover note that 
\begin{align*}
\|w_i^t - w_i^{\tau_3}\| & \leq |a_i^t-a_i^{\tau_3}| + a_i^{\tau_3} \|\w_i^t -\w_i^{\tau_3}\|.
\end{align*}
The second point then follows from the first and third one.

\medskip

For $i\in \cN$, the neuron $i$ might not be activated along all $x_k$. However, we can use the same arguments as in the proof of \cref{lemma:Rt}, so that there are time $t_i$ (potentially $\infty$) and $\Delta t_i$, such that $|a_i^t|$ decreases on $[\tau_3, t_i]$ and $i$ is activated along all (or no) $x_k$ on $[t_i+\Delta t_i, \tau_4)$ with $\Delta t_i = \bigO{1}$. From the first part, it holds for any $t\in[\tau_3, t_i]$
\begin{gather*}
|a_i^{t}-a_i^{\tau_3}| \leq |a_i^{\tau_3}|\\
\|w_i^{t}-w_i^{\tau_3}\| \leq \|w_i^{t}\| + \|w_i^{\tau_3}\| \leq 2|a_i^{\tau_3}|.
\end{gather*} 
And a Gr\"onwall argument also yields for any $t\in[\tau_3, t_i+\Delta t_i]$, since $\Delta t_i =\bigO{1}$:
\begin{gather*}
|a_i^{t}-a_i^{\tau_3}| =\bigO{|a_i^{\tau_3}|}\\
\|w_i^{t}-w_i^{\tau_3}\| =\bigO{|a_i^{\tau_3}|}.
\end{gather*} 

Also, we can bound the difference in the third part similarly to the case $i\in \cI$, \ie for any $t \in [t_i+\Delta t_i, \tau_4)$:
\begin{gather*}
|a_i^{t}-a_i^{t_i+\Delta t_i}| \leq a_i^{t_i+\Delta t_i}e^{\bigO{\varepsilon_3}}\\
\|w_i^t-w_i^{t_i+\Delta t_i}\|\leq \left( e^{\bigO{\varepsilon_3}}+\bigO{\varepsilon_3}\right) a_i^{t_i+\Delta t_i}.
\end{gather*}
Combining these different inequalities then yields the last two points of \cref{lemma:boundeddyn}, using the fact that $\varepsilon_3=\bigO{1}$.
\end{proof}

\begin{coro}\label{coro:tau4}
Under \cref{ass:phase3}, if $\lambda,\varepsilon_3,\varepsilon_4$ and $\delta_4$ satisfy the conditions of \cref{lemma:phase3} for small enough constants $\tilde\lambda,\varepsilon_3^*,\varepsilon_4^*,\delta^*_4$, then $\tau_4=\infty$.
\end{coro}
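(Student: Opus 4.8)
The plan is to argue by contradiction, assuming $\tau_4<\infty$, and to show that neither of the two events defining $\tau_4$ can actually occur, provided the free constants are ordered correctly. First I would record that for $i\in\cI$ the output weight $a_i^{\tau_3}$ is positive (as in the earlier phases, $a_i^{\tau_3}\geq a_i^0>0$) and, by \cref{lemma:boundeddyn}, stays positive on $[\tau_3,\tau_4)$, so the directions $\w_i^t=w_i^t/a_i^t$ are well defined and continuous there; hence, by continuity and since $\tau_4$ is an infimum, at $t=\tau_4$ either (a) $\|\theta^{\tau_4}-\theta^{\tau_3}\|_2=\varepsilon_4$, or (b) $\langle\w_i^{\tau_4},x_k\rangle=\delta_4\|x_k\|$ for some $i\in\cI$, $k\in[n]$. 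All the bounds of \cref{lemma:boundA,lemma:Rt,lemma:localPL,lemma:lossdecrease,lemma:Dt,lemma:boundeddyn}, stated on $[\tau_3,\tau_4)$, extend to $t=\tau_4$ by continuity, so it suffices to rule out (a) and (b).

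To rule out (a), I would split $\|\theta^{\tau_4}-\theta^{\tau_3}\|_2^2=\sum_i|a_i^{\tau_4}-a_i^{\tau_3}|^2+\sum_i\|w_i^{\tau_4}-w_i^{\tau_3}\|^2$ over the three neuron groups. Neurons $i\notin\cI\cup\cN$ never move (\cref{lemma:phase1}), so they contribute nothing. For $i\in\cI$, \cref{lemma:boundeddyn} controls both increments by $\bigO{\varepsilon_3}\,a_i^{\tau_3}$, so that block contributes $\bigO{\varepsilon_3^2}\sum_{i\in\cI}(a_i^{\tau_3})^2=\bigO{\varepsilon_3^2}$ by \cref{lemma:boundA}. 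For $i\in\cN$, \cref{lemma:boundeddyn} controls both increments by $\bigO{1}\,|a_i^{\tau_3}|$, so that block contributes $\bigO{1}\sum_{i\in\cN}(a_i^{\tau_3})^2$; but \cref{lemma:phase2b} gives $|a_i^{\tau_3}|\leq|a_i^0|\lambda^{-\varepsilon}$ and \cref{eq:initialisation} gives $\sum_{i\in\cN}(a_i^0)^2\leq\lambda^2$, so that block is $\bigO{\lambda^{2-2\varepsilon}}=\bigO{\varepsilon_3^2}$ (using $\lambda^{1-\varepsilon}=\bigO{\varepsilon_3}$, which holds since $\varepsilon<1/3$). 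Thus $\|\theta^{\tau_4}-\theta^{\tau_3}\|_2=\bigO{\varepsilon_3}<\varepsilon_4=\varepsilon_4^*$ once $\varepsilon_3^*$ is small relative to the $\Theta(1)$ constant $\varepsilon_4^*$, contradicting (a).

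To rule out (b), I would use that $\|x_k\|=\Theta(1)$ and $\langle\w_i^{\tau_3},x_k\rangle=\Omega(1)$ for every $i\in\cI$, $k\in[n]$ (\cref{lemma:phase2b}), so $\langle\w_i^{\tau_3},x_k\rangle\geq c_0\|x_k\|$ for some data-dependent $c_0=\Omega(1)$; combined with $\|\w_i^{\tau_4}-\w_i^{\tau_3}\|=\bigO{\varepsilon_3}$ from \cref{lemma:boundeddyn}, this gives $\langle\w_i^{\tau_4},x_k\rangle\geq(c_0-\bigO{\varepsilon_3})\|x_k\|>\delta_4\|x_k\|$ once $\delta_4^*<c_0/2$ and $\varepsilon_3^*$ is small enough that the remainder stays below $c_0/2$, contradicting (b). Both events being impossible, $\tau_4=\infty$.

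The real work is not in this corollary but in the lemmas it calls — above all the local PL inequality \cref{lemma:localPL} and the decay of the negative neurons \cref{lemma:Rt}, which together produce the total-displacement estimate \cref{lemma:boundeddyn}. The only delicate point in the argument above is the order in which the constants must be fixed: $\varepsilon_4^*=\Theta(1)$ first, then $\delta_4^*$ in terms of the activation margin $c_0$, then $\varepsilon_3^*$ small relative to both, and finally $\tilde\lambda$ small enough that the lower bound $\varepsilon_3\geq\lambda^{c_3\varepsilon\varepsilon_2}$ coming from \cref{lemma:phase2b} is still compatible with $\varepsilon_3<\varepsilon_3^*$ — which costs only one more upper bound on $\tilde\lambda$.
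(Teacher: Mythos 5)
Your proof is correct and takes essentially the same route as the paper: you verify via \cref{lemma:boundeddyn} (together with the bounds on $\sum_{i\in\cI}(a_i^{\tau_3})^2$ from \cref{lemma:boundA} and $\sum_{i\in\cN}(a_i^{\tau_3})^2$ from \cref{lemma:phase2b} and \cref{eq:initialisation}) that the displacement stays below $\varepsilon_4$, and via \cref{lemma:phase2b} plus the $\|\w_i^t-\w_i^{\tau_3}\|=\bigO{\varepsilon_3}$ bound that the activation margin stays above $\delta_4$; the paper does the same two estimates on $[\tau_3,\tau_4)$ directly rather than as a formal contradiction-plus-continuity argument, but the content is identical. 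One cosmetic note: the assertion $\lambda^{1-\varepsilon}=\bigO{\varepsilon_3}$ is not a consequence of $\varepsilon<1/3$ alone but of the lower bound $\varepsilon_3\geq\lambda^{c_3\varepsilon\varepsilon_2}$ from \cref{lemma:phase2b} once $\varepsilon_2^*$ is taken small enough that $c_3\varepsilon\varepsilon_2<1-\varepsilon$, which is the same implicit use the paper makes in its proofs of \cref{lemma:hphase3,lemma:lossdecrease}.
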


\begin{proof}
First note that thanks to \cref{lemma:phase2b} and \cref{lemma:boundeddyn}, we have for any $t\in[\tau_3, \tau_4)$, $i\in \cI$ and $k\in[n]$:
\begin{align*}
\langle \w_i^t, x_k\rangle & \geq \langle \w_i^{\tau_3}, x_k\rangle - \|\w_i^{t}-\w_i^{\tau_3}\|\ \|x_k\|\\
& \geq \Omega(1) - \bigO{\varepsilon_3}\\
& > \delta_4 \|x_k\|,
\end{align*}
for a small enough choice of $\varepsilon^*_3$ and $\delta_4^*$. 
%conditions on $\varepsilon_3$ and $\delta_4$
This implies that the second condition in the definition of $\tau_4$ does not break first. Moreover, thanks to \cref{lemma:boundeddyn},
\begin{align*}
\|\theta^t - \theta^{\tau_3}\|_2^2 & \leq \sum_{i\in I} (a_i^{t}-a_i^{\tau_3})^2 + \|w_i^t-w_i^{\tau_3}\|^2 +  \sum_{i\in \cN} (a_i^{t}-a_i^{\tau_3})^2 + \|w_i^t-w_i^{\tau_3}\|^2\\
& = \left( e^{\bigO{\varepsilon_3}}+\bigO{\varepsilon_3}-1\right)^2 \sum_{i \in \cI} (a_i^{\tau_3})^2 + \bigO{1} \sum_{i \in \cN} (a_i^{\tau_3})^2 \\
& \leq  \left( e^{\bigO{\varepsilon_3}}+\bigO{\varepsilon_3}-1\right)^2\bigO{1}+ \bigO{\lambda^{2(1-\varepsilon)}} ,\end{align*}
where the last inequality comes from the bounds on the sum of $(a_i^{\tau_3})^2$ from \cref{lemma:phase2b,lemma:phase2bnorm}. Now for any $\varepsilon_4^*=\Theta(1)$ and small enough $\varepsilon^*_3,\tilde\lambda=\Theta(1)$ (depending on $\varepsilon_4^*$), the previous inequality leads to
\begin{align*}
\|\theta^t - \theta^{\tau_3}\|_2^2 & < \varepsilon_4^*.
\end{align*}
This implies that the first condition in the definition of $\tau_4$ does not break first. As a consequence, neither of the conditions in the definition of $\tau_4$ break in finite time, \ie $\tau_4=\infty$.
\end{proof}

\paragraph{Proof of \cref{lemma:phase3}.} First note that we can choose constants $\varepsilon,\tilde\lambda,\varepsilon_2,\varepsilon_3,\varepsilon_4,\delta_4=\Theta(1)$, such that all the lemmas of \cref{app:proofnoconvergence} simultaneously hold for any $\lambda\leq\tilde\lambda$. An easy way to verify so is going backward after fixing $\varepsilon$, i.e., first fix $\varepsilon_4,\delta_4$, then fixing $\varepsilon_3$, then $\varepsilon_2$ and finally $\tilde\lambda$.
Thanks to \cref{lemma:Dt,lemma:boundeddyn,coro:tau4}, the parameters $\theta^t$ and their variations are both bounded on $[\tau_3, \infty)$. As a consequence, $\theta^t$ does have a limit:
\begin{equation*}
\lim_{t\to\infty} \theta^t = \theta^\infty_{\lambda}.
\end{equation*}
Moreover, \cref{lemma:Rt} yields that
\begin{equation*}
\lim_{t\to\infty} R_t = 0.
\end{equation*}
This directly implies the second point of \cref{lemma:phase3}. Now denote $$\beta^\infty_{\lambda} \coloneqq \sum_{\substack{i\in[m]\\\forall k\in[m], \langle  w_{\lambda, i}^\infty, x_k\rangle \geq 0}} a_{\lambda, i}^\infty w_{\lambda, i}^\infty.$$
The second point of \cref{lemma:phase3} that we just proved implies that
\begin{equation*}
h_{\theta^\infty}(x_k) = \langle \beta_{\lambda}^\infty, x \rangle \quad \text{for any }x\in\conex,
\end{equation*}
i.e. $\theta^\infty_{\lambda}$ behaves as a linear regression of parameter $\beta^\infty_{\lambda}$ on the (cone generated by the) convex hull of the training data. As a consequence,
\begin{equation*}
L(\theta^\infty_{\lambda}) = L_{\beta^\infty_{\lambda}} \coloneqq  \frac{1}{2n}\sum_{k=1}^n \left(\langle\beta^\infty_{\lambda}, x_k\rangle - y_k\right)^2.
\end{equation*}
Also, \cref{lemma:lossdecrease} implies that $L(\theta^\infty_{\lambda}) = L_{\beta^\infty_{\lambda}}\leq L_{\beta^*}$. However, $\beta^*$ is the unique minimiser of the least square loss, among the set of linear regression parameters. This implies that $\beta^\infty_{\lambda}=\beta^*$, which concludes the proof of both \cref{lemma:phase3,thm:noconvergence}.

The last point of \cref{thm:noconvergencegeneral} is obtained by using \cref{lemma:hphase3} and noticing that when $\lambda\to0$, we can also choose the constants $\varepsilon_3$ and $\varepsilon_4$ such that they converge to $0$ when $\lambda\to 0$.

\section{Further Dicussion on \citep{glasgow2024sgd}}\label{app:glasgow}

In this section, we discuss in more details how the early alignment phenomenon is related to \citet{glasgow2024sgd}'s work. \citet{glasgow2024sgd} studies the convergence of SGD for XOR-type data with two-layer ReLU network towards four vectors, resulting in a $0$ test loss. 

We argue that the XOR setting under consideration falls within the scope of our framework. Specifically, the associated population loss exhibits exactly four extremal vectors corresponding to these four vectors reached at convergence (see details below). From this, the initial phase of the dynamics described by \citet{glasgow2024sgd} aligns with the early alignment phase analyzed in our work---\cref{thm:alignment} has yet to be applied to gradient flow over the population loss, which requires extending it to the infinite data setting.

\medskip

The primary additional technical challenges addressed in \citet{glasgow2024sgd} involve establishing analogous results for stochastic gradient descent (SGD) rather than gradient flow, and under a finite data regime. Their analysis of early alignment, and the broader training dynamics, for SGD in the XOR setting is achieved by bounding the deviation between SGD and gradient flow on the population loss via concentration inequalities. Controlling the growth of the neurons from the early alignment phase also remains challenging, although simpler to control.

\subsection{Computation of Extremal Vectors in \citealt{glasgow2024sgd} Setting.} In this section, we justify that the population loss only counts four extremal vectors in the XOR setting. For that, we consider the \textit{population loss} with Gaussian input. 

While \citet{glasgow2024sgd} consider data distributed uniformly over the hypercube, their proof of the first alignment phase relies on an approximation of the uniform distribution on the hypercube by a standard Gaussian in high dimension. In that effect, we directly assume here that the dataset is given by
\begin{gather*}
x_k \sim \cN(0,   \mathrm{I}_{d}) \\
  y(x_k) = - \sign(e_1^\top x_k)\sign(e_2^\top x_k)
\end{gather*}
and consider the limit of infinite data. We thus denote the (population) loss of the model as 
\begin{align*}
L(\theta) \coloneqq \E_{x,y}\left[\ell(h_{\theta}(x),y(x))\right],
\end{align*}
where $\ell(\hat{y},y)=\ln(1+e^{-\hat{y}y})$ is the logistic loss. In this population loss setting, the function $G(w)$ can still be defined, and its gradient is given by
\begin{align*}
D(w, \mathbf{0}) = \E_{x,y}[y x \iind{w^\top x \geq 0}].
\end{align*}
In that infinite data setting, the definition of extremal vector also extends to $D\neq \mathbf{0}$ such that both hold \citep[see][]{boursier2024simplicity}
\begin{gather*}
1. \quad D = D(w, \mathbf{0})\\
2. \quad \frac{D}{\|D\|} = \pm \frac{w}{\|w\|}.
\end{gather*}
Using computations (see \cref{app:glasgow1,app:glasgow2} below) similar to Lemma D.4 \citep{glasgow2024sgd}, we can then show that for any $w\in\R^d\setminus\{0\}$:
\begin{equation}\label{eq:glasgow1}
\begin{aligned}
\sign(w_2) = -\sign(\langle D(w, \mathbf{0}),e_1 \rangle),\\
\sign(w_1) = -\sign(\langle D(w, \mathbf{0}),e_2 \rangle)
\end{aligned}
\end{equation}
and\footnote{In this section, $w_i$ is the $i$-th coordinate of $w$ and $w_{i:j}$ is the projection of $w$ onto $\Span(e_i, \ldots, e_j)$.}
\begin{equation}\label{eq:glasgow2}
\sign(\langle D(w, \mathbf{0}),w_{3:d}\rangle) = \sign(w_1 w_2)\iind{w_{3:d}\neq 0}.
\end{equation}
From there, assume that $D(w, \mathbf{0})$ is an extremal vector. If $w_1=0$, then the above imply that $D(w, \mathbf{0}) \perp e_j$ for any $j\geq 2$, so that it is not an extremal vector. In consequence, we necessarily have $w_1\neq 0$ and $w_2 \neq 0$ for similar reasons.

From there assume w.l.o.g. that $w_1 w_2>0$ and also assume that $w_{3:d}\neq 0$. We then have by \cref{eq:glasgow1,eq:glasgow2}:
\begin{gather*}
\langle D(w, \mathbf{0}), w_{1:2})<0, \\
\langle D(w, \mathbf{0}), w_{3:d})>0.
\end{gather*}
However, these two inequalities contradict the fact that $D(w, \mathbf{0}) \propto \pm w$, i.e., that it is an extremal vector. Thus, any extremal vector $D(w, \mathbf{0})$ satisfies $w_{3:d}=0$. Necessarily, it must be of the form $D(w, \mathbf{0}) = \alpha_1 e_1 + \alpha_2 e_2$. We can now be more precise and show that (see \cref{app:glasgow3} below) for any $w$ such that $w_{3:d}=0$,
\begin{equation}\label{eq:glasgow3}
\begin{gathered}
\text{if } \quad |w_1|> |w_2| \quad \text{then}\quad |\langle D(w, \mathbf{0}), e_2 \rangle|> |\langle D(w, \mathbf{0}), e_1 \rangle|, \\
\text{if } \quad |w_1|< |w_2| \quad \text{then}\quad |\langle D(w, \mathbf{0}), e_2 \rangle|< |\langle D(w, \mathbf{0}), e_1 \rangle|.
\end{gathered}
\end{equation}
In consequence, any extremal vector must be such that $|w_1|=|w_2|$, i.e., of the form $D(w, \mathbf{0}) = \alpha( e_1 \pm e_2)$ for some $\alpha\in\R$. In consequence, there exist at most $4$ extremal vectors and it is easy to check that for a good choice of $\alpha$ (both $>0$ and $<0$), this indeed yields extremal vectors.

We have thus shown that in the XOR setting with population logistic loss and Gaussian data, there are only $4$ extremal vectors, which are proportional to the vectors $e_1+e_2$, $e_1-e_2$, $-e_1+e_2$, $e_1+e_2$. 

\subsection{Proof of \cref{eq:glasgow1}.}\label{app:glasgow1}

Similarly to \citet{glasgow2024sgd}, we note $x=z+\xi$ where $z$ is the projection of $x$ on the first two coordinates and $\xi$ on the last $d-2$ coordinates. We also note $y(z) = - \sign(z_1 z_2)$. Denoting by $\sw_1(x) = (-x_1, x_2, \ldots, x_d)$ the flip operator on the first coordinate, we have using the symmetries of the distribution:
\begin{align}
\langle D(w, \mathbf{0}), e_2 \rangle & = \E[  y(z) \iind{w^\top x \geq 0} x_2 ]\notag\\
& = \frac{1}{2}\E[ (\iind{w^\top x \geq 0}-\iind{w^\top \sw_1(x) \geq 0})  y(z)  x_2]\notag\\
& = -\frac{1}{2}\E[(\iind{w^\top x \geq 0}-\iind{w^\top \sw_1(x) \geq 0}) \sign(x_1)|x_2|].\label{eq:glasgowaux}
\end{align}
Moreover, note that if $w_1\geq 0$, we necessarily have $(\iind{w^\top x \geq 0}-\iind{w^\top \sw_1(x) \geq 0})\,  \sign(\langle x, e_1\rangle)\geq 0$, so that
\begin{align*}
w_1\geq 0 \implies \langle D(w, \mathbf{0}), e_2 \rangle \leq 0.
\end{align*}
Moreover if $w_1> 0$, the above expectation is non-zero since there is at least a non-zero measure subset of $\R^d$ for which $(\iind{w^\top x \geq 0}-\iind{w^\top \sw_1(x) \geq 0})  |x_2|> 0$. 
Symmetric arguments allow to derive \cref{eq:glasgow1}.

\subsection{Proof of \cref{eq:glasgow2}.}\label{app:glasgow2}
Similar computations as above yield
\begin{align*}
\langle D(w, \mathbf{0}), w_{3:d} \rangle & = \E[ \iind{w^\top x \geq 0}  y(z) \langle \xi, w_{3:d}\rangle] \\
& = \frac{1}{2} \E[ ( \iind{w^\top (z+\xi) \geq 0} - \iind{w^\top (z-\xi) \geq 0})  y(z) \langle \xi, w_{3:d}\rangle]\\
& =  \frac{1}{2} \E[ y(z) \iind{|\xi^\top w_{3:d}| \geq |z^\top w|}   |\langle \xi, w_{3:d}\rangle|]\\
& = \frac{1}{2} \E_\xi \E_z[ \iind{y(z)=1} \iind{|\xi^\top w_{3:d}| \geq |z^\top w|}   |\langle \xi, w_{3:d}\rangle|]\\
& \phantom{=} -\frac{1}{2} \E_\xi \E_z[ \iind{y(z)=-1} \iind{|\xi^\top w_{3:d}| \geq |z^\top w|}   |\langle \xi, w_{3:d}\rangle|]
\end{align*}
Assume w.l.o.g. that $w_1, w_2\geq 0$. For a fixed $\xi$ and since $y(z)=-\sign(z_1 z_2)$, the region $\{\iind{y(z)=-1} \iind{|\xi^\top w_{3:d}| \geq |z^\top w|}\geq 0\}$ is smaller (w.r.t. the Gaussian measure) than the region $\{\iind{y(z)=1} \iind{|\xi^\top w_{3:d}| \geq |z^\top w|}\geq 0\}$. It is indeed a consequence of the following observation: if $ y(z)=-1$ then $|z^\top w| \geq |\sw_1(z)^\top w|$, i.e.,
\begin{gather*}
\text{if }\quad  \iind{y(z)=-1} \iind{|\xi^\top w_{3:d}| \geq |z^\top w|}=1 \\
\text{then }\quad  \iind{y(\sw_1(z))=1} \iind{|\xi^\top w_{3:d}| \geq |\sw_1(z)^\top w|}=1,
\end{gather*}
and $\sw_1(z)$ also follows a standard Gaussian distribution.

As a consequence, if $w_1, w_2\geq 0$, the previous inequality implies that $\langle D(w, \mathbf{0}), w_{3:d} \rangle \geq 0$. Moreover, if both $w_1, w_2> 0$ and $w_{3:d}\neq 0$, the above difference becomes positive, so that $\langle D(w, \mathbf{0}), w_{3:d} \rangle > 0$. More generally, we have shown by symmetric argument that
\begin{equation*}
\sign(D(w, \mathbf{0}), w_{3:d} \rangle =  \sign(w_1 w_2) \iind{w_{3:d}\neq 0}
\end{equation*}

\subsection{Proof of \cref{eq:glasgow3}.}\label{app:glasgow3}
Assume that $w_{3:d}=0$, \cref{eq:glasgowaux} then yields
\begin{align*}
\langle D(w, \mathbf{0}), e_2 \rangle  &=  -\frac{1}{2}\E_x[(\iind{w^\top x \geq 0}-\iind{w^\top \sw_1(x) \geq 0}) \sign(x_1)|x_2|] \\
& = -\frac{\sign(w_1)}{2}\E_x[\iind{|w_1 x_1|\geq |w_{2} x_{2}|}\sign(w_1 x_1) \sign(x_1)|x_2|]\\
& =-\frac{\sign(w_1)}{2}\E_x[\iind{|w_1 x_1|\geq |w_{2} x_{2}|} |x_2|].
\end{align*}
And similarly $\langle D(w, \mathbf{0}), e_1 \rangle = -\frac{\sign(w_2)}{2}\E_x[\iind{|w_2 x_2|\geq |w_{1} x_{1}|} |x_1|]$.
Now note that the transformation $\tilde{x} = (x_2, x_1, x_3, \ldots, x_d)$ does not change the data distribution and thus yields the following inequalities
\begin{align*}
\E_x[\iind{|w_1 x_1|\geq |w_{2} x_{2}|} |x_2|]&  = \E_x[\iind{|w_1 x_2|\geq |w_{2} x_{1}|} |x_1|]\\
& \begin{cases}
\leq \E_x[\iind{|w_2 x_2|\geq |w_{1} x_{1}|} |x_1|] \quad \text{if } |w_2|\geq |w_1|\\
\geq \E_x[\iind{|w_2 x_2|\geq |w_{1} x_{1}|} |x_1|] \quad \text{if } |w_2|\leq |w_1|.
\end{cases}
\end{align*}
Moreover, the inequalities are strict if $|w_2|> |w_1|$ or $|w_2|< |w_1|$, so that 
\begin{gather*}
|\langle D(w, \mathbf{0}), e_2 \rangle| > |\langle D(w, \mathbf{0}), e_1 \rangle|  \quad \text{if } |w_1|>|w_2|,\\
|\langle D(w, \mathbf{0}), e_2 \rangle| < |\langle D(w, \mathbf{0}), e_1 \rangle|  \quad \text{if } |w_1|<|w_2|.
\end{gather*}
\end{document}